\documentclass[10pt]{article}
%%%%%%%%%%%%%%%%%%%%%%%%%%%%%%%%%%%%%%%%%%%%%%%%%%%%%%%%%%%
\usepackage{amsmath,amsfonts,amsthm,mathrsfs,url}
\usepackage{color}
\usepackage{fancyhdr}
\usepackage{amscd}
\usepackage[top=2.3cm, bottom=2.5cm, left=2.1cm,
right=2.1cm]{geometry}
\setlength{\parskip}{0.15in}
\usepackage{bm}
\usepackage{enumitem,amsthm,amsmath,amsfonts,amssymb}
\usepackage{float}
\usepackage{mathtools}
\usepackage{algorithmic}
\usepackage{algorithm}
\usepackage{xcolor}
\usepackage{graphicx}
\usepackage{subcaption}
\usepackage{tikz}
\usepackage{multirow}
\usepackage{multicol}
\usepackage{hyperref}
\usepackage{mathrsfs, fancyhdr}
\usepackage{multirow}
\usepackage{booktabs}

\newcommand{\Z}{\mathbb{Z}}

\newcommand{\E}{\mathbb{E}}

\newcommand{\R}{\mathbb{R}}

\newtheorem{ass}{Assumption}

\DeclareMathOperator*{\argmin}{argmin}

\def\bW{\mathbf{W}}
\def\bw{\mathbf{w}}
\def\O{\mathcal{O}}

\def\X{\mathcal{X}}
\def\Y{\mathcal{Y}}
\def\W{\mathcal{W}}

\def\Z{\mathcal{Z}}

\def\bx{\mathbf{x}}

\def\bw{\mathbf{w}}

\def\bu{\mathbf{u}}
\def\A{\mathcal{A}}

\def\0{\mathbf{0}}

%Macros for display size operators:
\newtheorem{theorem}{Theorem}
\newtheorem{lemma}[theorem]{Lemma}
\newtheorem{proposition}[theorem]{Proposition}
\newtheorem{corollary}[theorem]{Corollary}
\theoremstyle{definition}
\newtheorem{definition}{Definition}

\newtheorem{example}{Example}
\theoremstyle{definition}

\newtheorem{remark}{Remark}

\def\begeqn{\begin{equation}}
\def\endeqn{\end{equation}}
\def\begth{\begin{theorem}}
\def\endth{\end{theorem}}
\def\begprop{\begin{proposition}}
\def\endprop{\end{proposition}}
\def\begcor{\begin{corollary}}
\def\endcor{\end{corollary}}
\def\begdef{\begin{definition}}
\def\enddef{\end{definition}}
\def\beglemm{\begin{lemma}}
\def\endlemm{\end{lemma}}
\def\begexm{\begin{example}}
\def\endexm{\end{example}}
\def\begrem{\begin{remark}}
\def\endrem{\end{remark}}

\def\begdef{\begin{definition}}
\def\enddef{\end{definition}}

\begin{document}

\title{Generalization Guarantees of Gradient Descent for Multi-Layer \\ Neural Networks}

%\maketitle
%% use optional labels to link authors explicitly to addresses:
%% \author[label1,label2]{}
%% \address[label1]{}
%% \address[label2]{}
\author{Puyu Wang$^1 $\quad  Yunwen Lei$^2$\quad Di Wang$^3$  \quad Yiming Ying$^{4*}$\quad Ding-Xuan Zhou$^5$ \\ 
$^{1}$ Hong Kong Baptist University, Hong Kong \\
$^{2}$ The University of Hong Kong, Hong Kong \\
$^{3}$ King Abdullah University of Science and Technology, Saudi Arabia\\
$^{4}$  State University of New York at Albany, USA \\
$^{5}$  University of Sydney, Australia \\
%puyuwang@cityu.edu.hk,  y.lei@hbam.ac.uk, \\yying@albany.edu, dingxuan.zhou@sydney.edu.au
}

 \date{}

\maketitle

\begin{abstract}
Recently, significant progress has been made in understanding the generalization of neural networks (NNs) trained by gradient descent (GD) using the algorithmic stability approach. However, most of the existing research has focused on one-hidden-layer NNs and has not   addressed the impact of different network scaling parameters. In this paper, we greatly extend the previous work \cite{lei2022stability,richards2021stability} by conducting a comprehensive stability and generalization analysis of GD for multi-layer NNs. For two-layer NNs, our results are established under general network scaling parameters, relaxing previous conditions. In the case of three-layer NNs, our technical contribution lies in demonstrating its nearly co-coercive property by utilizing a novel induction strategy that thoroughly explores the effects of over-parameterization. As a direct application of our general findings, we derive the excess risk rate of $\O(1/\sqrt{n})$ for GD algorithms in both two-layer and three-layer NNs. This sheds light on sufficient or  necessary conditions for under-parameterized and over-parameterized NNs trained by GD to attain the desired risk rate of $\O(1/\sqrt{n})$. Moreover, we demonstrate that as the scaling parameter increases or the network complexity decreases, less over-parameterization is required for GD to achieve the desired error rates. Additionally, under a low-noise condition, we obtain a fast risk rate of $\O(1/n)$ for GD in both two-layer and three-layer NNs. \let\thefootnote\relax\footnotetext{
$^{*}$The corresponding author is Yiming Ying. }  

\end{abstract}

\bigskip

%%%%%%%%%%%%%%%%%%%%%%%%%%%%%%%%%%%%%%%%%%%%%%%%%%%%%%%%%%%%%%%%%%%%%%
%%%%%%%%%%%%%%%%%%%%%%%%%%%%%%%%%%%%%%%%%%%%%%%%%%%%%%%%%%%%%%%%%%%%%
\parindent=0cm
%%%%%%%%%%%%%%%%%%%%%%%%%%%%%%%%%%%%%

\section{Introduction}
Deep neural networks (DNNs) trained by (stochastic) gradient descent (GD) have achieved great  success in a wide spectrum of applications such as image recognition \cite{krizhevsky2017imagenet}, speech recognition \cite{hinton2012deep}, machine translation  \cite{bahdanau2014neural}, and reinforcement learning \cite{silver2016mastering}. In practical applications, most of the deployed DNNs are over-parameterized, i.e., the number of parameters is far larger than the size of the training data. 
In \cite{zhang2016understanding}, it was empirically demonstrated that over-parameterized NNs trained with SGD can generalize well to the test data while achieving a small training error.  This has triggered a surge of theoretical studies on unveiling this generalization mastery of DNNs.

In particular, norm-based generalization bounds are established using the uniform convergence approach \cite{bartlett2017spectrally,bartlett2021deep,golowich2018size,long2019generalization,neyshabur2015norm,neyshabur2018pac}.  However, this approach does not take the optimization algorithm and the data distribution into account.   Another line of work is to take into account the structure of the data distribution and provide algorithm-dependent generalization bounds. In \cite{brutzkus2017sgd,li2018learning}, it is shown that SGD for over-parameterized two-layer NNs can achieve small generalization error under certain assumptions on the structure of the data. \cite{allen2019learning} studies the generalization of SGD for two-layer and three-layer NNs if there exists a true (unknown) NN with low error on the data distribution. The other important line of work is the neural tangent kernel (NTK)-type approach \cite{arora2019fine,cao2019generalization,chizat2018global,nitanda2019gradient} which shows that the model trained by GD is well approximated by the tangent space near the initialization and the generalization analysis can be reduced to those of the convex case or kernel methods.  However, most of them either require a very  high over-parameterization or focus on special function classes.

Recently, the appealing work \cite{richards2021stability} provides an alternative approach in a kernel-free regime, using the concept of algorithmic stability \cite{bousquet2002stability,hardt2016train,kuzborskij2018data}.  Specifically, it uses the model-average stability \cite{lei2020fine} to derive generalization bounds of GD for two-layer over-parameterized NNs.  \cite{lei2022stability} improves this result by deriving generalization bounds for both GD and SGD, and relaxing the  over-parameterization requirement. \cite{taheri2023generalization} derives fast generalization bounds for GD under a separable distribution.  However, the above studies only focus on two-layer NNs and a very specific network scaling.  

\vspace*{-1mm}
\noindent {\bf Contributions.}  We study the stability and generalization of GD for both two-layer and three-layer NNs with generic network scaling factors. Our contributions are summarized as follows.

\vspace*{-1mm}
\hspace*{2mm}$\bullet$ We establish excess risk bounds for GD on both two-layer and three-layer NNs with general network scaling parameters under relaxed over-parameterization conditions. 
As a direct application of our generalization results, we show that GD can achieve the excess risk rate $\O(1/\sqrt{n})$ when the network width $m$ satisfies certain qualitative conditions related to the scaling parameter $c$, the size of training data $n$,  and the network complexity measured by the norm of the minimizer of the population risk. Further, under a low-noise condition, our excess risk rate can be improved to $\O(1/n)$ for both two-layer and three-layer NNs.

\vspace*{-1mm}
\hspace*{2mm}$\bullet$ A crucial technical element in the stability analysis for NNs trained by GD is establishing the almost co-coercivity of the gradient operator. This property naturally holds true for two-layer NNs due to the empirical risks' monotonically decreasing nature which no longer remains valid for three-layer NNs. Our technical contribution lies in demonstrating that the nearly co-coercive property still holds valid throughout the trajectory of GD. To achieve this, we employ a novel induction strategy that fully explores the effects of over-parameterization (refer to Section \ref{sec:idea} for further discussions). Furthermore, we are able to eliminate a critical assumption made in \cite{lei2022stability} regarding an inequality associated with the population risk of GD's iterates (see Remark \ref{remark:2layer} for additional details).

\vspace*{-1mm}
\hspace*{2mm}$\bullet$  Our results characterize a quantitative condition in terms of the network complexity and scaling factor under which  GD for two-layer and three-layer NNs can achieve the excess risk rate $\O(1/\sqrt{n})$  in under-parameterization
and over-parameterization regimes.  Our results shed light on sufficient or necessary conditions for under-parameterized and over-parameterized NNs trained by GD  to achieve the risk rate $\O(1/\sqrt{n}).$   In addition,   our results show that the larger the scaling parameter or the simpler the network complexity is, the less over-parameterization is needed for GD to achieve the desired error rates for multi-layer NNs.

\section{Problem Formulation}\label{sec:problem-formulation}

 Let $P$ be a probability measure defined on a sample space $\Z=\X\times \Y$, where $\X\subseteq \R^d$ and $\Y\subseteq \R $. Let $S=\{z_i=(\bx_i,y_i)\}_{i=1}^n$ be a training dataset drawn from $P$. One aims to build a prediction model $f_{\bW}:\X \mapsto \R$ parameterized by $\bW$ in some parameter space $\W$ based on $S$. The performance of $f_{\bW}$ can be measured by the population risk defined as  
% \vspace*{-1mm}
 $L(f_{\bW})=\frac{1}{2} \iint_{\X\times\Y} \big( f_{\bW}(\bx)-y \big)^2dP(\bx,y).$
 %\vspace*{-2mm}
 The corresponding empirical risk is defined as
\begin{equation}\label{eq:erm}
    L_S(f_{\bW})= \frac{1}{2 n}\sum_{i=1}^n  \big( f_{\bW}(\bx_i)-y_i \big)^2. 
\end{equation}
 We denote by
$\ell( \bW;z)=\frac{1}{2}(f_\bW(\bx )-y)^2$ the loss function of $
\bW$ on a  data point $z=(\bx,y)$. 
 The best possible model   is the regression function $f^{*}$ defined as $f^{*}(\bx)=\E[y|\bx]$, where $\E[\cdot|\bx]$ is the conditional expectation given $\bx$. 
In this paper, we consider the prediction model $f_{\bW}$ with a neural network structure. In particular,  we are interested in two-layer and three-layer fully-connected NNs.

\noindent\textbf{Two-layer NNs.}  A two-layer NN of width $m>0$ and scaling parameter $c\in[1/2,1]$  takes the form 
\[f_\bW(\bx)=\frac{1}{m^c}\sum_{k=1}^m a_k \sigma( \bw_k \bx ),\] 
 where $\sigma:\R\mapsto \R$ is an activation function, $\bW=[\bw_1^{\top},\ldots,\bw_m^{\top}]^{\top} \in \W $ with $\W =\R^{m\times d}$ is the weight matrix of the first layer, and a fixed $\mathbf{a}=[a_1,\ldots,a_m]$ with $a_k\in\{-1,+1\}$ is the weight of the output layer. In the above formulation, $\bw_k\in\R^{1\times d}$ denotes the weight of the edge connecting the input to the $k$-th hidden node, and $a_k$ denotes the weight of the edge connecting the $k$-th hidden node to the output node.  The output of the network is scaled by a factor $m^{-c}$ that is decreasing with the network width $m$. Two popular choices of scaling are Neural Tangent Kernel (NTK) \cite{allen2019convergence,arora2019fine,arora2019exact,jacot2018neural,du2018gradient,du2019gradient} with $c=1/2$ and mean field \cite{chizat2018global,chizat2019lazy,mei2019mean,mei2018mean} with $c=1$. \cite{richards2021learning} studied two-layer NNs with $c\in[1/2,1]$ and discussed the influence of the scaling by trading off it with the network complexity. 
 We also focus on the scaling $c\in[1/2,1]$ to ensure that meaningful generalization bounds can be obtained. We fix the output layer weight $\mathbf{a}$ and only optimize the first layer weight $\bW$ in this setting. 
 
\noindent\textbf{Three-layer NNs.} 
For a matrix $\bW$, let $\bW_{s:}$ and $\bW_{is}$ denote the  $s$-th row  and the $(i,s)$-th entry of $\bW$. For an input $\bx\in\R^d$, a three-layer fully-connected NN with width $m>0$ and scaling $c\in(1/2,1]$  is 
 \[ f_{\bW}(\bx)=\frac{1}{m^c} \sum_{i=1}^m a_i \sigma
\big( \frac{1}{m^c} \sum_{s=1}^m \bW_{is}^{(2)} \sigma( \bW_{s:}^{(1)} \bx ) \big), \]
where $\sigma:\R\mapsto \R$ is an activation function,  $\bW=[\bW^{(1)},\bW^{(2)}]\in\W=\R^{m\times(d+m)}$  is the weight matrix of the neural network,    and  $\mathbf{a}=[a_1,\ldots,a_m]$ with $a_i\in\{-1,+1\}$ is the fixed output layer weight. Here,    $\bW^{(1)} \in \R^{m\times d}$  and $\bW^{(2)}   \in \R^{m\times m}$   are the weights at the first and the second layer, respectively.  We consider the setting of optimizing the weights in both the first and the second layers.  

We consider the gradient descent to solve the minimization problem: $\min_{\bW} L_S(f_\bW).$ For simplicity, let $L(\bW)=L(f_{\bW})$ and $L_S(\bW)=L_S(f_{\bW})$.   
\begin{definition}[Gradient Descent]
Let $\bW_0 \in \W $ be an initialization point, and $\{\eta_t: t\in \mathbb{N}\}$ be a sequence of step sizes. Let $\nabla$ denote the gradient operator.   At iteration $t$, the update rule of  GD is% given by 
\begin{equation}\label{eq:GD}
  \bW_{t+1}=\bW_{t} -\eta_t \nabla L_S(\bW_t).
\end{equation} 
\end{definition}
\noindent\textbf{Target of Analysis.} 
Let $\bW^{*}=\arg\min_{\bW\in \W} L(\bW)$ where the minimizer is chosen to be the one enjoying the smallest norm.  For a randomized algorithm $\A$ to solve \eqref{eq:erm}, let $\A(S)$ be the output of $\A$ applied to the  dataset $S$. The  generalization performance of $\A(S)$ is measured by its \textit{excess population risk}, i.e., $L(\A(S))-L(\bW^{*})$. In this paper, we are interested in studying the excess population risk of models trained by GD for both two-layer and three-layer NNs. 

We denote by $\|\cdot\|_2$  the standard Euclidean norm of a vector or a matrix. 
For any $\lambda>0$, let  $\bW^{*}_{\lambda} = \argmin_\bW \big\{L(\bW) + \frac{\lambda}{2} \| \bW- \bW_0 \|_2^2\big\}$.  
We have the following error decomposition
\begin{align}\label{eq:excess-decom}
      \E[ L(\A(S)) ] -  L(\bW^{*})   =  &   \E \big[ L(\A(S))   -  L_S(\A(S)) \big]  +  \E\big[ L_S(\A(S)) -   L_S(\bW^{*}_{\lambda} )  -  \frac{\lambda}{2 } \| \bW^{*}_{\lambda}  -  \bW_0 \|_2^2   \big]\nonumber\\
     &+ \big[ L(\bW^{*}_{\lambda} ) + \frac{\lambda}{2 } \| \bW^{*}_{\lambda} - \bW_0 \|_2^2  -  L(\bW^{*}) \big],
\end{align}
where we use $\E[L_S(\bW^{*}_{\lambda})]=L(\bW^{*}_{\lambda})$.
The first term in \eqref{eq:excess-decom} is called the {\em generalization error}, which can be controlled by  stability analysis. The second term is the {\em optimization error}, which can be estimated by tools from optimization theory. The third term, denoted by $\Lambda_{\lambda}=L(\bW^{*}_{\lambda} ) + \frac{\lambda}{2 } \| \bW^{*}_{\lambda} - \bW_0 \|_2^2  -  L(\bW^{*})$,  is the {\em approximation error} which will be estimated by introducing an assumption on the complexity of the NN (Assumption~\ref{ass:minimizer} in Section~\ref{sec:main-snn}). 

We will use the on-average argument stability \cite{lei2020fine} to study the generalization error defined as follows. 
\begin{definition}[On-average argument stability]\label{def:stability}
Let $S=\{z_i\}_{i=1}^n$ and $S'=\{z'_i \}_{i=1}^n$ be drawn i.i.d. from an unknown distribution $P$. For any $i\in[n]$, define $S^{(i)}=\{z_1,\ldots,z_{i-1},z'_i,z_{i+1},\ldots,z_n\}$ as the set formed from $S$ by replacing the $i$-th element with $z_i'$. We say $\A$ is on-average argument $\epsilon$-stable if \[\E_{S,S',\A}\big[\frac{1}{n}\sum_{i=1}^n\|\A(S)-\A(S^{(i)})\|_2^2\big] \le \epsilon^2.\]
\end{definition}

We say a function $\ell:\W\times \Z \rightarrow \R$ is $\rho$-smooth if for any $\bW,\bW' \in\W$ and $z\in\Z$, there holds $\ell(\bW;z)-\ell(\bW';z)\le \langle \nabla \ell(\bW';z), \bW-\bW' \rangle + \frac{\rho}{2}\|\bW-\bW'\|_2^2$.
The following lemma establishes the connection \cite{lei2020fine} between the on-average argument stability and the generalization error.
\begin{lemma}\label{lem:connection}
Let $\A$ be an algorithm. If for any $z$, the map $\bW \mapsto \ell(\bW;z)$ is $\rho$-smooth, then
\begin{align*}
       &\E[ L( \A(S)) -   L_S ( \A(S)) ]  \le   \frac{\rho}{2n} \sum_{i=1}^n   \E[ \|\A(S)  -  \A(S^{( i)} ) \|_2^2 ]   +  \bigl(  \frac{2\rho \E[L_S( \A(S)) ]}{n} \sum_{i=1}^n  \E[ \|\A(S) - \A(S^{( i)} )\|_2^2 ] \bigr)^{\frac{1}{2}}.
\end{align*}  
\end{lemma}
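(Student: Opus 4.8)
The plan is to exploit the symmetry between $S$ and the perturbed datasets $S^{(i)}$ to rewrite the generalization error purely in terms of the argument-stability quantities $\norm{\A(S)-\A(S^{(i)})}_2$, and then to combine $\rho$-smoothness with the self-bounding property of a nonnegative smooth loss. First I would observe that, since $S$ and $S'$ are i.i.d.\ and $z_i'$ is independent of $S$, the point $z_i'$ acts as a fresh test sample for $\A(S)$, so averaging over these fresh points gives $\E[L(\A(S))]=\frac{1}{n}\sum_{i=1}^n\E[\ell(\A(S);z_i')]$, where all expectations are taken jointly over $S,S'$ and the internal randomness of $\A$. The crucial step is a relabeling argument: exchanging $z_i\leftrightarrow z_i'$ is measure-preserving and sends the pair $(S,z_i')$ to $(S^{(i)},z_i)$, hence $\E[\ell(\A(S);z_i')]=\E[\ell(\A(S^{(i)});z_i)]$. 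Since $\E[L_S(\A(S))]=\frac{1}{n}\sum_{i=1}^n\E[\ell(\A(S);z_i)]$, subtracting yields the identity
\[
\E[L(\A(S)) - L_S(\A(S))] = \frac{1}{n}\sum_{i=1}^n \E\big[\ell(\A(S^{(i)}); z_i) - \ell(\A(S); z_i)\big].
\]

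Next I would bound each summand via $\rho$-smoothness applied with $\bW=\A(S^{(i)})$ and $\bW'=\A(S)$, namely
\[
\ell(\A(S^{(i)}); z_i) - \ell(\A(S); z_i) \le \inner{\nabla \ell(\A(S); z_i),\, \A(S^{(i)}) - \A(S)} + \frac{\rho}{2}\norm{\A(S^{(i)}) - \A(S)}_2^2.
\]
The quadratic term, once summed and divided by $n$, already reproduces the first term of the claimed bound. For the linear term I would invoke the self-bounding inequality $\norm{\nabla \ell(\bW;z)}_2^2 \le 2\rho\,\ell(\bW;z)$, valid because the square loss is nonnegative, followed by Cauchy--Schwarz in $\W$, to obtain $\inner{\nabla \ell(\A(S); z_i),\, \A(S^{(i)}) - \A(S)} \le \sqrt{2\rho\,\ell(\A(S); z_i)}\,\norm{\A(S^{(i)}) - \A(S)}_2$.

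Finally I would sum over $i$, take expectations, and apply Cauchy--Schwarz to the bilinear form $\sum_i\E[\cdot]$, separating the loss factor from the stability factor:
\[
\sum_{i=1}^n \E\big[\sqrt{2\rho\,\ell(\A(S); z_i)}\,\norm{\A(S^{(i)}) - \A(S)}_2\big] \le \Big(\sum_{i=1}^n 2\rho\,\E[\ell(\A(S); z_i)]\Big)^{\frac{1}{2}} \Big(\sum_{i=1}^n \E[\norm{\A(S^{(i)}) - \A(S)}_2^2]\Big)^{\frac{1}{2}}.
\]
Using $\sum_{i=1}^n \E[\ell(\A(S); z_i)] = n\,\E[L_S(\A(S))]$ and dividing by $n$ then matches exactly the second term of the stated bound. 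I expect the main obstacle to be the symmetry/relabeling step: one must argue carefully that swapping $z_i$ and $z_i'$ is a distribution-preserving transformation that turns the out-of-sample evaluation of $\A(S)$ on $z_i'$ into an in-sample object for $S^{(i)}$ evaluated on $z_i$, and that this holds after integrating jointly over the data draws and the algorithm's randomness. Once this identity is established, the remaining inequalities (smoothness, self-bounding, and Cauchy--Schwarz) are routine.
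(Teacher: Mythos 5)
Your proposal is correct and is essentially the standard proof of this result: the paper itself does not prove Lemma~\ref{lem:connection} but imports it from \cite{lei2020fine}, and your argument (symmetrization via swapping $z_i \leftrightarrow z_i'$ to write the generalization gap as $\frac{1}{n}\sum_{i=1}^n \E[\ell(\A(S^{(i)});z_i)-\ell(\A(S);z_i)]$, then $\rho$-smoothness, the self-bounding property $\|\nabla \ell(\bW;z)\|_2^2\le 2\rho\,\ell(\bW;z)$, and Cauchy--Schwarz over both the expectation and the sum) is exactly the route taken in that reference. All steps check out, including the final bookkeeping that turns $\sum_{i=1}^n\E[\ell(\A(S);z_i)]=n\,\E[L_S(\A(S))]$ into the stated square-root term.
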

 
 Our analysis requires the following standard assumptions on activation and loss functions \cite{lei2022stability,richards2021stability}.  
\begin{ass}[Activation function]\label{ass:activation}
The activation function $a\mapsto\sigma(a)$ is continuous and twice differentiable with  $|\sigma(a)|\le B_{\sigma}$, $|\sigma'(a)|\le B_{\sigma'}$ and $|\sigma''(a)|\le B_{\sigma''}$, where $B_{\sigma}, B_{\sigma'},B_{\sigma''}>0 .$
\end{ass}
Both sigmoid and hyperbolic tangent activation functions satisfy Assumption \ref{ass:activation}~\cite{richards2021stability}. 
\begin{ass}[Inputs, labels and loss]\label{ass:loss}
 There exist constants $c_\bx, c_y, c_0 >0$ such that $\|\bx\|_2\le c_\bx$ , $|y|\le c_y$  and $\max\{\ell(\mathbf{0};z),\ell(\bW_0;z)\}\le c_0 $ for any $\bx\in \X, y\in\Y$ and $z\in\Z$. 
\end{ass}

\section{Main Results}\label{sec:main-resluts}
\vspace*{-2mm}
In this section, we present our main results on the excess population risk of NNs trained by GD. 

 \vspace*{-2mm}
\subsection{Two-layer Neural Networks with Scaling Parameters}\label{sec:main-snn}
 \vspace*{-1mm}

We denote $B\asymp B'$ if there exist some universal constants $c_1, c_2>0$ such that $c_1B\ge B'\ge c_2B$. We denote $B\gtrsim B'$ ($B\lesssim B'$) if there exists a constant  $c >0$ such that $B\ge c B'$ ($c B\le B'$). Let $\rho = c_\bx^2\big( \frac{  B_{\sigma'}^2 + B_\sigma B_{\sigma''}  }{m^{2c-1}} + \frac{ B_{\sigma''} c_y }{ m^c } \big).$ 
%Let $C_1= (8e c_\bx^2 B_{\sigma'}B_{\sigma''}\sqrt{2c_0})^{\frac{2}{4c-1}}$, $C_2=\big( 4\sqrt{2c_0}c_{\bx}^2 B_{\sigma'}B_{\sigma''}\big)^{\frac{3}{4c-1}}$ and $ C_3= (8\sqrt{2c_0}c_\bx B_{\sigma''})^{1/c} $. 
The following theorem presents generalization error bounds of GD for two-layer NNs. The proof can be found in Appendix~\ref{appendix:snn-generalization}. 
\begin{theorem}[Generalization error]\label{thm:generalization}
Suppose Assumptions~\ref{ass:activation} and \ref{ass:loss} hold. Let $\{\bW_t\}$  be produced by \eqref{eq:GD} with $\eta_t\equiv \eta\le 1/(2\rho)$ based on $S$.  Assume
%\vspace*{-2mm}
\begin{align}\label{m-1}
     m \gtrsim \big( { (\eta T)^2 (1+\eta \rho)   \sqrt{  \rho (\rho \eta T + 2)} }\big/{n} \big)^{ \frac{2}{4c-1}} + (\eta T)^{\frac{3}{4c-1}}  +      \big( \eta T   \big)^{\frac{1}{c}}.
\end{align}
Then, for any $t\in[T]$,  there holds
\[ 
      \E[ L(\bW_t) - L_S(\bW_t) ] \le \big( \frac{4  e^2 \eta^2 \rho^2 t }{n^2} + \frac{4 e  \eta  \rho       }{n } \big)
      \sum_{j=0}^{t-1}\E\big[  L_S( \bW_j ) \big]. 
\]
\end{theorem}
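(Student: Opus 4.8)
The plan is to reduce the statement to a bound on the on-average argument stability and then establish that bound through an almost co-coercivity argument tailored to two-layer networks. By Lemma~\ref{lem:connection} (applicable since $\ell(\cdot;z)$ is $\rho$-smooth under Assumptions~\ref{ass:activation}--\ref{ass:loss}), it suffices to control $\frac{1}{n}\sum_{i=1}^n\E[\|\bW_t-\bW_t^{(i)}\|_2^2]$, where $\bW_t^{(i)}$ is the $t$-th GD iterate run on the perturbed sample $S^{(i)}$. Concretely, I would prove the stability estimate
\[
\frac{1}{n}\sum_{i=1}^n\E\big[\|\bW_t-\bW_t^{(i)}\|_2^2\big]\le\frac{8e^2\eta^2\rho\,t}{n^2}\sum_{j=0}^{t-1}\E[L_S(\bW_j)],
\]
since substituting this into the two terms of Lemma~\ref{lem:connection} reproduces exactly the coefficients $\frac{4e^2\eta^2\rho^2 t}{n^2}$ and $\frac{4e\eta\rho}{n}$.

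Next I would set up the coupled recursions $\bW_{t+1}=\bW_t-\eta\nabla L_S(\bW_t)$ and $\bW_{t+1}^{(i)}=\bW_t^{(i)}-\eta\nabla L_{S^{(i)}}(\bW_t^{(i)})$. Since $S$ and $S^{(i)}$ differ only in their $i$-th point, the gradient difference splits as $\nabla L_S(\bW_t)-\nabla L_{S^{(i)}}(\bW_t^{(i)})=\big(\nabla L_S(\bW_t)-\nabla L_S(\bW_t^{(i)})\big)+\frac{1}{n}\big(\nabla\ell(\bW_t^{(i)};z_i)-\nabla\ell(\bW_t^{(i)};z_i')\big)$. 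The first, same-dataset piece I would control by an \emph{almost co-coercivity} of $\nabla L_S$ along the trajectory: because $\eta\le1/(2\rho)$ and $\ell(\cdot;z)$ is $\rho$-smooth, the descent lemma makes $L_S(\bW_t)$ monotonically non-increasing, so it stays bounded by $L_S(\bW_0)\le c_0$ and the residuals $f_{\bW_t}(\bx)-y$ remain controlled; this keeps the indefinite, $\sigma''$-carrying part of the Hessian of $\ell$ --- the part suppressed by the scaling factors hidden in $\rho$ --- uniformly small, so the GD map $\bW\mapsto\bW-\eta\nabla L_S(\bW)$ becomes almost nonexpansive, i.e. $\|(\bW_t-\eta\nabla L_S(\bW_t))-(\bW_t^{(i)}-\eta\nabla L_S(\bW_t^{(i)}))\|_2\le(1+\gamma)\|\bW_t-\bW_t^{(i)}\|_2$ for a small per-step slack $\gamma$. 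For the single-sample piece I would invoke the self-bounding property of smooth nonnegative losses, $\|\nabla\ell(\bW;z)\|_2^2\le2\rho\,\ell(\bW;z)$, which is precisely the source of the $L_S(\bW_j)$ factors in the conclusion.

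Combining these ingredients, the triangle inequality gives $\|\bW_t-\bW_t^{(i)}\|_2\le\frac{\eta}{n}\sum_{j=0}^{t-1}(1+\gamma)^{t-1-j}\|\nabla\ell(\bW_j^{(i)};z_i)-\nabla\ell(\bW_j^{(i)};z_i')\|_2$. Bounding $(1+\gamma)^{t}\le e^{\gamma t}$, applying Cauchy--Schwarz (which produces the factor $t$), and then using the self-bounding property would yield $\|\bW_t-\bW_t^{(i)}\|_2^2\lesssim\frac{e^{2\gamma t}\eta^2\rho\,t}{n^2}\sum_{j=0}^{t-1}\big(\ell(\bW_j^{(i)};z_i)+\ell(\bW_j^{(i)};z_i')\big)$. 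A standard exchangeability argument, exploiting that $z_i$ and $z_i'$ are symmetric and that $\bW_j^{(i)}$ does not depend on $z_i$, turns $\frac{1}{n}\sum_i\E[\ell(\bW_j^{(i)};z_i)]$ into $\E[L_S(\bW_j)]$; together with the fact that the over-parameterization condition \eqref{m-1} forces $\gamma t=\O(1)$, so that $e^{2\gamma t}\le e^2$ up to constants, this produces the target stability estimate above. Finally, substituting into Lemma~\ref{lem:connection}, the square-root term is linearized using $t\,\E[L_S(\bW_t)]\le\sum_{j=0}^{t-1}\E[L_S(\bW_j)]$, which again follows from the monotone decrease of $L_S$; this is exactly what removes the stray $\sqrt{t}$ and yields the clean coefficient $\frac{4e\eta\rho}{n}$.

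I expect the main obstacle to be establishing and, crucially, \emph{maintaining} the almost co-coercivity uniformly over $t\in[T]$: one must certify that the iterates never drift into a region where the negative curvature of $L_S$ overwhelms its convex part, and that the accumulated slack $\gamma t$ stays $\O(1)$ rather than degrading either the $1/n$ or the $t/n^2$ scaling. This is precisely the role of the width condition \eqref{m-1} --- it guarantees that $m$ is large enough, relative to $\eta T$, $n$ and $\rho$, that $\|\bW_t-\bW_0\|_2$ remains bounded and the indefinite Hessian part stays negligible along the entire GD path, so the almost-nonexpansiveness holds with a controlled constant throughout.
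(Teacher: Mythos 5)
Your overall architecture mirrors the paper's: reduce to on-average argument stability via Lemma~\ref{lem:connection}, control the same-dataset part of the gradient difference by an almost-nonexpansiveness property maintained by induction under the width condition \eqref{m-1}, bound the single-sample part by self-bounding, and linearize the square-root term using $t\,\E[L_S(\bW_t)]\le\sum_{j<t}\E[L_S(\bW_j)]$. However, there is a genuine gap in your exchangeability step, and it is caused by your choice of decomposition. You split
\[
\nabla L_S(\bW_t)-\nabla L_{S^{(i)}}(\bW_t^{(i)})=\big(\nabla L_S(\bW_t)-\nabla L_S(\bW_t^{(i)})\big)+\tfrac{1}{n}\big(\nabla\ell(\bW_t^{(i)};z_i)-\nabla\ell(\bW_t^{(i)};z_i')\big),
\]
so both single-sample gradients are evaluated at $\bW_t^{(i)}$; after self-bounding, this produces the loss terms $\ell(\bW_j^{(i)};z_i)$ and $\ell(\bW_j^{(i)};z_i')$. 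For the second, exchangeability indeed gives $\E[\ell(\bW_j^{(i)};z_i')]=\E[\ell(\bW_j;z_i)]$, whose average over $i$ is $\E[L_S(\bW_j)]$. But for the first your claim fails: precisely \emph{because} $\bW_j^{(i)}$ is independent of $z_i$, one gets $\E[\ell(\bW_j^{(i)};z_i)]=\E[L(\bW_j^{(i)})]=\E[L(\bW_j)]$, the \emph{population} risk, not the empirical risk. Your stability bound thus reads $\frac{1}{n}\sum_i\E[\|\bW_t-\bW_t^{(i)}\|_2^2]\lesssim\frac{\eta^2\rho t}{n^2}\sum_{j<t}\big(\E[L(\bW_j)]+\E[L_S(\bW_j)]\big)$, which is circular: the generalization gap you are trying to bound reappears on the right-hand side, and the stated theorem (whose right-hand side contains only empirical risks) does not follow without further work.

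The fix is the decomposition the paper actually uses: peel off the leave-one-out risk $L_{S^{\setminus i}}=L_S-\frac{1}{n}\ell(\cdot;z_i)=L_{S^{(i)}}-\frac{1}{n}\ell(\cdot;z_i')$ and write
\[
\bW_{t+1}-\bW_{t+1}^{(i)}=\bW_t-\bW_t^{(i)}-\eta\big(\nabla L_{S^{\setminus i}}(\bW_t)-\nabla L_{S^{\setminus i}}(\bW_t^{(i)})\big)-\tfrac{\eta}{n}\big(\nabla\ell(\bW_t;z_i)-\nabla\ell(\bW_t^{(i)};z_i')\big),
\]
so the two single-sample gradients are evaluated at \emph{different} iterates: $\nabla\ell(\cdot;z_i)$ at $\bW_t$ and $\nabla\ell(\cdot;z_i')$ at $\bW_t^{(i)}$. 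Then $\frac{1}{n}\sum_i\E[\ell(\bW_j;z_i)]=\E[L_S(\bW_j)]$ directly, and $\E[\ell(\bW_j^{(i)};z_i')]=\E[\ell(\bW_j;z_i)]$ by the symmetry of swapping $z_i$ and $z_i'$, so both terms are empirical risks and the coefficients $\frac{4e^2\eta^2\rho^2 t}{n^2}$ and $\frac{4e\eta\rho}{n}$ come out exactly as you computed. With this change the almost co-coercivity must be stated for $L_{S^{\setminus i}}$ rather than $L_S$ (as in Lemma~\ref{lem:coercivity}); the rest of your argument, including the per-step slack controlled by induction under \eqref{m-1} and the Cauchy--Schwarz step producing the factor $t$, then goes through essentially as in the paper.
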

\begin{remark}
Theorem~\ref{thm:generalization} establishes the first generalization results for general $c\in[1/2,1]$, which shows that the requirement on $m$
becomes weaker as $c$ becomes larger. Indeed, with a typical choice $\eta T\asymp n^{\frac{c}{6\mu+2c-3}}$ (as shown in Corollary~\ref{cor:excess-rate} below), the assumption \eqref{m-1} becomes $m \gtrsim  n^{\frac{1}{ 6\mu+2c-3 }(1+\frac{51-96\mu}{8(8c-3) })}$. This requirement becomes milder as $c$ increases for any $\mu\in[0,1]$, which implies that large scaling reduces the requirement on the network width. 
In particular, our assumption only  requires $m\gtrsim  \eta T$ when $c=1$.      
% \cite{richards2021stability} provided a similar generalization bound with an assumption $m\gtrsim (\eta T)^3$ for $c=1/2$.   
\cite{lei2022stability} provided a similar generalization bound with an assumption  $m\gtrsim (\eta T)^5/n^2 + (\eta T)^{2}$, and our result with $c=1/2$ is consistent with their result. %Theorem~\ref{thm:generalization} provides the first generalization results for general $c\in[1/2,1]$. 
\end{remark}

The following theorem to be proved in Appendix~\ref{appendix:snn-optimization} develops the optimization error bounds of GD.
\begin{theorem}[Optimization error]\label{thm:opt}
Suppose Assumptions~\ref{ass:activation} and \ref{ass:loss} hold.  Let $\{\bW_t\}$  be produced by \eqref{eq:GD} with  $\eta_t\equiv \eta\le 1/(2\rho)$ based on $S$. Let   % $ {C}_3= ( 4\sqrt{2 c_0}c_\bx^2 B_{\sigma''} )^{1/c}$ and 
$\tilde{b}= c_\bx^2 B_{\sigma''} (  \frac{2B_{\sigma'} c_\bx}{m^{c-1/2}}   + \sqrt{2c_0}  )$  and  assume  \eqref{m-1} holds. Assume 
\begin{equation}\label{m-5}
  m \gtrsim \big(\tilde{b}T (  \sqrt{ \eta T } + \| \bW^{*}_{\frac{1}{\eta T}} -\bW_0 \|_2 ) \big( \frac{ e^2 \eta^3 \rho^2 T^2 }{ n^2} +     \frac{ e  \eta^2 T  \rho       }{n } + 1  \big) \big)^{ \frac{1}{c}}. 
\end{equation}
 Then we have
 \begin{align*}
     &\E[L_S(\bW_{T})] - L_S(\bW^{*}_{\frac{1}{\eta T}})  - \frac{   1 } {2\eta T}\|\bW_{\frac{1}{\eta T}}^{*} -\bW_0  \|_2^2 ]  \\
    &\!\le\!  \frac{2\tilde{b}  }{m^c } ( \!\sqrt{2 \eta T c_0} \!+\!  \|\bW_{\frac{1}{\eta T}}^{*} \!\!-\! \bW_0  \|_2  )   \big( \big( \frac{8  e^2 \eta^3 \rho^2 T^2 }{ n^2}\! +\!     \frac{8 e  \eta^2 T  \rho       }{n } \big)\!  \sum_{s=0}^{T-1} \!\E\big[ L_S(\bW_s ) \big]
     \! +\!   \|\bW_{\frac{1}{\eta T}}^{*}\! \!-\! \bW_0 \|_2^2  + \eta T \big[  L(\bW_{\frac{1}{\eta T}}^{*} ) \!\!-\! L(\bW^{*})\big]  \big) . 
\end{align*}
\end{theorem}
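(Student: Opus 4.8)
The plan is to identify the quantity on the left with the optimization term of the decomposition \eqref{eq:excess-decom} (with $\A(S)=\bW_T$ and $\lambda=1/(\eta T)$) and to control it by a telescoping potential argument measured against the regularized reference $\bW^{*}_{\lambda}$, corrected by the small amount by which $L_S$ fails to be convex. First I would record a trajectory bound. Since $\ell(\cdot;z)$ is $\rho$-smooth and $\eta\le 1/(2\rho)$, the descent lemma gives $L_S(\bW_{t+1})\le L_S(\bW_t)\le L_S(\bW_0)\le c_0$; telescoping it and using the self-bounding inequality $\|\nabla L_S(\bW)\|_2^2\le 2\rho L_S(\bW)$ yields $\sum_{s<t}\|\nabla L_S(\bW_s)\|_2^2\lesssim c_0/\eta$, so that $\bW_t-\bW_0=-\eta\sum_{s<t}\nabla L_S(\bW_s)$ together with Cauchy--Schwarz gives $\|\bW_t-\bW_0\|_2\lesssim\sqrt{\eta Tc_0}$. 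By the triangle inequality this controls $\|\bW_t-\bW^{*}_{\lambda}\|_2$ by the diameter $\sqrt{2\eta Tc_0}+\|\bW^{*}_{\lambda}-\bW_0\|_2$ appearing as the leading factor in the claim.

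The key estimate is an almost-convexity bound that quantifies the non-convexity of the squared loss. Writing $r=f_{\bW}(\bx)-y$, a second-order expansion gives
\[
\ell(\bW';z)-\ell(\bW;z)-\langle\nabla\ell(\bW;z),\bW'-\bW\rangle = r\big(f_{\bW'}(\bx)-f_{\bW}(\bx)-\langle\nabla f_{\bW}(\bx),\bW'-\bW\rangle\big)+\tfrac12\big(f_{\bW'}(\bx)-f_{\bW}(\bx)\big)^2,
\]
whose last term is nonnegative. For the two-layer network the Hessian of $f_{\bW}$ is block-diagonal with $\|\nabla^2 f_{\bW}(\bx)\|_2\le B_{\sigma''}c_\bx^2/m^c$, so the first term is at least $-\tfrac12|r|\,B_{\sigma''}c_\bx^2 m^{-c}\|\bW'-\bW\|_2^2$. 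Bounding the residual through Assumption~\ref{ass:loss} and the gradient bound $\|\nabla f_{\bW}(\bx)\|_2\le B_{\sigma'}c_\bx m^{1/2-c}$ produces a weak-convexity modulus governed by $\tilde b/m^c$. This is exactly where the constant $\tilde b$ is born, and it is the quantitative form of the ``nearly co-coercive'' behaviour advertised in the introduction; the factor $m^{-c}$ is what makes this modulus vanish under over-parameterization.

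Next I would run the potential argument: expand $\|\bW_{t+1}-\bW^{*}_{\lambda}\|_2^2$ along \eqref{eq:GD}, substitute the almost-convexity lower bound for $\langle\nabla L_S(\bW_t),\bW_t-\bW^{*}_{\lambda}\rangle$, absorb $\eta^2\|\nabla L_S(\bW_t)\|_2^2$ by self-bounding, and sum over $t<T$. The convex parts telescope, the gradient-norm terms are controlled by smoothness, and the matching $\lambda=1/(\eta T)$ turns the leading $\|\bW_0-\bW^{*}_{\lambda}\|_2^2/(2\eta T)$ into the subtracted $\tfrac{1}{2\eta T}\|\bW^{*}_{\lambda}-\bW_0\|_2^2$; using that $\{L_S(\bW_t)\}$ is non-increasing, so $L_S(\bW_T)$ is dominated by its running average, then yields the left-hand side of the claim. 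What is left is the non-convex correction, which is a product of the curvature scale $\tilde b/m^c$, the trajectory diameter $\sqrt{2\eta Tc_0}+\|\bW^{*}_{\lambda}-\bW_0\|_2$ (which simultaneously bounds $\|\bW_t-\bW_0\|_2$ and $\|\bW_t-\bW^{*}_{\lambda}\|_2$), and a residual factor. I would estimate the residual factor through the error decomposition \eqref{eq:excess-decom}: Theorem~\ref{thm:generalization} supplies the generalization contribution, which is the source of $(\tfrac{8e^2\eta^3\rho^2T^2}{n^2}+\tfrac{8e\eta^2T\rho}{n})\sum_s\E[L_S(\bW_s)]$, while the regularization and approximation pieces of $\Lambda_{\lambda}$ give $\|\bW^{*}_{\lambda}-\bW_0\|_2^2$ and $\eta T[L(\bW^{*}_{\lambda})-L(\bW^{*})]$.

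The hard part will be this last step, because the correction is self-referential: the residual factor is governed by the very optimization error being estimated (the second term of \eqref{eq:excess-decom}), and the quasi-strong convexity used to tame $\|\bW_t-\bW^{*}_{\lambda}\|_2$ only holds once $\lambda=1/(\eta T)$ dominates the $\tilde b/m^c$ weak-convexity modulus. Closing this loop is precisely what \eqref{m-5} buys: raising both sides to the power $c$ shows it forces the feedback coefficient $\tfrac{2\tilde b}{m^c}(\sqrt{2\eta Tc_0}+\|\bW^{*}_{\lambda}-\bW_0\|_2)(\tfrac{e^2\eta^3\rho^2T^2}{n^2}+\tfrac{e\eta^2T\rho}{n}+1)$ to be $O(1)$, so the correction stays bounded rather than diverging and the stated inequality follows. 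Care will also be needed to keep all estimates in expectation throughout, so that Theorem~\ref{thm:generalization} can be invoked verbatim.
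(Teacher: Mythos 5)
Your proposal is correct and follows essentially the same route as the paper: the almost-convexity expansion with modulus $\tilde b/m^c$ (the paper's Lemma~\ref{lem:smoothness} and the convexified $g(\alpha)$ trick), the telescoping potential inequality against $\bW^{*}_{1/\eta T}$ combined with the monotonicity of $\{L_S(\bW_t)\}$ to pass to the last iterate, the invocation of Theorem~\ref{thm:generalization} together with $L(\bW_s)\ge L(\bW^{*})$ to produce the $\eta T[L(\bW^{*}_{1/\eta T})-L(\bW^{*})]$ term, and the self-referential bound on $\max_s \E[\|\bW^{*}_{1/\eta T}-\bW_s\|_2^2]$ closed by condition~\eqref{m-5} (the paper's Lemma~\ref{lem:distance}). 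The only minor imprecision is that the feedback coefficient forced small by \eqref{m-5} carries an extra factor of $\eta T$ (from summing the correction over the trajectory) and must be made at most a constant strictly below $1$, e.g.\ $1/2$, not merely $O(1)$, to absorb the self-referential term; this is exactly what the implicit constant in \eqref{m-5} provides.
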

Recall that $\Lambda_{\frac{1}{\eta T}}=L(\bW^{*}_{\frac{1}{\eta T}} ) + \frac{1}{2\eta T } \| \bW^{*}_{\frac{1}{\eta T}} - \bW_0 \|_2^2  -  L(\bW^{*})$ is the approximation error. We can combine generalization and optimization error bounds together to derive our main result of GD for two-layer NNs.
It is worth mentioning that our excess population bound is dimension-independent, which is mainly due to  that stability analysis focuses on the optimization process trajectory, as opposed to the uniform convergence approach which involves the complexity of function space and thus is often dimension-dependent.  Without loss of generality, we assume $\eta T\ge 1$. 
\begin{theorem}[Excess population risk]\label{thm:excess}
Suppose Assumptions~\ref{ass:activation} and  \ref{ass:loss} hold. %Assume    \[m\ge C \max\Big\{(\eta T)^{\frac{3}{2c}},  ( \eta T  )^{\frac{1}{c+\mu - 1/2}} \Big\}\]for a constant $C>0$. 
Let $\{\bW_t\}$ be produced by \eqref{eq:GD} with  $\eta_t\equiv \eta \le 1/(2\rho)$. Assume \eqref{m-1} and \eqref{m-5} hold. For any $c\in [1/2,1]$, if $\eta T m^{1-2c}=\O(n)$ and $m \gtrsim  (\eta T( \sqrt{\eta T} + \|\bW^{*}_{\frac{1}{\eta T}} - \bW_0 \|_2 ) )^{1/c} $, then there holds
\vspace*{-2mm}
\begin{equation*}
    \E[L(\bW_T)  -  L(\bW^{*})]
    = \O\big( \frac{  \eta  T  m^{1 - 2c}      }{n }    L(\bW^{*} )    + \Lambda_{\frac{1}{\eta T}} \big).
\end{equation*}    
\end{theorem}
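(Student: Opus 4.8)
The plan is to feed the three terms of the error decomposition \eqref{eq:excess-decom}, taken at the regularization level $\lambda=1/(\eta T)$, into the two bounds already available. With this choice of $\lambda$ the third summand is exactly the approximation error $\Lambda_{1/(\eta T)}$, which I carry unchanged into the final estimate; the first (generalization) term is handled by Theorem~\ref{thm:generalization} at $t=T$, and the second (optimization) term by Theorem~\ref{thm:opt}. Two elementary observations fix the orders of the constants: since $c\in[1/2,1]$ we have $m^{1-2c}\ge m^{-c}$, so the definition of $\rho$ gives $\rho\asymp m^{1-2c}$; and since $m^{c-1/2}\ge 1$ the quantity $\tilde b$ is bounded by a constant. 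The extra width hypothesis $m\gtrsim\big(\eta T(\sqrt{\eta T}+\|\bW^{*}_{1/(\eta T)}-\bW_0\|_2)\big)^{1/c}$ then forces the prefactor $\frac{2\tilde b}{m^c}(\sqrt{2\eta T c_0}+\|\bW^{*}_{1/(\eta T)}-\bW_0\|_2)$ appearing in Theorem~\ref{thm:opt} to be of order $\O(1/(\eta T))$.

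Multiplying the bracket of Theorem~\ref{thm:opt} by this $\O(1/(\eta T))$ factor, the terms $\|\bW^{*}_{1/(\eta T)}-\bW_0\|_2^2$ and $\eta T[L(\bW^{*}_{1/(\eta T)})-L(\bW^{*})]$ collapse into $\O(\Lambda_{1/(\eta T)})$, and the surviving piece has the same shape as the generalization bound of Theorem~\ref{thm:generalization}. Collecting everything leaves the single intermediate estimate
\[
\E[L(\bW_T)-L(\bW^{*})]\lesssim \Big(\frac{\eta^2\rho^2 T}{n^2}+\frac{\eta\rho}{n}\Big)\sum_{s=0}^{T-1}\E[L_S(\bW_s)]+\Lambda_{1/(\eta T)},
\]
so the whole problem reduces to controlling the cumulative empirical risk $\Sigma_T:=\sum_{s=0}^{T-1}\E[L_S(\bW_s)]$.

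To bound $\Sigma_T$ I would run an optimization (regret-type) argument against the comparator $\bW^{*}_{1/(\eta T)}$, exploiting the almost co-coercivity of the gradient that holds for two-layer NNs. Writing each summand as
\[
\E[L_S(\bW_s)]=\E\Big[L_S(\bW_s)-L_S(\bW^{*}_{1/(\eta T)})-\tfrac{1}{2\eta T}\|\bW^{*}_{1/(\eta T)}-\bW_0\|_2^2\Big]+L(\bW^{*})+\Lambda_{1/(\eta T)},
\]
where I used $\E[L_S(\bW^{*}_{1/(\eta T)})]=L(\bW^{*}_{1/(\eta T)})$, the leading bracket is the optimization error at step $s$. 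Controlling its sum by the step-$s$ analogue of Theorem~\ref{thm:opt} (whose error is itself a small multiple of $\Sigma_s$ together with $\O(\Lambda_{1/(\eta T)})$) and using $\tfrac{1}{2\eta}=\tfrac{T}{2}\cdot\tfrac{1}{\eta T}$ to identify $\tfrac{1}{2\eta}\|\bW^{*}_{1/(\eta T)}-\bW_0\|_2^2\le T\Lambda_{1/(\eta T)}$, one obtains $\Sigma_T=\O\big(T(L(\bW^{*})+\Lambda_{1/(\eta T)})\big)$ once the coefficient that the sum $\Sigma_T$ picks up on the right-hand side is driven below one by the over-parameterization hypotheses.

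Finally I would substitute this back. Because $\rho\asymp m^{1-2c}$ and $\eta T m^{1-2c}=\O(n)$, the quantity $\eta T\rho/n=\O(1)$, so both prefactors $\frac{\eta^2\rho^2 T}{n^2}$ and $\frac{\eta\rho}{n}$ are of order $\O(1/T)$; multiplying by $\Sigma_T=\O(T(L(\bW^{*})+\Lambda_{1/(\eta T)}))$ yields $\O\big(\frac{\eta T m^{1-2c}}{n}L(\bW^{*})+\Lambda_{1/(\eta T)}\big)$, where the $\Lambda$ contribution is absorbed using $\frac{\eta T m^{1-2c}}{n}=\O(1)$, and adding the approximation term gives the claim. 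I expect the genuine obstacle to be the estimate for $\Sigma_T$: the identical cumulative sum sits inside both the stability bound and the optimization bound, so the argument is inherently self-referential, and the delicate point is to invoke the width conditions quantitatively enough both to close the recursion without circularity and to retain the sharp factor $\eta T m^{1-2c}/n$ multiplying $L(\bW^{*})$, rather than the coarser constant one would get from the naive bound $\E[L_S(\bW_s)]\le L_S(\bW_0)\le c_0$.
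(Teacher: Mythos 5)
Your proposal is correct and follows essentially the same route as the paper: the same decomposition fed by Theorems~\ref{thm:generalization} and~\ref{thm:opt}, the same observations $\rho\asymp m^{1-2c}$ and $\tilde b=\O(1)$, and the same closing of the self-referential cumulative risk via the width conditions --- your estimate $\sum_{s=0}^{T-1}\E[L_S(\bW_s)]=\O\big(T(L(\bW^{*})+\Lambda_{\frac{1}{\eta T}})\big)$ is precisely the paper's Lemma~\ref{lem:sum-risk}, obtained there by multiplying the horizon-$T$ averaged regret inequality by $T$ and invoking Lemma~\ref{lem:distance} together with condition~\eqref{m-5}. One caution: carry out that estimate through the single averaged inequality as the paper does rather than literally summing per-step optimization bounds, since the latter would accumulate the terms $\frac{1}{2\eta s}\|\bW^{*}_{\frac{1}{\eta T}}-\bW_0\|_2^2$ over $s$ and cost a spurious $\log T$ factor on the approximation error.
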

\begin{remark}[Comparison with \cite{richards2021stability,lei2022stability}]\label{remark:2layer}
Theorem~\ref{thm:excess} provides the first excess risk bounds of GD for two-layer NNs with general scaling parameter $c\in[1/2,1]$ which recovers the previous work \cite{richards2021stability,lei2022stability} with $c=1/2$. Specifically, \cite{richards2021stability} derived the excess risk bound $\O\big(    \frac{  \eta  T   }{n }    L(\bW^{*} )    + \Lambda_{\frac{1}{\eta T}} \big)$ with $m\gtrsim (\eta T)^5$, we relax this condition to $m\gtrsim (\eta T)^3$ by providing a better estimation of the smallest eigenvalue of a Hessian matrix of the empirical risk (see Lemma~\ref{thm:stability}). This bound was obtained in \cite{lei2022stability}   under a crucial condition $\E[L(\bW_s)]\ge L(\bW^{*}_{\frac{1}{\eta T}})$ for any $s\in[T]$, which is difficult to verify in practice. Here, we are able to remove this condition by using  $L(\bW^{*}_{\frac{1}{\eta T}}) - L(\bW_s)\le L(\bW^{*}_{\frac{1}{\eta T}}) - L(\bW^{*} ) \le \Lambda_{\frac{1}{\eta T}}$ since $L(\bW_s)\ge L(\bW^{*})$ when controlling the bound of GD iterates (see Lemma~\ref{lem:distance} for details).  
Furthermore,  our result in Theorem \ref{thm:excess} implies that the larger the scaling parameter $c$ is, the better the excess risk bound is. The reason could be that the smoothness parameter related to the objective function along the GD trajectory becomes smaller for a larger $c$ (see Lemma~\ref{lem:smoothness} for details).  
\end{remark}
As a direct application of Theorem \ref{thm:excess}, we can derive the excess risk rates of GD for two-layer NNs by properly trade-offing the generalization, optimization and approximation errors. 
To this end, we further introduce the following assumption on network complexity to control the approximation error by noting that $\Lambda_{\frac{1}{\eta T}}\le \|\bW^{*}\|_2^2/2\eta T$.  
\begin{ass}[\cite{richards2021learning}]\label{ass:minimizer}
There is $\mu\!\in\![0,1]$  and population risk minimizer $\bW^{*}$ with  $\|\bW^{*}\|_2\!\le\! m^{\frac{1}{2}-\mu}$. 
\end{ass}
%The definitions of $\bW^{*}$ and $\bW^{*}_{\lambda} $ imply $ L(\bW^{*} ) \le L(\bW^{*}_{\lambda})$ and $L(\bW^{*}_{\lambda}) + \frac{\lambda}{2 }\|\bW^{*}_{\lambda} - \bW_0\|_2^2 \le L(\bW^{*} ) + \frac{\lambda}{2 }\|\bW^{*}  - \bW_0\|_2^2 $. Then we have $\| \bW^{*}_{\lambda}- \bW_0\|_2 \le \|\bW^{*}- \bW_0\|_2 $.  Assumption~\ref{ass:minimizer} combined with this result implies    $\| \bW^{*}_{\lambda}- \bW_0 \|_2   \le m^{\frac{1}{2}-\mu} + \textcolor{blue}{\|\bW_0\|_2} $. 

%\textcolor{blue}{Can we assume $\|\bW_0\|_2\le m^{1/2-\mu}$ to ensure that condition $ \eta T( \sqrt{\eta T} + \|\bW^{*}\!-\!\bW_0 \|_2 ) =\O(m^c)$ holds when we set $m\asymp (\eta T)^{\frac{3}{2c}}$?}

%Further,  the approximation error can be bounded by \begin{align}\label{eq:regu-minimizer-bound}&\Lambda_{\lambda}:= L(\bW^{*}_{\lambda}) + \frac{\lambda}{2 }\|\bW^{*}_{\lambda} - \bW_0\|_2^2 - L(\bW^{*} )\nonumber\\ &\le  \frac{\lambda}{2 }\|\bW^{*}  - \bW_0\|_2^2.\end{align}Then we know $\Lambda_{\lambda} \le    (m^{1-2\mu }+ \textcolor{blue}{\|\bW_0\|_2^2}  )\lambda   .$
\begin{corollary} \label{cor:excess-rate} 
Let assumptions in Theorem~\ref{thm:excess} hold and Assumption  \ref{ass:minimizer} hold. %For any $c\in[1/2,1]$ and $\mu\in[0,1]$, 
Suppose ${c\over 3}+\mu>{1\over 2}$.% then the following statements hold true. 
 \begin{enumerate}[label=(\alph*), leftmargin=*]
  \setlength\itemsep{-2mm}
     \item If $c\in[1/2,3/4) $ and $c+\mu\ge 1$,    
we can choose $m\asymp (\eta T)^{\frac{3}{2c}}$ and $\eta T$ such that $ n^{ \frac{c}{6\mu+2c-3} } \lesssim \eta T \lesssim n^{\frac{c}{3-4c}}$. If $c\in[3/4,1]$, we can choose  $m\asymp (\eta T)^{\frac{3}{2c}}$ and $\eta T$ such that $ \eta T \gtrsim n^{ \frac{c}{6\mu+2c-3} }$. Then there holds 
\[\E[L(\bW_T) - L(\bW^{*})] = \O\big(\frac{1}{\sqrt{n}} \big).\] 

\item Assume   $L(\bW^{*})=0$. 
We can choose   $m\asymp (\eta T)^{\frac{3}{2c}}$ and $\eta T$ such that $ \eta T \gtrsim n^{\frac{2c}{6\mu+2c-3}}$, and get 
\[\E[L(\bW_T) - L(\bW^{*})] 
     = \O\big( \frac{1}{n}  \big).\]
 \end{enumerate}
\end{corollary}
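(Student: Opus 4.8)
The plan is to specialize the excess-risk bound of Theorem~\ref{thm:excess} and then trade off its two terms against the sample size $n$. First I would feed Assumption~\ref{ass:minimizer} into the approximation error: since $\Lambda_{\frac{1}{\eta T}}\le \|\bW^{*}\|_2^2/(2\eta T)$ and $\|\bW^{*}\|_2\le m^{\frac12-\mu}$, we obtain $\Lambda_{\frac{1}{\eta T}} \lesssim m^{1-2\mu}/(\eta T)$, so Theorem~\ref{thm:excess} reduces to
\[
\E[L(\bW_T)-L(\bW^{*})] = \O\Big( \frac{\eta T\, m^{1-2c}}{n}\, L(\bW^{*}) + \frac{m^{1-2\mu}}{\eta T} \Big).
\]
Substituting the prescribed width $m\asymp (\eta T)^{\frac{3}{2c}}$ gives $\eta T\, m^{1-2c}\asymp (\eta T)^{\frac{3-4c}{2c}}$ and $m^{1-2\mu}/(\eta T)\asymp (\eta T)^{-\frac{6\mu+2c-3}{2c}}$, whence
\[
\E[L(\bW_T)-L(\bW^{*})] = \O\Big( \frac{(\eta T)^{\frac{3-4c}{2c}}}{n}\, L(\bW^{*}) + (\eta T)^{-\frac{6\mu+2c-3}{2c}} \Big).
\]

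The exponent of the approximation term is negative precisely because $\frac{c}{3}+\mu>\frac12$, i.e. $6\mu+2c-3>0$; forcing this term to be $\lesssim n^{-1/2}$ demands the lower bound $\eta T\gtrsim n^{\frac{c}{6\mu+2c-3}}$, and forcing it below $n^{-1}$ demands $\eta T\gtrsim n^{\frac{2c}{6\mu+2c-3}}$. For the first term I would split on the sign of $\frac{3-4c}{2c}$: when $c<3/4$ it is positive, so $\lesssim n^{-1/2}$ requires the upper bound $\eta T\lesssim n^{\frac{c}{3-4c}}$, and the window $n^{\frac{c}{6\mu+2c-3}}\lesssim \eta T\lesssim n^{\frac{c}{3-4c}}$ is non-empty exactly when $6\mu+2c-3\ge 3-4c$, i.e. $c+\mu\ge1$ — the stated hypothesis of part~(a). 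When $c\ge 3/4$ the exponent is non-positive, so the first term is automatically $\lesssim n^{-1}\lesssim n^{-1/2}$ (using $\eta T\ge1$), leaving only the lower bound; this gives part~(a). For part~(b), $L(\bW^{*})=0$ annihilates the first term entirely, so I only push the approximation term below $n^{-1}$, yielding $\eta T\gtrsim n^{\frac{2c}{6\mu+2c-3}}$.

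The remaining and most delicate work is verifying that the chosen pair $(m,\eta T)$ satisfies all the structural preconditions of Theorem~\ref{thm:excess}: \eqref{m-1}, \eqref{m-5}, the growth condition $\eta T\, m^{1-2c}=\O(n)$, and $m\gtrsim \big(\eta T(\sqrt{\eta T}+\|\bW^{*}_{\frac{1}{\eta T}}-\bW_0\|_2)\big)^{1/c}$. The growth condition becomes $(\eta T)^{\frac{3-4c}{2c}}=\O(n)$, which is milder than the bounds on $\eta T$ already imposed. For the last condition I would control the regularized minimizer's displacement via $\|\bW^{*}_{\frac{1}{\eta T}}-\bW_0\|_2\lesssim \|\bW^{*}\|_2\le m^{\frac12-\mu}$ (using the distance estimate of Lemma~\ref{lem:distance}); with $m\asymp(\eta T)^{\frac{3}{2c}}$ the dominant $\sqrt{\eta T}$ contribution matches $m^{c}$ exactly, while the $m^{\frac12-\mu}$ contribution requires $m^{c-\frac12+\mu}\gtrsim \eta T$, which again holds precisely under $\frac{c}{3}+\mu\ge\frac12$. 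I expect this precondition check to be the main obstacle, since it couples the displacement bound with the feasible range of $\eta T$ and must keep \eqref{m-1}–\eqref{m-5} satisfied simultaneously; once it is dispatched, the risk-rate claim follows from the routine matching of exponents described above.
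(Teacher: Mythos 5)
Your proposal follows essentially the same route as the paper's proof: bound the approximation error by $\Lambda_{\frac{1}{\eta T}}\lesssim m^{1-2\mu}/(\eta T)$ via Assumption~\ref{ass:minimizer}, substitute $m\asymp(\eta T)^{\frac{3}{2c}}$ into Theorem~\ref{thm:excess}, and match exponents with the same case split on $c\lessgtr 3/4$ and the same roles for $c+\mu\ge 1$ and $\frac{c}{3}+\mu>\frac12$; the paper, too, verifies the width preconditions \eqref{m-1} and \eqref{m-5} only at the level of assertion for this choice of $m$, so your deferral there is no weaker than the original argument. One small correction: the displacement bound $\|\bW^{*}_{\frac{1}{\eta T}}-\bW_0\|_2\le\|\bW^{*}-\bW_0\|_2$ follows from the definition of the regularized minimizer (equivalently from $\|\bW^{*}_{\frac{1}{\eta T}}-\bW_0\|_2\le\sqrt{\eta T\Lambda_{\frac{1}{\eta T}}}$ together with $\Lambda_{\frac{1}{\eta T}}\le\frac{1}{2\eta T}\|\bW^{*}-\bW_0\|_2^2$), not from Lemma~\ref{lem:distance}, which concerns the GD iterates.
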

 \iffalse
\begin{figure}[t!]
\centering
 {\includegraphics[scale=0.45]{figure.pdf}}  
\caption{\small Scaling parameter $c$ versus Network complexity parameter $\mu$ for Part (a) in Corollary~\ref{cor:excess-rate}. \textit{Blue Region without dots}:  values of $c$ and $\mu$ where  over-parameterization is necessary to achieve error bound $\O(1/\sqrt{n})$. \textit{Blue  Region with dots}:  values of $c$ and $\mu$ where under-parameterization is sufficient to achieve error bound $\O(1/\sqrt{n})$. \textit{Pink Region}: the desired bound cannot be 
guaranteed.\label{fig-1} } 
\end{figure}
\fi 
 
\begin{figure}[t!]
\begin{subfigure}{.5\textwidth}
\centering
\includegraphics[width=.94\linewidth]{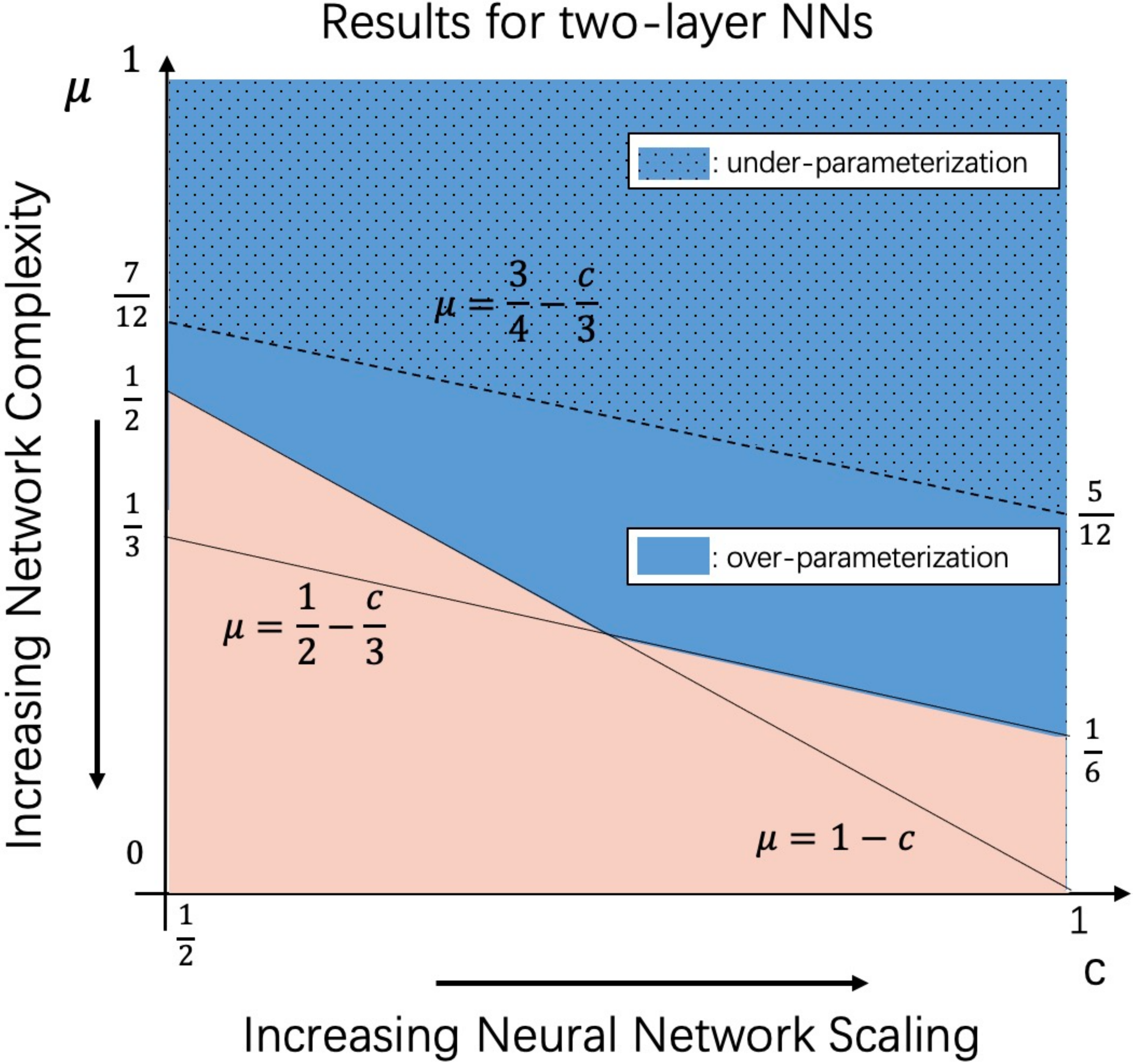}  
\end{subfigure}
\begin{subfigure}{.5\textwidth}
\centering
\includegraphics[width=.95\linewidth]{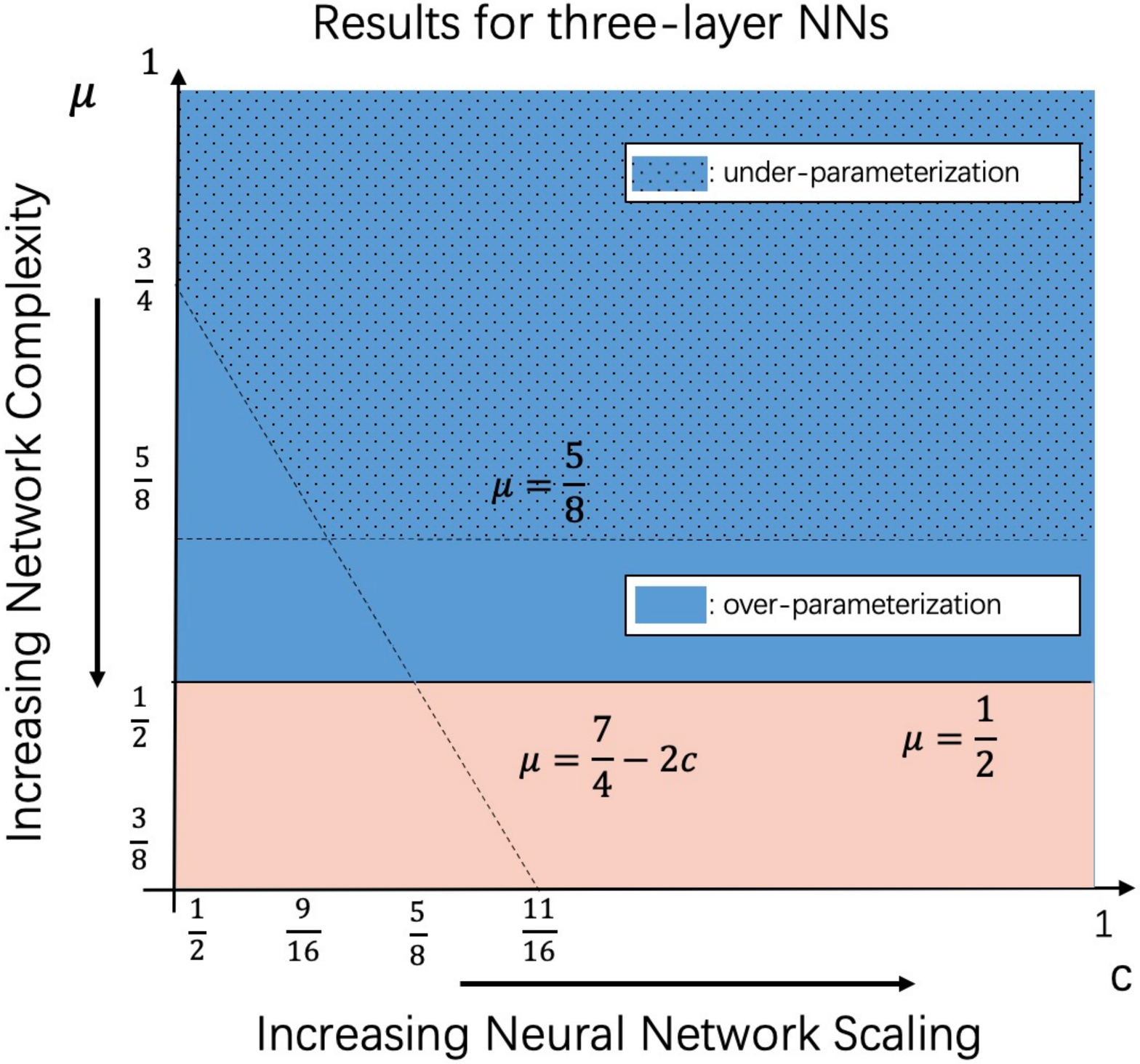}  
\end{subfigure}
\caption{\small Scaling parameter $c$ versus Network complexity parameter $\mu$ for Part (a) in Corollary~\ref{cor:excess-rate} (left) and Part (a) in Corollary~\ref{cor:3-excess-rate-2} (right).  \textit{Blue Region without dots}:  values of $c$ and $\mu$ where  over-parameterization is necessary to achieve error bound $\O(1/\sqrt{n})$. \textit{Blue  Region with dots}:  values of $c$ and $\mu$ where under-parameterization is sufficient to achieve error bound $\O(1/\sqrt{n})$. \textit{Pink Region}: the desired bound cannot be 
guaranteed.\label{fig-1}}
\vspace*{-3mm}
\end{figure}
 
 \noindent{\bf Interpretation of the Results:}  Corollary \ref{cor:excess-rate}  indicates a quantitative condition via the network complexity and scaling factor where GD for two-layer NNs can achieve the excess risk rate $\O(1/\sqrt{n})$ in  under-parameterization and over-parameterization regimes. We use the left panel of Figure \ref{fig-1} to interpret the results. 
 
Let us first explain the meanings of the regions and lines in the left panel of Figure \ref{fig-1}. Specifically, the \textit{blue regions} (with or without dots)  correspond to the conditions ${c\over 3}+\mu>{1\over 2}$ and $c+\mu\ge 1$ in part (a) of Corollary \ref{cor:excess-rate}, while our results do not hold in the \textit{pink region} which violates the conditions in part (a).     Furthermore, under conditions ${c\over 3}+\mu>{1\over 2}$ and $c+\mu\ge 1$, that   the desired bound in part (a) can be achieved by choosing $m\asymp (\eta T)^{\frac{3}{2c}}$ for any $\eta T$ satisfying $\eta T\gtrsim n^{\frac{c}{6\mu+2c-3}}$ if $c\ge 3/4$ and $n^{\frac{c}{3-4c}} \gtrsim \eta T \gtrsim  n^{ \frac{c}{6\mu+2c-3} }   $ if $c\in[1/2,3/4)$, which further implies that  GD with any width $m\gtrsim n^{\frac{3}{2(6\mu+2c-3)}}$ if $c\in[3/4,1]$ and $n^{\frac{3}{2(3-4c)}}\gtrsim m \gtrsim   n^{\frac{3}{2(6\mu+2c-3)}} $ if $c\in[1/2,3/4)$ with suitable iterations can achieve the error rate  $\O(1/\sqrt{n})$. This observation   tells us  that \textit{\bf the smallest width} for guaranteeing our results in part (a) is $m\asymp n^{\frac{3}{2(6\mu+2c-3)}}$ for any $c\in[1/2,1]$.   The dotted line $\mu=3/4-c/3$ in the figure corresponds to the setting $m\asymp n$, i.e.,  $   \frac{3}{2(6\mu+2c-3)}= 1$. Correspondingly, we use the {\em dotted blue region} above the dotted line to indicate the {under-parameterization region} and {\em the blue region without dots} below the dotted line for the {over-parameterization region}. With the above explanations, we can interpret the left panel of  Figure \ref{fig-1} as follows. 

\vspace*{-1mm}
 $\bullet$  Firstly, from the figure, we know that, if values of $c$ and $\mu$ are located above the dotted line $\mu\ge 3/4-c/3$, i.e., the blue region with dots, under-parameterization is {\em sufficient} for GD to achieve the error rate  $\O(1/\sqrt{n})$. It implies that the sufficient condition for under-parameterized NNs trained by GD achieving the desired rate is $\mu\ge 3/4-c/3$. The potential reason  is that the population risk minimizer $\bW^{*}$ is well-behaved in terms of its norm being relatively small with $\mu$ being relatively large there. In particular,  when $\mu > 1/2$, $\|\bW^{*}\|_2 \le m^{1/2-\mu}$ tends to $0$ as $m$ tends to infinity. Hence,  it is expected that  under-parameterized NNs can learn this relatively simple $\bW^*$ well.   However, it is worthy of mentioning that over-parameterization can also achieve the rate   $\O(1/\sqrt{n})$ since the dotted line only indicates the smallest width required for achieving such an error rate.  

\vspace*{-1mm}
 $\bullet$ Secondly, from the figure, we see that, if $c$ and $\mu$ belong to the blue region without dots which is between the solid lines and the dotted line, then over-parameterization is {\em necessary} for achieving the error rate $\O(1/\sqrt{n}).$  This is because, in the blue region without dots, that the conditions of choosing $m\gtrsim n^{\frac{3}{2(6\mu+2c-3)}}$ in part (a) of Corollary \ref{cor:excess-rate} which will always indicate the over-parameterization region, i.e., $m \gtrsim n.$ Furthermore, from the above discussions, our theoretical results indicate that the over-parameterization does bring benefit for GD to achieve good generalization in the sense that GD can achieve  excess risk  rate $\O(1/\sqrt{n})$ when $c$ and $\mu$ is in the whole blue region (with or without dots) while under-parameterization can only do so for the blue region with dots where the network complexity is relatively simple, i.e., $\mu$ is relatively large.

 \vspace*{-1mm}

 $\bullet$ Thirdly, our results do not hold for GD when values of $c$ and $\mu$ are in the pink region in the figure. In particular, when $\mu<1/6$, our bounds do not hold for any $c\in[1/2,1]$.  We suspect that this is due to the artifacts of our analysis tools and it remains an open question to us whether we can get a generalization error bound $\O(1/\sqrt{n})$ when $\mu<1/6.$ In addition,  our results in Corollary~\ref{cor:excess-rate} also indicate that the requirement on $m$ becomes weaker as $c$ and $\mu$ become larger. It implies that networks with larger scaling and simpler network complexity are biased to weaken the over-parameterization   for GD to achieve the desired error rates for two-layer NNs. 

\vspace*{-2mm}
\begin{remark}
In Lemma~\ref{lem:smoothness}, we show  that $f_{\bW}$ is $B_{\sigma'}c_{\bx} m^{\frac{1}{2}-c}$-Lipschitz. Combining this result with Assumption~\ref{ass:minimizer}  we know $|f_{\bW^{*}}(\bx)-f_{\mathbf{0}}(\bx) |^2=\O(m^{2(1-c-\mu)})$. In order for  $|f_{\bW{*}}(\bx)-f_{\mathbf{0}}(\bx) |^2$ to  not  vanish as $m$ tends to infinity, one needs $c+\mu\le 1$. In Corollary~\ref{cor:excess-rate},  we also need $\frac{c}{3}+\mu>\frac{1}{2}$ to ensure the excess risk bounds vanish. Combining these two conditions together implies that $c$ can not be larger than $3/4$. That is, for the range $c\in(3/4,1]$,  the conditions in Corollary~\ref{cor:excess-rate} restrict the class of functions the networks can represent as $m$ tends to infinity. However, we want to emphasize that even for the simplest case that $|f_{\bW^{*}}(\bx)-f_{\mathbf{0}}(\bx)|^2$ tends to $0$ as $m$ tends to infinity, our results still imply that over-parameterization does
bring benefit for GD to achieve optimal excess risk rate $\O(1/\sqrt{n})$.
Besides, our corollary mainly discusses the conditions for achieving the excess risk rate $\O(1/\sqrt{n})$  and $\O(1/n)$.  The above-mentioned conditions will be milder if we consider the slower excess risk rates. Then the restriction on $c$ will be weaker.       
Furthermore, our main result (i.e., Theorem~\ref{thm:excess})  does not rely on  Assumption~\ref{ass:minimizer}, and it holds for any setting.
\end{remark} 
\vspace*{-2mm}
 \noindent {\bf Comparison with the Existing Work:} 
 Part (b) in Corollary~\ref{cor:excess-rate} shows fast rate $\O(1/n)$ can be derived under a low-noise condition $L(\bW^{*})=0$ which is equivalent to the fact that there is a true network such that $ y = f_{\bW^{*}}(\bx)$ almost surely. 
   Similar to part (a),  large $c$ and large $\mu$  also help weaken the requirement on the width $m$ in this case. 
  For a special case   $\mu=1/2$  and $c= 1/2$,  \cite{lei2022stability}  proved that GD for two-layer NNs achieves the excess risk rate $\O(1/\sqrt{n})$ with $m\asymp n^{3/2}$ and $\eta T\asymp \sqrt{n}$ in the general case, which is further improved to $\O(1/n)$ with $m\asymp n^3$ and $\eta T\asymp n$ in a low-noise case. Corollary~\ref{cor:excess-rate} recovers their results with the same conditions on $m$ and $\eta T$ for this setting. 
  %Noting $6\mu+2c-3>0$. For any fixed $\mu$, our results (both parts (a) and (b)) imply that an increasing in $c$ results in a smaller network width $m$. Hence, $c=1$ is the best choice to reduce the network width. 

 \cite{richards2021learning} studied GD with weakly convex losses and showed that the excess population risk is controlled by
 $\O\big( \frac{\eta T L^2}{n} 
+  \frac{\|\bW_{\epsilon}-\bW_0\|_2^2}{\eta T}+ \epsilon(\eta T + \|\bW^{*}-\bW_0\|_2) \big)$ if $2\eta \epsilon<1/T$ when the empirical risk is $\epsilon$-weakly convex and $L$-Lipschitz continuous, where $\bW_{\epsilon}=\arg\min_{\bW}L_S(\bW)+\epsilon\|\bW-\bW_0\|_2^2$. If the approximation error is small enough, then the $\O(1/\sqrt{n})$ bound can be achieved by choosing $\eta T=\sqrt{n}$ if $\|\bW_{\epsilon}\|_2=\O(1)$. Indeed, their excess risk bound will not converge for the general case. Specifically, note that $L_S(\bW_\epsilon)+\epsilon\|\bW_\epsilon-\bW_0\|_2^2\le L_S(0)+\epsilon\|\bW_0\|_2^2$, then there holds $\|\bW_{\epsilon}-\bW_0\|_2^2=\O(1/\epsilon)$. %{\color{red} should it be $\|\bW_{\epsilon}-\bW_0\|_2=\O(1/\sqrt{\epsilon})$?}. 
The simultaneous appearance of $ \frac{1}{\eta T\epsilon} $ and $\eta T \epsilon  $ causes the non-vanishing error bound.   \cite{richards2021learning} also investigated the weak convexity of two-layer NNs with a smooth activation function. Under the assumption that the derivative of the loss function is uniformly bounded by a constant,  
they proved that the weak convexity parameter is controlled by  $\O(d/m^c)$. We provide a dimension-independent weak convexity parameter which further yields a dimension-independent excess risk rate $\O(1/\sqrt{n})$. More discussion can be found in Appendix~\ref{appendix:snn-excess}.

\subsection{Three-layer Neural Networks with Scaling Parameters }\label{sec:main-three}
 
Now, we present our results for three-layer NNs.   Let $\hat{\rho}=4B_2(1+2B_1)$, where $B_1,B_2>0$ are  constants depending on $c_{\bx}, B_{\sigma'}, B_{\sigma''}$ and $c_0$, whose specific forms are given in Appendix~\ref{appendix:three-generalization}. Let $\mathcal{B}_{T}=\sqrt{\eta T}+\|\bW_0\|_2$. 
We first present the generalization bounds for three-layer NNs.  
\begin{theorem}[Generalization error]\label{thm:3-unbounded-generalization}
Suppose Assumptions~\ref{ass:activation} and \ref{ass:loss} hold. Let $\{\bW_t\}$  be produced by \eqref{eq:GD} with $\eta_t\equiv \eta\le 1/(8\hat{\rho})$ based on $S$. Assume
\begin{align}\label{eq:3-m-condition-1}
        &m \gtrsim    (\eta T)^{4} +  (\eta T)^{\frac{1}{4c-2}} +  \|\bW_0\|_2^{\frac{4}{8c-3}} + \|\bW_0\|_2^{\frac{1}{6c-3}} + \big( (\eta T  \mathcal{B}_{T} )^2   +   \frac{(\eta T)^{\frac{7}{2}} \mathcal{B}_{T} }{n} \big)^{\frac{1}{5c - \frac{1}{2}}} \nonumber\\
        & +  \big( (\eta T  )^\frac{3}{2}\mathcal{B}_{T}^2 +  \frac{(\eta T)^3 \mathcal{B}_{T} }{n} \big)^{\frac{1}{4c - 1}}  +   \big( (\eta T)^2  \mathcal{B}_{T}  + \frac{(\eta T)^{\frac{7}{2}}}{n} \big)^{\frac{1}{5c - 1}} + \big( (\eta T)^{\frac{3}{2}}  \mathcal{B}_{T}   + \frac{(\eta T)^3}{n} \big)^{\frac{1}{4c - \frac{3}{2}}}.
     \end{align}
     Then, for any    $t\in[T]$,  
 \begin{equation*}
     \E[ L(\bW_t) - L_S(\bW_t) ] \le \Big( \frac{4  e^2 \eta^2 \hat{\rho}^2 t }{n^2} + \frac{4 e  \eta  \hat{\rho}       }{n } \Big)
      \sum_{j=0}^{t-1} \E\big[  L_S( \bW_j ) \big]. 
\end{equation*}
\end{theorem}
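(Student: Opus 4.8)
The plan is to obtain the stated inequality from Lemma~\ref{lem:connection} in exactly the same way the two-layer bound of Theorem~\ref{thm:generalization} is obtained, since the claim is verbatim that bound with the three-layer smoothness constant $\hat{\rho}$ in place of $\rho$. Concretely, Lemma~\ref{lem:connection} reduces everything to an on-average argument stability estimate (Definition~\ref{def:stability}) of the shape $\frac{1}{n}\sum_{i=1}^n \E[\|\bW_t - \bW_t^{(i)}\|_2^2] \lesssim \frac{e^2\eta^2\hat{\rho}\,t}{n^2}\sum_{j=0}^{t-1}\E[L_S(\bW_j)]$. Once this is available, I would substitute it into the two terms of Lemma~\ref{lem:connection}: the first term directly yields the $\frac{4e^2\eta^2\hat{\rho}^2 t}{n^2}$ contribution, and the square-root second term yields the $\frac{4e\eta\hat{\rho}}{n}$ contribution. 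A prerequisite is that $\bW \mapsto \ell(\bW;z)$ is $\hat{\rho}$-smooth along the trajectory, which is where Assumptions~\ref{ass:activation} and~\ref{ass:loss} and the definition $\hat{\rho}=4B_2(1+2B_1)$ enter.

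To produce the stability estimate I would run two GD trajectories on $S$ and $S^{(i)}$ and track $\bW_{t+1}-\bW_{t+1}^{(i)}$. Subtracting the two update rules gives $\bW_{t+1}-\bW_{t+1}^{(i)} = \big(\bW_t - \eta\nabla L_S(\bW_t)\big) - \big(\bW_t^{(i)} - \eta\nabla L_S(\bW_t^{(i)})\big) - \frac{\eta}{n}\big(\nabla\ell(\bW_t^{(i)};z_i) - \nabla\ell(\bW_t^{(i)};z_i')\big)$, where the single-sample term carries the decisive $1/n$ factor because $L_S$ and $L_{S^{(i)}}$ differ only in their $i$-th summand. The heart of the estimate is to show the GD operator $\bW\mapsto \bW-\eta\nabla L_S(\bW)$ is almost non-expansive, i.e. $\|(\bW-\eta\nabla L_S(\bW))-(\bW'-\eta\nabla L_S(\bW'))\|_2 \le (1+\epsilon_t)\|\bW-\bW'\|_2$ with $\sum_t\epsilon_t\lesssim 1$. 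Expanding the square and invoking an almost co-coercivity inequality $\langle\nabla L_S(\bW)-\nabla L_S(\bW'),\bW-\bW'\rangle \ge \frac{1}{2\hat{\rho}}\|\nabla L_S(\bW)-\nabla L_S(\bW')\|_2^2 - (\text{error})$ together with $\eta\le 1/(8\hat{\rho})$ gives the non-expansiveness, with $\epsilon_t$ governed by the error term. Unrolling the recursion, using $\prod_{s<t}(1+\epsilon_s)\le e$ and the self-bounding property $\|\nabla\ell(\bW;z)\|_2^2\lesssim\hat{\rho}\,\ell(\bW;z)$ of the smooth square loss, converts the single-sample gradients into $\sqrt{\E[L_S(\bW_j)]}$ factors and yields the target stability bound after squaring and averaging over $i$.

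The main obstacle, and the genuinely new content relative to the two-layer case, is establishing the almost co-coercivity for three-layer networks. For two-layer NNs this is essentially free because the empirical risk decreases monotonically along GD, but the two-fold nonlinear composition of the three-layer map destroys that monotonicity. My plan is a single induction on $t$ maintaining several coupled invariants simultaneously: (i) the iterates stay in a controlled region so that $\|\bW_t-\bW_0\|_2$, and hence $\mathcal{B}_T$, remains bounded; (ii) the residual-weighted curvature contribution of $\nabla^2 f_{\bW}$ to the Hessian of $L_S$ stays small; and (iii) the per-step error $\epsilon_t$ is summable. The difficulty specific to three layers is that $\nabla^2 f_{\bW}$ couples $\bW^{(1)}$ and $\bW^{(2)}$, so its norm must be controlled through both layers at once, with the second-layer weights multiplying the first-layer Jacobian; the two scaling factors $m^{-c}$ and the over-parameterization are precisely what drive these curvature terms to zero. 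Matching each such term against the induction hypothesis is what produces the width condition~\eqref{eq:3-m-condition-1}, each of whose summands is the threshold at which one boundedness or curvature quantity becomes controllable.

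Finally, the square-root term of Lemma~\ref{lem:connection} requires bounding $\E[L_S(\bW_t)]$ by the average $\frac{1}{t}\sum_{j=0}^{t-1}\E[L_S(\bW_j)]$; for two-layer NNs this is immediate from monotone decrease, and for three-layer NNs it again follows from the near-monotonicity delivered by the same induction, so that $\big(\frac{2\hat{\rho}\E[L_S(\bW_t)]}{n}\sum_i\E[\|\bW_t-\bW_t^{(i)}\|_2^2]\big)^{1/2}\lesssim \frac{e\eta\hat{\rho}}{n}\sum_{j=0}^{t-1}\E[L_S(\bW_j)]$. Collecting the two contributions gives the claimed prefactor $\frac{4e^2\eta^2\hat{\rho}^2 t}{n^2}+\frac{4e\eta\hat{\rho}}{n}$. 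I expect closing the induction for the almost co-coercivity in the third paragraph to be by far the hardest and most delicate step, since it is there that the compositional curvature of the three-layer network must be tamed by over-parameterization.
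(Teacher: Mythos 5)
Your plan follows essentially the same route as the paper: reduce the generalization gap to on-average argument stability via Lemma~\ref{lem:connection}; prove stability by tracking the coupled GD trajectories and establishing almost co-coercivity (hence almost non-expansiveness) of the gradient operator along the trajectory; tame the three-layer curvature by an induction showing the iterates stay in a bounded region so that the trajectory-dependent smoothness parameter can be replaced by the constant $\hat{\rho}$ (this is exactly the content of the paper's Lemmas~\ref{lem:3-unbounded-wbound} and \ref{lem:3-unbounded-coercivity} and Theorem~\ref{thm:3-unbounded-stability}); then finish with the self-bounding property and the decrease of $L_S(\bW_t)$. The paper's induction is a clean two-stage version of your ``coupled invariants'': first a crude bound $\|\bW_t-\bW_0\|_2\le \eta t m^{2c-1}$ (which needs no smoothness, only gradient-norm estimates), which under \eqref{eq:3-m-condition-1} yields $\rho_{\bW_t}\le\hat{\rho}$, which in turn gives the refined bound $\|\bW_t-\bW_0\|_2\le\sqrt{2c_0\eta t}$ by the descent lemma; once this holds, the decrease of $L_S(\bW_t)$ is exact, so no ``near-monotonicity'' is needed.

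There is, however, one concrete flaw that would prevent you from reaching the bound as stated: the placement of the single-sample correction term. You evaluate both $\nabla\ell(\cdot;z_i)$ and $\nabla\ell(\cdot;z_i')$ at $\bW_t^{(i)}$. Since $\bW_t^{(i)}$ is trained on $S^{(i)}$, it is independent of $z_i$, so after taking expectations and applying self-bounding you get $\E[\ell(\bW_t^{(i)};z_i)]=\E[L(\bW_t^{(i)})]=\E[L(\bW_t)]$, a \emph{population} risk, not an empirical one. Followed literally, your stability bound reads $\frac{1}{n}\sum_{i=1}^n\E[\|\bW_t-\bW_t^{(i)}\|_2^2]\lesssim \frac{e^2\eta^2\hat{\rho}\,t}{n^2}\sum_{j<t}\big(\E[L_S(\bW_j)]+\E[L(\bW_j)]\big)$, and plugging this into Lemma~\ref{lem:connection} leaves population risks on the right-hand side: the resulting generalization bound is circular and does not match the theorem (it could only be closed by an extra Gronwall-type argument requiring $\eta\hat{\rho}t/n$ to be small, an assumption the theorem does not make). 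The fix is the symmetric leave-one-out decomposition the paper uses: write both updates relative to $L_{S^{\setminus i}}$, so that the correction term is $\frac{\eta}{n}\big(\nabla\ell(\bW_t;z_i)-\nabla\ell(\bW_t^{(i)};z_i')\big)$, with each gradient evaluated at the iterate trained on the corresponding sample; then self-bounding produces $\ell(\bW_t;z_i)$ and $\ell(\bW_t^{(i)};z_i')$, whose expectations both reduce to $\E[L_S(\bW_t)]$ by exchangeability. Correspondingly, the almost co-coercivity should be stated for $\nabla L_{S^{\setminus i}}$ (the same fixed function for both trajectories), as in Lemma~\ref{lem:3-unbounded-coercivity}, rather than for $\nabla L_S$.
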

Similar to Theorem~\ref{thm:generalization}, Theorem~\ref{thm:3-unbounded-generalization} also implies that a larger scaling $c$ relaxes the requirement on   $m$.
\begin{remark}\label{rmk:difference}
As compared to two-layer NNs,  the analysis of three-layer NNs is more challenging since we can only show that  $\lambda_{\max}\big(\nabla^2\ell(\bW;z)\big)\le \rho_{\bW}$ where $\rho_\bW$ depends on $\|\bW\|_2$, i.e., the smoothness parameter of $\ell$ relies on the upper bound of $\bW$, while that of two-layer NNs is uniformly bounded. In this way, three-layer NNs do not enjoy the almost co-coercivity, which is the key step to control the stability of GD. To handle this problem,   we first establish a crude estimate $\|\bW_t-\bW_0\|_2\le \eta t m^{2c-1}$ for any $c>1/2$ by induction strategy. By using this estimate, we can further show that $\rho_{\bW}\le \hat{\rho}$ with $\hat{\rho}=\O(1)$ for any $\bW$ produced by GD iterates if $m$ satisfies \eqref{eq:3-m-condition-1}.  Finally, by assuming $\eta \le 1/(2\hat{\rho})$ we build the upper bound  $\|\bW_t-\bW_0\|_2\le \sqrt{2c_0\eta t}$. However, for the case $c=1/2$, we cannot get a similar bound due to the condition $m^{2-4c}\|\bW_t-\bW_0\|_2^2=\O(m^{2c-1})$. Specifically, the upper bound of $\|\bW_t-\bW_0\|_2$ in this case contains a worse term  $2^t$, which is not easy to control. Therefore, we only consider $c\in(1/2,1]$ for three-layer NNs. 
The estimate of $\|\bW_t-\bW_0\|_2$  when $c=1/2$ remains an open problem.   
The detailed proof of the theorem is given in Appendix~\ref{appendix:three-generalization}.  
\end{remark}
%Recall $\mathcal{B}_{T}=\sqrt{\eta T}+\|\bW_0\|_2$. 
Let  $\hat{C}_\bW=4B_1\big(  m^{-3c}(\|\bW\|_2^2 + 2c_0\eta T) +  m^{\frac{1}{2}-2c}(\|\bW\|_2 +  \sqrt{2c_0\eta T}) \big)$ and $\hat{B}_{\bW }= \big(\frac{B_{\sigma'}^2 c_\bx}{m^{2c-1/2}} ( \sqrt{2 c_0\eta T } +  \| \bW   \|_2) + \frac{B_{\sigma'}B_{\sigma}}{m^{2c-1}}\big)( 2\sqrt{2\eta T c_0} + \|\bW  \|_2 )  + \sqrt{2c_0}$ .
The following theorem gives optimization error bounds for three-layer NNs. The proof is given in Appendix~\ref{appendix:three-optimization}. 
 \begin{theorem}[Optimization error]\label{thm:opt-unbounded}
Let Assumptions~\ref{ass:activation}, \ref{ass:loss} hold. Let  $\{\bW_t\}$  be produced by \eqref{eq:GD} with   $\eta_t\equiv \eta\le 1/(8\hat{\rho})$.  Let $\mathcal{C}_{T,n}=\eta T + \eta^3T^2/n^2$. Assume \eqref{eq:3-m-condition-1} and 
\begin{align}\label{m-condition-unbounded2}
      m \gtrsim & (\mathcal{C}_{T,n}(  \mathcal{B}_T + \|\bW^{*}_{\frac{1}{\eta T}}\|_2 )^4)^{\frac{1}{5c-\frac{1}{2}}} \!+\! (\mathcal{C}_{T,n}(  \mathcal{B}_T +\|\bW^{*}_{\frac{1}{\eta T}}\|_2 )^3)^{\frac{1}{4c-1}} \!+\! (\mathcal{C}_{T,n}(  \mathcal{B}_T + \|\bW^{*}_{\frac{1}{\eta T}}\|_2 )^2)^{\frac{1}{4c-\frac{3}{2}}} \nonumber\\
      &+ (\mathcal{C}_{T,n}(  \mathcal{B}_T + \|\bW^{*}_{\frac{1}{\eta T}}\|_2 ))^{\frac{1}{ 2c-\frac{1}{2}}}   .
\end{align}   
 Then we have
 \begin{align*}
      & \E\big[ L_S(\bW_{T})]-  L_S(\bW^{*}_{\frac{1}{\eta T}})  -   \frac{1}{2\eta T}\|\bW^{*}_{\frac{1}{\eta T}}  - \bW_0 \|_2^2\big] \\
    &\le          \hat{C}_{\bW^{*}_{\frac{1}{\eta T}}}   \hat{B}_{\bW^{*}_{\frac{1}{\eta T}}}  \big(  \big( \frac{4  e^2 \eta^3  T^2 \hat{\rho}^2}{ n^2} + \frac{4 e  \eta^2 T  \hat{\rho}  }{n }  \big)  \sum_{s=0}^{T-1 } \E \big[  L_S(\bW_s) \big]   +    \|\bW_{\frac{1}{\eta T}}^{*} -\bW_0 \|_2^2    +   \eta T \big[  L(\bW_{\frac{1}{\eta T}}^{*}) -  L(\bW^{*})\big]\big).  
 \end{align*}
\end{theorem}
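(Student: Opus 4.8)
The plan is to transport the almost-co-coercive, weakly-convex optimization argument that underlies the two-layer bound of Theorem~\ref{thm:opt} to the three-layer network, comparing the GD trajectory $\{\bW_t\}$ against the regularized population minimizer. I set $\lambda=\tfrac{1}{\eta T}$ and take $\bW^{*}_{\lambda}$ as the reference point, so that the completed square of the telescoping sum produces exactly the $\tfrac{1}{2\eta T}\|\bW^{*}_{\lambda}-\bW_0\|_2^2$ term on the left-hand side. The essential new difficulty, flagged in Remark~\ref{rmk:difference}, is that for three-layer networks neither the smoothness nor the curvature constant is uniform: one only has $\lambda_{\max}(\nabla^2\ell(\bW;z))\le\rho_{\bW}$ with $\rho_{\bW}$ growing in $\|\bW\|_2$, and the empirical risk need not decrease monotonically. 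I would therefore restrict all estimates to the GD trajectory, where these constants can be frozen.

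The first block of the proof assembles the trajectory-level ingredients. Using the induction of Remark~\ref{rmk:difference} (the crude bound $\|\bW_t-\bW_0\|_2\le\eta t\,m^{2c-1}$, then $\rho_{\bW}\le\hat{\rho}=\O(1)$ under \eqref{eq:3-m-condition-1}, then the refined $\|\bW_t-\bW_0\|_2\le\sqrt{2c_0\eta t}$ obtained from $\eta\le 1/(8\hat\rho)$), I freeze the smoothness at $\hat\rho$ and invoke the self-bounding inequality $\|\nabla L_S(\bW_t)\|_2^2\le 2\hat\rho\,L_S(\bW_t)$ for the nonnegative $\hat\rho$-smooth empirical risk. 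The same trajectory bound pins the constants $\hat{C}_{\bW^{*}_{\lambda}}$ and $\hat{B}_{\bW^{*}_{\lambda}}$ to finite, $\O(1)$-order quantities in which the vanishing factors $m^{-3c}$ and $m^{\frac12-2c}$ appear; the key observation is that the product $\hat{C}_{\bW^{*}_{\lambda}}\hat{B}_{\bW^{*}_{\lambda}}$ is precisely the weak-convexity constant of the three-layer loss along the trajectory, with $\hat{C}_{\bW}$ bounding the network Hessian $\|\nabla^2 f_{\bW}\|_2$ and $\hat{B}_{\bW}$ bounding the residual magnitude $|f_{\bW}(\bx)-y|$ (note the $\sqrt{2c_0}$ baseline). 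This mirrors the role of the prefactor $\tfrac{2\tilde b}{m^c}(\cdot)$ in Theorem~\ref{thm:opt}.

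The main step is an approximate descent estimate. Expanding $\|\bW_{t+1}-\bW^{*}_{\lambda}\|_2^2=\|\bW_t-\bW^{*}_{\lambda}\|_2^2-2\eta\langle\nabla L_S(\bW_t),\bW_t-\bW^{*}_{\lambda}\rangle+\eta^2\|\nabla L_S(\bW_t)\|_2^2$, I lower-bound the inner product by the quantified weak convexity $\langle\nabla L_S(\bW_t),\bW_t-\bW^{*}_{\lambda}\rangle\ge L_S(\bW_t)-L_S(\bW^{*}_{\lambda})-\tfrac12\hat{C}_{\bW^{*}_{\lambda}}\hat{B}_{\bW^{*}_{\lambda}}\|\bW_t-\bW^{*}_{\lambda}\|_2^2$, absorb the $\eta^2$ term via self-bounding and $\eta\le 1/(8\hat\rho)$, telescope over $t=0,\dots,T-1$, and divide by $T$. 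Because $\bW^{*}_{\lambda}$ is a \emph{population} minimizer, the analysis must pass between $L_S$ and $L$: converting the curvature/co-coercivity defect from population to empirical risk in expectation reproduces, exactly as in the generalization analysis of Theorem~\ref{thm:3-unbounded-generalization}, the on-average stability quantity and hence the term $\big(\tfrac{4e^2\eta^3T^2\hat\rho^2}{n^2}+\tfrac{4e\eta^2T\hat\rho}{n}\big)\sum_{s}\E[L_S(\bW_s)]$, the extra $\eta T$ factor arising from summing the defect over the trajectory. The correction $\sum_t\|\bW_t-\bW^{*}_{\lambda}\|_2^2$, split via $\|\bW_t-\bW_0\|_2\le\sqrt{2c_0\eta t}$ and $\|\bW^{*}_{\lambda}-\bW_0\|_2$, yields the remaining $\|\bW^{*}_{\lambda}-\bW_0\|_2^2$ and $\eta T[L(\bW^{*}_{\lambda})-L(\bW^{*})]$ summands, where I use $L(\bW^{*}_{\lambda})-L(\bW_s)\le L(\bW^{*}_{\lambda})-L(\bW^{*})\le\Lambda_{\frac{1}{\eta T}}$ (the device of Remark~\ref{remark:2layer}). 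Finally, upgrading the Cesàro average back to the last iterate $L_S(\bW_T)$ uses the near-monotonicity of $L_S$ supplied by the nearly co-coercive property.

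I expect the main obstacle to be this nearly co-coercive property for three-layer networks. Unlike two-layer networks, where the monotone decrease of $L_S$ delivers co-coercivity for free, here it holds only \emph{approximately} and only along the trajectory, so establishing it rests entirely on the induction of Remark~\ref{rmk:difference} keeping $\hat{C}_{\bW^{*}_{\lambda}},\hat{B}_{\bW^{*}_{\lambda}}$ of order $\O(1)$. The width condition \eqref{m-condition-unbounded2} is precisely what is needed to drive the weak-convexity constant $\hat{C}_{\bW^{*}_{\lambda}}\hat{B}_{\bW^{*}_{\lambda}}$ small enough that the comparison against $\bW^{*}_{\lambda}$ remains informative and the descent recursion closes; verifying that the various exponents $5c-\tfrac12$, $4c-1$, $4c-\tfrac32$, $2c-\tfrac12$ there match the powers of $\mathcal{B}_T+\|\bW^{*}_{\lambda}\|_2$ and $\mathcal{C}_{T,n}$ generated by the above estimates is the most delicate bookkeeping.
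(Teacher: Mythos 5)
Your overall architecture coincides with the paper's: the $g(\alpha)$-type weak-convexity estimate along segments ending at $\bW^{*}_{\frac{1}{\eta T}}$, the telescoped descent identity, the conversion between $L_S$ and $L$ via the stability-based generalization bound of Theorem~\ref{thm:3-unbounded-generalization}, the device $L(\bW^{*}_{\frac{1}{\eta T}})-\E[L(\bW_s)]\le L(\bW^{*}_{\frac{1}{\eta T}})-L(\bW^{*})$, and monotonicity of $L_S(\bW_t)$ (which, incidentally, is exact once $\hat{\rho}$-smoothness is frozen along the trajectory, not merely approximate). The genuine gap is in the one step that actually produces the claimed right-hand side: your treatment of the correction term $\frac{1}{2T}\hat{C}_{\bW^{*}_{\frac{1}{\eta T}}}\hat{B}_{\bW^{*}_{\frac{1}{\eta T}}}\sum_{s}\bigl(1\vee\E[\|\bW_s-\bW^{*}_{\frac{1}{\eta T}}\|_2^2]\bigr)$. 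You propose to bound $\|\bW_s-\bW^{*}_{\frac{1}{\eta T}}\|_2$ by the triangle inequality through $\|\bW_s-\bW_0\|_2\le\sqrt{2c_0\eta s}$ and $\|\bW^{*}_{\frac{1}{\eta T}}-\bW_0\|_2$. That split produces an additive term of order $\hat{C}_{\bW^{*}_{\frac{1}{\eta T}}}\hat{B}_{\bW^{*}_{\frac{1}{\eta T}}}\cdot c_0\,\eta T$ carrying no small multiplier. This term is not dominated by the theorem's right-hand side (take $\bW_0=\bW^{*}=\bW^{*}_{\frac{1}{\eta T}}$ with $L(\bW^{*})=0$: the claimed bound is zero while your term is positive), and it is fatal downstream: condition \eqref{m-condition-unbounded2} only gives control of order $\hat{C}_{\bW^{*}_{\frac{1}{\eta T}}}\hat{B}_{\bW^{*}_{\frac{1}{\eta T}}}\lesssim 1/(\eta T)$, so your extra term is $\O(1)$ and the excess-risk bound of Theorem~\ref{thm:excess-unbounded} would no longer follow. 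Your claim that this split ``yields the $\eta T[L(\bW^{*}_{\frac{1}{\eta T}})-L(\bW^{*})]$ summand'' cannot be right either: a split through those two quantities can only generate $\eta T$ and $\|\bW^{*}_{\frac{1}{\eta T}}-\bW_0\|_2^2$.

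What is missing is precisely the paper's Lemma~\ref{lem:3-unbounded-distance}, a self-referential absorption (fixed-point) argument replacing the crude split. One takes the same telescoped weak-convexity inequality with $\bW=\bW^{*}_{\frac{1}{\eta T}}$, inserts the generalization bound and the device $L(\bW^{*}_{\frac{1}{\eta T}})-\E[L(\bW_s)]\le L(\bW^{*}_{\frac{1}{\eta T}})-L(\bW^{*})$, multiplies through by $2\eta t$, and observes that $x=\max_{s\in[T]}\E[\|\bW^{*}_{\frac{1}{\eta T}}-\bW_s\|_2^2]\vee 1$ then appears on \emph{both} sides, with right-hand coefficient bounded by $1/2$ thanks to \eqref{m-condition-unbounded2}. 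Absorbing $x$ yields $1\vee\E[\|\bW^{*}_{\frac{1}{\eta T}}-\bW_t\|_2^2]\lesssim(\text{stability sum})+2\|\bW^{*}_{\frac{1}{\eta T}}-\bW_0\|_2^2+2\eta T[L(\bW^{*}_{\frac{1}{\eta T}})-L(\bW^{*})]$, with no raw $\eta T$ term; substituting this into the Ces\`aro inequality and invoking monotonicity is exactly what produces the three summands in the theorem. So the role of \eqref{m-condition-unbounded2} is not merely to make the weak-convexity constant small enough for the comparison to be ``informative,'' as you suggest; its essential use is to close this fixed-point bound on the iterate distances. Until the crude split is replaced by this (or an equivalent) argument, your proof establishes only a strictly weaker inequality than the one stated.
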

Now, we develop excess risk bounds of GD for three-layer NNs by combining Theorem~\ref{thm:3-unbounded-generalization} and Theorem~\ref{thm:opt-unbounded} together. The proof is given in Appendix~\ref{appendix:three-excess}. 
\begin{theorem}[Excess population risk]\label{thm:excess-unbounded}
Suppose Assumptions~\ref{ass:activation} and  \ref{ass:loss}   hold.   
Let $\{\bW_t\}$ be produced by \eqref{eq:GD} with  $\eta\le 1/(8\hat{\rho})$. Assume \eqref{eq:3-m-condition-1}  and \eqref{m-condition-unbounded2} hold. For any $c\in (1/2,1]$, if $n\gtrsim \eta  T  $, then there holds
\[ \E[L(\bW_T)\! -\! L(\bW^{*})]
    =\O\Big(   \frac{  \eta  T   }{n }  L(\bW^{*} ) + \Lambda_{\frac{1}{\eta T}}  \Big).\]
\end{theorem}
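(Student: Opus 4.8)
**

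The plan is to prove Theorem~\ref{thm:excess-unbounded} by combining the three-layer generalization bound (Theorem~\ref{thm:3-unbounded-generalization}) and the three-layer optimization bound (Theorem~\ref{thm:opt-unbounded}) through the error decomposition \eqref{eq:excess-decom} with the regularization parameter $\lambda = 1/(\eta T)$. The overall strategy mirrors the two-layer argument (Theorem~\ref{thm:excess}), but now the smoothness constant $\hat\rho = \O(1)$ replaces $\rho$, and we must verify that the prefactors $\hat C_{\bW^*_{1/\eta T}}$ and $\hat B_{\bW^*_{1/\eta T}}$ appearing in the optimization bound are controlled by the width condition \eqref{m-condition-unbounded2}. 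First I would write $\E[L(\bW_T)] - L(\bW^*)$ as the sum of the generalization error, the optimization error, and the approximation error $\Lambda_{1/\eta T}$, exactly as in \eqref{eq:excess-decom}. The generalization term is bounded by Theorem~\ref{thm:3-unbounded-generalization}, which contributes a factor of order $(\eta^2\hat\rho^2 T/n^2 + \eta\hat\rho/n)\sum_{j=0}^{T-1}\E[L_S(\bW_j)]$, and the optimization term is bounded by Theorem~\ref{thm:opt-unbounded}.

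The crux is to control the cumulative empirical risk $\sum_{s=0}^{T-1}\E[L_S(\bW_s)]$ that appears in both bounds. I would bound each $\E[L_S(\bW_s)]$ by $\E[L(\bW_s)]$ plus the generalization gap, and then use the almost-co-coercivity / near-monotonicity established along the GD trajectory (via the $\|\bW_t - \bW_0\|_2 \le \sqrt{2c_0\eta t}$ estimate from Remark~\ref{rmk:difference}) to argue that each $\E[L_S(\bW_s)]$ stays comparable to $L(\bW^*_{1/\eta T})$ up to the approximation error. Concretely, since $L(\bW_s) \ge L(\bW^*)$, one has $L(\bW^*_{1/\eta T}) - L(\bW_s) \le \Lambda_{1/\eta T}$, which is precisely the trick highlighted in Remark~\ref{remark:2layer} for removing the problematic assumption of \cite{lei2022stability}. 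This yields $\sum_{s=0}^{T-1}\E[L_S(\bW_s)] \lesssim \eta T \big(L(\bW^*) + \Lambda_{1/\eta T}/\eta T\big)$ after absorbing lower-order terms, so that the generalization prefactor times this sum produces the announced $\frac{\eta T}{n}L(\bW^*)$ term.

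The key simplification under the hypothesis $n \gtrsim \eta T$ is that the term $\eta^3 T^2/n^2$ inside $\mathcal{C}_{T,n}$ and the higher-order stability prefactor $\frac{e^2\eta^3 T^2\hat\rho^2}{n^2}$ become dominated by the first-order terms $\eta T$ and $\frac{e\eta^2 T\hat\rho}{n}$ respectively; I would use this repeatedly to collapse the two-term expressions into single leading terms. Substituting the bound on the cumulative empirical risk into Theorem~\ref{thm:opt-unbounded}, the optimization error is seen to be $\O(\Lambda_{1/\eta T} + \frac{\eta T}{n}L(\bW^*))$ provided $\hat C_{\bW^*_{1/\eta T}}\hat B_{\bW^*_{1/\eta T}} = \O(1)$, and this last fact is exactly what \eqref{m-condition-unbounded2} guarantees by forcing each of the four width-terms to dominate the corresponding monomial in $\|\bW^*_{1/\eta T}\|_2$ and $\mathcal{B}_T$. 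Adding the three pieces and recalling $\Lambda_{1/\eta T} = L(\bW^*_{1/\eta T}) + \frac{1}{2\eta T}\|\bW^*_{1/\eta T} - \bW_0\|_2^2 - L(\bW^*)$ gives the stated rate.

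I expect the main obstacle to be the bookkeeping that verifies $\hat C_{\bW^*_{1/\eta T}}\hat B_{\bW^*_{1/\eta T}} = \O(1)$ under \eqref{m-condition-unbounded2}: each of the terms in $\hat C_\bW$ and $\hat B_\bW$ carries a different negative power of $m$ (namely $m^{-3c}$, $m^{1/2-2c}$, $m^{1/2-2c}$, $m^{1-2c}$) multiplying a polynomial in $\|\bW^*_{1/\eta T}\|_2 + \sqrt{2c_0\eta T}$, and one must match each monomial in the product against the four exponents $\frac{1}{5c-1/2}$, $\frac{1}{4c-1}$, $\frac{1}{4c-3/2}$, $\frac{1}{2c-1/2}$ appearing in \eqref{m-condition-unbounded2}. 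This cross-multiplication is delicate because the products generate mixed terms, and keeping the dependence on $\mathcal{B}_T$ and $\|\bW^*_{1/\eta T}\|_2$ correctly aligned with the stated width condition is where the bulk of the routine-but-careful computation lies; the rest follows the two-layer template closely.
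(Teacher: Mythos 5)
Your high-level plan coincides with the paper's proof: the decomposition \eqref{eq:excess-decom} with $\lambda=1/(\eta T)$, Theorem~\ref{thm:3-unbounded-generalization} for the generalization term, Theorem~\ref{thm:opt-unbounded} for the optimization term, and the key trick $L(\bW^{*}_{\frac{1}{\eta T}})-L(\bW_s)\le \Lambda_{\frac{1}{\eta T}}$ inside the distance/cumulative-risk estimates. However, one quantitative step, as you state it, would fail. Theorem~\ref{thm:opt-unbounded} bounds the optimization error by
\begin{align*}
\hat{C}_{\bW^{*}_{\frac{1}{\eta T}}}\hat{B}_{\bW^{*}_{\frac{1}{\eta T}}}\Big(\big(\tfrac{4e^2\eta^3 T^2\hat{\rho}^2}{n^2}+\tfrac{4e\eta^2 T\hat{\rho}}{n}\big)\textstyle\sum_{s=0}^{T-1}\E[L_S(\bW_s)] \;+\; \|\bW^{*}_{\frac{1}{\eta T}}-\bW_0\|_2^2 \;+\; \eta T\big[L(\bW^{*}_{\frac{1}{\eta T}})-L(\bW^{*})\big]\Big),
\end{align*}
and since $\|\bW^{*}_{\frac{1}{\eta T}}-\bW_0\|_2^2\le 2\eta T\Lambda_{\frac{1}{\eta T}}$, the last two terms are of order $\eta T\Lambda_{\frac{1}{\eta T}}$. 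Under your hypothesis $\hat{C}\hat{B}=\O(1)$ this only yields an optimization error of order $\eta T\Lambda_{\frac{1}{\eta T}}$, which need not vanish (it can be of order $\|\bW^{*}-\bW_0\|_2^2$), so the theorem's conclusion does not follow. What is actually needed, and what the paper extracts from \eqref{m-condition-unbounded2}, is the stronger bound $\hat{C}_{\bW^{*}_{\frac{1}{\eta T}}}\hat{B}_{\bW^{*}_{\frac{1}{\eta T}}}\le 1/(\eta T)$; this is available precisely because the factor $\mathcal{C}_{T,n}\ge \eta T$ multiplies every monomial in \eqref{m-condition-unbounded2}, so your four-term bookkeeping must be carried out at the $1/(\eta T)$ level, not the $\O(1)$ level.

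A second, lesser issue concerns your bound $\sum_{s=0}^{T-1}\E[L_S(\bW_s)]\lesssim \eta T L(\bW^{*})+\Lambda_{\frac{1}{\eta T}}$ and the route to it. The provable statement (Lemma~\ref{lem:sum-risk-unbounded}) is $\sum_{s=0}^{T-1}\E[L_S(\bW_s)]\lesssim T\big(L(\bW^{*})+\Lambda_{\frac{1}{\eta T}}\big)$, which is weaker than your claim but still suffices: once $n\gtrsim \eta T$, the generalization prefactor $\frac{\eta^2\hat{\rho}^2T}{n^2}+\frac{\eta\hat{\rho}}{n}$ times $T(L(\bW^{*})+\Lambda_{\frac{1}{\eta T}})$ is $\O(\frac{\eta T}{n}L(\bW^{*})+\Lambda_{\frac{1}{\eta T}})$. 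Moreover, your proposed derivation (bounding $\E[L_S(\bW_s)]$ by $\E[L(\bW_s)]$ plus the generalization gap) uses a direction that is not available: Lemma~\ref{lem:connection} and Theorem~\ref{thm:3-unbounded-generalization} upper bound $\E[L-L_S]$, not $\E[L_S-L]$. The paper instead obtains the cumulative-risk bound from the averaged regularized optimization inequality combined with Lemma~\ref{lem:3-unbounded-distance}, absorbing the copy of $\sum_s\E[L_S(\bW_s)]$ that reappears on the right-hand side into the left-hand side via the width condition ($\hat{C}\hat{B}$ times the stability prefactor being at most $1/2$). With these two repairs your argument coincides with the paper's; without them, the optimization step does not close.
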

Finally, we establish  excess risk bounds of GD for three-layer NNs by assuming Assumption~\ref{ass:minimizer} holds. 
 \begin{corollary}\label{cor:3-excess-rate-2}
Let assumptions in Theorem~\ref{thm:excess-unbounded} and Assumption~\ref{ass:minimizer} hold. The following statements hold.
 \begin{enumerate}[label=(\alph*), leftmargin=*]
  \setlength\itemsep{-2mm}
     \item Assume $\mu\ge 1/2$.  If $c\in [9/16,1]$,  we  can choose  $m\asymp (\eta T)^{4}$ and $\eta T$ such that  $n^{\frac{ 1}{2(8\mu-3)}}\lesssim \eta T\lesssim \sqrt{n}$. If $c\in(1/2,9/16)$, we can choose $m\asymp (\eta T)^{\frac{1}{4c-2}}$ and $\eta T$ such that $n^{\frac{2c-1}{2\mu+4c-3}}\lesssim \eta T\lesssim \sqrt{n}$.   
     Then  \[\E[ L(\bW_T) - L( \bW^{*}  ) ] =  \O\big(\frac{1}{\sqrt{n}}\big). \]
\item Assume   $L(\bW^{*})=0$ and $\mu\ge 1/2$. If $c\in [9/16,1]$,  we  can choose  $m\asymp (\eta T)^{4}$ and $\eta T\gtrsim  n^{\frac{ 1}{8\mu-3}}$. If $c\in(1/2,9/16)$, we can choose $m\asymp (\eta T)^{\frac{1}{4c-2}}$ and $\eta T$ such that $\eta T\gtrsim  n^{\frac{4c-2}{4c+2\mu-3}}$.   Then \[     \E[ L(\bW_T) - L( \bW^{*}  ) ] =  \O\big(\frac{1}{n}\big). \]
 \end{enumerate}
\end{corollary}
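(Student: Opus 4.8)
The plan is to specialize the dimension-free bound of Theorem~\ref{thm:excess-unbounded} to the regime of Assumption~\ref{ass:minimizer} and then optimize over the free product $\eta T$. Theorem~\ref{thm:excess-unbounded} gives $\E[L(\bW_T) - L(\bW^*)] = \O\big( \frac{\eta T}{n}L(\bW^*) + \Lambda_{\frac{1}{\eta T}}\big)$, so the first step is to make the approximation error explicit: using the inequality $\Lambda_{\frac{1}{\eta T}} \le \|\bW^*\|_2^2/(2\eta T)$ noted just before Assumption~\ref{ass:minimizer} together with $\|\bW^*\|_2 \le m^{1/2-\mu}$ yields $\Lambda_{\frac{1}{\eta T}} \lesssim m^{1-2\mu}/(\eta T)$. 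The excess risk is thereby reduced to a sum of two explicit monomials, a term $\frac{\eta T}{n}L(\bW^*)$ that grows in $\eta T$ and an approximation term $m^{1-2\mu}/(\eta T)$ that (for $\mu \ge 1/2$) decays in $\eta T$; the proof is then a trade-off between these two quantities once $m$ is pinned down.

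Next I would insert the prescribed widths. The crossover $c = 9/16$ is precisely the value at which the two leading monomials $(\eta T)^4$ and $(\eta T)^{1/(4c-2)}$ of the width condition \eqref{eq:3-m-condition-1} coincide (there $4c-2 = 1/4$), so I take $m \asymp (\eta T)^4$ for $c \in [9/16,1]$ and $m \asymp (\eta T)^{1/(4c-2)}$ for $c \in (1/2,9/16)$, matching the dominant requirement in each regime. Substituting $m \asymp (\eta T)^4$ gives $\Lambda_{\frac{1}{\eta T}} \lesssim (\eta T)^{3-8\mu}$, which is $\O(1/\sqrt n)$ exactly when $\eta T \gtrsim n^{1/(2(8\mu-3))}$ (here $\mu \ge 1/2$ guarantees $8\mu - 3 \ge 1 > 0$); substituting $m \asymp (\eta T)^{1/(4c-2)}$ gives $\Lambda_{\frac{1}{\eta T}} \lesssim (\eta T)^{-(2\mu+4c-3)/(4c-2)}$, which is $\O(1/\sqrt n)$ when $\eta T \gtrsim n^{(2c-1)/(2\mu+4c-3)}$. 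In both regimes the growing term $\frac{\eta T}{n}L(\bW^*)$ is $\O(1/\sqrt n)$ as soon as $\eta T \lesssim \sqrt n$, which simultaneously secures the hypothesis $n \gtrsim \eta T$ of Theorem~\ref{thm:excess-unbounded}; intersecting the lower and upper constraints on $\eta T$ produces exactly the stated windows, proving part~(a). For part~(b) the assumption $L(\bW^*) = 0$ annihilates the first term, so only the approximation term survives and it suffices to force $(\eta T)^{3-8\mu} \lesssim 1/n$ (resp.\ $(\eta T)^{-(2\mu+4c-3)/(4c-2)} \lesssim 1/n$), which yields $\eta T \gtrsim n^{1/(8\mu-3)}$ (resp.\ $\eta T \gtrsim n^{(4c-2)/(4c+2\mu-3)}$), as claimed.

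The main obstacle, and the bulk of the remaining labor, is verifying that these simple choices of $m$, combined with $\eta T$ in the indicated range, actually dominate every term of the two width conditions \eqref{eq:3-m-condition-1} and \eqref{m-condition-unbounded2}. I would substitute $\mathcal{B}_T \asymp \sqrt{\eta T}$ (legitimate since $\|\bW_0\|_2 = \O(1)$ while $\eta T$ is polynomially large in $n$), bound $\|\bW^*_{\frac{1}{\eta T}}\|_2$ by a constant multiple of $\mathcal{B}_T + \|\bW^*\|_2 \lesssim \sqrt{\eta T}$, and then check term by term that, after inserting $\eta T \lesssim \sqrt n$, the exponent of $\eta T$ (or of $n$ coming through the $1/n$-weighted terms) carried by each summand never exceeds the exponent carried by the chosen $m$. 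The initialization constants $\|\bW_0\|_2^{4/(8c-3)}$ and $\|\bW_0\|_2^{1/(6c-3)}$ are absorbed for the same reason. This is routine but delicate power-counting; the most sensitive points are the denominator-$n$ terms such as $(\eta T)^{7/2}\mathcal{B}_T/n$ and $(\eta T)^3\mathcal{B}_T/n$, for which the upper bound $\eta T \lesssim \sqrt n$ is exactly what keeps them under control, and confirming that $9/16$ is indeed the exact boundary between the two width regimes. Reading off the width $m$ from the extreme admissible choices of $\eta T$ then gives the over- and under-parameterization picture depicted in the right panel of Figure~\ref{fig-1}.
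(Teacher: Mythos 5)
Your proposal is correct and follows essentially the same route as the paper's own proof: bound $\Lambda_{\frac{1}{\eta T}}\lesssim m^{1-2\mu}/(\eta T)$ via Assumption~\ref{ass:minimizer}, pick $m\asymp(\eta T)^4$ or $m\asymp(\eta T)^{\frac{1}{4c-2}}$ to match the dominant term of the width conditions (with $c=9/16$ as the crossover), and trade off $(\eta T)/n$ against the approximation term over the stated window of $\eta T$, with $L(\bW^*)=0$ removing the first term in part~(b). Your added justification of the $c=9/16$ crossover and the power-counting verification of \eqref{eq:3-m-condition-1} and \eqref{m-condition-unbounded2} (using $\mathcal{B}_T\asymp\sqrt{\eta T}$ and $\|\bW^*_{\frac{1}{\eta T}}\|_2=\O(\sqrt{\eta T})$ under $\mu\ge 1/2$) is sound and, if anything, more explicit than the paper, which simply asserts these choices ensure the conditions hold.
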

\iffalse
\begin{figure}[htbp]
\centering
 {\includegraphics[scale=0.3]{fig-3.pdf}}  
\caption{\small Scaling parameter $c$ versus Network complexity parameter $\mu$ for Part (a) in Corollary~\ref{cor:3-excess-rate-2}. \textit{Blue Region without dots}:  values of $c$ and $\mu$ where  over-parameterization is necessary to achieve error bound $\O(1/\sqrt{n})$. \textit{Blue  Region with dots}:  values of $c$ and $\mu$ where under-parameterization is sufficient to achieve error bound $\O(1/\sqrt{n})$. \textit{Pink Region}: the desired bound cannot be 
guaranteed.\label{fig-2} } 
\end{figure}
\fi
\noindent {\bf Discussion of the Results: }
Part (a) in Corollary~\ref{cor:3-excess-rate-2} shows GD for three-layer NNs can achieve the excess risk rate $\O(1/\sqrt{n})$ with $m\asymp (\eta T)^{4}$ and $n^{\frac{ 1}{2(8\mu-3)}}\lesssim \eta T\lesssim \sqrt{n}$ for the case  $c\in[9/16,1]$  and $m\asymp (\eta T)^{\frac{1}{4c-2}}$ and   $n^{\frac{2c-1}{2\mu+4c-3}}\lesssim \eta T\lesssim \sqrt{n}$ for the case $c\in(1/2,9/16)$, respectively. %Specifically, if $c=1/2$, the excess risk rate $\O(1/\sqrt{n})$ can be achieved with $m\asymp n^2$ and $T\asym n$. 
Note that there is an additional assumption $\mu\ge 1/2$ in part (a). Combining this assumption with  $\|\bW^{*}\|_2\le m^{\frac{1}{2}-\mu}$ together, we know that the population risk minimizer $\bW^{*}$ cannot be too large to reach the power of the exponent of $m$. The potential reason is that we use a constant to bound the smoothness parameter in the analysis for three-layer NNs. %While for two-layer NNs, the smoothness parameter is controlled by $\rho=\O(m^{1-2c})$, which would be very small when $c$ increases.  
 Part (a) also indicates a quantitative condition in terms of $m$  and $c$ where GD for three-layer NNs can achieve the excess risk rate $\O(1/\sqrt{n})$ in under-parameterization and over-parameterization regimes, which is interpreted in the right panel of Figure \ref{fig-1}. 
 The results in part (a) tell us that the smallest width for guaranteeing the desired bounds are $ m\asymp n^{\frac{2}{ 8\mu-3 }} $ for $c\in[9/16,1]$ and $ m\asymp n^{\frac{1}{2(2\mu+4c-3)}} $ for $c\in(1/2,9/16)$. Similar to the left panel of Figure~\ref{fig-1}, the dotted lines $\mu=\frac{5}{8}  $ and $\mu=\frac{7}{4}-2c$ in the right panel of Figure \ref{fig-1} correspond to the setting $m\asymp n$, i.e., $\frac{2}{ 8\mu-3 }=1$ and $\frac{ 1}{2(2\mu+4c-3)}=1$. 
 Hence, when $c$ and $\mu$ belong to the blue region with dots, under-parameterization is \textit{sufficient} to achieve the desired rate. When $c$ and $\mu$ are located in the blue region without dots  which is between the solid lines and the dotted line,  over-parameterization  is \textit{necessary} for GD to achieve the rate $\O(1/\sqrt{n}
 )$. 
  Our results for three-layer NNs also imply that the over-parameterization does bring benefit for GD to achieve good generalization in the sense that GD can achieve  excess risk  rate $\O(1/\sqrt{n})$.   
Under a low-noise condition, part (b) implies that the excess risk rate can be improved to $\O(1/n)$ with suitable choices of $m$ and $\eta T$. 
These results imply that the larger the scaling parameter is, the less over-parameterization is needed for GD to achieve the desired error rate for both the  general case and  low-noise case.  %The excess risk rate $\O(1/\sqrt{n})$ has been established for deep NNs in other different settings \cite{cao2019generalization,ju2022generalization}.  
%\cite{cao2019generalization} studied the generalization of stochastic gradient descent (SGD) for deep neural networks with  scaling $c=1/2$. They proved that, when the network width is sufficient large, SGD with random initialization can achieve the generalization error bound $\O(1/\sqrt{n})$ under some assumptions on data distribution. They mainly focused on the expected 0-1 loss with ReLU activation function, while we are interested in the least square loss with the smooth activation functions.  

\noindent{\bf Comparison with the Existing Work:}  
\cite{richards2021learning} studied the minimal eigenvalue of the empirical risk Hessian for a  three-layer NN with a linear activation in the first layer for   Lipschitz and convex losses (e.g.,  logistic loss),  while we focus on NNs with more general activation functions for least square loss. See more detailed discussion in Appendix~\ref{appendix:disc}.
\cite{ju2022generalization} studied the generalization performance of overparameterized three-layer NTK models with the absolute loss and ReLU activation. They showed that  the generalization error is in the order of $\O(1/\sqrt{n})$ when there are infinitely many neurons. They only trained the middle-layer weights of the networks. To the best of our knowledge, our work is the first study on the stability and generalization of GD to train both the first and the second layers of the network in the kernel-free regime.

\section{Main Idea of the Proof}\label{sec:idea}

In this section, we present  the main ideas for proving the main results in Section \ref{sec:main-resluts}.

%\vspace*{-2mm}
%\subsection{Two-layer Neural Networks}\label{sec:idea-snn}
\textbf{Two-layer Neural Networks.}
 \eqref{eq:excess-decom} decomposes the excess population risk into three terms: generalization error, optimization error and approximation error. We estimate these three terms separately.

\noindent\textit{Generalization error.} 
From Lemma~\ref{lem:connection} we know the generalization error can be upper bounded by the on-average argument stability of GD. Hence, it remains to figure out the on-average argument stability of GD. A key step in our stability analysis is to show that the loss is strongly smooth and weakly convex, which can be obtained by  the following results given in Lemma~\ref{lem:smoothness}: 
\vspace*{-1mm}
\[ \lambda_{\max} \!\big(\nabla^2  \ell(\bW;z)\big)\!\le\!\rho  \text{ and }   \lambda_{\min}\! \big(\nabla^2 \ell( \bW;z) \big) \!\ge\! - \big(    c_\bx^3 B_{\sigma'}\! B_{\sigma''}  m^{\frac{1}{2}-2c} \| \bW\!-\!\bW_{\!0}  \|_2 +  c_\bx^2 B_{\sigma}  \!\sqrt{2c_0}m^{-c} \!\big).  \]
%\[ \lambda_{\min}\!\big(\nabla^2\!\ell( \bW;z) \big)\!\ge\!-\!\Big(\!\frac{  c_\bx^3 B_{\sigma'}\!B_{\sigma''}\!}{ m^{2c-\frac{1}{2}} }\!\| \bW\!-\!\bW_0  \|_2\!+\!\frac{\!c_\bx^2 B_{\sigma}\! \sqrt{2c_0}\!}{ m^c }\Big). \]
Here $ \rho = \O( m^{1-2c} )$, $\lambda_{\max}(A)$ and $\lambda_{\min}(A)$ denote the largest and the smallest eigenvalue of $A$, respectively.
The lower bound of $  \lambda_{\min}\! (\nabla^2 \ell( \bW;z) ) $ is related to $\|\bW - \bW_0 \|_2$, we further  show it is uniformly bounded (see the proof of Theorem~\ref{thm:stability}).
Then the smoothness and weak convexity of the loss scale with $m^{1-2c}$ and $m^c$, %Both $m^{1-2c}$ and $m^c$ are smaller for wider networks with a larger scaling $c$.  
which implies that the loss becomes more smooth and more convex for wider networks  with a larger~scaling.   

Based on the above results, we derive the following uniform stability bounds
$ \| \bW_t - \bW_t^{(i)} \|_2   \le  \frac{2  \eta   e T \sqrt{ 2c_0 \rho (\rho \eta T+2) }     }{n }.$ 
 Here,  the weak convexity of the loss plays an important role in presenting the almost co-coercivity of the gradient operator, which  helps us establish the recursive relationship of $\|\bW_t-\bW_t^{(i)}\|_2$. 
Note $\rho=\O(m^{1-2c})=\O(1)$. 
Then we know $\| \bW_t - \bW_t^{(i)} \|_2=\O(\frac{m^{1-2c}}{n}(\eta T)^{3/2})$, which implies that GD is more stable for a wider neural network with a large scaling.  For a specific case $c=1/2$, our  stability bound is in the order of $\O((\eta T)^{3/2}/n)$,  and matches the result in~\cite{lei2022stability}. 
 
\noindent\textit{Optimization error.} A key step in optimization error analysis is to use the smoothness and weak convexity to control 
\[\E[\|\bW_{\!\frac{1}{\eta T}}^{*}\! -\! \bW_t  \|_2^2 ]\! \le\!  \big(\frac{8  e^2 \eta^3 \rho^2 t^2 }{ n^2}\! + \!\frac{8 e  \eta^2 t  \rho       }{n } \big)\! \sum_{s=0}^{t-1}  \E[  L_S(\bW_{ s} ) ] \! +\!   2\E\big[  \|\bW_{\frac{1}{\eta T}}^{*} \! -\! \bW_{ 0}   \|_2^2  \big] \! +\! 2\eta T  \big[ L(\bW_{ \frac{1}{\eta T}}^{*})\!-\!L(\bW^{*})\big] \]
in Lemma~\ref{lem:smoothness}.  
Here, to remove the condition $\E[L(\bW_s)]\ge L(\bW^{*}_{\frac{1}{\eta T}})$ for any $s\in[T]$ in \cite{lei2022stability}, we  use 
$
L(\bW^{*}_{\frac{1}{\eta T}}) - L(\bW_s) \le L(\bW^{*}_{\frac{1}{\eta T}}) - L(\bW^{*} ) \le  \Lambda_{\frac{1}{\eta T}},
$
where the first inequality is due to $L(\bW_s)\ge L(\bW^{*})$ for any $s\in[T]$. 
Then the optimization error can be controlled by  the monotonically decreasing property of $\{L_S(\bW_t)\}$. The proofs are given in Appendix~\ref{appendix:snn-optimization}.  

\noindent\textit{Excess population risk.}
Combining stability bounds, optimization bounds and approximation error together, and noting that $L(\bW^{*}_{\frac{1}{\eta T}})+\frac{1}{2\eta T}\|\bW^{*}_{\frac{1}{\eta T}}-\bW_0\|_2^2\le L(\bW^{*} )+\frac{1}{2\eta T}\|\bW^{*}-\bW_0 \|_2^2$,  one can get the final error bound. 
The detailed proof can be found in Appendix~\ref{appendix:snn-excess}.  

%\vspace*{-2mm}
%\subsection{Three-layer Neural Networks}\label{sec:idea-3-nn}
\textbf{Three-layer Neural Networks.} 
The basic idea to study the generalization behavior of GD for three-layer NNs is similar to that of two-layer NNs.  We develop stability bounds  and control the generalization error by figuring out the  smoothness and curvature of the loss function:  
 $\lambda_{\max} (\nabla^2 \ell(\bW;z) ) \le \rho_\bW $ and
$\lambda_{\min} (\nabla^2 \ell( \bW;z)  )\!\ge\! -  C_\bW  (   \frac{2 B_{\sigma'}B_\sigma  }{m^{2c-1}} \|\bW^{(2)}\|_2  +  \sqrt{2c_0} ), $
where $\rho_\bW$ and $C_\bW$ depend on $\|\bW\|_2$. The specific forms of $\rho_\bW$ and $C_\bW$ are given in Appendix~\ref{appendix:three-generalization}. 
As mentioned in Remark~\ref{rmk:difference}, it is not easy to estimate $\|\bW_t\|_2$ since the smoothness of the loss relies on the norm of $\bW$ in this case. We address this difficulty by first giving a rough bound of $\|\bW_t-\bW_0\|_2$ by induction, i.e., $\|\bW_t-\bW_0\|_2\le \eta t m^{2c-1}$. Then, for any $\bW$ produced by GD iterates, we can control $\rho_\bW$ by a constant $\hat{\rho}$ if $m$ is large enough. Finally, by assuming $\eta\hat{\rho}\le 1/2$, we prove that $\|\bW_t-\bW_0\|_2\le \sqrt{2c_0\eta t}$ and further get $\|\bW_t\|_2\le \|\bW_0\|_2+\sqrt{2c_0\eta t}$.  

After estimating $\|\bW_t\|_2$,  
we can develop the following  almost co-coercivity of the gradient operator:  
\begin{align*}
     \langle \bW_t - \bW_t^{(i)}, \nabla L_{S^{\setminus i}}(\bW_t)  - \nabla L_{S^{\setminus i}}(\bW^{(i)}_t)  \rangle  
     \ge & 2\eta \big(1- 4\eta \hat{\rho} \big)\big\| \nabla L_{S^{\setminus i}}(\bW_t) - \nabla L_{S^{\setminus i}}(\bW^{(i)}_t)   \big\|_2^2 \\& - \tilde{\epsilon}_t \big\| \bW_t -\bW_t^{(i)}  -\eta \big( \nabla L_{S^{\setminus i}}(\bW_t) - \nabla L_{S^{\setminus i}}(\bW_t^{(i)}) \big) \big\|_2^2.
\end{align*}
It helps to establish the uniform stability bound  $
     \big\| \bW_{t } -\bW_{t }^{(i)} \big\|_2 \le   2 e   \eta T  \sqrt{2c_0\hat{\rho}   (  \hat{\rho}\eta T   +   2 )} /n.$ 
  Based on  stability bounds, we get generalization bounds by Lemma~\ref{lem:connection}. The proofs are given in Appendix~\ref{appendix:three-generalization}.

Similar to two-layer NNs, to estimate the optimization error for three-layer NNs, we first control $\E[\|\bW^{*}_{\frac{1}{\eta T}}-\bW_t\|_2^2]$ by using the smoothness and weak convexity of the loss. Then the desired bound can be obtained by the monotonically decreasing property of $\{L_S(\bW_t)\}$. The final error bounds (Theorem~\ref{thm:excess-unbounded} and Corollary~\ref{cor:3-excess-rate-2}) can be derived by plugging generalization and optimization bounds  
back into  \eqref{eq:excess-decom}. The detailed proof can  be found in Appendix~\ref{appendix:three-excess}.

\section{Conclusion}\label{sec:conclu}
We present stability and generalization analysis of GD for multi-layer NNs with generic network scaling factors. Under some qualitative conditions on the network scaling and the network complexity,  we establish excess risk bounds of the order $\O(1/\sqrt{n})$ for GD on both two-layer and three-layer NNs, which are improved to $\O(1/n)$ with an additional low-noise condition. Our results describe  a quantitative condition related to the scaling factor and the network complexity under which GD on two-layer and three-layer NNs can achieve the desired excess risk rate.

There remain several questions for further study. The first question is whether our analysis of GD for multi-layer NNs can be extended to SGD with less computation cost. The key challenge here is that the analysis for GD relies critically on the monotonicity of the objective functions along the optimization process, which does not hold for SGD.     Second, our analysis for three-layer NNs does not hold for $c=1/2$. It would be interesting to develop a result for this setting. %A key challenge is to control the norm of $\bW_t$. 
Finally, %as discussed in Sections \ref{sec:main-snn} and \ref{sec:main-three}, 
the results in Corollaries  \ref{cor:excess-rate} and \ref{cor:3-excess-rate-2} hold true for $\mu \ge 1/6$ or $\mu\ge 1/2$. It remains an open question to us whether we can get a generalization error bound $\O(1/\sqrt{n})$ when $\mu$ is small.

\medskip 

\noindent{\bf Acknowledgement.}  
The work of Ding-Xuan Zhou was partially supported by the Laboratory for AI-Powered Financial Technologies under the InnoHK scheme. The corresponding author is Yiming Ying whose work was supported by  NSF research grants (DMS-2110836, IIS-2103450, and IIS-2110546). Di Wang was supported in part by  BAS/1/1689-01-01, URF/1/4663-01-01, FCC/1/1976-49-01 of KAUST, and a funding of the SDAIA-KAUST AI center.

%\newpage
\onecolumn
\appendix
\numberwithin{equation}{section}
\numberwithin{theorem}{section}
\numberwithin{remark}{section}
\numberwithin{figure}{section}
\numberwithin{table}{section}
\renewcommand{\thesection}{{\Alph{section}}}
\renewcommand{\thesubsection}{\Alph{section}.\arabic{subsection}}
\renewcommand{\thesubsubsection}{\Roman{section}.\arabic{subsection}.\arabic{subsubsection}}
\setcounter{secnumdepth}{-1}
\setcounter{secnumdepth}{3}

\vspace*{0cm}
\begin{center}
  \Large \textbf{Appendix for ``Generalization Guarantees of Gradient Descent for Multi-Layer Neural Networks''}
\end{center}

\section{Proofs of Two-layer Neural Networks}\label{appendix:snn}
\subsection{Proofs of Generalization Bounds}\label{appendix:snn-generalization}
We first introduce the  self-bounding property of smooth  functions  \cite{srebro2010smoothness}.  
\begin{lemma}[Self-bounding property]\label{lem:self-bounding}
Suppose for all $z\in\Z$, the function $\bW\mapsto \ell(\bW;z)$ is nonnegative and $\rho$-smooth. Then $\|\nabla \ell(\bW;z)\|_2^2\le 2\rho \ell(\bW;z)$. 
\end{lemma}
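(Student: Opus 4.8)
The plan is to derive the self-bounding inequality from a single application of the $\rho$-smoothness condition at a judiciously chosen test point, followed by the nonnegativity hypothesis. Fix an arbitrary $z\in\Z$ and $\bW\in\W$, abbreviate $g=\nabla\ell(\bW;z)$, and take one gradient step of length $1/\rho$, i.e.\ set $\widetilde{\bW}=\bW-\tfrac{1}{\rho}g$. The point of this choice is that a step in the negative-gradient direction with stepsize $1/\rho$ exactly minimizes the quadratic upper model of $\ell$ furnished by smoothness, so the guaranteed decrease is quantified in terms of $\|g\|_2^2$.

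Concretely, I would apply the defining smoothness inequality with first argument $\widetilde{\bW}$ and second argument $\bW$, which reads $\ell(\widetilde{\bW};z)-\ell(\bW;z)\le \langle g,\widetilde{\bW}-\bW\rangle+\tfrac{\rho}{2}\|\widetilde{\bW}-\bW\|_2^2$. Substituting $\widetilde{\bW}-\bW=-\tfrac{1}{\rho}g$ gives $\langle g,\widetilde{\bW}-\bW\rangle=-\tfrac{1}{\rho}\|g\|_2^2$ and $\tfrac{\rho}{2}\|\widetilde{\bW}-\bW\|_2^2=\tfrac{1}{2\rho}\|g\|_2^2$, so the two right-hand terms combine into $-\tfrac{1}{2\rho}\|g\|_2^2$. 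Rearranging then yields $\tfrac{1}{2\rho}\|g\|_2^2\le \ell(\bW;z)-\ell(\widetilde{\bW};z)$.

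Finally I would invoke nonnegativity, $\ell(\widetilde{\bW};z)\ge 0$, to drop the last term and conclude $\|\nabla\ell(\bW;z)\|_2^2\le 2\rho\,\ell(\bW;z)$; since $\bW$ and $z$ were arbitrary, this holds everywhere. There is essentially no hard step here: the only subtlety is orienting the one-sided smoothness inequality correctly—the definition in the excerpt bounds $\ell(\bW;z)-\ell(\bW';z)$ from above, so the base point at which the gradient is evaluated must play the role of $\bW'$—and choosing the stepsize $1/\rho$ that makes the model's decrease equal $-\tfrac{1}{2\rho}\|g\|_2^2$. Implicitly this uses $\rho>0$, which holds since $\rho$ is a positive smoothness constant, and it requires neither convexity nor any structure of $\ell$ beyond the stated hypotheses.
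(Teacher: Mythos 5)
Your proof is correct. The paper itself gives no proof of this lemma---it simply cites Srebro et al.\ for the self-bounding property---and your argument (take one gradient step of length $1/\rho$, apply the one-sided smoothness bound at the base point so the gradient term and quadratic term combine into $-\tfrac{1}{2\rho}\|\nabla \ell(\bW;z)\|_2^2$, then drop $\ell(\widetilde{\bW};z)\ge 0$ by nonnegativity) is exactly the standard derivation behind that citation, with the orientation of the smoothness inequality handled correctly and no extraneous hypotheses used.
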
 

We work with vectorized quantities so $\bW\in\R^{md}$. Then $\nabla f_\bW(\bx)\in\R^{md}$ and $\nabla^2f_\bW(\bx) \in\R^{md\times md}.$  Denote by $\|\bW\|_{op}$ the spectral norm of a matrix $\bW$.
 We first introduce the following lemma, which shows that the loss function is  smooth and weakly convex. 
 \begin{lemma}[Smoothness and Curvature]\label{lem:smoothness}
Suppose Assumptions~\ref{ass:activation} and \ref{ass:loss} hold. Let $\bW_0$ be the initial point of GD.  
For any fixed $\bW  \in \R^{m\times d}$ and any $z\in\Z$,  there holds
\[ \lambda_{\max}\!\big(\nabla^2\! \ell(\bW;z)\big)\!\le\!\rho \text{ with } \rho\!=\!c_\bx^2\Big( \frac{\! B_{\sigma'}^2\!+\!B_\sigma\!B_{\sigma''}\! }{m^{2c-1}}\!+\!\frac{\!B_{\sigma''} c_y }{ m^c }\!\Big). \]
\[ \lambda_{\min}\!\big(\nabla^2\!\ell( \bW;z) \big)\!\ge\!-\!\Big(\!\frac{  c_\bx^3 B_{\sigma'}\!B_{\sigma''}\!}{ m^{2c-\frac{1}{2}} }\!\| \bW\!-\!\bW_0  \|_2\!+\!\frac{\!c_\bx^2 B_{\sigma}\! \sqrt{2c_0}\!}{ m^c }\Big). \]
\end{lemma}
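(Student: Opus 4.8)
The plan is to combine the separable structure of $f_\bW$ with the classical least-squares Hessian decomposition. Writing $g=f_\bW(\bx)-y$ so that $\ell(\bW;z)=\tfrac12 g^2$, two applications of the chain rule give
\[ \nabla^2\ell(\bW;z)=\nabla f_\bW(\bx)\,\nabla f_\bW(\bx)^\top + g\,\nabla^2 f_\bW(\bx), \]
i.e. a positive semidefinite Gauss--Newton term plus an indefinite curvature term weighted by the residual $g$. Because $f_\bW(\bx)=m^{-c}\sum_{k=1}^m a_k\sigma(\bw_k\bx)$ is a sum whose $k$-th summand depends only on the row $\bw_k$, the gradient block is $\nabla_{\bw_k}f_\bW=m^{-c}a_k\sigma'(\bw_k\bx)\bx$ and the Hessian $\nabla^2 f_\bW$ is block diagonal with $k$-th block $m^{-c}a_k\sigma''(\bw_k\bx)\bx\bx^\top$. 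First I would record the two uniform estimates that drive everything: summing over $k$ gives $\|\nabla f_\bW\|_2^2\le m^{1-2c}B_{\sigma'}^2 c_\bx^2$ (so the Gauss--Newton term has largest eigenvalue at most $B_{\sigma'}^2 c_\bx^2/m^{2c-1}$), while block-diagonality gives $\|\nabla^2 f_\bW\|_{op}=\max_k m^{-c}|\sigma''(\bw_k\bx)|\,\|\bx\bx^\top\|_{op}\le B_{\sigma''}c_\bx^2/m^c$. Both use only $|\sigma'|\le B_{\sigma'}$, $|\sigma''|\le B_{\sigma''}$ and $\|\bx\|_2\le c_\bx$, hence hold uniformly in $\bW$.

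For the upper bound I would bound the residual crudely: $|g|\le|f_\bW(\bx)|+|y|\le m^{1-c}B_\sigma+c_y$, using $|\sigma|\le B_\sigma$ and summing the $m$ terms. Since the Gauss--Newton term is positive semidefinite, $\lambda_{\max}(\nabla^2\ell)\le\|\nabla f_\bW\|_2^2+|g|\,\|\nabla^2 f_\bW\|_{op}$, and substituting the three estimates above collapses exactly to $\rho=c_\bx^2\big((B_{\sigma'}^2+B_\sigma B_{\sigma''})/m^{2c-1}+B_{\sigma''}c_y/m^c\big)$.

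For the lower bound, the Gauss--Newton term only helps, so $\lambda_{\min}(\nabla^2\ell)\ge-|g|\,\|\nabla^2 f_\bW\|_{op}$. The point — and the main obstacle — is that the crude residual bound used above would leave a non-vanishing $m^{1-2c}$ contribution, whereas the downstream stability and optimization arguments need a curvature bound that shrinks as the iterate stays near $\bW_0$. I would therefore split the residual as $g=(f_\bW(\bx)-f_{\bW_0}(\bx))+(f_{\bW_0}(\bx)-y)$: the second piece satisfies $|f_{\bW_0}(\bx)-y|\le\sqrt{2c_0}$ directly from $\ell(\bW_0;z)\le c_0$ in Assumption~\ref{ass:loss}, while the first piece is controlled by the global Lipschitz constant of $f$. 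Indeed, the uniform bound $\|\nabla f_\bW\|_2\le B_{\sigma'}c_\bx m^{1/2-c}$ shows $f$ is $B_{\sigma'}c_\bx m^{1/2-c}$-Lipschitz in $\bW$, so $|f_\bW(\bx)-f_{\bW_0}(\bx)|\le B_{\sigma'}c_\bx m^{1/2-c}\|\bW-\bW_0\|_2$ by the mean value theorem. Multiplying this residual estimate by $\|\nabla^2 f_\bW\|_{op}\le B_{\sigma''}c_\bx^2/m^c$ then produces precisely the two stated terms: the $\|\bW-\bW_0\|_2$-dependent term $c_\bx^3 B_{\sigma'}B_{\sigma''}m^{1/2-2c}\|\bW-\bW_0\|_2$ and the constant term scaling like $m^{-c}$. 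The only genuine care needed is this residual-at-initialization decomposition; the remaining steps are the routine eigenvalue and summation estimates sketched above.
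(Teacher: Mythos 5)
Your proposal is correct and follows essentially the same route as the paper's proof: the same Gauss--Newton plus residual-weighted-curvature decomposition of $\nabla^2\ell$, the same uniform bounds on $\|\nabla f_\bW\|_2$ and $\|\nabla^2 f_\bW\|_{op}$, the crude residual bound for $\lambda_{\max}$, and the split $g=(f_\bW-f_{\bW_0})+(f_{\bW_0}-y)$ with $|f_{\bW_0}(\bx)-y|\le\sqrt{2c_0}$ for $\lambda_{\min}$. The only cosmetic difference is that you obtain the Lipschitz estimate $|f_\bW(\bx)-f_{\bW_0}(\bx)|\le B_{\sigma'}c_\bx m^{1/2-c}\|\bW-\bW_0\|_2$ from the uniform gradient bound via the mean value theorem, whereas the paper derives it by summing $|\sigma(\bw_k\bx)-\sigma(\bw_k'\bx)|\le B_{\sigma'}|(\bw_k-\bw_k')\bx|$ and applying Cauchy--Schwarz; both yield the identical constant.
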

\begin{proof}%[Proof of Lemma~\ref{lem:smoothness}]
Recall that $f_\bW(\bx)=\frac{1}{m^c}\sum_{k=1}^m a_k \sigma( \bw_k \bx ) $. 
Let $\mathbf{v}=(\mathbf{v}_1,\ldots,\mathbf{v}_m) \in\R^{dm}$ with $\mathbf{v}_k\in\R^d$ where $k\in[m]$. According to Assumption~\ref{ass:activation},\ref{ass:loss} and noting that $|a_k|=1$, we can give the following estimations:
\begin{align}\label{eq:loss-1}
    \|\nabla^2 f_\bW(\bx) \|_{op} &= \max_{\mathbf{v}:\|\mathbf{v}\|_2
    \le 1} \sum_{k=1}^m \frac{a_k}{m^c} \langle \mathbf{v}_k, \bx \rangle^2 \sigma''(\bw_k\bx)\nonumber\\
    &\le \frac{\|\bx\|_2^2 B_{\sigma''}}{m^c}  \max_{\mathbf{v}:\|\mathbf{v}\|_2
    \le 1} \sum_{k=1}^m \|\mathbf{v}_k\|_2^2\nonumber\\
    &\le \frac{ c_\bx^2 B_{\sigma''} }{m^c},
\end{align}
\begin{equation}\label{eq:nabla-f}
     \|\nabla f_\bW(\bx)\|_2^2 = \sum_{k=1}^m \big\| \frac{a_k}{m^c} \bx \sigma'(\bw_k \bx) \big\|_2^2 \le \frac{  B_{\sigma'}^2 c_\bx^2}{m^{2c-1}} 
\end{equation}  
and
\begin{equation}\label{eq:f-y}
    \big| f_\bW(\bx) - y \big|\le \big| f_\bW(\bx) \big|+c_y\le m^{1-c} B_\sigma   + c_y .
\end{equation}  
Note 
\begin{equation}\label{eq:hessian}
    \nabla^2 \ell(\bW;z)= \nabla f_\bW(\bx) \nabla f_\bW(\bx)^{\top} + \nabla^2 f_\bW(\bx)\big( f_\bW(\bx) -y \big)  .
\end{equation}
Then for any $\bW \in\R^{md}$, we can upper bound the maximum eigenvalue of the Hessian  as
\begin{align*}
   \lambda_{\max}(\nabla^2 \ell( \bW;z ))&\le \| \nabla f_{\bW}(\bx) \|_2^2 + \|\nabla^2 f_\bW(\bx)\|_{op} |f_\bW(\bx) -y| \\
    &\le \frac{  B_{\sigma'}^2 c_\bx^2}{m^{2c-1}}  + \frac{ c_\bx^2 B_{\sigma''} }{m^c}\big( m^{1-c} B_\sigma   + c_y \big)\\
    &=  c_\bx^2 \Big(\frac{  B_{\sigma'}^2  +  B_\sigma   B_{\sigma''} }{m^{2c-1}}  + \frac{   B_{\sigma''} c_y }{m^c} \Big), 
\end{align*}
which implies that the loss function is $\rho$-smooth with $\rho= c_\bx^2 \big(\frac{  B_{\sigma'}^2  +  B_\sigma   B_{\sigma''} }{m^{2c-1}}  + \frac{   B_{\sigma''} c_y }{m^c} \big)$.

For any $\bW, \bW' \in \R^{md}$, from Assumption~\ref{ass:activation} we know
\begin{equation}\label{eq:loss-2}
     \big| f_\bW(\bx) -   f_{\bW'}(\bx) \big| \le \frac{1}{m^c}\sum_{k=1}^m \big| \sigma(\bw_k \bx) - \sigma(\bw_k'\bx)  \big|\le \frac{ B_{\sigma'}}{m^c}\sum_{k=1}^m\big|(\bw_k-\bw_k')\bx\big|\le \frac{c_\bx B_{\sigma'}}{m^{c-1/2}}\|\bW-\bW'\|_2 . 
\end{equation} 
Combining \eqref{eq:hessian} with the fact that $\nabla f_\bW(\bx) \nabla f_\bW(\bx)^{\top}$ is positive semi-definite together, we obtain
\begin{align}\label{eq:loss-3}
    \lambda_{\min}( \nabla^2 \ell(\bW;z) )&\ge  -  \| \nabla^2 f_\bW(\bx)  \|_{op}\big|  f_\bW(\bx) -y \big| \nonumber\\
    & \ge -  \frac{ c_\bx^2 B_{\sigma''} }{m^c}\Big( \big|  f_\bW(\bx) -  f_{\bW_0}(\bx) \big| + \big|   f_{\bW_0}(\bx)-y  \big|  \Big)\nonumber\\
    &\ge -  \frac{ c_\bx^2 B_{\sigma^{''}} }{m^c}\Big(  \frac{c_\bx B_{\sigma'}}{m^{c-1/2}}\|\bW-\bW_0\|_2 +  \sqrt{2\ell(\bW_0;z)}\Big)\nonumber\\
    &\ge -  \frac{ c_\bx^2 B_{\sigma^{''}} }{m^c}\Big(  \frac{c_\bx B_{\sigma'}}{m^{c-1/2}}\|\bW-\bW_0\|_2 +  \sqrt{2c_0}\Big),
\end{align}
where in the third inequality we used \eqref{eq:loss-2} with $\bW'=\bW_0$ and in the last inequality we used Assumption~\ref{ass:loss}.  
The proof is completed.
\end{proof}

To give an upper bound of the uniform stability, we need the following lemma which shows how the GD iterate will deviate from the initial point.  
\begin{lemma}[\cite{richards2021stability}]\label{lem:deviation} Suppose the loss is $\rho$-smooth and $\eta\le 1/(2\rho)$. Then for any $t\ge 0$, $i\in[n]$,  
\[ \|\bW_t -\bW_0\|_2 \le  \sqrt{2\eta t L_S(\bW_0)} ,\]
\[ \|\bW_t^{(i)} -\bW_0\|_2 \le  \sqrt{2\eta t L_{S^{(i)}}(\bW_0)}. \]
\end{lemma}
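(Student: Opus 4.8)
The plan is to bound $\|\bW_t - \bW_0\|_2$ by controlling how far a single GD step can move the iterate and then summing these contributions, using the self-bounding property of smooth functions (Lemma~\ref{lem:self-bounding}) together with the monotone decrease of the empirical risk. First I would establish that under the step-size restriction $\eta \le 1/(2\rho)$, the sequence $\{L_S(\bW_t)\}$ is non-increasing. This is the standard descent lemma: by $\rho$-smoothness,
\[
L_S(\bW_{t+1}) \le L_S(\bW_t) + \langle \nabla L_S(\bW_t), \bW_{t+1}-\bW_t\rangle + \frac{\rho}{2}\|\bW_{t+1}-\bW_t\|_2^2,
\]
and substituting the GD update $\bW_{t+1}-\bW_t = -\eta \nabla L_S(\bW_t)$ gives $L_S(\bW_{t+1}) \le L_S(\bW_t) - \eta(1 - \tfrac{\eta\rho}{2})\|\nabla L_S(\bW_t)\|_2^2 \le L_S(\bW_t)$, since $\eta \le 1/(2\rho)$ ensures $1 - \tfrac{\eta\rho}{2} \ge 0$. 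In particular $L_S(\bW_j) \le L_S(\bW_0)$ for all $j$.

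Next I would telescope the distances. Writing $\bW_t - \bW_0 = -\eta\sum_{j=0}^{t-1}\nabla L_S(\bW_j)$, the triangle inequality gives
\[
\|\bW_t - \bW_0\|_2 \le \eta \sum_{j=0}^{t-1}\|\nabla L_S(\bW_j)\|_2.
\]
Here the key tool is the self-bounding property: since $L_S$ is an average of nonnegative $\rho$-smooth losses, it is itself nonnegative and $\rho$-smooth, so $\|\nabla L_S(\bW_j)\|_2^2 \le 2\rho\, L_S(\bW_j) \le 2\rho\, L_S(\bW_0)$ by the monotonicity just established. Hence each gradient norm is bounded by $\sqrt{2\rho\, L_S(\bW_0)}$, and summing over $t$ terms yields $\|\bW_t - \bW_0\|_2 \le \eta t\sqrt{2\rho\, L_S(\bW_0)}$. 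To sharpen this to the claimed $\sqrt{2\eta t\, L_S(\bW_0)}$, I would instead apply Cauchy--Schwarz to the sum of gradient norms together with a tighter accounting of the descent, namely
\[
\|\bW_t-\bW_0\|_2^2 \le \eta t \sum_{j=0}^{t-1}\eta\|\nabla L_S(\bW_j)\|_2^2 \le \eta t \cdot \frac{2}{1}\big(L_S(\bW_0)-L_S(\bW_t)\big),
\]
where the last step uses $\eta\|\nabla L_S(\bW_j)\|_2^2 \le 2\big(L_S(\bW_j)-L_S(\bW_{j+1})\big)$ (again from the descent lemma with $\eta\le 1/(2\rho)$), and the inner sum telescopes. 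Since $L_S(\bW_t)\ge 0$, this gives $\|\bW_t-\bW_0\|_2^2 \le 2\eta t\, L_S(\bW_0)$, i.e. the desired bound.

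The main obstacle, and the point requiring care, is obtaining the correct constant: the crude $\eta t\sqrt{2\rho L_S(\bW_0)}$ bound carries an extra factor of $\sqrt{\rho \eta t}$ relative to the target, so the telescoping-with-Cauchy--Schwarz argument is essential rather than optional. The step $\eta\|\nabla L_S(\bW_j)\|_2^2 \le 2(L_S(\bW_j)-L_S(\bW_{j+1}))$ must be extracted precisely from the descent inequality under the condition $\eta \le 1/(2\rho)$, which guarantees the coefficient $\eta(1-\tfrac{\eta\rho}{2}) \ge \eta/2$ and hence $\frac{\eta}{2}\|\nabla L_S(\bW_j)\|_2^2 \le L_S(\bW_j) - L_S(\bW_{j+1})$. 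Finally, the statement for the perturbed iterate $\bW_t^{(i)}$ is identical in form, obtained by running the same argument verbatim with the dataset $S$ replaced by $S^{(i)}$, so that $L_S(\bW_0)$ becomes $L_{S^{(i)}}(\bW_0)$; no new ideas are needed there.
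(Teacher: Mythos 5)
Your proof is correct and follows essentially the same route the paper relies on (and repeats verbatim in its three-layer analogue, the proof of Lemma~\ref{lem:3-unbounded-wbound}): the descent lemma under $\eta \le 1/(2\rho)$ giving $\eta\|\nabla L_S(\bW_j)\|_2^2 \le 2\big(L_S(\bW_j)-L_S(\bW_{j+1})\big)$, writing $\bW_t-\bW_0 = -\eta\sum_{j<t}\nabla L_S(\bW_j)$, and combining Cauchy--Schwarz with the telescoping sum. Note only that the self-bounding property you invoke at the start is not actually needed in your final argument, and the identical reasoning on $S^{(i)}$ handles the perturbed iterate, exactly as you say.
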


The following lemma shows an almost co-coercivity of the gradient operator associated with shallow neural networks. For any $i\in[n]$, define $S^{(i)}=\{z_1,\ldots,z_{i-1},z'_i,z_{i+1},\ldots,z_n\}$ as the set formed from $S$ by replacing the $i$-th element with $z_i'$. For any $\bW\in\W$, %let {\color{red} the definition of $\ell$ is not given?}\puyu{the definition is given in section~\ref{sec:problem-formulation} after Eq.(1)}
\[ L_{S^{\setminus i}}(\bW)=L_S(\bW) - \frac{1}{ n}\ell(\bW;z_i) = L_{S^{(i)}}(\bW)-\frac{1}{ n}\ell(\bW;z_i') .\]
Let $\{\bW_t\}$ and $\{\bW_t^{(i)}\}$ be the sequence produced by GD based on $S$ and $S^{(i)}$, respectively. 
\begin{lemma}[Almost Co-coercivity of the Gradient Operator]\label{lem:coercivity}
Suppose the loss is $\rho$-smooth and $\eta\le1/(2\rho)$.  Then
\begin{align*}
    \langle \bW_t - \bW_t^{(i)}, \nabla L_{S^{\setminus i}}(\bW_t) - \nabla L_{S^{\setminus i}}&(\bW^{(i)}_t)  \rangle \ge   2\eta \Big(1-\frac{\eta \rho}{2}\Big)\big\| \nabla L_{S^{\setminus i}}(\bW_t) - \nabla L_{S^{\setminus i}}  (\bW^{(i)}_t)   \big\|_2^2\\
     & - \epsilon_t \big\| \bW_t -\bW_t^{(i)} -\eta \big( \nabla L_{S^{\setminus i}}(\bW_t) - \nabla L_{S^{\setminus i}}(\bW_t^{(i)}) \big) \big\|_2^2,
\end{align*}
where $\epsilon_t = \frac{ c_\bx^2 B_{\sigma''} }{m^c}\Big( \frac{ c_\bx B_{\sigma'} }{m^{c-1/2}}(1+\eta \rho)\big\| \bW_t - \bW_t^{(i)} \big\|_2 + \frac{c_\bx B_{\sigma'}\sqrt{2\eta Tc_0}  }{m^{c-1/2}}+  \sqrt{2c_0} \Big)$. 
\end{lemma}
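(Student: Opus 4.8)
The plan is to establish the almost co-coercivity by combining the standard co-coercivity that holds for smooth \emph{convex} functions with an error term that quantifies the failure of convexity, which is controlled by the lower bound on $\lambda_{\min}(\nabla^2\ell)$ from Lemma~\ref{lem:smoothness}. Concretely, I would first recall that for the leave-one-out empirical risk $L_{S^{\setminus i}}$, the gradient operator is the average of per-sample gradients $\nabla\ell(\cdot;z_j)$, each of which is $\rho$-smooth. The starting point is the elementary inequality for a single $\rho$-smooth (but only weakly convex) loss: one writes the second-order Taylor remainder and uses $\lambda_{\min}(\nabla^2\ell(\bW;z))\ge -\beta_\bW$ to obtain a ``weak co-coercivity'' of the form $\langle \bW-\bW',\nabla\ell(\bW;z)-\nabla\ell(\bW';z)\rangle \ge \frac{1}{\rho}\|\nabla\ell(\bW;z)-\nabla\ell(\bW';z)\|_2^2 - (\text{curvature defect})\|\bW-\bW'\|_2^2$, where the curvature defect is exactly the negative-eigenvalue bound.

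Next I would apply this with $\bW=\bW_t$, $\bW'=\bW_t^{(i)}$, averaged over the samples in $S^{\setminus i}$, to get a preliminary inequality involving $\|\bW_t-\bW_t^{(i)}\|_2^2$ on the error side. The key structural trick — and the reason the statement is phrased with the shifted quantity $\bW_t-\bW_t^{(i)}-\eta(\nabla L_{S^{\setminus i}}(\bW_t)-\nabla L_{S^{\setminus i}}(\bW_t^{(i)}))$ — is to rewrite things in terms of the GD update so that the co-coercive term picks up the factor $2\eta(1-\eta\rho/2)$ rather than $1/\rho$. I would use the identity relating $\|\bW_t-\bW_t^{(i)}\|_2^2$ to the shifted vector and the gradient difference (expanding the square), together with $\eta\le 1/(2\rho)$, to convert the plain $1/\rho$ co-coercivity into the desired $2\eta(1-\eta\rho/2)$ form, absorbing cross terms appropriately. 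The curvature defect then multiplies the shifted vector instead of $\|\bW_t-\bW_t^{(i)}\|_2$ directly.

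To pin down $\epsilon_t$, I would bound the per-sample curvature defect $\frac{c_\bx^3 B_{\sigma'}B_{\sigma''}}{m^{2c-1/2}}\|\bW-\bW_0\|_2 + \frac{c_\bx^2 B_\sigma\sqrt{2c_0}}{m^c}$ from Lemma~\ref{lem:smoothness}, evaluated along both trajectories. The term $\|\bW-\bW_0\|_2$ must be controlled for the relevant iterates: for $\bW_t$ and $\bW_t^{(i)}$ I would invoke Lemma~\ref{lem:deviation} (using Assumption~\ref{ass:loss} so that $L_S(\bW_0),L_{S^{(i)}}(\bW_0)\le c_0$) to get $\|\bW_t^{(i)}-\bW_0\|_2\le\sqrt{2\eta t c_0}\le\sqrt{2\eta T c_0}$, which produces the $\frac{c_\bx B_{\sigma'}\sqrt{2\eta Tc_0}}{m^{c-1/2}}$ contribution, while the remaining $\|\bW_t-\bW_t^{(i)}\|_2$ dependence comes from writing $\|\bW_t-\bW_0\|_2\le\|\bW_t^{(i)}-\bW_0\|_2+\|\bW_t-\bW_t^{(i)}\|_2$, yielding the $(1+\eta\rho)\|\bW_t-\bW_t^{(i)}\|_2$ factor after accounting for the gradient-step smoothness. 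Assembling these pieces gives precisely the stated $\epsilon_t$.

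I expect the main obstacle to be the bookkeeping in the second step: correctly manipulating the square expansions so that the weak-convexity defect attaches to the \emph{shifted} vector $\bW_t-\bW_t^{(i)}-\eta(\cdots)$ with the right constant, and simultaneously upgrading the co-coercivity constant from $1/\rho$ to $2\eta(1-\eta\rho/2)$ without losing a factor. Getting the coefficient $(1+\eta\rho)$ in front of $\|\bW_t-\bW_t^{(i)}\|_2$ exactly right — rather than a looser constant — requires carefully tracking how the gradient-difference term, scaled by $\eta$ and bounded via $\rho$-smoothness, feeds back into the curvature estimate. The convexity/smoothness inequalities themselves are standard; the delicacy is entirely in arranging them so the final error term matches the precise form of $\epsilon_t$ needed downstream for the recursion on $\|\bW_t-\bW_t^{(i)}\|_2$.
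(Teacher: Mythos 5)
Your ingredients are the right ones --- the curvature bound of Lemma~\ref{lem:smoothness}, the deviation bound of Lemma~\ref{lem:deviation}, and smoothness to control the gradient-difference term --- and your first step (a straight-segment weak co-coercivity $\langle w,\Delta g\rangle \ge \frac{1}{\rho}\|\Delta g\|_2^2 - D\|w\|_2^2$, with $w=\bW_t-\bW_t^{(i)}$, $\Delta g=\nabla L_{S^{\setminus i}}(\bW_t)-\nabla L_{S^{\setminus i}}(\bW_t^{(i)})$, and $D$ a curvature defect) can be made rigorous. The genuine gap is the step you defer as ``bookkeeping'': converting this into the stated inequality. Substituting $\|w\|_2^2=\|w-\eta\Delta g\|_2^2+2\eta\langle w,\Delta g\rangle-\eta^2\|\Delta g\|_2^2$ into your step-1 inequality and solving for $\langle w,\Delta g\rangle$ yields
\begin{equation*}
\langle w,\Delta g\rangle \;\ge\; \frac{\rho^{-1}+D\eta^2}{1+2\eta D}\,\|\Delta g\|_2^2\;-\;\frac{D}{1+2\eta D}\,\|w-\eta\Delta g\|_2^2,
\end{equation*}
and the coefficient of $\|\Delta g\|_2^2$ is at least $2\eta(1-\eta\rho/2)$ precisely when $D\eta^2(3-2\eta\rho)\le(1-\eta\rho)^2/\rho$, i.e.\ essentially $D\lesssim\rho$; as $D$ grows, the achievable coefficient degrades toward $\eta/2$, strictly below the target $2\eta(1-\eta\rho/2)\ge 3\eta/2$. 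Nothing in the lemma's hypotheses (smoothness plus $\eta\le 1/(2\rho)$) gives $D\lesssim\rho$: both quantities carry the factor $m^{-c}$, and for $c=1$ the ratio $D/\rho$ tends to $B_{\sigma''}\sqrt{2c_0}\big/\big(B_{\sigma'}^2+B_\sigma B_{\sigma''}+B_{\sigma''}c_y\big)$ as $m\to\infty$, a constant that need not be small. So your conversion cannot recover the stated constant in general; the obstruction is quantitative, not a matter of arranging cross terms more carefully.

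What makes the paper's proof work (it follows Lemma 5 of \cite{richards2021stability}; the mechanism is spelled out in full in the three-layer analogue, Lemma~\ref{lem:3-unbounded-coercivity}) is that the co-coercivity constant and the curvature defect are never coupled. One introduces $G_1(\bW)=L_{S^{\setminus i}}(\bW)-\langle\nabla L_{S^{\setminus i}}(\bW_t^{(i)}),\bW\rangle$ and $G_2(\bW)=L_{S^{\setminus i}}(\bW)-\langle\nabla L_{S^{\setminus i}}(\bW_t),\bW\rangle$, writes the target inner product as $\big(G_1(\bW_t)-G_1(\bW_t^{(i)})\big)+\big(G_2(\bW_t^{(i)})-G_2(\bW_t)\big)$, and lower-bounds each bracket by two \emph{independent} moves: the smoothness descent inequality for a single $\eta$-gradient step on $G_j$, which alone produces the factor $\eta(1-\eta\rho/2)\|\Delta g\|_2^2$ straight from $\eta\le 1/(2\rho)$; and a convexity argument for $\alpha\mapsto G_1(\bW(\alpha))+\frac{\epsilon_t\alpha^2}{2}\|w-\eta\Delta g\|_2^2$ along the gradient-perturbed segment $\bW(\alpha)=\bW_t^{(i)}+\alpha(w-\eta\Delta g)$, using $\nabla G_1(\bW_t^{(i)})=0$, which alone produces the defect $-\frac{\epsilon_t}{2}\|w-\eta\Delta g\|_2^2$. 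The curvature bound is needed exactly along $\bW(\alpha)$ and its mirror image $\widetilde{\bW}(\alpha)$ --- this, not square-expansion bookkeeping, is where the $(1+\eta\rho)\|\bW_t-\bW_t^{(i)}\|_2$ and $\sqrt{2\eta Tc_0}$ terms in $\epsilon_t$ actually come from --- and the argument remains valid no matter how $\epsilon_t$ compares with $\rho$. This auxiliary-function/descent mechanism is the idea missing from your proposal.
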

\begin{proof}
This lemma can be proved in a similar way as Lemma 5 in \cite{richards2021stability} except the estimation of the eigenvalue of Hessian matrix. Specifically,  for $\alpha\in[0,1]$, let $ \bW(\alpha) = \alpha \bW_t  + (1-\alpha)\bW_t^{(i)} -\alpha \eta \big( \nabla \ell(\bW_t;z) -\nabla \ell(\bW_t^{(i)};z) \big)$.  
According to  \eqref{eq:loss-1} and \eqref{eq:hessian} , for any $\bW\in\W$, we know
\begin{align*}
    \lambda_{\min}\big( 
\nabla^2L_{S^{\setminus  i}}( \bW )\big)\ge - \frac{ c_\bx^2 B_{\sigma''} }{ m^c} \Big(\frac{1}{ n} \sum_{j\in[n], j\neq i} |f_\bW(\bx_j) - y_j|\Big).
\end{align*}
Let $\bW=\bW(\alpha)$. Note  Lemma~\ref{lem:smoothness} shows that the loss is $\rho$-smooth   with $\rho=c_\bx^2 \big(\frac{  B_{\sigma'}^2  +  B_\sigma   B_{\sigma''} }{m^{2c-1}}  + \frac{   B_{\sigma''} c_y }{m^c} \big)$. Then from   \eqref{eq:loss-2} and the smoothness of $\ell$ we can get 
\begin{align*}
    &\frac{1}{n} \sum_{j\in[n], j\neq i} |f_{\bW(\alpha)}(\bx_j) - y_j|\\&\le  \frac{1}{ n} \sum_{j\in[n], j\neq i} |(f_{\bW(\alpha)}(\bx_j) - f_{\bW_t^{(i)}}(\bx_j)) + (f_{\bW_t^{(i)}}(\bx_j)- f_{\bW_0}(\bx_j)) +(f_{\bW_0}(\bx_j) - y_j)|\\
    &\le \frac{c_\bx B_{\sigma'}}{m^{c-1/2}}\|\bW(\alpha)-\bW_t^{(i)}\|_2+ \frac{c_\bx B_{\sigma'}}{m^{c-1/2}}\| \bW_t^{(i)}-\bW_0\|_2 +\sqrt{2L_{S^{\setminus i}}( \bW_0 )}\\
    &\le  \frac{c_\bx B_{\sigma'} \alpha}{m^{c-1/2}}\big(\|\bW_t -\bW_t^{(i)}\|_2+  \eta \| \nabla \ell( \bW_t;z ) - \nabla \ell( \bW_t^{(i)};z ) \|_2\big)+  \frac{c_\bx B_{\sigma'}\sqrt{2\eta Tc_0}  }{m^{c-1/2}} +\sqrt{2c_0}\\
    &\le  \frac{c_\bx B_{\sigma'} (1+\eta \rho) }{m^{c-1/2}} \|\bW_t -\bW_t^{(i)}\|_2  + \frac{c_\bx B_{\sigma'}\sqrt{2\eta Tc_0}  }{m^{c-1/2}} +\sqrt{2c_0}, 
\end{align*}
where in the third inequality we used Lemma~\ref{lem:deviation} and $\ell(\bW_0;z)\le c_0$. 

Combining the above two inequalities together, we get
\begin{align*}
    \lambda_{\min}\big( 
\nabla^2L_{S^{\setminus i}}( \bW(\alpha) )\big) 
&\ge - \frac{ c_\bx^2 B_{\sigma''} }{m^c} \Big( \frac{c_\bx B_{\sigma'} (1+\eta \rho) }{ m^{c-1/2}} \|\bW_t -\bW_t^{(i)}\|_2 + \frac{c_\bx B_{\sigma'}\sqrt{2\eta Tc_0}  }{m^{c-1/2}} +\sqrt{2c_0}\Big).
\end{align*}

Similarly, let $ \widetilde{\bW}(\alpha) = \alpha \bW_t^{(i)}  + (1-\alpha)\bW_t  -\alpha \eta \big(\nabla \ell(\bW_t^{(i)};z))- \nabla \ell(\bW_t;z)  \big)$,  we can prove that
\begin{align*}
    \lambda_{\min}\big( 
\nabla^2L_{S^{\setminus  i}}( \widetilde{\bW}(\alpha) )\big)& \ge - \frac{ c_\bx^2 B_{\sigma''} }{m^c} \Big( \frac{c_\bx B_{\sigma'} (1+\eta \rho) }{ m^{c-1/2}} \|\bW_t -\bW_t^{(i)}\|_2 + \frac{c_\bx B_{\sigma'}\sqrt{2\eta Tc_0}  }{m^{c-1/2}} +\sqrt{2c_0}\Big).
\end{align*}
The remaining arguments in proving the lemma are the same as Lemma 5 in \cite{richards2021stability}. We omit the proof for simplicity. 
\end{proof}

 Based on the almost co-coercivity property of the gradient operator, we give the following uniform stability theorem. 
 \begin{theorem}[Uniform Stability]\label{thm:stability}
Suppose Assumptions~\ref{ass:activation} and \ref{ass:loss} hold. Let $S, S^{(i)}$ be constructed in Definition~\ref{def:stability}. Let $\{\bW_t\}$ and $\{\bW_t^{(i)}\}$ be produced by \eqref{eq:GD} with $\eta\le 1/(2\rho)$ based on $S$ and $S^{(i)}$, respectively. Assume \eqref{m-1} holds.  For any $t\in[T]$, there holds
\[\| \bW_t - \bW_t^{(i)} \|_2   \le  \frac{2  \eta   e T \sqrt{ 2c_0 \rho (\rho \eta T+2) }     }{n }   . \]
\end{theorem}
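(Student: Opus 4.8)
The plan is to bound $\|\bW_t - \bW_t^{(i)}\|_2$ by setting up a recursion on the distance $\delta_t := \|\bW_t - \bW_t^{(i)}\|_2$ that exploits the almost co-coercivity of the gradient operator established in Lemma~\ref{lem:coercivity}. First I would write out the GD updates for the two coupled trajectories and split off the contribution of the single differing sample. Since $S$ and $S^{(i)}$ differ only in the $i$-th point, the updates agree on $L_{S^{\setminus i}}$ and differ only through the terms $\frac{1}{n}\nabla\ell(\cdot;z_i)$ and $\frac{1}{n}\nabla\ell(\cdot;z_i')$. Concretely,
\begin{align*}
\bW_{t+1} - \bW_{t+1}^{(i)} = \bW_t - \bW_t^{(i)} - \eta\big(\nabla L_{S^{\setminus i}}(\bW_t) - \nabla L_{S^{\setminus i}}(\bW_t^{(i)})\big) - \frac{\eta}{n}\big(\nabla\ell(\bW_t;z_i) - \nabla\ell(\bW_t^{(i)};z_i')\big).
\end{align*}

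Next I would expand $\delta_{t+1}^2$ by squaring this identity. The cross term between $\delta_t - \eta(\text{gradient difference on } L_{S^{\setminus i}})$ and the co-coercivity inequality from Lemma~\ref{lem:coercivity} is what lets me absorb the quadratic gradient-difference term: the leading factor $2\eta(1-\eta\rho/2)\ge \eta$ (using $\eta\le 1/(2\rho)$) dominates the $\eta^2$ term coming from expanding the square, so the genuinely expansive part is controlled. What remains is the perturbation term $\epsilon_t\|\bW_t - \bW_t^{(i)} - \eta(\cdots)\|_2^2$ together with the $O(1/n)$ contribution from the differing sample. Using the self-bounding property (Lemma~\ref{lem:self-bounding}) together with Assumption~\ref{ass:loss} I can bound $\|\nabla\ell(\bW_t;z_i)\|_2 \le \sqrt{2\rho\,\ell(\bW_t;z_i)}$ and hence control the inhomogeneous term by something of order $\eta\sqrt{\rho c_0}/n$ (after using the boundedness of the loss along the trajectory, which follows from Lemma~\ref{lem:deviation} and monotonicity of $L_S$). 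This produces a recursion of the schematic form $\delta_{t+1} \le (1+\epsilon_t\eta)\,\delta_t + \eta\sqrt{2\rho(\rho\eta T + 2)c_0}/n$, where the multiplicative factor $\epsilon_t\eta$ itself depends on $\delta_t$.

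The main obstacle, and the place where the width assumption \eqref{m-1} enters, is that $\epsilon_t$ is not a fixed constant: from Lemma~\ref{lem:coercivity} it scales like $\frac{c_\bx^3 B_{\sigma'}B_{\sigma''}(1+\eta\rho)}{m^{2c-1/2}}\delta_t$ plus lower-order terms in $m$. So the recursion is genuinely nonlinear in $\delta_t$, and naively iterating it could blow up. The strategy is to run an induction: \emph{assume} the target bound $\delta_t \le \frac{2\eta e T\sqrt{2c_0\rho(\rho\eta T+2)}}{n}$ holds up to step $t$, substitute it into $\epsilon_t$, and verify that condition \eqref{m-1} forces $\epsilon_t\eta$ to be small enough — specifically that $\sum_{s<t}\epsilon_s\eta \le 1$, so that the product $\prod_s(1+\epsilon_s\eta)\le e$ stays bounded by the constant $e$ that appears in the claimed bound. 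This is precisely why each monomial in \eqref{m-1} shows up: each power of $m$ is chosen to dominate one of the terms ($\eta T$, $(\eta T)^2$, the $1/n$ coupling term, etc.) appearing when one plugs the inductive hypothesis into $\epsilon_t$ and sums over $t\le T$.

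Unrolling the linear recursion $\delta_{t+1}\le (1+\epsilon_t\eta)\delta_t + b$ with $b = \eta\sqrt{2c_0\rho(\rho\eta T+2)}/n$ gives $\delta_T \le b\sum_{t=0}^{T-1}\prod_{s=t+1}^{T-1}(1+\epsilon_s\eta) \le bT\exp(\sum_s \epsilon_s\eta) \le bTe$, which is exactly the claimed bound $\frac{2\eta eT\sqrt{2c_0\rho(\rho\eta T+2)}}{n}$ (the factor $2$ absorbing the constants in $b$). The delicate bookkeeping is verifying the self-consistency of the induction — that the width condition \eqref{m-1} is exactly strong enough to keep $\sum_s\epsilon_s\eta \le 1$ under the inductive hypothesis — and I expect this verification, matching each term of \eqref{m-1} against a term produced by summing $\epsilon_s\eta$, to be the crux of the argument. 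The final step is standard: plug the resulting per-sample stability bound into Lemma~\ref{lem:connection} to obtain the generalization error bound of Theorem~\ref{thm:generalization}.
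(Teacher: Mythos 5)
Your proposal follows essentially the same route as the paper's proof: the same splitting of the two coupled GD updates into the shared $L_{S^{\setminus i}}$ part plus the $O(1/n)$ differing-sample term, the same use of Lemma~\ref{lem:coercivity} to control the expansiveness of the $L_{S^{\setminus i}}$ update, the same self-bounding/Lemma~\ref{lem:deviation} argument to bound the gradient norms along the trajectory by $O(\sqrt{\rho c_0(\rho\eta t+1)})$, and crucially the same induction in which the target stability bound is assumed up to step $t$, substituted into $\epsilon_t$, and condition~\eqref{m-1} is invoked to keep the accumulated expansion factor bounded by $e$. The only difference is bookkeeping: the paper runs the recursion on squared distances via Young's inequality with $p=1/t$ (yielding $(1+1/t)^t\le e$ and the per-step requirement $2\eta\epsilon'\le 1/(t+1)$), whereas you unroll a first-order recursion on $\|\bW_t-\bW_t^{(i)}\|_2$ directly, which is equivalent.
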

 \begin{proof}%[Proof of Theorem~\ref{thm:stability}]
Recall that 
\[ L_{S^{\setminus i}}(\bW)=L_S(\bW) - \frac{1}{n}\ell(\bW;z_i) = L_{S^{(i)}}(\bW)-\frac{1}{n}\ell(\bW;z_i') .\]
Note $\W=\R^{m\times d}$. Then by the update rule $\bW_{t+1}=\bW_t - \eta \nabla L_S(\bW_t)$, there holds
\begin{align}\label{eq:stab-1}
    &\big\| \bW_{t+1} -\bW_{t+1}^{(i)} \big\|_2^2\nonumber\\
    & = \big\| \bW_{t} -\bW_{t}^{(i)} -\eta\big(  \nabla L_{S^{\setminus i}}(\bW_t) -  \nabla L_{S^{\setminus i}}(\bW_{t}^{(i)} ) \big) - \frac{\eta}{ n}\big( \nabla \ell(\bW_t;z_i) - \nabla \ell(\bW_t^{(i)};z_i') \big) \big\|_2^2 \nonumber\\
    &\le (1+p)\big\| \bW_{t} -\bW_{t}^{(i)} -\eta\big(\nabla  L_{S^{\setminus i}}(\bW_t) -\nabla  L_{S^{\setminus i}}(\bW_{t}^{(i)} ) \big) \big\|_2^2 \nonumber\\&\quad + \frac{\eta^2 (1+1/p)}{ n^2}\big\| \nabla \ell(\bW_t;z_i) - \nabla \ell(\bW_t^{(i)};z_i')   \big\|_2^2\nonumber\\
    &\le (1+p)\big\| \bW_{t} -\bW_{t}^{(i)} -\eta\big(\nabla  L_{S^{\setminus i}}(\bW_t) -\nabla  L_{S^{\setminus i}}(\bW_{t}^{(i)} ) \big) \big\|_2^2 \nonumber\\&\quad+ \frac{ 2 \eta^2 (1+1/p)}{ n^2}\Big(\big\| \nabla \ell(\bW_t;z_i)\big\|_2^2+ \big\| \nabla \ell(\bW_t^{(i)};z_i')   \big\|_2^2\Big),
\end{align}
where in the first inequality we used $(a+b)^2\le (1+p)a^2 + (1+1/p)b^2$. 

According to Lemma~\ref{lem:coercivity} we can get 
\begin{align*}
     &\big\| \bW_{t} -\bW_{t}^{(i)} -\eta\big( \nabla L_{S^{\setminus i}}(\bW_t) -\nabla  L_{S^{\setminus i}}(\bW_{t}^{(i)} ) \big) \big\|_2^2 \\
    &= \big\| \bW_{t} -\bW_{t}^{(i)}   \big\|_2^2 +\eta^2 \big\| \nabla  L_{S^{\setminus i}}(\bW_t) - \nabla L_{S^{\setminus i}}(\bW_{t}^{(i)} )   \big\|_2^2  \\&\quad- 2\eta \Big\langle \bW_{t} -\bW_{t}^{(i)},\nabla L_{S^{\setminus i}}(\bW_t) - \nabla L_{S^{\setminus i}}(\bW_{t}^{(i)} )   \Big\rangle\\
    &\le \big\| \bW_{t} -\bW_{t}^{(i)}   \big\|_2^2 +\eta^2 \big\|  \nabla L_{S^{\setminus i}}(\bW_t) - \nabla L_{S^{\setminus i}}(\bW_{t}^{(i)} )   \big\|_2^2  \\&\quad - 4\eta^2 \Big(1-\frac{\eta \rho}{2}\Big)\big\| \nabla L_{S^{\setminus i}}(\bW_t) - \nabla L_{S^{\setminus i}}(\bW^{(i)}_t)   \big\|_2^2\\
    & \quad + 2\eta \epsilon_t \big\| \bW_t -\bW_t^{(i)} -\eta \big( \nabla L_{S^{\setminus i}}(\bW_t) - \nabla L_{S^{\setminus i}}(\bW_t^{(i)}) \big) \big\|_2^2,
\end{align*}
where  $\epsilon_t = \frac{ c_\bx B_{\sigma''} }{m^c}\big( \frac{ c_\bx B_{\sigma'} }{ m^{c-1/2}}(1+\eta \rho)\big\| \bW_t - \bW_t^{(i)} \big\|_2+ \frac{c_\bx B_{\sigma'}\sqrt{2\eta Tc_0}  }{m^{c-1/2}} +  \sqrt{2c_0} \big)$. 

Rearranging the above inequality and noting that $\eta \rho \le 1/2$, we obtain
\begin{align*}
    &(1-2\eta \epsilon_t)\big\| \bW_t -\bW_t^{(i)} -\eta \big( \nabla L_{S^{\setminus i}}(\bW_t) - \nabla L_{S^{\setminus i}}(\bW_t^{(i)}) \big) \big\|_2^2\\
    & \le  \big\| \bW_{t} -\bW_{t}^{(i)}   \big\|_2^2 +\eta^2 ( 2\eta \rho -3 ) \big\|  \nabla L_{S^{\setminus i}}(\bW_t) - \nabla L_{S^{\setminus i}}(\bW_{t}^{(i)} )   \big\|_2^2  \le  \big\| \bW_{t} -\bW_{t}^{(i)}   \big\|_2^2 . 
\end{align*}
We can choose $m$ large enough to ensure $2\eta \epsilon_t < 1$ holds for any $t\in[T]$. Indeed, $2\eta \epsilon_t < 1$ holds as long as condition~\eqref{m-1} holds. We will discuss it at the end of the proof. Now,  plugging the above inequality back into \eqref{eq:stab-1} yields
\begin{align}\label{eq:stab-2}
     &\big\| \bW_{t+1} -\bW_{t+1}^{(i)} \big\|_2^2 \nonumber\\
     &\le  \frac{1+p}{ 1-2\eta \epsilon_t }\big\| \bW_{t} -\bW_{t}^{(i)}   \big\|_2^2  + \frac{  2\eta^2 (1+1/p)}{ n^2}\Big(\big\| \nabla \ell(\bW_t;z_i)\big\|_2^2+ \big\| \nabla \ell(\bW_t^{(i)};z_i')   \big\|_2^2\Big).
\end{align}

We can apply  \eqref{eq:stab-2} recursively and derive
\begin{align}\label{eq:stab-3}
     \big\| \bW_{t+1} -\bW_{t+1}^{(i)} \big\|_2^2\le &  \frac{ 2 \eta^2 (1+1/p)}{ n^2} \sum_{j=0}^t  \Big(\big\| \nabla \ell(\bW_j;z_i)\big\|_2^2+ \big\| \nabla \ell(\bW_j^{(i)};z_i')   \big\|_2^2\Big)\prod_{\tilde{j}=j+1}^{t} \frac{1+p}{ 1-2\eta \epsilon_{\tilde{j}} },
\end{align}
where we used $\bW_0=\bW_0^{(i)}$.

According to Lemma~\ref{lem:self-bounding} and Lemma~\ref{lem:deviation}, we know
\begin{align*}
     \| \nabla \ell(\bW_j ;z) \|_2^2 &\le 2\| \nabla \ell(\bW_j;z) - \nabla \ell(\bW_0;z) \|_2^2 + 2 \|  \nabla \ell(\bW_0;z) \|_2^2\\
     & \le 2\rho^2 \|\bW_j -\bW_0\|_2^2 + 4\rho \ell(\bW_0;z)\le 4\rho^2  \eta j L_{S} (\bW_0) + 4\rho \ell(\bW_0;z). 
\end{align*}
Similarly, we can show that
\begin{align*}
     \| \nabla \ell(\bW_j^{(i)} ;z) \|_2^2  \le 4 \rho^2  \eta j L_{S^{(i)}}(\bW_0) + 4\rho \ell(\bW_0;z). 
\end{align*}
Combining the above three inequalities together, we get 
\begin{align*}
     &\big\| \bW_{t+1} -\bW_{t+1}^{(i)} \big\|_2^2\\
     &\le    \frac{8\rho \eta^2 (1+1/p)}{n^2} \sum_{j=0}^t  \Big(  \rho   \eta j L_{S} (\bW_0) +   \ell(\bW_0;z_i) +   \rho   \eta j L_{S^{(i)}}(\bW_0) +   \ell(\bW_0;z_i')\Big)\prod_{\tilde{j}=j+1}^{t} \frac{1+p}{ 1-2\eta \epsilon_{\tilde{j}} }\\
     &\le \frac{8\rho \eta^2 (1+1/p)}{n^2} \prod_{\tilde{j}=1}^{t}  \frac{1+p}{ 1-2\eta \epsilon_{\tilde{j}} } \sum_{j=0}^t  \Big(  \rho  \eta j L_{S} (\bW_0) +    \ell(\bW_0;z_i) +   \rho \eta j L_{S^{(i)}}(\bW_0) +    \ell(\bW_0;z_i')\Big)\\
     &=  \frac{4 \rho \eta^2 (1+1/p)}{n^2} \prod_{\tilde{j}=1}^{t}  \frac{1+p}{ 1-2\eta \epsilon_{\tilde{j}} }   \Big(  \rho  \eta t(t+1) \big( L_{S} (\bW_0) +     L_{S^{(i)}}(\bW_0) \big) +    2(t+1)\big(\ell(\bW_0;z_i)  \\&\quad  +    \ell(\bW_0;z_i')\big)\Big)\\
     &\le  \frac{8\rho \eta^2 c_0 (1+1/p)(1+t) \big(   \rho  \eta t   +    2  \big)}{n^2} \prod_{\tilde{j}=1}^{t}  \frac{1+p}{ 1-2\eta \epsilon_{\tilde{j}} }  ,
\end{align*}
where we used $\ell(\bW_0;z)\le c_0$ for any $z\in\Z$. 
If we further choose $p=1/t$, then there holds
\begin{align}\label{eq:stab-6}
     \big\| \bW_{t+1} -\bW_{t+1}^{(i)} \big\|_2^2
     &\le  \frac{8 \rho \eta^2c_0 e(1+t)^2 (  \rho  \eta t  +    2 )   }{n^2}    \prod_{\tilde{j}=1}^{t}  \frac{1 }{ 1-2\eta \epsilon_{\tilde{j}} } ,
\end{align}
where we used $(1+1/t)^t \le e$. 

Now, we prove by induction to show
\begin{equation}\label{eq:stab-4}
    \big\| \bW_{t+1} -\bW_{t+1}^{(i)} \big\|_2 
     \le  \frac{2  \eta   e T \sqrt{ 2c_0 \rho (\rho \eta T+2) }     }{n }     . 
\end{equation}
\eqref{eq:stab-4} with $k=0$ holds trivially. Assume \eqref{eq:stab-4} holds with all $k\le t$, i.e., for all $k\le t$ 
\begin{equation}\label{eq:stab-5}
      \big\| \bW_{k} -\bW_{k}^{(i)} \big\|_2 
     \le  \frac{2  \eta   e T \sqrt{  2c_0 \rho (\rho \eta T+2) }     }{n }     .  
\end{equation}  
and we want to show it holds with $k =t+1 \le T$. 
Recall that  $\epsilon_k = \frac{ c_\bx B_{\sigma''} }{m^c}\big( \frac{ c_\bx B_{\sigma'} }{m^{c-1/2}}(1+\eta \rho)\big\| \bW_k - \bW_k^{(i)} \big\|_2+ \frac{c_\bx B_{\sigma'}\sqrt{2\eta Tc_0}  }{m^{c-1/2}} +  \sqrt{2c_0} \big)$. From  \eqref{eq:stab-5}, for any $\tilde{j}\le t$, we know
\[ \epsilon_{\tilde{j}} \le \epsilon':= \frac{ c_\bx B_{\sigma''} }{m^c}\Big(   \frac{2   \sqrt{ 2c_0 \rho (\rho \eta T+2) }  \eta   e T (1+\eta \rho)   c_\bx B_{\sigma'} }{n m^{c-1/2}} + \frac{c_\bx B_{\sigma'}\sqrt{2\eta Tc_0}  }{m^{c-1/2}}+  \sqrt{2c_0} \Big) . \]
Putting the above inequality back into \eqref{eq:stab-6}, we get
\begin{align*}
     \big\| \bW_{t+1} -\bW_{t+1}^{(i)} \big\|_2^2
     &\le  \frac{8 \rho \eta^2c_0 e(1+t)^2 (  \rho  \eta t  +    2 )   }{n^2}     \Big( \frac{1 }{ 1-2\eta \epsilon' } \Big)^t. 
\end{align*}
If $m$ is large enough such that $2\eta\epsilon' \le 1/(t+1)$, then 
we can show 
\begin{equation}\label{eq:stab-8}
     \Big( \frac{1 }{ 1-2\eta \epsilon' } \Big)^t \le  \Big( \frac{1 }{ 1-1/(t+1) } \Big)^t \le e .
\end{equation} 
Then there holds
\begin{align}\label{eq:stab-7}
     \big\| \bW_{t+1} -\bW_{t+1}^{(i)} \big\|_2 
     &\le  \frac{2 \eta e (1+t) \sqrt{2 \rho  c_0  (  \rho  \eta t  +    2 ) } }{n }  \le    \frac{2 \eta e T \sqrt{2 \rho  c_0  (  \rho  \eta T  +    2 ) } }{n } .
\end{align}
Now, we discuss the conditions on $m$. Suppose $m$ satisfies the following conditions 
\[ m  \ge  C_1 \Big(\frac{ (\eta T)^2 (1+\eta \rho) \sqrt{  \rho (\rho \eta T + 2)} }{n} \Big)^{\frac{2}{4c-1}}, m\ge C_2(\eta T)^{\frac{3}{4c-1}}  \text{ and } m\ge  C_3 \Big( \eta T  \Big)^{\frac{1}{c}} , \]
where $C_1= (8e c_\bx^2 B_{\sigma'}B_{\sigma''}\sqrt{2c_0})^{\frac{2}{4c-1}}$, $C_2=\big( 4\sqrt{2c_0}c_{\bx}^2 B_{\sigma'}B_{\sigma''}\big)^{\frac{3}{4c-1}}$ and $ C_3= (8\sqrt{2c_0}c_\bx B_{\sigma''})^{1/c} $. 
Then it is easy to verify that 
\[ \frac{2\eta c_\bx B_{\sigma''} }{m^c}\Big(   \frac{2   \sqrt{ 2c_0 \rho (\rho \eta T+2) }  \eta   e T (1+\eta \rho)   c_\bx B_{\sigma'} }{n m^{c-1/2}}+ \frac{c_\bx B_{\sigma'}\sqrt{2\eta Tc_0}  }{m^{c-1/2}} +  \sqrt{2c_0} \Big) \le   \frac{1}{T} \le \frac{1}{1+t} , \]
which ensures that $2\eta\epsilon' \le 1/(t+1)$, and then \eqref{eq:stab-7} holds. The proof is completed. 
\end{proof}

We can combine Theorem~\ref{thm:stability} and Lemma~\ref{lem:connection} together to get the upper bound of the generalization error.
\begin{proof}[Proof of Theorem~\ref{thm:generalization}]
Eq.\eqref{eq:stab-3} with $p=1/t$ and  Eq.\eqref{eq:stab-8} implies 
\begin{align*}
     \big\| \bW_{t+1} -\bW_{t+1}^{(i)} \big\|_2^2 &\le    \frac{2 e^2 \eta^2 (1+t) }{n^2} \sum_{j=0}^t  \Big(\big\| \nabla \ell(\bW_j;z_i)\big\|_2^2+ \big\| \nabla \ell(\bW_j^{(i)};z_i')   \big\|_2^2\Big) \\
     &\le  \frac{4  e^2 \eta^2 \rho (1+t) }{n^2} \sum_{j=0}^t  \Big(  \ell(\bW_j;z_i) +   \ell(\bW_j^{(i)};z_i')     \Big),
\end{align*}
where in the last inequality we used self-bounding property of the smooth loss (Lemma~\ref{lem:self-bounding}).  Now, taking an average over $i\in[n]$ and using $\E\big[\ell(\bW_j;z_i)\big]=\E\big[\ell(\bW_j^{(i)};z_i') \big]$, we have
\begin{align*}
    \frac{1}{n}\sum_{i=1}^n \E \big\| \bW_{t+1} -\bW_{t+1}^{(i)} \big\|_2^2 &\le  \frac{4  e^2 \eta^2 \rho(1+t) }{n^3} \sum_{j=0}^t  \Big( \sum_{i=1}^n \E\big[ \ell(\bW_j;z_i)\big] +   \E[\ell(\bW_j^{(i)};z_i')  \big] \Big)\\
    &=  \frac{8  e^2 \eta^2 \rho (1+t) }{n^3} \sum_{j=0}^t   \sum_{i=1}^n \E\big[ \ell(\bW_j;z_i)\big]   \\
    &=  \frac{8  e^2 \eta^2 \rho (1+t) }{n^2} \sum_{j=0}^t     \E\big[ L_S(\bW_j )\big].  
\end{align*}
Combining the above stability bounds with  Lemma~\ref{lem:connection} together, we get %{\color{red} $W_j$ should be $W_t$?}
\begin{align*}
     \E[ L(\bW_t)\! - \! L_S(\bW_t) ]&\le  \frac{4  e^2 \eta^2 \rho^2 t}{ n^2} \sum_{j=0}^{t-1}     \E\big[ L_S(\bW_j )\big] \!+\! \Big(  \frac{16 e^2 \eta^2 \rho^2 t  \E[L_S(\bW_t) ]  }{n^2} \sum_{j=0}^{t-1}     \E\big[ L_S(\bW_j )\big] \Big)^{\frac{1}{2}}\\
     &\le  \frac{4  e^2 \eta^2 \rho^2 t }{ n^2} \sum_{j=0}^{t-1}     \E\big[ L_S(\bW_j )\big] +   \frac{4 e  \eta  \rho       }{n } \sum_{j=0}^{t-1}     \E\big[ L_S(\bW_j )\big],  
\end{align*}
where in the last inequality we used $ L_S(\bW_t)  \le \frac{1}{t}\sum_{j=1}^{t-1} L_S( \bW_j ) $ \cite{richards2021stability}. 
The proof is completed.
\end{proof}
\subsection{Proofs of Optimization  Bounds}\label{appendix:snn-optimization}
Before giving the proofs of optimization error bound, we first introduce the following lemma on the bound of GD iterates. 
\begin{lemma}\label{lem:distance}
Suppose Assumptions~\ref{ass:activation} and \ref{ass:loss} hold, and $\eta \le 1/(2\rho)$. Assume  \eqref{m-1} and \eqref{m-5}  hold.  
 Then for any $t\in[T]$, there holds
\begin{align*}
  & 1\vee \E[ \|\bW_{\frac{1}{\eta T}}^{*} - \bW_t  \|_2^2 ] \le \Big( \frac{8  e^2 \eta^3 \rho^2 t^2 }{ n^2} + \frac{8 e  \eta^2 t  \rho       }{n } \!\Big)\! \sum_{s=0}^{t-1}\!\E \big[  L_S(\bW_s) \big]  +  2{\E\big[  \|\bW_{\frac{1}{\eta T}}^{*} -\bW_0  \|_2^2  \big]} + 2\eta T \big[  L(\bW_{\frac{1}{\eta T}}^{*})-L(\bW^{*})\big] .
\end{align*}
\end{lemma}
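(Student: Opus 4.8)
The plan is to derive a one-step recursion for the expected squared distance $\Delta_t:=\E[\|\bW^{*}_{\frac{1}{\eta T}}-\bW_t\|_2^2]$ and then unroll it. Starting from the update $\bW_{t+1}=\bW_t-\eta\nabla L_S(\bW_t)$ and expanding the square,
\[ \|\bW^{*}_{\frac{1}{\eta T}}-\bW_{t+1}\|_2^2=\|\bW^{*}_{\frac{1}{\eta T}}-\bW_t\|_2^2-2\eta\langle\nabla L_S(\bW_t),\bW_t-\bW^{*}_{\frac{1}{\eta T}}\rangle+\eta^2\|\nabla L_S(\bW_t)\|_2^2. \]
For the cross term I would apply the curvature lower bound of Lemma~\ref{lem:smoothness}: along the segment joining $\bW_t$ and $\bW^{*}_{\frac{1}{\eta T}}$ the empirical risk is $\sigma_t$-weakly convex, which gives $\langle\nabla L_S(\bW_t),\bW_t-\bW^{*}_{\frac{1}{\eta T}}\rangle\ge L_S(\bW_t)-L_S(\bW^{*}_{\frac{1}{\eta T}})-\tfrac{\sigma_t}{2}\|\bW_t-\bW^{*}_{\frac{1}{\eta T}}\|_2^2$. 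For the last term I would use the self-bounding property (Lemma~\ref{lem:self-bounding}), giving $\|\nabla L_S(\bW_t)\|_2^2\le 2\rho L_S(\bW_t)$, and invoke $\eta\le 1/(2\rho)$ together with the monotone decay of $\{L_S(\bW_t)\}$ so that this term is absorbed into a lower-order contribution. Collecting everything yields a recursion of the shape $\Delta_{t+1}\le(1+\eta\sigma_t)\Delta_t+2\eta\E[L_S(\bW^{*}_{\frac{1}{\eta T}})-L_S(\bW_t)]+(\text{lower order})$.

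The key step, and the one that removes the assumption of \cite{lei2022stability}, is the treatment of $\E[L_S(\bW^{*}_{\frac{1}{\eta T}})-L_S(\bW_t)]$. Since $\bW^{*}_{\frac{1}{\eta T}}$ does not depend on $S$, we have $\E[L_S(\bW^{*}_{\frac{1}{\eta T}})]=L(\bW^{*}_{\frac{1}{\eta T}})$, hence $\E[L_S(\bW^{*}_{\frac{1}{\eta T}})-L_S(\bW_t)]=\E[L(\bW^{*}_{\frac{1}{\eta T}})-L(\bW_t)]+\E[L(\bW_t)-L_S(\bW_t)]$. I would bound the first expectation by $L(\bW^{*}_{\frac{1}{\eta T}})-L(\bW^{*})\le\Lambda_{\frac{1}{\eta T}}$, using $L(\bW_t)\ge L(\bW^{*})$ (precisely the inequality highlighted in Remark~\ref{remark:2layer}), and the second, the generalization error, by Theorem~\ref{thm:generalization}. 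This converts the recursion into one driven by $\Lambda_{\frac{1}{\eta T}}$ and the running sum $\sum_{j<t}\E[L_S(\bW_j)]$ carrying exactly the prefactor of Theorem~\ref{thm:generalization}.

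It then remains to control the amplification factor $\prod_{s}(1+\eta\sigma_s)$. I would bound $\|\bW_t-\bW_0\|_2\le\sqrt{2\eta T c_0}$ via Lemma~\ref{lem:deviation} (with $L_S(\bW_0)\le c_0$), so that every point on the segment $[\bW_t,\bW^{*}_{\frac{1}{\eta T}}]$ lies within $\max\{\sqrt{2\eta Tc_0},\|\bW^{*}_{\frac{1}{\eta T}}-\bW_0\|_2\}$ of $\bW_0$; substituting into the $\lambda_{\min}$ estimate of Lemma~\ref{lem:smoothness} then gives a deterministic bound $\sigma_t\le\bar\sigma=\O(m^{\frac12-2c})$. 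Mirroring the induction used in the proof of Theorem~\ref{thm:stability}, I would establish the claimed inequality by induction on $t$: the inductive hypothesis bounds the distances entering $\sigma_s$, and the width condition \eqref{m-5} is exactly what forces $\eta T\bar\sigma\lesssim 1$ and hence $\prod_{s\le t}(1+\eta\sigma_s)\le e$. Unrolling with this bounded factor and telescoping the $2\eta\Lambda_{\frac{1}{\eta T}}$ contributions over the $t\le T$ steps---expanding via $\Lambda_{\frac{1}{\eta T}}=L(\bW^{*}_{\frac{1}{\eta T}})-L(\bW^{*})+\tfrac{1}{2\eta T}\|\bW^{*}_{\frac{1}{\eta T}}-\bW_0\|_2^2$ into the advertised $\|\bW^{*}_{\frac{1}{\eta T}}-\bW_0\|_2^2$ and $\eta T[L(\bW^{*}_{\frac{1}{\eta T}})-L(\bW^{*})]$ terms---and collecting the generalization sum gives a bound of the stated form; the outer $1\vee(\cdot)$ is a harmless technical wrapper (the right-hand side may always be taken at least $1$) convenient when this estimate is later fed into norm bounds such as \eqref{m-5}.

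The hard part is this last step: because the weak-convexity parameter $\sigma_t$ grows with the distance the iterates and the reference point travel from $\bW_0$, the recursion is genuinely self-referential, and closing it calls for the same delicate induction-plus-width-condition bookkeeping as in Theorem~\ref{thm:stability} rather than a single pass.
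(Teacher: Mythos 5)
Your plan assembles the same key ingredients as the paper's proof --- the curvature bound of Lemma~\ref{lem:smoothness} together with Lemma~\ref{lem:deviation} to control the weak-convexity parameter along the segment $[\bW_t,\bW^{*}_{\frac{1}{\eta T}}]$, Theorem~\ref{thm:generalization} to pass from $L_S$ back to $L$, the inequality $L(\bW_s)\ge L(\bW^{*})$ to avoid the assumption of \cite{lei2022stability}, and the width conditions to tame the self-reference --- but the mechanics are genuinely different. The paper never forms a multiplicative recursion $\Delta_{t+1}\le(1+\eta\sigma_t)\Delta_t+\cdots$. Instead it keeps the weak-convexity correction additive (see \eqref{eq:opt-7}), combines it with the descent lemma through the exact identity $-\langle \bW-\bW_t,\nabla L_S(\bW_t)\rangle-\frac{\eta}{2}\|\nabla L_S(\bW_t)\|_2^2=\frac{1}{2\eta}\big(\|\bW-\bW_t\|_2^2-\|\bW_{t+1}-\bW\|_2^2\big)$ to get the per-step bound \eqref{eq:opt-2}, telescopes, and then closes the self-reference by setting $x=\max_{s}\E[\|\bW^{*}_{\frac{1}{\eta T}}-\bW_s\|_2^2]\vee 1$ and using \eqref{m-5} to force the coefficient of $x$ below $1/2$; this fixed-point step is also why the $1\vee$ comes for free ($x\ge 1$ by definition). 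Your Gr\"onwall-style unrolling with $\prod_s(1+\eta\sigma_s)\le e$ is viable, but it inflates every term by a factor $e$: you would obtain $e\,\E[\|\bW^{*}_{\frac{1}{\eta T}}-\bW_0\|_2^2]$, $2e\eta t[\cdots]$, and $8e^3$, $8e^2$ in place of the stated $2$, $2\eta T$, $8e^2$, $8e$. That is adequate for every $\O(\cdot)$ application downstream (and, to be fair, the paper's own final display is itself off by a factor of $2$ from the stated constants), but it is not literally the stated lemma; moreover, in your route the $1\vee$ is not a ``harmless wrapper'' --- the lemma asserts the right-hand side is at least $1$, which you must carry through the induction rather than assume.

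The one step that fails as written is the treatment of $\eta^2\|\nabla L_S(\bW_t)\|_2^2$. Self-bounding (Lemma~\ref{lem:self-bounding}) with $\eta\rho\le 1/2$ gives only $\eta^2\|\nabla L_S(\bW_t)\|_2^2\le\eta L_S(\bW_t)$, which cancels just half of the $-2\eta L_S(\bW_t)$ produced by your weak-convexity step. The per-step driver then becomes $2\eta L(\bW^{*}_{\frac{1}{\eta T}})-\eta\,\E[L_S(\bW_t)]$, which is \emph{not} of the claimed form $2\eta\,\E[L_S(\bW^{*}_{\frac{1}{\eta T}})-L_S(\bW_t)]+(\text{lower order})$: the mismatch of $\eta L(\bW^{*}_{\frac{1}{\eta T}})$ (equivalently $\eta\,\E[L_S(\bW_t)]$) per step is first order, and after unrolling it contributes $\Theta\big(\eta t\,L(\bW^{*}_{\frac{1}{\eta T}})\big)$, or alternatively an extra $e\eta\sum_{s<t}\E[L_S(\bW_s)]$ --- terms that do not appear in the lemma. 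Pushed through Theorem~\ref{thm:opt} and Theorem~\ref{thm:excess}, such a residual leaves an additive $L(\bW^{*})$ in the excess risk, which is fatal whenever $L(\bW^{*})>0$. The repair is exactly what the paper's identity accomplishes: use the descent lemma $L_S(\bW_{t+1})\le L_S(\bW_t)-\frac{\eta}{2}\|\nabla L_S(\bW_t)\|_2^2$ rather than self-bounding, so the gradient-square term is converted with no slack and the driver becomes $2\eta\,\E[L_S(\bW^{*}_{\frac{1}{\eta T}})-L_S(\bW_{t+1})]$ exactly --- note the function value sits at $\bW_{t+1}$, not $\bW_t$. With that local fix (and the constant inflation noted above), your induction closes along the lines you describe.
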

\begin{proof}%[Proof of Lemma~\ref{lem:distance}]
For any $\bW, \widetilde{\bW} \in \R^{md}$ and $\alpha\in[0,1]$, define $\bW(\alpha):= \widetilde{\bW} + \alpha(\bW-\widetilde{\bW})$. 
Note that
\begin{align*}
    &\lambda_{\min}\big( \nabla^2 L_S(\bW(\alpha))\big) \\
    &\ge -\max_i\{ \| \nabla^2 f(\bx_i) \|_2\} \Big( \frac{1}{n}\sum_{i=1}^n| f_{\bW(\alpha) }(\bx_i)- y_i  | \Big)\\
    &\ge -\frac{ c_x^2 B_{\sigma''}   }{m^c} \frac{1}{n} \Big(\sum_{i=1}^n \big( |f_{\bW(\alpha) }(\bx_i) - f_{\widetilde{\bW }}(\bx_i) |+ | f_{\widetilde{\bW }}(\bx_i) - f_{\bW_0}(\bx_i) |+ | f_{\bW_0}(\bx_i) - y_i |\big) \Big)\\
    &\ge  -\frac{ c_x^2 B_{\sigma''}   }{m^c}  \Big( \frac{B_{\sigma'} c_\bx}{m^{c-1/2}} \|\bW(\alpha) -\widetilde{\bW }  \|_2+ \frac{B_{\sigma'} c_\bx}{m^{c-1/2}}\|  \widetilde{\bW }   -  \bW_0   \|_2 + \sqrt{2L_S(\bW_0)} \Big)\\
    &\ge  -\frac{ c_x^2 B_{\sigma''}   }{m^c}  \Big( \frac{B_{\sigma'} c_\bx}{m^{c-1/2}}  \|\bW -\widetilde{\bW }  \|_2+ \frac{B_{\sigma'} c_\bx}{m^{c-1/2}}\|  \widetilde{\bW }   -  \bW_0   \|_2 + \sqrt{2c_0} \Big). 
\end{align*}
Then for any $t\in [T]$, let $\widetilde{\bW}=\bW_t$, and define
\begin{align*}
g(\alpha):= & L_S( \bW(\alpha) ) + \frac{c_\bx^2 B_{\sigma''}}{m^c}\frac{\alpha^2}{2} \Big( \frac{B_{\sigma'} c_\bx}{m^{c-1/2}}  \|\bW  -\bW_t  \|_2 + \frac{B_{\sigma'} c_\bx}{m^{c-1/2}}\|  \bW_t   -  \bW_0   \|_2+ \sqrt{2c_0} \Big)\\
&\times (1 \vee \E[\|\bW  -\bW_t  \|_2^2]). 
\end{align*}
It is obvious that
$ g''(\alpha) \ge 0$. Then $g(\alpha)$ is convex in $\alpha\in[0,1]$. Now, by convexity we know %{\color{red} there is a missing factor of $1/2$?}
\begin{align*}
    g(1)-g(0)&=L_S(\bW) + \frac{c_\bx^2 B_{\sigma''}}{2m^c}  \Big( \frac{B_{\sigma'} c_\bx}{m^{c-1/2}}  \|\bW -\bW_t  \|_2  +  \frac{B_{\sigma'} c_\bx}{m^{c-1/2}}\|  \bW_t   -  \bW_0   \|_2+ \sqrt{2c_0} \Big)\\
    &\quad \times (1 \vee \E[\|\bW  -\bW_t  \|_2^2]) - L_S(\bW_t)\\
    &\ge \langle \bW-\bW_t , \nabla L_S(\bW_t)  \rangle= g'(0). 
\end{align*}
Rearranging the above inequality we get 
\begin{align}\label{eq:opt-7}
    L_S(\bW_t)& \le L_S(\bW) + \frac{c_\bx^2 B_{\sigma''}}{2m^c}  \Big(\frac{B_{\sigma'} c_\bx}{m^{c-1/2}}  \|\bW -\bW_t  \|_2  +  \frac{B_{\sigma'} c_\bx}{m^{c-1/2}}\|  \bW_t   -  \bW_0   \|_2+ \sqrt{2c_0} \Big)\nonumber\\
    &\quad \times (1 \vee \E[\|\bW  -\bW_t  \|_2^2])- \langle \bW-\bW_t , \nabla L_S(\bW_t)\rangle.
\end{align}
Combining \eqref{eq:opt-7} with the smoothness of the loss we can get %{\color{red} the minus should be a plus in the first inequality}
\begin{align*}
    L_S(\bW_{t+1}) &\le L_S(\bW_t) + \langle \nabla L_S(\bW_t), \bW_{t+1}- \bW_t \rangle + \frac{\rho}{2}\| \bW_{t+1}- \bW_t  \|_2^2\\
    &\le L_S(\bW_t) - \eta  \langle \nabla L_S(\bW_t), \nabla L_S(\bW_t) \rangle  + \frac{\rho}{2}\| \bW_{t+1}- \bW_t  \|_2^2  \\
    &\le L_S(\bW_t) - \eta( 1-\frac{\eta \rho}{2} )\|  \nabla L_S(\bW_t)  \|_2^2\\
    &\le  L_S(\bW_t) - \frac{\eta}{2} \|  \nabla L_S(\bW_t)  \|_2^2\\
    &\le  L_S(\bW) + \frac{c_\bx^2 B_{\sigma''}}{2m^c}  \Big( \frac{B_{\sigma'} c_\bx}{m^{c-1/2}}  \|\bW -\bW_t  \|_2  +  \frac{B_{\sigma'} c_\bx}{m^{c-1/2}}\|  \bW_t   -  \bW_0   \|_2+ \sqrt{2c_0}  \Big)\\&\quad \times (1 \vee \E[\|\bW  -\bW_t  \|_2^2]) - \langle \bW-\bW_t , \nabla L_S(\bW_t) \rangle  - \frac{\eta}{2} \|  \nabla L_S(\bW_t)  \|_2^2,
\end{align*}
where in the third inequality we used the update rule \eqref{eq:GD} and $\eta \rho \le 1$. 

According to the equality $2\langle x-y,x-z \rangle=\|x-y\|_2^2+\|x-z\|_2^2-\|y-z\|_2^2$, we know
\begin{align*}
     \! - \langle \bW-\bW_t , \nabla L_S(\bW_t) \rangle - \frac{\eta}{2} \|  \nabla L_S(\bW_t)  \|_2^2&= \frac{1}{\eta}\langle \bW-\bW_t, \bW_{t+1} -\bW_t \rangle - \frac{1}{2\eta} \|\bW_{t+1} -\bW_t\|_2^2 \\& = \frac{1}{2\eta} \big( \|\bW -\bW_t  \|_2^2 \!-\! \|\bW_{t+1} -\bW   \|_2^2 \big).
\end{align*}  
Then there holds 
\begin{align}\label{eq:opt-2}
    L_S(\bW_{t+1}) 
     & \le    L_S(\bW) + \frac{c_\bx^2 B_{\sigma''}}{2m^c}  \Big( \frac{B_{\sigma'} c_\bx}{m^{c-1/2}} \|\bW -\bW_t  \|_2 +  \frac{B_{\sigma'} c_\bx \sqrt{2\eta T c_0}}{m^{c-1/2}}+ \sqrt{2c_0} \Big)\nonumber\\
     &\quad \times (1 \vee \E[\|\bW  -\bW_t  \|_2^2]) + \frac{1}{2\eta} \Big( \|\bW -\bW_t  \|_2^2 - \|\bW_{t+1} -\bW  \|_2^2 \Big). 
\end{align}
%Now, taking expectation and summing over $s$ we have\begin{align*}\sum_{s=0}^{t-1} \E[  L_S(\bW_{s})] \le & t \E[L_S(\bW)] + \sum_{s=0}^t  \Big(\frac{c_\bx^2 B_{\sigma''}}{2m^c}  \Big(\frac{B_{\sigma'} c_\bx}{m^{c-1/2}}  \E[\|\bW -\bW_s  \|_2  ] + \big(\frac{B_{\sigma'} c_\bx \sqrt{2\eta T c_0}}{m^{c-1/2}}+ \sqrt{2c_0}\big)  \Big)\\ &\quad \times (1 \vee \E[\|\bW  -\bW_s  \|_2^2])  + \frac{1}{2\eta}\E\big[  \|\bW -\bW_s \|_2^2 - \|\bW_{s+1} -\bW   \|_2^2\big]\Big). \end{align*}
The above inequality with  $\bW=\bW^{*}_{\frac{1}{\eta T}}$  implies
\begin{align*} 
       &\frac{1}{t} \sum_{s=0}^{t-1}  \E[  L_S(\bW_s)]   +    \frac{\E\big[   \|\bW_{t}  - \bW_{\frac{1}{\eta T}}^{*}   \|_2^2\big] }{2\eta t}  \\
       &\!\le    \!\E[L(\bW_{\frac{1}{\eta T}}^{*} )]  \!+\!  \frac{\E\big[  \|\bW_{\frac{1}{\eta T}}^{*} \!\!\! -\!\! \bW_0  \|_2^2  \big]}{2\eta t} \! +\!  \frac{c_\bx^2 B_{\sigma''}}{2m^c t }  \!\! \sum_{s=0}^{t-1}\!  \Big(\!\frac{B_{\sigma'} c_\bx}{m^{c-1/2}}  \E[\|\bW_{\frac{1}{\eta T}}^{*}\!\!\! -\!\! \bW_s  \|_2  ]  \!+\!   \frac{B_{\sigma'} c_\bx \sqrt{2\eta T c_0}}{m^{c-1/2}} \!+\!  \sqrt{2c_0}  \Big)   (1\!\vee \!\E[\|\bW_{\frac{1}{\eta T}}^{*} \!\!\! -\!\!\bW_s  \|_2^2]).
\end{align*}
Combined the above inequality with  Theorem~\ref{thm:generalization} implies 
\begin{align}\label{eq:opt-4}
      &\frac{\E\big[   \|\bW_{t} -\bW_{\frac{1}{\eta T}}^{*}   \|_2^2 \big]}{2\eta t} \nonumber\\
      &\le   \frac{1}{t}\sum_{s=0}^{t-1} \big[  L(\bW_{\frac{1}{\eta T}}^{*})\!-\!\E[  L(\bW_s)] \big] \! + \!\frac{\E\big[  \|\bW_{\frac{1}{\eta T}}^{*} \!-\!\bW_0  \|_2^2  \big]}{2\eta t} \! + \! \Big( \frac{4 e^2 \eta^2 \rho^2 t }{ n^2}\! + \!   \frac{4 e  \eta  \rho       }{n } \Big) \sum_{s=0}^{t-1}     \E\big[ L_S(\bW_s)\big]\nonumber\\
      &\quad  +\! \frac{c_\bx^2 B_{\sigma''}}{2m^c t }  \!\sum_{s=0}^{t-1} \! \Big(\frac{B_{\sigma'} c_\bx}{m^{c-1/2}}  \E[\|\bW_{\frac{1}{\eta T}}^{*}\!-\!\bW_s  \|_2  ] \!+\!   \frac{B_{\sigma'} c_\bx \sqrt{2\eta T c_0}}{m^{c-1/2}}\!+\! \sqrt{2c_0} \Big)(1\vee \E[\|\bW_{\frac{1}{\eta T}}^{*} \! -\!\bW_s  \|_2^2])\nonumber\\
      &\le  L(\bW_{\frac{1}{\eta T}}^{*})\!-\!L(\bW^{*}) \!+\!  \frac{\E\big[  \|\bW_{\frac{1}{\eta T}}^{*} \!-\!\bW_0  \|_2^2  \big]}{2\eta t}  \!+\!  \Big( \frac{4 e^2 \eta^2 \rho^2 t }{ n^2} \!+\!    \frac{4 e  \eta  \rho       }{n } \Big) \sum_{s=0}^{t-1}     \E\big[ L_S(\bW_s)\big]\nonumber\\
      &\quad  + \!\frac{c_\bx^2 B_{\sigma''}}{2m^c t }  \!\sum_{s=0}^t \! \Big(\frac{B_{\sigma'} c_\bx}{m^{c-1/2}}  \E[\|\bW_{\frac{1}{\eta T}}^{*}\!-\!\bW_s  \|_2 ] \!+\!   \frac{B_{\sigma'} c_\bx \sqrt{2\eta T c_0}}{m^{c-1/2}}\!+\! \sqrt{2c_0}   \Big)(1\vee \E[\|\bW_{\frac{1}{\eta T}}^{*} \! -\!\bW_s  \|_2^2]), 
\end{align}
where in the second inequality we used $  L(\bW_{\frac{1}{\eta T}}^{*})- L(\bW_s)\le L(\bW_{\frac{1}{\eta T}}^{*}) - L(\bW^{*}) $ since $L(\bW_s)\ge L(\bW^{*}) $ for any $s\in[t-1]$. 
%Note that \begin{align*} \frac{1}{t}\sum_{s=0}^{t-1} \Big[L(\bW_{\frac{1}{\eta T}}^{*})-\E[  L(\bW_s)] \Big] &= \frac{1}{t}\sum_{s=0}^{t-1} \big(L(\bW_{\frac{1}{\eta T}}^{*})-\E[  L(\bW_s)] \big)\big( \mathbf{I}_{[\E[L(\bW_s)] \ge  L(\bW_{\frac{1}{\eta T}}^{*})]}+\mathbf{I}_{[\E[L(\bW_s)] \le  L(\bW_{\frac{1}{\eta T}}^{*})]}  \big)\\  &\le \frac{1}{t}\sum_{s=0}^{t-1} \big(L(\bW_{\frac{1}{\eta T}}^{*})-\E[  L(\bW_s)] \big)\mathbf{I}_{[\E[L(\bW_s)] \le  L(\bW_{\frac{1}{\eta T}}^{*})]} . \end{align*} 
 
%We have\begin{align}\label{eq:opt-4} \frac{\E\big[   \|\bW_{t} -\bW_{\frac{1}{\eta T}}^{*}   \|_2^2\big] }{2\eta t}&\le  \frac{\E\big[  \|\bW_{\frac{1}{\eta T}}^{*} -\bW_0  \|_2^2  \big]}{2\eta t}    +  \Big( \frac{4 e^2 \eta^2 \rho^2 t }{ n^2} +    \frac{4 e  \eta  \rho       }{n } \Big) \sum_{s=0}^{t-1}     \E\big[ L_S(\bW_s)\big]\nonumber\\&\quad  + \frac{c_\bx^2 B_{\sigma''}}{2m^c t }  \!\sum_{s=0}^t \! \Big(\frac{B_{\sigma'} c_\bx}{m^{c-1/2}}  \E[\|\bW_{\frac{1}{\eta T}}^{*}\!-\!\bW_s  \|_2^3 ] \!+\!  \big(\frac{B_{\sigma'} c_\bx \sqrt{2\eta T c_0}}{m^{c-1/2}}\!+\! \sqrt{2c_0}\big)\E[\|\bW_{\frac{1}{\eta T}}^{*}  -\bW_s  \|_2^2] \Big). \end{align}

On the other hand, using Lemma~\ref{lem:deviation}  we can obtain 
\begin{align}\label{eq:opt-5}
     \|  \bW_{\frac{1}{\eta T}}^{*} -\bW_s  \|_2 \le  \|\bW_{\frac{1}{\eta T}}^{*} -\bW_0  \|_2 + \|\bW_s -\bW_0   \|_2 \le  \sqrt{2\eta T c_0} + \|\bW_{\frac{1}{\eta T}}^{*} -\bW_0  \|_2.
\end{align}
Then we know
\begin{align*}
&\Big(\frac{B_{\sigma'} c_\bx}{m^{c-1/2}}  \E[\|\bW_{\frac{1}{\eta T}}^{*}\!-\!\bW_s  \|_2 ] \!+\!  \frac{B_{\sigma'} c_\bx \sqrt{2\eta T c_0}}{m^{c-1/2}}\!+\! \sqrt{2c_0}   \Big)(1\vee \E[\|\bW_{\frac{1}{\eta T}}^{*}  -\bW_s  \|_2^2])\\
    %&\le  \Big(\frac{B_{\sigma'} c_\bx}{m^{c-1/2}}(\sqrt{2\eta T c_0} + \|\bW_{\frac{1}{\eta T}}^{*} -\bW_0  \|_2) +   \big(\frac{B_{\sigma'} c_\bx \sqrt{2\eta T c_0}}{m^{c-1/2}} +  \sqrt{2c_0}\big) \Big)\E[\|\bW_{\frac{1}{\eta T}}^{*}  -\bW_s  \|_2^2]\\
    &\le \Big(\frac{ B_{\sigma'} c_\bx}{m^{c-1/2}}(2\sqrt{2\eta T c_0} + \|\bW_{\frac{1}{\eta T}}^{*} -\bW_0  \|_2)  +  \sqrt{2c_0}  \Big)(1\vee\E[\|\bW_{\frac{1}{\eta T}}^{*}  -\bW_s  \|_2^2]).
\end{align*} 
Plugging the above inequality    back into \eqref{eq:opt-4} yields
\begin{align*}
     & \frac{\E\big[   \|\bW_{t} -\bW_{\frac{1}{\eta T}}^{*}   \|_2^2\big] }{2\eta t} \\ 
         &\le \frac{\E\big[  \|\bW_{\frac{1}{\eta T}}^{*} \!\!-\!\bW_0  \|_2^2  \big]}{2\eta t}\! +\! \frac{ \tilde{b}( \sqrt{2 \eta T c_0}+\|\bW_{\frac{1}{\eta T}}^{*}\!\! -\!\bW_0  \|_2 )}{m^c t}  \sum_{s=0}^t   (1\vee\E[\|\bW_{\frac{1}{\eta T}}^{*} \!\! -\!\bW_s  \|_2^2])  \! +\!  \Big( \frac{4  e^2 \eta^2 \rho^2 t }{ n^2}\! +
         \!\frac{4 e  \eta  \rho       }{n } \Big) \!\sum_{s=0}^{t-1}\!     \E\big[ L_S(\bW_s)\big]   \\
      &\quad  +  L(\bW_{\frac{1}{\eta T}}^{*})-L(\bW^{*}) , 
\end{align*}
where $\tilde{b}= \frac{c_\bx^2 B_{\sigma''}}{2}\big( \frac{ 2B_{\sigma'} c_\bx}{m^{c-1/2}}   + \sqrt{2c_0} \big)$.

Multiplying both sides by $2\eta t$ yields
\begin{align*}
      & \E\big[   \|\bW_{t} -\bW_{\frac{1}{\eta T}}^{*}   \|_2^2\big]    \\
      &\le  {\E\big[  \|\bW_{\frac{1}{\eta T}}^{*} -\bW_0  \|_2^2  \big]}  +  \frac{2  \tilde{b}\eta  ( \sqrt{2 \eta T c_0}+\|\bW_{\frac{1}{\eta T}}^{*} -\bW_0  \|_2 )}{m^c  }  \sum_{s=0}^t     (1\vee\E[\|\bW_{\frac{1}{\eta T}}^{*}  -\bW_s  \|_2^2])  \\
      &\quad  +  \Big( \frac{8 e^2 \eta^3 \rho^2 t^2 }{ n^2} +    \frac{8e  \eta^2  \rho  t     }{n } \Big) \sum_{s=0}^{t-1}     \E\big[ L_S(\bW_s)\big]  +2\eta T \big[  L(\bW_{\frac{1}{\eta T}}^{*})-L(\bW^{*})\big]   .
\end{align*}
Let $x=\max_{s\in[T]}\E[ \|\bW_{\frac{1}{\eta T}}^{*} -\bW_s  \|_2^2 ]\vee 1 $. Then the above inequality implies
\begin{align*}
      x  
    &  \le   {\E\big[  \|\bW_{\frac{1}{\eta T}}^{*} -\bW_0  \|_2^2  \big]}  +  \frac{2  \tilde{b} \eta T ( \sqrt{2 \eta T c_0}+\|\bW_{\frac{1}{\eta T}}^{*} -\bW_0  \|_2 )}{m^c  } x   \\&\quad   +  \Big( \frac{8 e^2 \eta^3 \rho^2 t^2 }{ n^2} +    \frac{8 e  \eta^2  \rho  t     }{n } \Big) \sum_{s=0}^{t-1}     \E\big[ L_S(\bW_s)\big]   +2\eta T \big[  L(\bW_{\frac{1}{\eta T}}^{*})-L(\bW^{*})\big]  .
\end{align*}
Without loss of generality, we assume $\eta \le 1$. Condition~\eqref{m-5} implies $m\ge \big( 4\tilde{b} \eta T ( \sqrt{2 \eta T c_0}+\|\bW_{\frac{1}{\eta T}}^{*} -\bW_0  \|_2 ) \big)^{\frac{1}{c}}$, then there holds $\frac{2  \tilde{b} \eta T ( \sqrt{2 \eta T c_0}+\|\bW_{\frac{1}{\eta T}}^{*} -\bW_0  \|_2 )}{m^c  }\le \frac{1}{2}$. Hence
\[ x \le  \Big( \frac{16  e^2 \eta^3 \rho^2 t^2 }{ n^2} +    \frac{16 e  \eta^2 t  \rho       }{n } \Big) \sum_{s=0}^{t-1}     \E\big[ L_S(\bW_s)\big] +  2{\E\big[  \|\bW_{\frac{1}{\eta T}}^{*} -\bW_0  \|_2^2  \big]}  +2\eta T \big[  L(\bW_{\frac{1}{\eta T}}^{*})-L(\bW^{*})\big] .  \]
It then follows that
\begin{align*}
    &1\vee \E[ \|\bW_{\frac{1}{\eta T}}^{*} -\bW_t  \|_2^2 ] \le \Big( \frac{16  e^2 \eta^3 \rho^2 t^2 }{ n^2} \!+\!    \frac{16 e  \eta^2 t  \rho       }{n } \Big) \sum_{s=0}^{t-1}     \E\big[ L_S(\bW_s)\big] \!+\!  2{\E\big[  \|\bW_{\frac{1}{\eta T}}^{*} \!-\!\bW_0  \|_2^2  \big]}\!  +\!2\eta T \big[  L(\bW_{\frac{1}{\eta T}}^{*})\!-\!L(\bW^{*})\big] .
\end{align*}
This completes the proof. 
\end{proof}

Now, we can give the proof of Theorem~\ref{thm:opt}.
\begin{proof}[Proof of Theorem~\ref{thm:opt}]
Recall that $\tilde{b}= \frac{  c_\bx^2 B_{\sigma''}}{2}\big( \frac{2B_{\sigma'} c_\bx}{m^{c-1/2}}   + \sqrt{2c_0} \big)$. 
Eq.\eqref{eq:opt-2} with $\bW=\bW^{*}_{\frac{1}{\eta T}}$ implies
\begin{align} \label{eq:opt-3}
     &\frac{1}{T}\sum_{s=0}^{T-1 }\E[L_S(\bW_{s}) ] \nonumber\\
     %&\le   \E[L_S(\bW^{*}_{\frac{1}{\eta T}})] +   \frac{c_\bx^2 B_{\sigma''}}{2m^c T} \sum_{s=0}^{T-1} \Big( \frac{B_{\sigma'} c_\bx}{m^{c-1/2}} \E[\|\bW^{*}_{\frac{1}{\eta T}} -\bW_t  \|_2^3] + \big(\frac{B_{\sigma'} c_\bx \sqrt{2\eta T c_0}}{m^{c-1/2}}+ \sqrt{2c_0}\big)\E[\|\bW^{*}_{\frac{1}{\eta T}} -\bW_t  \|_2^2] \Big)\nonumber\\
     %&\quad + \frac{\|\bW^{*}_{\frac{1}{\eta T}} -\bW_0  \|_2^2}{2\eta T} \nonumber\\
      &\le   \E[L_S(\bW^{*}_{\frac{1}{\eta T}})] \!+\!  \frac{ \tilde{b}  ( \sqrt{2 \eta T c_0}\!+\!\|\bW_{\frac{1}{\eta T}}^{*} -\bW_0  \|_2 ) }{m^c T} \sum_{s=0}^{T-1 }  1\vee\E[\|\bW^{*}_{\frac{1}{\eta T}} -\bW_s  \|_2^2]  \!+\! \frac{\|\bW^{*}_{\frac{1}{\eta T}}\! -\!\bW_0   \|_2^2}{2\eta T}, 
\end{align}
where in the last inequality we used \eqref{eq:opt-5}.

Further, by monotonically decreasing of $\{ L_S(\bW_t)\}$, we know
\begin{align*} 
    &\E[ L_S(\bW_{T})] \le    \E[L_S(\bW^{*}_{\frac{1}{\eta T}})] \!+\!  \frac{  \tilde{b}  ( \sqrt{2 \eta T c_0}\!+\!\|\bW_{\frac{1}{\eta T}}^{*}\! -\!\bW_0   \|_2 )  }{m^c T} \sum_{s=0}^{T-1 }    1\vee\E[\|\bW^{*}_{\frac{1}{\eta T}} \!-\!\bW_s  \|_2^2]  + \frac{\|\bW^{*}_{\frac{1}{\eta T}} \!-\!\bW_0    \|_2^2}{2\eta T}.  
\end{align*}
Note that Lemma~\ref{lem:distance} shows 
\begin{align*}
    &1\vee\E[ \|\bW_{\frac{1}{\eta T}}^{*} -\bW_t  \|_2^2 ]  \le\! \Big( \frac{16  e^2 \eta^3 \rho^2 t^2 }{ n^2} \!+\!    \frac{16 e  \eta^2 t  \rho       }{n } \Big) \!\sum_{s=0}^{t-1}   \!  \E\big[ L_S(\bW_s)\big]\!+\!  2{\E\big[  \|\bW_{\frac{1}{\eta T}}^{*}\!-\!\bW_0    \|_2^2  \big]} \!+\!2\eta T \big[  L(\bW_{\frac{1}{\eta T}}^{*})\!-\!L(\bW^{*})\big].
\end{align*}
Combining the above two inequalities together, we get
\begin{align*}
      \E[L_S(\bW_{T})]  
     &\le \E[L_S(\bW^{*}_{\frac{1}{\eta T}})] + \frac{   2 \tilde{b}  ( \sqrt{2 \eta T c_0}+\|\bW_{\frac{1}{\eta T}}^{*} \! -\!\bW_0   \|_2 ) }{m^c }\Big( \Big( \frac{8  e^2 \eta^3 \rho^2 T^2 }{ n^2} +    \frac{8 e  \eta^2 T  \rho       }{n } \Big) \sum_{s=0}^{T-1}     \E\big[ L_S(\bW_s)\big] \\
     & \quad +   {  \|\bW_{\frac{1}{\eta T}}^{*} -\bW_0   \|_2^2   }  + \eta T \big[  L(\bW_{\frac{1}{\eta T}}^{*})-L(\bW^{*})\big]  \Big) +\frac{   \|\bW_{\frac{1}{\eta T}}^{*}  -\bW_0   \|_2^2   } {2\eta T}. 
\end{align*}
The theorem is proved.
 \end{proof}
 
 \begin{lemma}\label{lem:sum-risk}
 Suppose Assumptions~\ref{ass:activation} and  \ref{ass:loss}   hold.  
Let $\{\bW_t\}$ be produced by \eqref{eq:GD} with  $\eta\le 1/(2\rho)$. Assume \eqref{m-1} and \eqref{m-5} hold. Then
\begin{align*}
 &\sum_{s=0}^{T-1 }\E[L_S(\bW_{s})]  \le  4 T  L(\bW^{*}_{\frac{1}{\eta T}}) - 2\eta T L(\bW^{*})+ \Big( \frac{ 2\tilde{b} T      ( \sqrt{2 \eta T c_0}+\|\bW_{\frac{1}{\eta T}}^{*} -\bW_0   \|_2 ) }{m^c  } +\frac{1}{2\eta}\Big)      \|\bW_{\frac{1}{\eta T}}^{*}    -\bW_0  \|_2^2 . 
\end{align*}
 \end{lemma}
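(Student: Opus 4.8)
The plan is to turn the averaged one-step descent inequality already obtained in the proof of Theorem~\ref{thm:opt} into a self-referential bound on the cumulative empirical risk $A:=\sum_{s=0}^{T-1}\E[L_S(\bW_s)]$, and then close the recursion using the width condition \eqref{m-5}. Starting from \eqref{eq:opt-3}, multiplying by $T$, and using $\E[L_S(\bW^{*}_{\frac{1}{\eta T}})]=L(\bW^{*}_{\frac{1}{\eta T}})$ (valid because $\bW^{*}_{\frac{1}{\eta T}}$ is independent of $S$), I get
\[ A \le T L(\bW^{*}_{\frac{1}{\eta T}}) + \frac{\tilde{b}\,(\sqrt{2\eta T c_0}+\|\bW^{*}_{\frac{1}{\eta T}}-\bW_0\|_2)}{m^c}\,\Sigma + \frac{\|\bW^{*}_{\frac{1}{\eta T}}-\bW_0\|_2^2}{2\eta}, \]
where $\Sigma:=\sum_{s=0}^{T-1} 1\vee\E[\|\bW^{*}_{\frac{1}{\eta T}}-\bW_s\|_2^2]$. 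All the difficulty is concentrated in estimating $\Sigma$, since the crude distance bound \eqref{eq:opt-5} alone would only give a cubic dependence on $\|\bW^{*}_{\frac{1}{\eta T}}-\bW_0\|_2$, which is too lossy.

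Next I would apply Lemma~\ref{lem:distance} to each summand of $\Sigma$ and sum over $t$. Because the right-hand side of Lemma~\ref{lem:distance} contains $\sum_{s=0}^{t-1}\E[L_S(\bW_s)]\le A$ for every $t$, summing the coefficients $\tfrac{8e^2\eta^3\rho^2 t^2}{n^2}+\tfrac{8e\eta^2\rho t}{n}$ over $t<T$ (using $\sum t^2\le T^3$ and $\sum t\le T^2$) yields
\[ \Sigma \le K A + 2T\|\bW^{*}_{\frac{1}{\eta T}}-\bW_0\|_2^2 + 2\eta T^2\big(L(\bW^{*}_{\frac{1}{\eta T}})-L(\bW^{*})\big),\qquad K\lesssim \frac{\eta^3\rho^2 T^3}{n^2}+\frac{\eta^2\rho T^2}{n}. \]
Substituting this back produces $A\le(\text{constant terms})+\kappa A$ with feedback coefficient $\kappa=\tfrac{\tilde{b}(\sqrt{2\eta T c_0}+\|\cdot\|_2)}{m^c}K$. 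This is the crux: condition \eqref{m-5} is designed precisely so that $\kappa$ is at most a small constant, since it bounds $\tfrac{\tilde{b} T(\sqrt{\eta T}+\|\bW^{*}_{\frac{1}{\eta T}}-\bW_0\|_2)}{m^c}\big(\tfrac{e^2\eta^3\rho^2 T^2}{n^2}+\tfrac{e\eta^2\rho T}{n}+1\big)$ by a constant, which simultaneously controls each of the three pieces of $\kappa$ as well as the auxiliary factor $\tfrac{\tilde{b} BT}{m^c}$ coming from the ``$+1$'' summand. Fixing the implied constant so that $\kappa\le 1/2$ lets me absorb $\kappa A$ into the left-hand side, at the cost of an overall factor $2$.

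Finally I would simplify the constant terms. Using $L(\bW^{*}_{\frac{1}{\eta T}})-L(\bW^{*})\ge 0$ (as $\bW^{*}$ minimizes $L$) together with $\tfrac{\tilde{b} BT}{m^c}\lesssim 1$, the term $\tfrac{\tilde{b} B}{m^c}\cdot 2\eta T^2(L(\bW^{*}_{\frac{1}{\eta T}})-L(\bW^{*}))=2\eta T\cdot\tfrac{\tilde{b} BT}{m^c}(L(\bW^{*}_{\frac{1}{\eta T}})-L(\bW^{*}))$ is bounded by $2\eta T(L(\bW^{*}_{\frac{1}{\eta T}})-L(\bW^{*}))$, and this is exactly what supplies the negative $-2\eta T L(\bW^{*})$ term in the statement. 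Invoking $\eta\le 1$ (as assumed in Lemma~\ref{lem:distance}) then lets me bound $\eta T L(\bW^{*}_{\frac{1}{\eta T}})$ by $T L(\bW^{*}_{\frac{1}{\eta T}})$ and collect all $L(\bW^{*}_{\frac{1}{\eta T}})$ contributions into $4T L(\bW^{*}_{\frac{1}{\eta T}})$, while the two $\|\bW^{*}_{\frac{1}{\eta T}}-\bW_0\|_2^2$ contributions (one of order $\tfrac{\tilde{b} BT}{m^c}$ from $\Sigma$, one equal to $\tfrac{1}{2\eta}$ from the telescoping term) combine into the stated coefficient. The genuine obstacle is not the algebra but verifying that \eqref{m-5} renders $\kappa<1$ so the recursion closes; once that is secured, the remainder is careful bookkeeping of constants.
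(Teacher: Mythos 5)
Your proposal is correct and follows essentially the same route as the paper's own proof: multiply \eqref{eq:opt-3} by $T$, bound $\sum_{s=0}^{T-1} 1\vee\E[\|\bW^{*}_{\frac{1}{\eta T}}-\bW_s\|_2^2]$ via Lemma~\ref{lem:distance}, and close the resulting self-referential inequality in $\sum_{s=0}^{T-1}\E[L_S(\bW_s)]$ by using \eqref{m-5} to make the feedback coefficient at most $1/2$, with the leftover terms rearranged exactly as you describe to produce $4TL(\bW^{*}_{\frac{1}{\eta T}})-2\eta T L(\bW^{*})$. The only cosmetic differences are that you sum the $t$-dependent coefficients (via $\sum_{t<T} t^2\le T^3$) where the paper bounds every summand uniformly by its value at $t=T$, and your constant bookkeeping is no looser than the paper's own (whose stated coefficient on $\|\bW^{*}_{\frac{1}{\eta T}}-\bW_0\|_2^2$ is itself a factor of $2$ tighter than what its displayed derivation actually yields).
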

 \begin{proof}
 Multiplying $T$ over both sides of \eqref{eq:opt-3} and using Lemma~\ref{lem:distance} we get
\begin{align*}
  \sum_{s=0}^{T-1 }\E[L_S(\bW_{s})] & \le  T  L(\bW^{*}_{\frac{1}{\eta T}}) +  \frac{ \tilde{b}    ( \sqrt{2 \eta T c_0}+\|\bW_{\frac{1}{\eta T}}^{*} -\bW_0     \|_2 )}{m^c  } \sum_{s=0}^{T-1 } 1\vee    \E[ \|\bW^{*}_{\frac{1}{\eta T }} -\bW_s  \|_2^2]  + \frac{\|\bW^{*}_{\frac{1}{\eta T}} -\bW_0     \|_2^2}{2\eta  }\\
     &\le  T  L(\bW^{*}_{\frac{1}{\eta T}}) \!+\!  \frac{\tilde{b} T    ( \sqrt{2 \eta T c_0}+\|\bW_{\frac{1}{\eta T}}^{*} -\bW_0     \|_2 )}{m^c  }   \Big( \big( \frac{16  e^2 \eta^3 \rho^2 T^2 }{ n^2}\! +\!    \frac{16 e  \eta^2 T  \rho       }{n } \big)  \sum_{s=0}^{T-1}     \E\big[ L_S(\bW_s)\big]\nonumber\\
     &\quad +  2{  \|\bW_{\frac{1}{\eta T}}^{*} -\bW_0     \|_2^2   }  +2\eta T \big[  L(\bW_{\frac{1}{\eta T}}^{*})-L(\bW^{*})\big]  \Big) + \frac{\|\bW^{*}_{\frac{1}{\eta T}}  -\bW_0    \|_2^2}{2\eta  }.
\end{align*}
Condition~\eqref{m-5}  implies $m\ge \big(2 \tilde{b} T   ( \sqrt{2 \eta T c_0}+\|\bW_{\frac{1}{\eta T}}^{*} -\bW_0    \|_2 ) \big( \frac{16  e^2 \eta^3 \rho^2 T^2 }{ n^2} +    \frac{16 e  \eta^2 T  \rho       }{n } \big)\big)^{1/c}$ and $m\ge \big(2 \tilde{b} T   ( \sqrt{2 \eta T c_0}+\|\bW_{\frac{1}{\eta T}}^{*} -\bW_0   \|_2 )\big)^{1/c}$, there holds
\begin{align*}
 &\sum_{s=0}^{T-1 }\E[L_S(\bW_{s})]  \le  4 T  L(\bW^{*}_{\frac{1}{\eta T}}) - 2\eta T L(\bW^{*})+ \Big( \frac{ 2\tilde{b} T      ( \sqrt{2 \eta T c_0}+\|\bW_{\frac{1}{\eta T}}^{*} -\bW_0   \|_2 ) }{m^c  } +\frac{1}{2\eta}\Big)      \|\bW_{\frac{1}{\eta T}}^{*}  -\bW_0   \|_2^2   ,
\end{align*}
which completes the proof. 
 \end{proof}

\subsection{Proofs of Excess Risk Bounds} \label{appendix:snn-excess}
\begin{proof}[Proof of Theorem~\ref{thm:excess}]  
According to Lemma~\ref{lem:sum-risk} and noting that $\tilde{b}=\O(1)$, we know
\begin{align}\label{eq:excess-1}
     \!\sum_{s=0}^{T-1} \E[ L_S(\bW_s) ]\! =\!\O\Big(  TL(\bW^{*}_{\frac{1}{\eta T}}) \! +\! \Big(\frac{  T   ( \sqrt{  \eta T  }\!+\!\|\bW_{\frac{1}{\eta T}}^{*} \!-\!\bW_0    \|_2 )   }{m^c} \!+\! \frac{1}{\eta}\Big) \|\bW^{*}_{\frac{1}{\eta T}}\!-\!\bW_0  \|_2^2  \Big).  
\end{align}
The upper bound of the generalization error  can be controlled by plugging   \eqref{eq:excess-1} back into  Theorem~\ref{thm:generalization} 
\begin{align}\label{eq:excess-6}
     &\E[ L(\bW_T) - L_S( \bW_T )] \nonumber\\
    &= \O\Big( \big( \frac{\eta^2 \rho^2 T}{n^2} +\frac{\eta \rho}{n} \big) \sum_{s=0}^{T-1} \E[ L_S(\bW_s) ] \Big)\nonumber\\
    &=\O\Big( \Big( \frac{\eta^2 \rho^2 T^2}{n^2} +\frac{\eta T \rho}{n} \Big) \Big(   L(\bW^{*}_{\frac{1}{\eta T}}) + \Big(\frac{     \sqrt{  \eta T  }+\|\bW_{\frac{1}{\eta T}}^{*} -\bW_0  \|_2  }{m^c} + \frac{1 }{\eta T}\Big)\|\bW^{*}_{\frac{1}{\eta T}}-\bW_0  \|_2^2  \Big). 
\end{align}
The estimation of the optimization error is given by  plugging  \eqref{eq:excess-1} back into  Theorem~\ref{thm:opt}
\begin{align}\label{eq:excess-2}
    &\E[ L_S(\bW_T) -L_S(\bW^{*}_{\frac{1}{\eta T}}) -\frac{1}{2\eta T}\|\bW^{*}_{\frac{1}{2\eta T}} -\bW_0  \|_2^2 ]\nonumber\\
    &\!=\O\Big(  \frac{    ( \sqrt{  \eta T  }+\|\bW_{\frac{1}{\eta T}}^{*}  - \bW_0   \|_2 )  }{m^c }\Big[ \Big( \frac{  \eta^3 \rho^2 T^2 }{ n^2}  +     \frac{  \eta^2 T  \rho       }{n } \Big) \sum_{s=0}^{T-1}     \E\big[ L_S(\bW_s)\big] +   {  \|\bW_{\frac{1}{\eta T}}^{*}  - \bW_0   \|_2^2   } \nonumber\\
    &\quad + \eta T \big[ L(\bW^{*}_{\frac{1}{\eta T}})  -  L(\bW^{*}) \big] \Big] \Big)\nonumber\\
    &\!=\O\Big( \!  \frac{   ( \sqrt{  \eta T  }\!+\!\|\bW_{\frac{1}{\eta T}}^{*}\!-\!\bW_0   \|_2 )   }{m^c } \Big( \frac{  \eta^3 \rho^2 T^2 }{ n^2} \!+\!    \frac{  \eta^2  T  \rho       }{n } \Big) \Big(\frac{  T     ( \sqrt{  \eta T  }\!+\!\|\bW_{\frac{1}{\eta T}}^{*} \!-\!\bW_0   \|_2 )   }{m^c }\!+\! \frac{1}{\eta}\Big) \|\bW^{*}_{\frac{1}{\eta T}}\!-\!\bW_0  \|_2^2 \nonumber\\
    &\qquad +  \frac{\eta T     ( \sqrt{  \eta T  }+\|\bW_{\frac{1}{\eta T}}^{*} -\bW_0   \|_2 )   }{m^c } \Big( \frac{  \eta^2 \rho^2 T^2 }{ n^2} +    \frac{  \eta  T  \rho       }{n } \Big) L(\bW^{*}_{\frac{1}{\eta T}}) \nonumber\\
    &\qquad + \frac{      ( \sqrt{  \eta T  }+\|\bW_{\frac{1}{\eta T}}^{*} -\bW_0  \|_2 )  }{m^c }\big(\|\bW^{*}_{\frac{1}{\eta T}}-\bW_0  \|_2^2+ \eta T \Lambda_{\frac{1}{\eta T}} \big)\Big), \end{align}
where we used  the fact that $L(\bW^{*}_{\frac{1}{\eta T}} )-L(\bW^{*}) \le \Lambda_{\frac{1}{\eta T}}$. 
    
Combining  \eqref{eq:excess-6} and  \eqref{eq:excess-2}  together and noting that the approximation error $\Lambda_{\frac{1}{\eta T}} = L(\bW^{*}_{\frac{1}{\eta T}} ) + \frac{1}{2\eta T} \| \bW^{*}_{\frac{1}{\eta T}}  -\bW_0  \|_2^2  -  L(\bW^{*})$  we get
\begin{align*} 
      &\E[L(\bW_T) - L(\bW^{*})]\nonumber\\ 
     &\!=\! \Big[   \E[ L(\bW_T)  - L_S(\bW_T) \Big] + \E\Big[ L_S(\bW_T) - \big( L_S(\bW^{*}_{\frac{1}{\eta T}} ) + \frac{1}{2\eta T} \| \bW^{*}_{\frac{1}{\eta T}} -\bW_0   \|_2^2 \big) \Big]\nonumber\\
     &\quad + \Big[ L(\bW^{*}_{\frac{1}{\eta T}} ) + \frac{1}{2\eta T} \| \bW^{*}_{\frac{1}{\eta T}} -\bW_0   \|_2^2  -  L(\bW^{*}) \Big]\nonumber\\
    &\!=\! \O\Big(    \frac{  \eta  T  \rho       }{n }  \Big(  \frac{ \! \eta  \rho  T  }{ n } \!  +\!  1 \Big)\Big( 1\!+\! \frac{ \!\eta T     ( \sqrt{  \eta T  }\!+\!\|\bW_{\frac{1}{\eta T}}^{*}\!\!-\!\bW_0    \|_2 )    }{m^c } \Big)\Big[ L(\bW^{*}_{\frac{1}{\eta T}}\!) \!+\! \Big(\frac{1}{2\eta T} \!+\!\frac{       \!\! \sqrt{  \eta T  }\!+\!\|\bW_{\frac{1}{\eta T}}^{*}\!\!-\!\bW_0    \|_2 }{m^c } \Big)\nonumber\\
    &\quad \times \|\bW^{*}_{\frac{1}{\eta T}}\!\!-\!\bW_0  \|_2^2 \Big]   + \frac{    ( \sqrt{  \eta T  }+\|\bW_{\frac{1}{\eta T}}^{*} -\bW_0   \|_2 )}{m^c} \big(\|\bW^{*}_{\frac{1}{\eta T}}-\bW_0  \|_2^2+ \eta T \Lambda_{\frac{1}{\eta T}} \big) +\Lambda_{\frac{1}{\eta T}} \Big) . 
\end{align*}
Recalling that $\rho=\O(m^{1-2c})$.  If $\eta T m^{1-2c}=\O(n)$ and $ \eta T( \sqrt{\eta T} + \|\bW^{*}-\bW_0  \|_2 ) =\O(m^c)$,   there holds $\eta T \rho=\O(n)$ and $ \eta T( \sqrt{\eta T} + \|\bW^{*}_{\frac{1}{\eta T}}  -\bW_0  \|_2 )/m^c =\O(1)$.  Then from the above bound we can get
\begin{align*} 
    &\E[L(\bW_T) - L(\bW^{*})]= \O\Big(    \frac{  \eta  T  \rho       }{n }  \Big[ L(\bW^{*}_{\frac{1}{\eta T}}) +\frac{1}{2\eta T} \|\bW^{*}_{\frac{1}{\eta T}}\!\!-\!\bW_0  \|_2^2   \Big]   +  \frac{1}{\eta T}  \|\bW^{*}_{\frac{1}{\eta T}}\!\!-\!\bW_0  \|_2^2 +\Lambda_{\frac{1}{\eta T}} \Big) . 
\end{align*}
Combining the above  bound  with the facts $L(\bW^{*}_{\frac{1}{\eta T}}) +\frac{1}{2\eta T} \|\bW^{*}_{\frac{1}{\eta T}}-\bW_0  \|_2^2 =L(\bW^{*})+ \Lambda_{\frac{1}{\eta T}} $ and $\|\bW^{*}_{\frac{1}{\eta T}}-\bW_0  \|_2\le \sqrt{\eta T\Lambda_{\frac{1}{\eta T}}}$ together we get
 \begin{align*} 
    \E[L(\bW_T) - L(\bW^{*})] 
   =  \O\Big(    \frac{  \eta  T  \rho       }{n }    L(\bW^{*} )    + \Lambda_{\frac{1}{\eta T}} \Big) . 
\end{align*}
The proof is completed. 
\end{proof}

\begin{proof}[Proof of Corollary~\ref{cor:excess-rate}]
\noindent\textbf{Part (a).} 
\noindent\textbf{Case 1.} 
From the definition of the approximation error $\Lambda_{\frac{1}{\eta T}}$, we know that $\Lambda_{\frac{1}{\eta T}}\le \frac{1}{2\eta T}\|\bW^{*}-\bW_0\|_2^2$. 
Combining this with  Theorem~\ref{thm:excess}, we have
 \begin{align*} 
    \E[L(\bW_T) - L(\bW^{*})] 
   =  \O\Big(    \frac{  \eta  T  \rho       }{n }    L(\bW^{*} )    + \frac{1}{\eta T}\|\bW^{*}-\bW_0\|_2^2\Big) . 
\end{align*}
Without loss of generality, we consider $\|\bW_0\|_2$ as a constant.
To obtain the excess risk rate, we discuss the following two cases: $2c+6\mu-3> 0$ and $2c+6\mu-3\le 0$. 

For the case $2c+6\mu-3> 0$,  
to ensure  conditions \eqref{m-1}, \eqref{m-5} and $ \eta T( \sqrt{\eta T} + \|\bW^{*}_{\frac{1}{\eta T}}-\bW_0\|_2 ) =\O(m^c)$ hold, we set $m\asymp(\eta T)^{\frac{3}{2c}}$ for this case. Then according to Theorem~\ref{thm:excess} and Assumption~\ref{ass:minimizer}  we know
\begin{align*}%\label{eq:excess-rate-3}
      \E[L(\bW_T) - L(\bW^{*})] 
    &=\O\Big( (\eta T)^{ \frac{3-4c}{2c} } n^{-1}  + (\eta T)^{\frac{3-6\mu-2c}{2c}} \Big). 
\end{align*}
If $c<3/4$, under the condition $ \eta T \lesssim n^{\frac{c}{3-4c} } $ and $n^{\frac{c}{6\mu +2c -3}}\lesssim \eta T $, there holds $(\eta T)^{ \frac{3-4c}{2c} } n^{-1}  + (\eta T)^{\frac{3-6\mu-2c}{2c}}  =\O(1/\sqrt{n})$. To ensure the above-mentioned conditions hold simultaneously, we further require $c+\mu\ge 1$ such that $n^\frac{c}{3-4c}\gtrsim n^{\frac{c}{6\mu +2c -3}}$. Therefore, if $c\in[1-\mu, 3/4)$ and $c+3\mu>3/2$, we can obtain \begin{align*}%\label{eq:excess-rate-3}
      \E[L(\bW_T) - L(\bW^{*})] 
    &=\O\Big( \frac{1}{\sqrt{n}} \Big)
\end{align*}   with $m\asymp  
(\eta T)^{\frac{3}{2c}}$ and $n^{\frac{c}{6\mu+2c-3}} \lesssim \eta T   \lesssim n^{\frac{c}{3-4c} } $. %Here, to reduce the computation complexity, we set $\eta T\asymp n^{\frac{c}{6\mu+2c-3}}$. 

If $c\ge 3/4$, for any $\eta T\ge 1$ and $n\ge 1$, there holds $(\eta T)^{ \frac{3-4c}{2c} } n^{-1} =\O(1/\sqrt{n})$. Similar to before, if $\eta T\gtrsim  n^{\frac{c}{6\mu+2c-3}}$,   there holds $(\eta T)^{\frac{3-6\mu-2c}{2c}}=\O(1/\sqrt{n})$. 
Then we can obtain the excess population bound $\O(1/\sqrt{n})$ with $m\asymp (\eta T)^{\frac{3}{2c}}$ and $\eta T\gtrsim n^{\frac{c}{6\mu+2c-3}}$ . 

\noindent\textbf{Case 2.} 
For the case $2c+6\mu-3\le 0$, we can choose $m\asymp (\eta T)^{\frac{1}{c+\mu-1/2}}$ to ensure conditions \eqref{m-1},  \eqref{m-5} and $ \eta T( \sqrt{\eta T} + \|\bW^{*}_{\frac{1}{\eta T}}-\bW_0\|_2 ) =\O(m^c)$ hold. 
From Theorem~\ref{thm:excess}  we know
\begin{align*}%\label{eq:excess-rate-3}
      \E[L(\bW_T) - L(\bW^{*})] 
    &=\O\big( (\eta T)^{ \frac{1-2c+2\mu}{2c+2\mu-1} } n^{-1}  + (\eta T)^{\frac{3-6\mu-2c}{2c+2\mu-1}} \big). 
\end{align*}
Note $3-6\mu-2c\ge 0$ and $2c+2\mu-1>0$. Then the term $(\eta T)^{\frac{3-6\mu-2c}{2c+2\mu-1}} $ will not converge for any choice of $\eta T$ in this case. 
The proof of Part (a) is completed.

\textbf{Part (b).} Now, we consider the low noise case, i.e.,  $L(\bW^{*})=0$.  Combining the fact $\Lambda_{\frac{1}{\eta T}}\le \frac{1}{2\eta T}\|\bW^{*}-\bW_0\|_2^2$, Assumption~\ref{ass:minimizer}  and Theorem~\ref{thm:excess}   with  $L(\bW^{*})=0$, we can get 
\begin{align*}
       \E[L(\bW_T) - L(\bW^{*})]
      &=\O\Big(  \frac{m^{1-2\mu}}{\eta T} \Big). 
\end{align*}
Similar to part (a), we can set $m\asymp (\eta T)^{\frac{3}{2c}}$, $ \eta T \gtrsim n^{\frac{2c}{6\mu+2c-3}}$  and obtain
\[
\E[L(\bW_T) - L(\bW^{*})] 
     = \O\big(  \frac{1}{n} \big).
\]
We can check that this choice of $m$ and $\eta T$ satisfies  conditions \eqref{m-1} and  \eqref{m-5}. 
The proof  is completed. \end{proof}
\begin{remark}
Several works \cite{charles2018stability,hardt2016train,lei2020sharper,zhou2022understanding,mou2018generalization} studied the stability behavior of stochastic gradient methods for non-convex losses, which can be applied to  two-layer networks.   Specifically, to obtain  meaningful stability bounds,   \cite{hardt2016train} required a time-dependent step size $\eta_t=1/t$, which is insufficient to get a good convergence rate for optimization error.  \cite{charles2018stability,lei2020sharper,zhou2022understanding} established generalization bounds by introducing the Polyak-\L ojasiewicz condition,  which depends on a problem-dependent number. This number might be large in practice and results in a worse generalization bound.  
 It is hard to provide a direct comparison with their results since the learning settings are different.   
\end{remark}

\section{Proofs of Three-layer Neural Networks}\label{appendix:three}
\subsection{Proofs of Generalization Bounds}\label{appendix:three-generalization}
For a matrix $\bW$, let $\bW_{s;\cdot}$ and $\bW_{is}$ denote the  $s$-th row  and the $(i,s)$-th entry of $\bW$, respectively.
 \begin{lemma}[Smoothness and Curvature]\label{lem:3-unbounded-smoothness}
Suppose Assumptions~\ref{ass:activation} and \ref{ass:loss} hold. %Let $\bW_0=(\bW_0^{(1)},\bW^{(2)}_0)$ be the initialization of GD.   
For any fixed $\bW=(\bW^{(1)},\bW^{(2)}) \in \R^{m\times d}\times \R^{m\times m}$ and any $z\in\Z$,  there holds
\[ \lambda_{\max}\big(\nabla^2 \ell(\bW;z)\big) \le \rho_{\bW} \text{ with } \]
\[\rho_\bW=\frac{B^4_{\sigma'}c^2_{\bx}}{m^{4c-1}}  \big\|   \bW^{(2)}   \big\|_2^2   +  \frac{B^2_{\sigma'}B^2_{\sigma}}{m^{4c-2 }}  +  C_{\bW } \Big( \frac{ B_{\sigma'}B_\sigma}{m^{2c-1}} \big\|  \bW^{(2)}  \big\|_2  + \sqrt{2c_0} \Big),\]
\[ \lambda_{\min}\big(\nabla^2 \ell( \bW;z) \big)\!\ge\! -  C_{\bW }\Big(  \frac{ B_{\sigma'}B_\sigma}{m^{2c-1}} \big\|  \bW^{(2)} \big\|_2 +  \sqrt{2c_0}\Big) , \]
where $  C_{\bW }=\frac{B^2_{\sigma'}B_{\sigma''}c_{\bx}^2}{m^{3c}} \big\|   \bW^{(2)} \big\|_2^2 + \Big(\frac{B_{
    \sigma'} B_{\sigma''} c_{\bx}^2}{m^{2c-\frac{1}{2}}}+ \frac{2B_{\sigma''}B_{\sigma'}B_\sigma c_\bx }{m^{3c-\frac{1}{2}}} \Big)\| \bW^{(2)} \|_2  + \frac{ B_{\sigma''}B_{
    \sigma}^2}{m^{3c-1}} 
    + \frac{ 2 B_{\sigma'}^2 c_\bx}{m^{2c-\frac{1}{2}}}$.
\end{lemma}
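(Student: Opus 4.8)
The plan is to start from the exact identity for the Hessian of the squared loss,
\[ \nabla^2\ell(\bW;z) = \nabla f_\bW(\bx)\,\nabla f_\bW(\bx)^\top + \big(f_\bW(\bx)-y\big)\,\nabla^2 f_\bW(\bx), \]
which holds for any smooth parametrization. Since $\nabla f_\bW(\bx)\nabla f_\bW(\bx)^\top\succeq 0$, this reduces the lemma to three scalar estimates: an upper bound on $\|\nabla f_\bW(\bx)\|_2^2$, an operator-norm bound $\|\nabla^2 f_\bW(\bx)\|_{op}\le C_{\bW}$ on the Hessian of the network map, and a bound on the residual $|f_\bW(\bx)-y|$. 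Indeed $\lambda_{\max}(\nabla^2\ell)\le\|\nabla f\|_2^2+\|\nabla^2 f\|_{op}\,|f-y|$ and $\lambda_{\min}(\nabla^2\ell)\ge-\|\nabla^2 f\|_{op}\,|f-y|$, so matching these three pieces against the claimed $\rho_\bW$ and $C_{\bW}$ finishes the proof. First I would fix notation for the two activation stages, writing $\phi_s=\sigma(\bW^{(1)}_{s:}\bx)$ for the first-layer outputs and $g_i=\frac{1}{m^c}\sum_s\bW^{(2)}_{is}\phi_s$ for the second-layer pre-activations, so that $f_\bW(\bx)=\frac{1}{m^c}\sum_i a_i\sigma(g_i)$.

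For the first-order term I would compute the two blocks of $\nabla f_\bW(\bx)$ by the chain rule, namely $\partial f/\partial\bW^{(2)}_{is}=a_i\sigma'(g_i)\phi_s/m^{2c}$ and $\partial f/\partial\bW^{(1)}_{sj}=\phi'_s x_j\big(\sum_i a_i\sigma'(g_i)\bW^{(2)}_{is}\big)/m^{2c}$. Summing the squares over the second layer factorizes as $\big(\sum_i\sigma'(g_i)^2\big)\big(\sum_s\phi_s^2\big)/m^{4c}\le B_{\sigma'}^2B_\sigma^2/m^{4c-2}$, while summing over the first layer and applying Cauchy--Schwarz together with $\|\bW^{(2)}\|_{op}\le\|\bW^{(2)}\|_2$ yields $B_{\sigma'}^4c_\bx^2\|\bW^{(2)}\|_2^2/m^{4c-1}$; these are precisely the first two terms of $\rho_\bW$. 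For the residual I would compare $\bW$ with the all-zero parameter $\mathbf 0$: since $g_i$ vanishes there one has $f_{(\bW^{(1)},\mathbf 0)}=f_{\mathbf 0}$, and a one-line Lipschitz-in-$\bW^{(2)}$ estimate gives $|f_\bW(\bx)-f_{\mathbf 0}(\bx)|\le\frac{B_\sigma B_{\sigma'}}{m^{2c}}\sum_{i,s}|\bW^{(2)}_{is}|\le\frac{B_\sigma B_{\sigma'}}{m^{2c-1}}\|\bW^{(2)}\|_2$, using $\sum_{i,s}|\bW^{(2)}_{is}|\le m\|\bW^{(2)}\|_2$. Combined with $|f_{\mathbf 0}(\bx)-y|\le\sqrt{2\ell(\mathbf 0;z)}\le\sqrt{2c_0}$ from Assumption~\ref{ass:loss}, this produces exactly the factor $\frac{B_{\sigma'}B_\sigma}{m^{2c-1}}\|\bW^{(2)}\|_2+\sqrt{2c_0}$ multiplying $C_{\bW}$.

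The main work, and the main obstacle, is the bound $\|\nabla^2 f_\bW(\bx)\|_{op}\le C_{\bW}$, since the Hessian of $f$ has a genuine two-layer block structure in which the second derivatives of $\sigma$ enter through both layers. I would control it through the second directional derivative: for a unit direction $\bv=(\bv^{(1)},\bv^{(2)})$ write $\bv^\top\nabla^2 f\,\bv=\frac{d^2}{dt^2}f(\bW+t\bv)\big|_{t=0}=\frac{1}{m^c}\sum_i a_i\big(\sigma''(g_i)\dot g_i^2+\sigma'(g_i)\ddot g_i\big)$, where $\dot g_i,\ddot g_i$ are the first and second $t$-derivatives of $g_i$ and decompose into the elementary pieces $\bv^{(2)}\phi$, $\bW^{(2)}\phi'(\bv^{(1)}\bx)$, $\bv^{(2)}\phi'(\bv^{(1)}\bx)$ and $\bW^{(2)}\phi''(\bv^{(1)}\bx)^2$. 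The technical heart is to bound each resulting sum by Cauchy--Schwarz together with Assumption~\ref{ass:activation} ($|\sigma'|\le B_{\sigma'}$, $|\sigma''|\le B_{\sigma''}$), $\|\bx\|_2\le c_\bx$ and $\|\bv\|_2\le1$, and to group the contributions by their homogeneity in $\bW^{(2)}$: the $\sigma''\dot g_i^2$ term yields the $\|\bW^{(2)}\|_2^2/m^{3c}$ contribution (both factors from $\bW^{(2)}\phi'(\bv^{(1)}\bx)$), one of the $\|\bW^{(2)}\|_2$ terms at order $m^{-(3c-1/2)}$ (its cross piece), and the constant $B_{\sigma''}B_\sigma^2/m^{3c-1}$ (both factors from $\bv^{(2)}\phi$), while the $\sigma'\ddot g_i$ term yields the remaining $\|\bW^{(2)}\|_2$ term at order $m^{-(2c-1/2)}$ (through $\bW^{(2)}\phi''$) and the constant $2B_{\sigma'}^2c_\bx/m^{2c-1/2}$ (through the $\bv^{(2)}$--$\bv^{(1)}$ cross piece).

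The care required here is purely in the accounting of powers of $m$: one must consistently convert the row-wise sums $\sum_i\|\bW^{(2)}_{i:}\|_2^2=\|\bW^{(2)}\|_2^2$ and the factor $\sqrt m$ arising from $\sum_s|\bv^{(2)}_{is}|\le\sqrt m\,\|\bv^{(2)}_{i:}\|_2$ into the exact exponents appearing in $C_{\bW}$. Taking the maximum over unit $\bv$ then gives $\|\nabla^2 f_\bW(\bx)\|_{op}\le C_{\bW}$, and substituting the three estimates back into the displayed bounds for $\lambda_{\max}$ and $\lambda_{\min}$ completes the proof. I expect no conceptual difficulty beyond this bookkeeping; the only place where a naive estimate could lose a power of $m$ is the cross terms, where one must pair each $\bv^{(2)}$-row sum with a Cauchy--Schwarz split rather than an entrywise bound.
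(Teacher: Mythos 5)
Your proposal is correct and follows essentially the same route as the paper's proof: the same decomposition $\nabla^2\ell = \nabla f\,\nabla f^\top + (f-y)\nabla^2 f$, the same three estimates (gradient norm, operator norm of $\nabla^2 f_\bW(\bx)$ bounded by $C_\bW$ via the quadratic form over unit directions, and the residual bound $|f_\bW(\bx)-y|\le \frac{B_{\sigma'}B_\sigma}{m^{2c-1}}\|\bW^{(2)}\|_2+\sqrt{2c_0}$ obtained by comparing with the zero parameter), with your term-by-term grouping of the second-derivative contributions matching the paper's $C_\bW$ exactly. The only difference is presentational: you organize the Hessian bound through second directional derivatives $\frac{d^2}{dt^2}f(\bW+t\bv)$, whereas the paper writes out the block second derivatives explicitly and sums them, which is the same computation.
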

\begin{proof}%[Proof of Lemma~ \ref{lem:3-smoothness}]
Let $A_{\bW^{(1)}}=[ \sigma(\bW_{1;\cdot}^{(1)}\bx),\ldots, \sigma(\bW_{m;\cdot}^{(1)}\bx) ]^\top 
\in \R^m$. 
Let $\tilde{\bw}=\bw^\top$. 
We first estimate the upper bound of $\|\nabla f_\bW(\bx)\|_2$.  Note that  for any $ k=1,\ldots,m$
\[ \nabla_{\tilde{\bw}^{(1)}_k}f_\bW(\bx)= \frac{\partial f_\bW(\bx)}{\partial \tilde{\bw}^{(1)}_k} = \frac{1}{m^{2c}} \sum_{i=1}^m a_i \sigma'
\Big( \frac{1}{m^c} \sum_{s=1}^m \bW_{is}^{(2)} \sigma( \bW_{s;\cdot}^{(1)} \bx ) \Big)  \bW^{(2)}_{ik}  \sigma'( \bW_{k;\cdot}^{(1)} \bx ) \bx  \]
 and   
\[ \nabla_{\tilde{\bw}^{(2)}_{k }} f_\bW(
\bx ) = \frac{\partial f_\bW(\bx )}{\partial \tilde{\bw}^{(2)}_{k}}=\frac{  a_k }{m^{2c}} \sigma'\Big( \frac{1}{m^c}   \sum_{s=1}^m \bW_{ks}^{(2)} \sigma( \bW^{(1)}_{s;\cdot} \bx )  \Big) A_{\bW^{(1)}}. \]
According to Assumptions~\ref{ass:activation} and \ref{ass:loss},  the upper bound of the gradient can be controlled as follows 
 \begin{align}\label{eq:3-nabla}
     \| \nabla f_\bW(\bx) \|_2^2&=\sum_{k=1}^m \Big( \big\|\nabla_{\tilde{\bw}^{(1)}_{k}} f_\bW(
\bx )\big\|_2^2  + \big\|\nabla_{\tilde{\bw}^{(2)}_{k}} f_\bW(
\bx )\big\|_2^2 \Big)\nonumber\\
%&\le \frac{B^4_{\sigma'}c^2_{\bx}}{m^{4c}} \sum_{k=1}^m \big|\sum_{i=1}^m   \bW_{ik}^{(2)}\big|^2 + \frac{B^2_{\sigma'}B^2_{\sigma}}{m^{4c-3}} \\
& \le  \frac{B^4_{\sigma'}c^2_{\bx}}{m^{4c-1}} \sum_{k=1}^m \sum_{i=1}^m   \big|\bW_{ik}^{(2)}\big|^2 + \frac{B^2_{\sigma'}B^2_{\sigma}}{m^{4c-2}} \nonumber\\
&\le \frac{B^4_{\sigma'}c^2_{\bx}}{m^{4c-1}}  \big\|   \bW^{(2)}   \big\|_2^2  + \frac{B^2_{\sigma'}B^2_{\sigma}}{m^{4c-2 }}.  \end{align}
%Then we have
%\begin{equation}\| \nabla f_\bW(\bx) \|_2 \le \frac{B^2_{\sigma'}c_{\bx}}{m^{2c-\frac{1}{2}}}  \big\|   \bW^{(2)}   \big\|_2  + \frac{B_{\sigma'}B_{\sigma}}{m^{2c-\frac{3}{2}}}.\end{equation}
For any $k,j\in[m]$, we know
\begin{align*}
     \frac{\partial^2 f_\bW(\bx )}{\big(\partial\tilde{\bw}^{(1)}_{k}\big)^2} =&  \frac{1}{m^{3c}}\sum_{i=1}^m a_i \sigma''\Big( \frac{1}{m^c} \sum_{s=1}^m \bw^{2}_{i s} \sigma(\bW^{(1)}_{s;\cdot} \bx)\Big) \big(\bW^{(2)}_{i k}\big)^2 \big(\sigma'( \bW^{(1)}_{k;\cdot} \bx )\big)^2 \bx \bx^\top \\
     &+  \frac{1}{m^{2c}}\sum_{i=1}^m a_i \sigma'\Big( \frac{1}{m^c} \sum_{s=1}^m \bW^{2}_{i s} \sigma(\bW^{(1)}_{s;\cdot} \bx)\Big)  \bW^{(2)}_{i k}   \sigma''( \bW^{(1)}_{k;\cdot} \bx )  \bx \bx^\top,
\end{align*}  
\[   \frac{\partial^2 f_\bW(\bx )}{\big(\partial\tilde{\bw}^{(2)}_{k}\big)^2}= \frac{a_k}{m^{3c}} \sigma''\Big( \frac{1}{m^c} \sum_{s=1}^m \bW^{(2)}_{ks} \sigma(\bW^{(1)}_{s;\cdot}\bx)\Big) A_{\bW^{(1)}}A_{\bW^{(1)}}^\top \]
and
\begin{align*}
     \frac{\partial^2 f_\bW(\bx )}{ \partial\tilde{\bw}^{(1)}_{k} \partial\tilde{\bw}^{(2)}_{j} }= & \frac{1}{m^{3c}}a_{j} \sigma''\Big( \frac{1}{m^c} \sum_{s=1}^m \bW^{(2)}_{j s} \sigma( \bW_{s;\cdot}^{(1)}\bx )  \Big) \bW^{(2)}_{j k} \sigma'( \bW^{(1)}_{k;\cdot} \bx ) \bx A^\top_{\bW^{(1)}}   \\
     &+ \frac{1}{m^{2c}}a_{j} \sigma'\Big( \frac{1}{m^c} \sum_{s=1}^m \bW^{(2)}_{j s} \sigma( \bW_{s;\cdot}^{(1)}\bx )  \Big)  \sigma'( \bW^{(1)}_{k;\cdot} \bx ) \bx B_{k}  
\end{align*} 
where $B_{k}\in\R^{1\times m}$ with $k$-th element is 1 and others are 0. Let  the vector $\bu\in\R^{md+m^2}$ have unit norm $\|\bu\|_2=1 $ and be composed in a manner matching the parameter $\bW=( \bW^{(1)}, \bW^{(2)} )$  so that $\bu= ( \bu^{(1)},  \bu^{(2)}) $  where $\bu^{(1)}\in\R^{m\times d}$ and $\bu^{(2)}\in\R^{m\times m}$ have been vectorised in a row-major
manner with $\bu_k^{(1)}\in\R^{1\times d}$ and $\bu_k^{(2)}\in\R^{1\times m}$.
Then
\begin{align}\label{eq:3-hessian}
    &\bu^{\top} \nabla^2 f_\bW(\bx) \bu\nonumber\\
    &\!=\! \sum_{k=1}^m \Big( \bu^{(1)}_k \frac{\partial^2 f_\bW(\bx)}{\big(\partial \tilde{\bw}_k^{(1)}\big)^2}  \big(\bu^{(1)}_k\big)^{\top}\!+\!  \bu^{(2)}_k \frac{\partial^2 f_\bW(\bx)}{\big(\partial \tilde{\bw}_k^{(2)}\big)^2}\big(\bu^{(2)}_k\big)^{\top}  \!+\! 2 \sum_{j=1 }^m\bu^{(1)}_k \frac{\partial^2 f_\bW(\bx)}{ \partial \tilde{\bw}_k^{(1)}\partial \tilde{\bw}_j^{(2)} } \big(\bu^{(2)}_j\big)^{\top} \Big). 
\end{align}
We estimate the above three terms separately. Let $\bW_{\cdot k}^{(2)}$ denote the $k$-th column of $\bW^{(2)}$.   %{\color{red}should the first term in the second inequality be $\sum_i\sum_k(W_{ik}^{(2)}u_k^{(1)}x)^2$?}
\begin{align}\label{eq:3-hessian-1}
     &\sum_{k=1}^m  \bu^{(1)}_k \frac{\partial^2 f_\bW(\bx)}{\big(\partial \tilde{\bw}_k^{(1)}\big)^2} \big(\bu^{(1)}_k\big)^{\top}\nonumber \\
     &\le   \frac{B_{\sigma''} B^2_{\sigma'}}{m^{3c}} \sum_{k=1}^m \bu^{(1)}_k  \sum_{i=1}^m     \big(\bW^{(2)}_{i k}\big)^2  \bx \bx^\top \big(\bu^{(1)}_k\big)^{\top} + \frac{B_{\sigma'}B_{\sigma''}}{m^{2c}} \sum_{k=1}^m  \bu^{(1)}_k  \sum_{i=1}^m    \bW^{(2)}_{i k}    \bx \bx^\top\big(\bu^{(1)}_k\big)^{\top}\nonumber\\
     &\le  \frac{B_{\sigma''} B^2_{\sigma'}}{m^{3c}} \sum_{i=1}^m \sum_{k=1}^m \big(\bW^{(2)}_{ik}\big)^2 \big(   \bu^{(1)}_k    \bx  \big)^2+ \frac{B_{\sigma'}B_{\sigma''}}{m^{2c}} \sqrt{ \sum_{k=1}^m  \big(\sum_{i=1}^m    \bW^{(2)}_{i k}  \big)^2 } \sqrt{\sum_{k=1}^m    \big(\bu^{(1)}_k   \bx \bx^\top\big(\bu^{(1)}_k\big)^{\top}\big)^2 } \nonumber\\
     &\le  \frac{B_{\sigma''} B^2_{\sigma'}c_{\bx}^2}{m^{3c}} \sum_{k=1}^m \sum_{i=1}^m \big(\bW^{(2)}_{i k}\big)^2 \big\| \bu_k^{(1)} \big\|_2^2  + \frac{B_{\sigma'}B_{\sigma''}}{m^{2c}} \sqrt{m \sum_{k=1}^m  \sum_{i=1}^m   \big(\bW_{i k}^{(2)}\big)^2 }  \sum_{k=1}^m    \big(\bu^{(1)}_k   \bx \bx^\top\big(\bu^{(1)}_k\big)^{\top}\big)   \nonumber\\
     &\le \frac{B_{\sigma''} B^2_{\sigma'}c_{\bx}^2}{m^{3c}}  \big\|  \bW^{(2)} \big\|_2^2 + \frac{B_{\sigma'}B_{\sigma''}c_\bx^2}{m^{2c-\frac{1}{2}}}  \big\|\bW^{(2)} \big\|_2,
\end{align}
where  we used $\big(\sum_{i=1}^m \bW_{ik}^{(2)}\big)^2 \le   m\sum_{i=1}^m \big(  \bW_{i k}^{(2)}\big)^2 $ and $ \sum_{k=1}^m    \big(\bu^{(1)}_k   \bx \bx^\top\big(\bu^{(1)}_k\big)^{\top}\big)^2  \le  \big(\sum_{k=1}^m  \bu^{(1)}_k   \bx \bx^\top\big(\bu^{(1)}_k\big)^{\top}\big)^2   $ in the third inequality, and the last inequality follows from $  \bu^{(1)}_k   \bx \bx^\top\big(\bu^{(1)}_k\big)^{\top}= \big( \bu_k^{(1)}\bx \big)^2\le c_\bx^2 \| \bu_k^{(1)}\|_2^2  $ and $\| \bu_k^{(1)}\|_2^2 \le 1$.  

For the second term in \eqref{eq:3-hessian}, we control it by 
\begin{align}\label{eq:3-hessian-2}
    &\sum_{k=1}^m \bu^{(2)}_k  \frac{\partial^2 f_\bW(\bx)}{\big(\partial \tilde{\bw}_k^{(2)}\big)^2}  \big(\bu_k^{(2)}\big)^\top \le \frac{B_{\sigma''}}{m^{3c}} \sum_{k=1}^m \big( \bu^{(2)}_k  A_{\bW^{(1)}} \big)^2 \le \frac{B_{\sigma''}B_{\sigma}}{m^{3c-1}}  \big\| \bu^{(2)} \big\|_2^2 \le \frac{B_{\sigma''}B_{\sigma}}{m^{3c-1}}.
\end{align}
Further, according to Cauchy-Schwarz inequality, we can get
\begin{align}\label{eq:3-hessian-3}
    &\sum_{k=1}^m \sum_{ j=1 }^m  \bu^{(1)}_k \frac{\partial^2 f_\bW(\bx)}{ \partial \tilde{\bw}_k^{(1)}\partial \tilde{\bw}_j^{(2)}} \big(\bu^{(2)}_j \big)^{\top}  \nonumber\\  
     %&\le \frac{B_{\sigma''}B_{\sigma'}B_{\sigma}}{m^{3c}} \sum_{k=1}^m \sum_{\ell=1}^m  \bu^{(1)}_k \bw^{(2)}_{\ell k}   \bx \mathbf{I}_m \big(\bu^{(2)}_\ell\big)^{\top}    + \frac{B_{\sigma'}^2}{m^{2c}}  \sum_{k=1}^m \sum_{\ell=1}^m  \bu^{(1)}_k    \bx B^{(k)}  \big(\bu^{(2)}_\ell\big)^{\top}   \nonumber\\
     &\le \frac{B_{\sigma''}B_{\sigma'}B_{\sigma}}{m^{3c}}\sum_{j=1}^m     \bW_{j;\cdot}^{(2)} \bu^{(1)} \bx  \big\| \bu^{(2)}_j\big\|_1 + \frac{B_{\sigma'}^2}{m^{2c}}  \sum_{k=1}^m \bu_k^{(1)} \bx \Big(\sum_{j=1}^m \bu_{j k}^{(2)} \Big) \nonumber\\
     &\le \frac{B_{\sigma''}B_{\sigma'}B_{\sigma}}{m^{3c}}\sqrt{\sum_{j=1}^m   \big\| \bu^{(2)}_\ell\big\|_1^2} \sqrt{ \sum_{j=1}^m \Big(\bW_{j;\cdot}^{(2)} \bu^{(1)}  \bx\Big)^2 } + \frac{B_{\sigma'}^2}{m^{2c }} \sqrt{\sum_{k=1}^m \big(\bu_k^{(1)} \bx\big)^2} \sqrt{\sum_{k=1}^m \big\| \bu_{\cdot k}^{(2)} \big\|_1^2} \nonumber\\
     &\le \frac{B_{\sigma''}B_{\sigma'}B_{\sigma}}{m^{3c}}\sqrt{m
     \sum_{j=1}^m   \big\| \bu^{(2)}_j\big\|_2^2} \sqrt{ \sum_{j=1}^m \big\|\bW_{j;\cdot}^{(2)}\big\|_2^2 \big\|   \bu^{(1)}  \bx\big\|_2^2 } + \frac{B_{\sigma'}^2c_\bx}{m^{2c  }} \sqrt{\sum_{k=1}^m  \big\|\bu_k^{(1)} \big\|_2^2} \sqrt{m\sum_{k=1}^m \big\| \bu_{\cdot k}^{(2)} \big\|_2^2} \nonumber\\
     &\le  \frac{B_{\sigma''}B_{\sigma'}B_{\sigma}c_\bx }{m^{3c-\frac{1}{2}}} \big\| \bu^{(1)} \big\|_2 \big\| \bu^{(2)} \big\|_2   \big\|    \bW^{(2)} \big\|_2+ \frac{B_{\sigma'}^2c_\bx}{m^{2c -\frac{1}{2} }}   \big\|\bu^{(1)} \big\|_2  \big\|\bu^{(2)}\big\|_2\nonumber\\
     &\le \frac{B_{\sigma''}B_{\sigma'}B_{\sigma}c_\bx }{m^{3c-\frac{1}{2}}}     \big\|     \bW^{(2)} \big\|_2   + \frac{B_{\sigma'}^2c_\bx}{m^{2c -\frac{1}{2} }}    ,
\end{align}
 where in the first equality we used $\sum_{k=1}^m   \bu^{(1)}_k \bW_{j k}^{(2)} \bx = \bW_{j;\cdot}^{(2)}  \bu^{(1)} \bx$,   the second inequality follows from $\Big(\sum_{j=1}^m \bu_{j k}^{(2)} \Big) \le  \big\| \bu_{\cdot k}^{(2)} \big\|_1     $, here $\bu_{\cdot k}^{(2)}$ denotes the $k$-th column of $\bu^{(2)}$.
 
Plugging \eqref{eq:3-hessian-1}, \eqref{eq:3-hessian-2} and \eqref{eq:3-hessian-3} back into \eqref{eq:3-hessian} we can get
\begin{align}\label{eq:3-hessian-4}
    &\big\| \nabla^2f_\bW(\bx) \big\|_{op}\nonumber\\
    &\le  \frac{B^2_{\sigma'}B_{\sigma''}c_{\bx}^2}{m^{3c}} \big\|   \bW^{(2)} \big\|_2^2 \!+\! \Big(\frac{B_{
    \sigma'} B_{\sigma''} c_{\bx}^2}{m^{2c-\frac{1}{2}}}\!+\! \frac{2B_{\sigma''}B_{\sigma'}B_\sigma c_\bx }{m^{3c-\frac{1}{2}}} \Big)\| \bW^{(2)} \|_2 \! +\! \frac{ B_{\sigma''}B_{
    \sigma}^2}{m^{3c-1}} 
    \!+\! \frac{ 2 B_{\sigma'}^2 c_\bx}{m^{2c-\frac{1}{2}}}:= C_{\bW }.
\end{align}
%From Assumption~\ref{ass:activation} we know
%\begin{equation}\label{eq:3-f-y}   \big| f_\bW(\bx) - y \big|\le \big| f_\bW(\bx) \big|+c_y\le m^{1-c} B_\sigma   + c_y .\end{equation}  
For any $\bW, \widetilde{\bW}$,  according to Assumptions~\ref{ass:activation} ans \ref{ass:loss} we can get
\begin{align}\label{eq:3-hessian-6}
  &  \big| f_\bW(\bx) -   f_{\widetilde{\bW}}(\bx) \big| \nonumber\\
  & \le \frac{1}{m^c}\sum_{k=1}^m \big| \sigma
\Big( \frac{1}{m^c} \sum_{s=1}^m \bW_{ks}^{(2)} \sigma( \bW_s^{(1)} \bx ) \Big)  - \sigma
\Big( \frac{1}{m^c} \sum_{s=1}^m  \widetilde{\bW}_{ks}^{(2)} \sigma(  \widetilde{\bW}_{s;\cdot}^{(1)} \bx ) \Big)  \big|\nonumber\\
&\le \frac{ B_{\sigma'}}{m^{2c}}\sum_{k=1}^m \sum_{s=1}^m\big|  \bW_{ks}^{(2)} \sigma( \bW_{s;\cdot}^{(1)} \bx ) -  \widetilde{\bW}_{ks}^{(2)} \sigma( \bW_{s;\cdot}^{(1)} \bx ) + \widetilde{\bW}_{ks}^{(2)} \sigma( \bW_{s;\cdot}^{(1)} \bx ) -    \widetilde{\bW}_{ks}^{(2)} \sigma( \widetilde{\bW}_{s;\cdot}^{(1)} \bx )\big|\nonumber\\
&\le \frac{ B_{\sigma'}B_\sigma}{m^{2c}}\sum_{k=1}^m \sum_{s=1}^m\big|  \bW_{ks}^{(2)}  -     \widetilde{\bW}_{ks}^{(2)}  \big| + \frac{ B^2_{\sigma'}}{m^{2c}}\sum_{k=1}^m \sum_{s=1}^m \big|\widetilde{\bW}_{ks}^{(2)} \big|\cdot \big|    \big(\bW_{s;\cdot}^{(1)}  -  \widetilde{\bW}_{s;\cdot}^{(1)}\big) \bx  \big|\nonumber\\
&\le  \frac{ B_{\sigma'}B_\sigma}{m^{2c-1}} \big\|  \bW^{(2)}  -     \widetilde{\bW}^{(2)}  \big\|_2 + \frac{ B^2_{\sigma'}c_\bx \|\widetilde{\bW}^{(2)}\|_{\infty}}{m^{2c-\frac{3}{2}}}   \big\|     \bW^{(1)}  -  \widetilde{\bW}^{(1)}  \big\|_2.
\end{align}
Since
\begin{equation}\label{eq:hessian-5}
    \nabla^2 \ell(\bW;z)= \nabla f_\bW(\bx) \nabla f_\bW(\bx)^{\top} + \nabla^2 f_\bW(\bx)\big( f_\bW(\bx) -y \big)  .
\end{equation}
  Then for any $\bW \in\R^{md+m^2}$, we can upper bound the maximum eigenvalue of the Hessian by combining \eqref{eq:3-nabla},    \eqref{eq:3-hessian-4} and \eqref{eq:3-hessian-6} with $\widetilde{\bW}=\mathbf{0}$ together
\begin{align*}
   \lambda_{\max}(\nabla^2 \ell( \bW;z )) 
   &\le \| \nabla f_{\bW}(\bx) \|_2^2 + \|\nabla^2 f_\bW(\bx)\|_{op} |f_\bW(\bx) -y| \\
   &\le  \frac{B^4_{\sigma'}c^2_{\bx}}{m^{4c-1}}  \big\|   \bW^{(2)}   \big\|_2^2  + \frac{B^2_{\sigma'}B^2_{\sigma}}{m^{4c-2 }} + C_{\bW } \big( \big|f_\bW(\bx) - f_{\mathbf{0}}(\bx)\big| + \big| f_{\mathbf{0}}(\bx) - y \big| \big)\\
   &\le \frac{B^4_{\sigma'}c^2_{\bx}}{m^{4c-1}}  \big\|   \bW^{(2)}   \big\|_2^2 \! +\! \frac{B^2_{\sigma'}B^2_{\sigma}}{m^{4c-2 }} \!+\! C_{\bW } \Big( \frac{ B_{\sigma'}B_\sigma}{m^{2c-1}} \big\|  \bW^{(2)}   \big\|_2   + \sqrt{2\ell(\mathbf{0};z)} \Big)
   \\
     &\le \frac{B^4_{\sigma'}c^2_{\bx}}{m^{4c-1}}  \big\|   \bW^{(2)}   \big\|_2^2   +  \frac{B^2_{\sigma'}B^2_{\sigma}}{m^{4c-2 }}  +  C_{\bW } \Big( \frac{ B_{\sigma'}B_\sigma}{m^{2c-1}} \big\|  \bW^{(2)}  \big\|_2  + \sqrt{2c_0} \Big). 
\end{align*}
Note that  $\nabla f_\bW(\bx) \nabla f_\bW(\bx)^{\top}$ is positive semi-definite, then from \eqref{eq:3-hessian-4}, \eqref{eq:hessian-5} and  \eqref{eq:3-hessian-6} with $\widetilde{\bW}=\mathbf{0}$  we can get
\begin{align*}
     \lambda_{\min}( \nabla^2 \ell(\bW;z) ) 
    &\ge  -  \| \nabla^2 f_\bW(\bx)  \|_{op}\big|  f_\bW(\bx) -y \big| \nonumber\\
    & \ge -  C_{\bW }\Big( \big|  f_\bW(\bx) -  f_{\mathbf{0}}(\bx) \big| + \big|   f_{\mathbf{0}}(\bx)-y  \big|  \Big)\nonumber\\
    &\ge -   C_{\bW }\Big(   \frac{ B_{\sigma'}B_\sigma}{m^{2c-1}} \big\|  \bW^{(2)}    \big\|_2   +  \sqrt{2\ell(\bW_0;z)}\Big)\nonumber\\
    &\ge -  C_{\bW }\Big(  \frac{ B_{\sigma'}B_\sigma}{m^{2c-1}} \big\|  \bW^{(2)} \big\|_2 +  \sqrt{2c_0}\Big).
\end{align*}
The proof is completed. 
\end{proof}
Let $B_1=\max\big\{ B_{\sigma'}^2 B_{\sigma''} c_{\bx}^2, B_{\sigma'} B_{\sigma''} c_{\bx}^2 , 2 B_{ \sigma''} B_{\sigma} c_{\bx},   B_{ \sigma''} B_{\sigma}^2, 2  B_{ \sigma''}^2 c_\bx\big\}$
and 
$ B_2= \max\big\{ B_{\sigma'}^4  c_{\bx}^2, B_{\sigma'}^2 B_{\sigma''}^2,  B_{ \sigma''} B_{\sigma}, \sqrt{2c_0}\big\}.$
\begin{lemma}\label{lem:3-unbounded-wbound}
Suppose Assumptions~\ref{ass:activation} and \ref{ass:loss} hold.
 Let $\{\bW_t\}$ and $\{\bW'_t\}$ be produced by GD iterates with   $T$ iterations based on $S$ and $S'$, respectively.  Let $\hat{\rho}=4B_2(1+2B_1)$ and $C>0$ be a constant. Assume $\eta \le 1/(2\hat{\rho})$ and \eqref{eq:3-m-condition-1} hold.  
     Then  for any $c\in(1/2,1]$ and   any $t=0,\ldots,T$ there holds \[\|\bW_t-\bW_0 \|_2  \le \sqrt{2c_0\eta t }\]  and
     \[  \| \nabla \ell(\bW_t;z) - \nabla \ell(\bW'_t;z)  \|_2\le\hat{\rho} \|\bW_t-\bW_t'\|_2.\]
\end{lemma}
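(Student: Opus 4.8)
The plan is a two-stage bootstrapping argument that circumvents the obstruction flagged in Remark~\ref{rmk:difference}: unlike the two-layer case, the smoothness parameter $\rho_{\bW}$ in Lemma~\ref{lem:3-unbounded-smoothness} grows with $\|\bW^{(2)}\|_2$, so there is no a priori global smoothness constant to insert into a deviation estimate of the type in Lemma~\ref{lem:deviation}. The first stage produces a crude-but-cheap norm bound that is already strong enough to cap $\rho_{\bW}$ by the constant $\hat\rho$ everywhere along the trajectory; the second stage then reruns the descent estimate with this constant to obtain the sharp $\sqrt{2c_0\eta t}$ bound, after which both assertions of the lemma follow.

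First I would establish the crude estimate $\|\bW_t-\bW_0\|_2\le \eta t\, m^{2c-1}$ by induction on $t$, the case $t=0$ being trivial. For the inductive step the update rule gives $\|\bW_{t+1}-\bW_0\|_2\le \|\bW_t-\bW_0\|_2+\eta\|\nabla L_S(\bW_t)\|_2$, so it suffices to show $\|\nabla L_S(\bW_t)\|_2\le m^{2c-1}$. Writing $\nabla\ell(\bW;z)=\nabla f_{\bW}(\bx)(f_{\bW}(\bx)-y)$, I would bound $\|\nabla f_{\bW_t}(\bx)\|_2$ by \eqref{eq:3-nabla} and $|f_{\bW_t}(\bx)-y|$ by \eqref{eq:3-hessian-6} with $\widetilde{\bW}=\mathbf{0}$ (whose second term then vanishes), both in terms of $\|\bW_t^{(2)}\|_2\le \|\bW_0\|_2+\eta t\, m^{2c-1}$ supplied by the inductive hypothesis. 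Collecting powers of $m$, the product is dominated by $m^{2c-1}$ once $m$ exceeds the thresholds gathered in \eqref{eq:3-m-condition-1}, which closes the induction. Feeding the resulting crude bound $\|\bW_t^{(2)}\|_2\le \|\bW_0\|_2+\eta T\, m^{2c-1}$ into the formulas for $\rho_{\bW}$ and $C_{\bW}$ from Lemma~\ref{lem:3-unbounded-smoothness}, every monomial in $\|\bW^{(2)}\|_2$ becomes a constant times a nonnegative power of $m$ that is $\le 1$ under \eqref{eq:3-m-condition-1}, giving $\rho_{\bW}\le \hat\rho=4B_2(1+2B_1)$ for every $\bW$ on a GD segment; since $\rho_{\bW}$ already dominates the curvature term in the $\lambda_{\min}$ lower bound, this also yields the operator-norm control $\|\nabla^2\ell(\bW;z)\|_{op}\le\hat\rho$.

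With uniform smoothness $\hat\rho$ along the trajectory and $\eta\le 1/(2\hat\rho)$, the descent inequality $L_S(\bW_{t+1})\le L_S(\bW_t)-\frac{\eta}{2}\|\nabla L_S(\bW_t)\|_2^2$ holds; telescoping gives $\sum_{s<t}\|\nabla L_S(\bW_s)\|_2^2\le \frac{2}{\eta}L_S(\bW_0)$, whence $\|\bW_t-\bW_0\|_2\le \eta\sum_{s<t}\|\nabla L_S(\bW_s)\|_2\le \eta\sqrt{t}\,(\sum_{s<t}\|\nabla L_S(\bW_s)\|_2^2)^{1/2}\le \sqrt{2\eta t\, L_S(\bW_0)}\le\sqrt{2c_0\eta t}$, using $L_S(\bW_0)\le c_0$ from Assumption~\ref{ass:loss}; this is exactly the argument behind Lemma~\ref{lem:deviation}, now legitimate because $\hat\rho$ is a genuine smoothness constant on the relevant region. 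The identical argument applies to $\{\bW_t'\}$ since $L_{S'}(\bW_0)\le c_0$, so both trajectories satisfy $\|\bW_t-\bW_0\|_2,\|\bW_t'-\bW_0\|_2\le\sqrt{2c_0\eta T}$. By convexity of the norm, every point $\bW=\bW_t'+s(\bW_t-\bW_t')$, $s\in[0,1]$, on the connecting segment obeys the same bound, hence $\rho_{\bW}\le\hat\rho$ there; the fundamental theorem of calculus then gives $\nabla\ell(\bW_t;z)-\nabla\ell(\bW_t';z)=\int_0^1\nabla^2\ell(\bW_t'+s(\bW_t-\bW_t');z)(\bW_t-\bW_t')\,ds$, and bounding the integrand's operator norm by $\hat\rho$ yields the claimed gradient-Lipschitz estimate.

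The main obstacle is entirely the term-by-term verification that \eqref{eq:3-m-condition-1} is strong enough to simultaneously (i) force $\|\nabla L_S(\bW_t)\|_2\le m^{2c-1}$ in the inductive step and (ii) pin $\rho_{\bW}\le\hat\rho$ after the crude substitution. Each distinct exponent appearing in \eqref{eq:3-m-condition-1}—for instance $\tfrac{1}{4c-2}$, $\tfrac{1}{5c-\frac12}$, and $\tfrac{1}{4c-1}$—is precisely what drives one such monomial below threshold, and keeping the bookkeeping consistent across all the cross terms in $\rho_{\bW}$ and $C_{\bW}$ is where the real effort lies. Everything else—the induction skeleton, the descent/telescoping step, and the mean-value estimate—is routine.
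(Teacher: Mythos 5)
Your proposal is correct and follows essentially the same route as the paper's proof: an induction giving the crude bound $\|\bW_t-\bW_0\|_2\le \eta t\, m^{2c-1}$ via \eqref{eq:3-nabla} and \eqref{eq:3-hessian-6} under condition \eqref{eq:3-m-condition-1}, then capping $\rho_{\bW}\le\hat\rho$ along the trajectory, and finally deriving $\sqrt{2c_0\eta t}$ from the descent inequality plus Cauchy--Schwarz. Your explicit segment-convexity step justifying the two-point gradient-Lipschitz bound is a detail the paper leaves implicit, but it is the same underlying argument.
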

\begin{proof}
We will prove by induction to show $\|\bW_t-\bW_0 \|_2\le \eta t m^{2c-1}$. Further, we can show that $\rho_{\bW}=\O(1)$ for any $\bW$ produced by GD iterates if $m$ satisfies \eqref{eq:3-m-condition-1}. Then by assuming $\eta \le 1/(2\hat{\rho})$ we can prove that $\|\bW_t-\bW_0 \|_2\le \sqrt{2c_0\eta t}$. 

It's obvious that $\|\bW_t-\bW_0 \|_2\le \eta t m^{2c-1 }$ with $t=0$ holds trivially. Assume $\|\bW_t-\bW_0\|_2\le \eta t m^{2c-1}$, 
according to the update rule \eqref{eq:GD} we know
\begin{align}\label{eq:smooth-11}
    &\|\bW_{t+1}-\bW_0 \|_2\nonumber\\ &\le \|\bW_t-\bW_0\|_2 + \eta \|\nabla L_S(\bW_t)\|_2\nonumber\\
    &\le \|\bW_t-\bW_0\|_2 + \eta \max_{i\in[n]} \|\nabla f_{\bW_t}(\bx_i)\|_2 \big| f_{\bW_t}(\bx_i) - y_i \big|\nonumber\\
        &\le \|\bW_t-\bW_0\|_2 + \eta\Big(\frac{B_{\sigma'}^2 c_{\bx}}{m^{2c-\frac{1}{2}}}(\|\bW_t-\bW_0\|_2+\|\bW_0\|_2) + \frac{ B_{\sigma'} B_{\sigma}}{m^{2c-1}}  \Big)\Big( \big| f_{\bW_t}(\bx_i)-f_{\mathbf{0}}(\bW_t) \big|\nonumber\\
        &\quad + \big| f_{\mathbf{0}}(\bW_t)  - y_i \big| \Big) \nonumber\\
    &\le \|\bW_t-\bW_0\|_2 + \eta \Big(\frac{B_{\sigma'}^2 c_{\bx}}{m^{2c-\frac{1}{2}}}(\|\bW_t-\bW_0\|_2+\|\bW_0\|_2) + \frac{B_{\sigma'}B_{\sigma}}{m^{2c-1}}\Big)\Big( \frac{ B_{\sigma'} B_{\sigma}}{m^{2c-1}} \big(\|\bW_t-\bW_0\|_2\nonumber\\
    &\quad +\|\bW_0\|_2\big) + \sqrt{2c_0}\Big) ,
\end{align}
where in the third inequality we used \eqref{eq:3-nabla}, the last inequality used \eqref{eq:3-hessian-6} with $\widetilde{\bW}=\mathbf{0} $. 
If $m$ is large enough such that  
\begin{align}\label{eq:w-bound-1}
   &\Big(\frac{B_{\sigma'}^2 c_{\bx}}{m^{2c-\frac{1}{2}}}(\|\bW_t\!-\!\bW_0\|_2\!+\!\|\bW_0\|_2) + \frac{B_{\sigma'}B_{\sigma}}{m^{2c-1}}\Big)\Big( \frac{ B_{\sigma'} B_{\sigma} }{m^{2c-1}} \big(\|\bW_t-\bW_0\|_2+\|\bW_0\|_2\big) \!+\! \sqrt{2c_0}\Big)\nonumber\\
   &\le m^{2c-1},
\end{align}
then from \eqref{eq:smooth-11} we know that $\|\bW_{t+1}-\bW_0\|_2\le \eta (t+1) m^{2c-1}$. The first part of the lemma can be proved. 
Now, we discuss the conditions on $m$ such that \eqref{eq:w-bound-1} holds. Let $x=\|\bW_t-\bW_0\|_2+\|\bW_0\|_2$. To guarantee \eqref{eq:w-bound-1}, it suffices that the following three  inequalities hold
\begin{equation} \label{condition}   \frac{B_{\sigma'}^3B_{\sigma} c_{\bx}}{m^{4c-\frac{3}{2}}}x^2 \le  \frac{m^{2c-1}}{3}, 
    \big(\frac{B_{\sigma'}^2B_{\sigma}^2  }{m^{4c-2}}+  \frac{\sqrt{2c_0} B_{\sigma'}^2c_{\bx}}{m^{2c-\frac{1}{2}}}\big) x \le\!\frac{m^{2c-1}}{3}, 
\frac{\sqrt{2c_0}B_{\sigma'}B_{\sigma} }{m^{2c-1}} \le  \frac{m^{2c-1}}{3}. 
\end{equation} 
It's easy to verify that \eqref{condition} holds if $m\gtrsim \max \{(\eta T)^{\frac{4}{4c-1}}, (\eta T)^{\frac{1}{4c-2}}, \|\bW_0\|_2^{\frac{1}{3c-\frac{5}{4}}},\|\bW_0\|_2^{\frac{1}{6c-3}} \}$ , which can be ensured by  \eqref{eq:3-m-condition-1}. 
Hence, if $c\in(1/2,1]$ and  \eqref{eq:3-m-condition-1} holds, we have $\|\bW_t-\bW_0\|_2\le \eta t m^{2c-1}$ for all $t=0,1,\ldots,T$. 

Recall that \[B_1=\max\big\{ B_{\sigma'}^2 B_{\sigma''} c_{\bx}^2, B_{\sigma'} B_{\sigma''} c_{\bx}^2 , 2 B_{ \sigma''} B_{\sigma} c_{\bx},   B_{s\sigma''} B_{\sigma}^2, 2  B_{s\sigma''}^2 c_\bx\big\}\]
and 
\[ B_2= \max\big\{ B_{\sigma'}^4  c_{\bx}^2, B_{\sigma'}^2 B_{\sigma''}^2,  B_{s\sigma''} B_{\sigma}, \sqrt{2c_0}\big\}. \]
Then from Lemma~\ref{lem:3-unbounded-smoothness} we know 
\[ C_\bW\le B_1\Big(  {m^{-3c}} \big\|   \bW^{(2)} \big\|_2^2 + \big( {m^{\frac{1}{2}-2c}}+  {m^{\frac{1}{2}-3c}} \big)\| \bW^{(2)} \|_2  +  {m^{1-3c}} 
    +  {m^{\frac{1}{2}-2c}} \Big)\]
    and
\[\rho_{\bW} \le B_2 \Big(  m^{1-4c}  \big\|   \bW   \big\|_2^2  +  {m^{2-4c }} + C_{\bW } \big(  {m^{1-2c}} \big\|  \bW  \big\|_2 +  1\big) \Big).\]
Note that \eqref{eq:3-m-condition-1} implies $m \gtrsim (\eta T)^4 + \|\bW_0\|_2^{\frac{4}{8c-3}}$. By using $\|\bW_t \|_2\le \eta t m^{2c-1} +\|\bW_0\|_2$ we can verify that 
\[ \rho_{\bW_t}\le 4B_2(1+2B_1):=\hat{\rho} \quad  \text{ for any  } \quad  t=0,\ldots,T. \]
Hence, we know that $\ell$ is $\hat{\rho}$-smooth when the parameter space is the trajectory of GD. Then for any $t=0,\ldots,T$ and any $\bW_t,\bW_t' $ produced by GD iterates, there holds
\begin{equation}
    \| \nabla \ell(\bW_t;z) - \nabla \ell(\bW'_t;z)  \|_2\le \hat{\rho} \|\bW_t-\bW_t'\|_2. 
\end{equation}
In addition, by the smoothness of $\ell$ we can get for any $j=0,\ldots,T-1$
\[ L_S( \bW_{j+1} )\le L_S(\bW_j) - \eta \Big(1 - \frac{\eta \hat{\rho}}{2}\Big)\| \nabla L_S( \bW_j ) \|_2^2. \]
Rearranging and summing over $j$ yields
\[ \eta \Big(1 - \frac{\eta \hat{\rho}}{2}\Big)\sum_{j=0}^t \| \nabla L_S( \bW_j ) \|_2^2 \le \sum_{j=0}^t L_S(\bW_j) -L_S( \bW_{j+1} )\le  L_S(\bW_0)   . \]
Note that the update rule of GD \eqref{eq:GD} implies
\[ \bW_{t+1}  =\bW_0 -\eta \sum_{j=0}^t \nabla L_S( \bW_j ). \]
Combining the above two equations  together and noting that $\eta \hat{\rho}\le 1/2$, we obtain
\[ \| \bW_{t+1}-\bW_0   \|_2^2 = \eta^2  \big\|\sum_{j=0}^t \nabla L_S( \bW_j )\big\|^2_2 \le  \eta^2  t \sum_{j=0}^t \big\| \nabla L_S( \bW_j )\big\|^2_2\le 2\eta t L_S(\bW_0)\le 2 c_0 \eta t.\]
The  proof is completed. 
\end{proof}

The almost co-coercivity of the gradient operator for three-layer neural networks  is given as follows. Recall that $S^{(i)}=\{z_1,\ldots,z_{i-1},z'_i,z_{i+1},z_n\}$ is the set formed from $S$ by replacing the $i$-th element with $z_i'$ and for any $\bW$, 
\[ L_{S^{\setminus i}}(\bW)=L_S(\bW) - \frac{1}{ n}\ell(\bW;z_i) = L_{S^{(i)}}(\bW)-\frac{1}{ n}\ell(\bW;z_i') .\]
Let $\{\bW_t \}$ and $\{\bW_t^{(i)}\}$ be the sequence produced by GD based on $S$ and $S^{(i)}$, respectively. Let $C_{3,T}= 4B_1\big( 2c_0 \eta T {m^{-3c}}   + \big( {m^{\frac{1}{2}-2c}}+  {m^{\frac{1}{2}-3c}} \big)\sqrt{2c_0 \eta T} +  {m^{1-3c}} 
    +  {m^{\frac{1}{2}-2c}} \big)$. 
\begin{lemma}\label{lem:3-unbounded-coercivity}
    Suppose $\eta \le 1/(8 \hat{\rho})$ where $ \hat{\rho}=4B_2(1+2B_1)$. Assume   \eqref{eq:3-m-condition-1} holds.  Then
\begin{align*}
    \langle \bW_t - \bW_t^{(i)}, \nabla L_{S^{\setminus i}}(\bW_t) - \nabla L_{S^{\setminus i}}&(\bW^{(i)}_t)  \rangle \ge    2\eta \Big(1-4\eta \hat{\rho} \Big)\big\| \nabla L_{S^{\setminus i}}(\bW_t) - \nabla L_{S^{\setminus i}}(\bW^{(i)}_t)   \big\|_2^2\\
    & - \tilde{\epsilon}_t \big\| \bW_t -\bW_t^{(i)} -\eta \big( \nabla L_{S^{\setminus i}}(\bW_t) - \nabla L_{S^{\setminus i}}(\bW_t^{(i)}) \big) \big\|_2^2,
\end{align*}
where $\tilde{\epsilon}_t =C_{3,T}\big( B_3\big( \frac{\sqrt{\eta T}+\|\bW_0\|_2}{m^{2c-\frac{1}{2}}}+ m^{1-2c} \big)\big( (1+\eta \hat{\rho})\|\bW_t-\bW_t^{(i)}\|_2 + 2 \big(\sqrt{2c_0 \eta T}+\|\bW_0\|_2 \big)+ \sqrt{2c_0}\big)$. 
\end{lemma}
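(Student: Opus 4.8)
The plan is to follow the strategy of Lemma~\ref{lem:coercivity} and of Lemma~5 in \cite{richards2021stability}, the only genuinely new ingredient being a trajectory-dependent lower bound on the smallest eigenvalue of $\nabla^2 L_{S^{\setminus i}}$ along suitable line segments. Writing $G=\nabla L_{S^{\setminus i}}(\bW_t)-\nabla L_{S^{\setminus i}}(\bW_t^{(i)})$, the target is an almost co-coercivity statement obtained by interpolating between $\bW_t$ and $\bW_t^{(i)}$. As in Lemma~\ref{lem:coercivity}, I would introduce, for $\alpha\in[0,1]$, the auxiliary points $\bW(\alpha)=\alpha\bW_t+(1-\alpha)\bW_t^{(i)}-\alpha\eta(\nabla\ell(\bW_t;z)-\nabla\ell(\bW_t^{(i)};z))$ and its reflection $\widetilde{\bW}(\alpha)=\alpha\bW_t^{(i)}+(1-\alpha)\bW_t-\alpha\eta(\nabla\ell(\bW_t^{(i)};z)-\nabla\ell(\bW_t;z))$, express $\langle\bW_t-\bW_t^{(i)},G\rangle$ through the Hessian of $L_{S^{\setminus i}}$ integrated along these segments, and then combine the resulting curvature control with the $\hat{\rho}$-smoothness furnished by Lemma~\ref{lem:3-unbounded-wbound}.

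The core step, and the main obstacle, is to show $\lambda_{\min}(\nabla^2 L_{S^{\setminus i}}(\bW(\alpha)))\ge-\tilde{\epsilon}_t$ (and likewise at $\widetilde{\bW}(\alpha)$), uniformly in $\alpha$. Starting from $\nabla^2\ell(\bW;z)=\nabla f_\bW(\bx)\nabla f_\bW(\bx)^\top+\nabla^2 f_\bW(\bx)(f_\bW(\bx)-y)$ and discarding the positive semidefinite rank-one term, it suffices to bound $\|\nabla^2 f_{\bW(\alpha)}(\bx_j)\|_{op}$ and $\frac1n\sum_{j\neq i}|f_{\bW(\alpha)}(\bx_j)-y_j|$. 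For the first factor I would invoke the operator-norm estimate $\|\nabla^2 f_{\bW}(\bx)\|_{op}\le C_\bW$ of Lemma~\ref{lem:3-unbounded-smoothness}; since $\|\bW(\alpha)^{(2)}\|_2$ stays controlled along the GD trajectory by Lemma~\ref{lem:3-unbounded-wbound} (which gives $\|\bW_t-\bW_0\|_2\le\sqrt{2c_0\eta t}$ and $\|\nabla\ell(\bW_t;z)-\nabla\ell(\bW_t^{(i)};z)\|_2\le\hat{\rho}\|\bW_t-\bW_t^{(i)}\|_2$), condition~\eqref{eq:3-m-condition-1} forces $C_{\bW(\alpha)}\le C_{3,T}$. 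For the second factor I would insert $f_{\mathbf{0}}$ and apply the triangle inequality with the gradient/Lipschitz estimates \eqref{eq:3-nabla} and \eqref{eq:3-hessian-6}: the increment $\|\bW(\alpha)-\bW_t^{(i)}\|_2\le\alpha(1+\eta\hat{\rho})\|\bW_t-\bW_t^{(i)}\|_2$ (using the $\hat{\rho}$-smoothness to bound the gradient-step shift) produces the $(1+\eta\hat{\rho})\|\bW_t-\bW_t^{(i)}\|_2$ term, the deviations $\|\bW_t^{(i)}-\bW_0\|_2\le\sqrt{2c_0\eta T}$ and $\|\bW_0\|_2$ produce the $2(\sqrt{2c_0\eta T}+\|\bW_0\|_2)$ term, and $|f_{\mathbf{0}}(\bx_j)-y_j|\le\sqrt{2c_0}$ produces the last term. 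Multiplying the two factors and collecting the activation constants into $B_3$ yields exactly $\tilde{\epsilon}_t$, uniformly in $\alpha\in[0,1]$.

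With the uniform bound $\lambda_{\min}(\nabla^2 L_{S^{\setminus i}})\ge-\tilde{\epsilon}_t$ along both families of segments in hand, the remainder is the algebraic manipulation of Lemma~5 in \cite{richards2021stability}, with the constant $\rho$ there replaced by the trajectory smoothness parameter $\hat{\rho}$: the Hessian upper bound $\hat{\rho}$ extracts the co-coercive quadratic in $\|G\|_2^2$, while the curvature deficit $-\tilde{\epsilon}_t$ enters additively against $\|\bW_t-\bW_t^{(i)}-\eta G\|_2^2$, giving the claimed inequality with leading coefficient $2\eta(1-4\eta\hat{\rho})$. The standing assumption $\eta\le1/(8\hat{\rho})$ ensures $1-4\eta\hat{\rho}\ge1/2$, so this coefficient stays positive, which is precisely what the subsequent stability recursion needs. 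I expect the delicate point to be verifying that \eqref{eq:3-m-condition-1} is strong enough to keep $C_{\bW(\alpha)}\le C_{3,T}$ and the Lipschitz factor inside the $B_3(\cdots)$ envelope \emph{simultaneously} for every $\alpha$, since here---unlike the two-layer case---the smoothness is not uniform but is certified only on the GD trajectory through Lemma~\ref{lem:3-unbounded-wbound}.
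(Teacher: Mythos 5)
Your overall strategy coincides with the paper's own proof: the same interpolated points $\bW(\alpha)$ and $\widetilde{\bW}(\alpha)$, the same curvature lower bound $\lambda_{\min}\big(\nabla^2 L_{S^{\setminus i}}(\bW(\alpha))\big)\ge -\tilde{\epsilon}_t$ obtained by discarding the positive semi-definite rank-one term, bounding $\|\nabla^2 f_{\bW(\alpha)}(\bx_j)\|_{op}$ through $C_{\bW(\alpha)}\le C_{3,T}$, and splitting the residual through $f_{\bW_t^{(i)}}$ and $f_{\mathbf{0}}$; and the same closing algebra from Lemma 5 of \cite{richards2021stability}, which the paper packages via the auxiliary functions $G_1(\bW)=L_{S^{\setminus i}}(\bW)-\langle\nabla L_{S^{\setminus i}}(\bW_t^{(i)}),\bW\rangle$ and $G_2(\bW)=L_{S^{\setminus i}}(\bW)-\langle\nabla L_{S^{\setminus i}}(\bW_t),\bW\rangle$. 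Your curvature estimate, including how the three terms of $\tilde{\epsilon}_t$ arise, matches the paper's computation.

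The gap is in the descent half of the argument. You invoke ``the $\hat{\rho}$-smoothness furnished by Lemma~\ref{lem:3-unbounded-wbound}'' to extract the quadratic in $\|G\|_2^2$, but that lemma certifies smoothness only \emph{between GD iterates}, whereas the descent inequality has to be applied along the segment from $\bW_t$ to the perturbed point $\bW_t-\eta\big(\nabla L_{S^{\setminus i}}(\bW_t)-\nabla L_{S^{\setminus i}}(\bW_t^{(i)})\big)$, which is not a GD iterate. Moreover, plain $\hat{\rho}$-smoothness would produce the coefficient $2\eta(1-\eta\hat{\rho}/2)$, not $2\eta(1-4\eta\hat{\rho})$: the factor $4$ is not cosmetic but is precisely the trace of how the paper closes this gap. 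It first checks, using $\eta\hat{\rho}\le 1/8$ and $\|\bW_t\|_2\le\sqrt{2c_0\eta t}+\|\bW_0\|_2$, that the perturbed points remain in the ball $\mathcal{B}\big(0,2(\sqrt{2c_0\eta T}+\|\bW_0\|_2)\big)$, then shows that $L_{S^{\setminus i}}$ (hence $G_1$, $G_2$) is $8\hat{\rho}$-smooth on that doubled ball, and the descent lemma with constant $8\hat{\rho}$ then yields exactly $\eta(1-4\eta\hat{\rho})\|\nabla G_1(\bW_t)\|_2^2$. You do flag off-trajectory smoothness as the delicate point at the end, but your proposal leaves it unresolved, and resolving it is what dictates both the admissible step size $\eta\le 1/(8\hat{\rho})$ and the stated constant. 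A minor further slip: your interpolation uses the single-example difference $\nabla\ell(\bW_t;z)-\nabla\ell(\bW_t^{(i)};z)$; for the endpoint $\bW(1)$ to be the point at which the descent bound is applied, the perturbation must be the leave-one-out gradient difference $\nabla L_{S^{\setminus i}}(\bW_t)-\nabla L_{S^{\setminus i}}(\bW_t^{(i)})$, as in the paper's three-layer proof.
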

\begin{proof}
For any $\bW\in \mathcal{B}\big(0,2(\sqrt{2c_0\eta T}+\|\bW_0\|_2 )\big)$,  defining the following two functions
\[ G_1(\bW)=L_{S^{\setminus  i}}(\bW)-\langle \nabla L_{S^{\setminus  i}}(\bW_t^{(i)}), \bW \rangle, \qquad  G_2(\bW)=L_{S^{\setminus  i}}(\bW)-\langle \nabla L_{S^{\setminus  i}}(\bW_t ), \bW \rangle.\]
Note that 
\begin{align}\label{eq:14}
     \!\langle \bW_t \!-\! \bW_t^{(i)}, \nabla L_{S^{\setminus i}}(\bW_t) \!-\! \nabla L_{S^{\setminus i}}(\bW^{(i)}_t)  \rangle \!=\! \big(G_1\big(\bW_t\big) \!-\! G_1\big(\bW^{(i)}_t\big) \big)\!+\!  \big(G_2\big(\bW^{(i)}_t\big) \!-\! G_2\big(\bW_t\big)  \big).
\end{align}
Hence, it is enough to lower bound $G_1\big(\bW_t\big) - G_1\big(\bW^{(i)}_t\big)$ and $G_2\big(\bW^{(i)}_t\big) - G_2\big(\bW_t\big) $.

Note that  for any $\bW_t$, Lemma~\ref{lem:3-unbounded-smoothness} implies that $\|\bW_t\|_2\le \sqrt{2c_0\eta t} + \|\bW_0\|_2$. Then there holds
 \begin{align*}
     \big\|\bW_t - \eta \nabla G_1(\bW_t)\big\|_2 &\le   \big\|\bW_t\big\|_2 +\eta \big\| \nabla L_{S^{\setminus  i}}(\bW_t)-  \nabla L_{S^{\setminus  i}}(\bW_t^{(i)})\big\|_2\\& \le  \big\|\bW_t\big\|_2 + \eta \hat{\rho}\big\| \bW_t- \bW_t^{(i)}\big\|_2\\
     &\le \sqrt{2c_0\eta t} + \|\bW_0\|_2 + 2 \eta \hat{\rho} \big( \sqrt{2c_0\eta t} + \|\bW_0\|_2 \big) \\
     &\le 2\big(\sqrt{2c_0\eta t} + \|\bW_0\|_2 \big),
 \end{align*}
 where  in the last inequality we used $\eta \hat{\rho}  \le 1/8$. 
 Similarly, we can show that  $\big\|\bW_t^{(i)} - \eta \nabla G_2(\bW_t^{(i)})\big\|_2 \le 2(\sqrt{2c_0\eta T}+\|\bW_0\|_2 )$. Hence, we know $\bW_t - \eta \nabla G_1(\bW_t) \in \mathcal{B}\big(0,2(\sqrt{2c_0\eta T}+\|\bW_0\|_2 )\big)$ and $ \bW_t^{(i)} - \eta \nabla G_2(\bW_t^{(i)}) \in \mathcal{B}\big(0,2(\sqrt{2c_0\eta T}+\|\bW_0\|_2 )\big)$. 
On the other hand, similar to     Lemma~\ref{lem:3-unbounded-wbound}, we can show that  $G_1(\bW)$ and $G_2(\bW)$ is $8\hat{\rho}$-smooth for any $\bW \in \mathcal{B}\big(0,2(\sqrt{2c_0\eta T}+\|\bW_0\|_2 )\big)$.  
Combining the above results, %from the smoothness of $G_1$ and $G_2$ we can get
we can get
 \begin{equation}\label{eq:3-coercivity-2} G_1(\bW_t-\eta \nabla G_1(\bW_t) ) \le G_1(\bW_t) - \eta \big(1- {4\eta \hat{\rho} } \big) \| \nabla G_1(\bW_t) \|_2^2, 
 \end{equation} \begin{equation}\label{eq:3-coercivity-3}
     G_2(\bW_t^{(i)}-\eta \nabla G_2(\bW_t^{(i)}) ) \le G_2(\bW_t^{(i)}) - \eta \big(1- {4\eta \hat{\rho}} \big) \| \nabla G_2(\bW_t^{(i)}) \|_2^2.
 \end{equation}  
If we can further show that
 \begin{equation}\label{eq:3-coercivity-4}
     G_1(\bW_t-\eta \nabla G_1(\bW_t) ) \ge G_1(\bW_t^{(i)}) -\frac{\tilde{\epsilon}_t}{2}\| \bW_t -\bW_t^{(i)} -\eta \nabla G_1(\bW_t) \|_2^2,
 \end{equation}
  \begin{equation}\label{eq:3-coercivity-5}
    G_2(\bW_t^{(i)}-\eta \nabla G_2(\bW_t^{(i)}) )\ge G_2(\bW_t ) -\frac{\tilde{\epsilon}_t}{2}\|    \bW_t^{(i)}-\bW_t -\eta \nabla G_2(\bW_t) \|_2^2,
 \end{equation}
 with $\tilde{\epsilon}_t =C_{3,T}\big( B_3\big( \frac{\sqrt{\eta T}+\|\bW_0\|_2}{m^{2c-\frac{1}{2}}}+ m^{1-2c} \big)\big( (1+\eta \hat{\rho})\|\bW_t-\bW_t^{(i)}\|_2 + 2\big(\sqrt{2c_0 \eta T} +\|\bW_0\|_2\big)+ \sqrt{2c_0}\big)$, where $C_{3,T}= 4B_1\big( 2c_0 \eta T {m^{-3c}}   + \big( {m^{\frac{1}{2}-2c}}+  {m^{\frac{1}{2}-3c}} \big)\sqrt{2c_0 \eta T} +  {m^{1-3c}} 
    +  {m^{\frac{1}{2}-2c}} \big)$. 
Then combining \eqref{eq:3-coercivity-2}, \eqref{eq:3-coercivity-3}, \eqref{eq:3-coercivity-4} and \eqref{eq:3-coercivity-5} together yields 
  \begin{equation*}
     G_1(\bW_t  ) - G_1(\bW_t^{(i)}) \ge \eta \big(1- {4\eta \hat{\rho}} \big)\|\nabla G_1(\bW_t)\|_2^2 -\frac{\tilde{\epsilon}_t}{2}\| \bW_t -\bW_t^{(i)} -\eta \nabla G_1(\bW_t) \|_2^2,
 \end{equation*}
  \begin{equation*}
    G_2(\bW_t^{(i)}  ) -    G_2(\bW_t  ) \ge \eta \big(1- {4\eta \hat{\rho}} \big)\|\nabla G_2(\bW_t)\|_2^2 -\frac{\tilde{\epsilon}_t}{2}\|    \bW_t^{(i)}-\bW_t -\eta \nabla G_2(\bW_t) \|_2^2.
 \end{equation*}
Plugging the above two inequalities back into \eqref{eq:14} yields 
 \begin{align*}
    & \langle \bW_t - \bW_t^{(i)}, \nabla L_{S^{\setminus i}}(\bW_t) - \nabla L_{S^{\setminus i}}(\bW^{(i)}_t)  \rangle  =  G_1(\bW_t) -G_1(\bW_t^{(i)}) +  G_2(\bW_t^{(i)}  ) -    G_2(\bW_t  )\\
     &\!\ge\! 2\eta \big(1\!-\! {4\eta \hat{\rho}} \big)\big\| \nabla L_{S^{\setminus i}}\!(\bW_t) \!-\! \nabla L_{S^{\setminus i}}\!(\bW^{(i)}_t)   \!\big\|_2^2 \!\! -\! \tilde{\epsilon}_t \big\| \bW_t \!-\!\bW_t^{(i)} \!\!-\!\eta \big( \nabla L_{S^{\setminus i}}\!(\bW_t) \!-\! \nabla L_{S^{\setminus i}}\!(\bW_t^{(i)}) \big) \!\big\|_2^2.
 \end{align*} 
The desired result has been proved. 
   
Now,  we give the proof of \eqref{eq:3-coercivity-4} and \eqref{eq:3-coercivity-5}. For $\alpha\in[0,1]$, let $ \bW(\alpha) = \alpha \bW_t  + (1-\alpha)\bW_t^{(i)} -\alpha \eta \big( \nabla L_{S^{\setminus i}}(\bW_t)-\nabla L_{S^{\setminus i}}(\bW_t^{(i)}) \big)$.     For any $\alpha\in[0,1]$, it's obvious that  
$\| \bW(\alpha) \|_2\le 2(\sqrt{2c_0\eta t}+\|\bW_0\|_2)$ by using Lemma~\ref{lem:3-unbounded-wbound}.  Combining this observation  with \eqref{eq:hessian-5} we can obtain
\begin{align}\label{eq:3-coercivity-6}
    &\lambda_{\min}\big(\nabla^2 L_{S^{\setminus  i}}(\bW(\alpha) ) \big)\nonumber
    \\
    &\ge  -  \max_{j\in[n]} \| \nabla^2 f_{\bW(\alpha)}(\bx_j)  \|_{op}  \big|  f_{\bW(\alpha) }(\bx_j) -y_j \big|  \nonumber\\
    &\ge - C_{\bW(\alpha)} \max_{j\in[n]}\Big(  \big|  f_{\bW(\alpha) }(\bx_j) - f_{\bW_t^{(i)} }(\bx_j)\big| + \big| f_{\bW_t^{(i)} }(\bx_j)- f_{\mathbf{0}}(\bx_j) \big| + \big|   f_{\mathbf{0}}(\bx_j)-y_j  \big|  \Big)\nonumber\\
    &\ge - C_{\bW(\alpha)} \Big(  \Big( \frac{B_{\sigma'}^2 c_{\bx}}{m^{2c-\frac{1}{2}}}\|\bW(\alpha)\|_2 + \frac{B_{\sigma'}B_\sigma}{m^{2c-1}} \Big)\big( \| \bW(\alpha) - \bW_t^{(i)}\|_2 +  \|  \bW_t^{(i)} \|_2 \big)  +\sqrt{2c_0}  \Big)\nonumber\\
    &\ge - C_{\bW(\alpha)}  \Big( B_3\big( \frac{\sqrt{\eta T}+\|\bW_0\|_2}{m^{2c-\frac{1}{2}}}+ m^{1-2c} \big)\big( \|\bW_t-\bW_t^{(i)}\|_2 +  \eta \| \nabla \ell(\bW_t) - \ell(\bW_t^{(i)}) \|_2 \nonumber\\
    &\quad +\|\bW_t^{(i)}\|_2 \big) + \sqrt{2c_0 } \Big)\nonumber\\
    &\ge - \tilde{\epsilon}_t
    , 
\end{align}
 where  the second inequality is due to   \eqref{eq:3-hessian-4}, the third inequality is according to \eqref{eq:3-nabla} and in the last inequality  we used Lemma~\ref{lem:3-unbounded-wbound}. Similarly, 
let $ \widetilde{\bW}(\alpha) = \alpha \bW_t^{(i)}  + (1-\alpha)\bW_t  -\alpha \eta \big(\nabla L_{S^{\setminus i}}(\bW_t^{(i)})- \nabla L_{S^{\setminus i}}(\bW_t)  \big)$,   we can also control $\lambda_{\min}\big(\nabla^2 L_{S^{\setminus  i}}(\widetilde{\bW}(\alpha) ) \big) $ by $- \tilde{\epsilon}_t$.

Let $\Delta=\big\| \bW_t -\bW_t^{(i)} -\eta \big( \nabla L_{S^{\setminus i}}(\bW_t) - \nabla L_{S^{\setminus i}}(\bW_t^{(i)}) \big) \big\|_2^2 $. We define \[g_1(\alpha)= G_1(\bW(\alpha)) +\frac{\tilde{\epsilon}_t\alpha^2 }{2} \Delta,  \qquad g_2(\alpha)= G_2(\widetilde{\bW}(\alpha)) +\frac{\tilde{\epsilon}_t \alpha^2}{2} \Delta .\]
 From \eqref{eq:3-coercivity-6}   we know that $g_1''(\alpha)\ge 0$  for any $\alpha\in[0,1]$. Hence, $g_1$ is convex on $[0,1]$.  Then  from the convexity of $g_1$ we can get that
 \begin{align*}
     0=g_1'(0)&\le g_1(1)-g_1(0) \le  G_1(\bW_t-\eta \nabla G_1(\bW_t) )  +\frac{\tilde{\epsilon}_t}{2}\Delta - G_1(\bW_t^{(i)}),
 \end{align*}
 which completes the proof of \eqref{eq:3-coercivity-4}. We can also show $g_2(\alpha)$ is convex on $[0,1]$ and prove \eqref{eq:3-coercivity-5} in a similar way. 
 The proof is completed. 
\end{proof}
%Let $C_0$ be a constant defined as \begin{equation}\label{eq:C_0}C_0=\big(4B^2_{\sigma'}B_{\sigma''}c_{\bx}^2 r^2 + 2( B_{\sigma'} B_{\sigma''} c_{\bx}^2+ 2B_{\sigma''}B_{\sigma'}B_\sigma c_\bx)r + B_{\sigma''}B_{\sigma}+ 2 B_{\sigma'}^2 c_\bx\big).\end{equation}
Based on Lemma~\ref{lem:3-unbounded-smoothness}, Lemma~\ref{lem:3-unbounded-wbound} and  Lemma~\ref{lem:3-unbounded-coercivity} we can establish the following uniform stability bounds for three-layer neural networks. 

\begin{theorem}[Uniform Stability]\label{thm:3-unbounded-stability}
Suppose Assumptions~\ref{ass:activation} and \ref{ass:loss} hold. Let $S$ and $ S^{(i)}$ be constructed in Definition~\ref{def:stability}. Let $\{\bW_t\}$ and $\{\bW_t^{(i)}\}$ be produced by \eqref{eq:GD} with $\eta\le 1/(8\hat{\rho})$ based on $S$ and $S^{(i)}$, respectively.  
Assume \eqref{eq:3-m-condition-1}  holds. 
Then, for any $t\in[T]$, there holds
\begin{align*}
     \big\| \bW_{t } -\bW_{t }^{(i)} \big\|_2 \le  \frac{2 e   \eta T  \sqrt{2c_0\hat{\rho}   (  \hat{\rho}\eta T   +   2 )} }{ n   }  .
\end{align*}
\end{theorem}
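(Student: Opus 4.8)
The plan is to mirror the uniform stability argument for two-layer networks (Theorem~\ref{thm:stability}), replacing the uniform smoothness constant $\rho$ by the trajectory-dependent constant $\hat{\rho}=4B_2(1+2B_1)$ supplied by Lemma~\ref{lem:3-unbounded-wbound}, and replacing the almost co-coercivity of Lemma~\ref{lem:coercivity} by its three-layer counterpart Lemma~\ref{lem:3-unbounded-coercivity}. The claimed estimate is then obtained by an induction on $t$ whose inductive hypothesis is exactly the target bound; the genuinely new work is verifying that condition~\eqref{eq:3-m-condition-1} suffices to close the induction.

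First I would set up the one-step recursion. Writing $L_{S^{\setminus i}}=L_S-\tfrac1n\ell(\cdot;z_i)=L_{S^{(i)}}-\tfrac1n\ell(\cdot;z_i')$ and inserting the GD update, I expand
\[\big\|\bW_{t+1}-\bW_{t+1}^{(i)}\big\|_2^2=\big\|\bW_t-\bW_t^{(i)}-\eta\big(\nabla L_{S^{\setminus i}}(\bW_t)-\nabla L_{S^{\setminus i}}(\bW_t^{(i)})\big)-\tfrac{\eta}{n}\big(\nabla\ell(\bW_t;z_i)-\nabla\ell(\bW_t^{(i)};z_i')\big)\big\|_2^2,\]
and apply $(a+b)^2\le(1+p)a^2+(1+1/p)b^2$ to separate the $L_{S^{\setminus i}}$-part from the perturbed coordinate. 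Expanding the square of the $L_{S^{\setminus i}}$-part and inserting Lemma~\ref{lem:3-unbounded-coercivity}, the squared-gradient term acquires the coefficient $\eta^2(16\eta\hat{\rho}-3)$, which is nonpositive because $\eta\le 1/(8\hat{\rho})$ forces $16\eta\hat{\rho}\le 2$; hence it drops out and I obtain $(1-2\eta\tilde{\epsilon}_t)\big\|\bW_t-\bW_t^{(i)}-\eta(\cdots)\big\|_2^2\le\big\|\bW_t-\bW_t^{(i)}\big\|_2^2$. This yields
\[\big\|\bW_{t+1}-\bW_{t+1}^{(i)}\big\|_2^2\le\frac{1+p}{1-2\eta\tilde{\epsilon}_t}\big\|\bW_t-\bW_t^{(i)}\big\|_2^2+\frac{2\eta^2(1+1/p)}{n^2}\Big(\big\|\nabla\ell(\bW_t;z_i)\big\|_2^2+\big\|\nabla\ell(\bW_t^{(i)};z_i')\big\|_2^2\Big).\]

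Next I would unroll this recursion from $\bW_0=\bW_0^{(i)}$ and control the gradient norms. Since Lemma~\ref{lem:3-unbounded-wbound} shows $\ell$ is $\hat{\rho}$-smooth along the GD trajectory and $\|\bW_j-\bW_0\|_2\le\sqrt{2c_0\eta j}$, the self-bounding property (Lemma~\ref{lem:self-bounding}) gives $\|\nabla\ell(\bW_j;z)\|_2^2\le 2\hat{\rho}^2\|\bW_j-\bW_0\|_2^2+4\hat{\rho}\ell(\bW_0;z)\le 4\hat{\rho}c_0(\hat{\rho}\eta j+1)$. Choosing $p=1/t$, using $(1+1/t)^t\le e$ and $\ell(\bW_0;z)\le c_0$, I reach
\[\big\|\bW_{t+1}-\bW_{t+1}^{(i)}\big\|_2^2\le\frac{8\hat{\rho}\eta^2 c_0 e(1+t)^2(\hat{\rho}\eta t+2)}{n^2}\prod_{\tilde j=1}^{t}\frac{1}{1-2\eta\tilde{\epsilon}_{\tilde j}},\]
which exactly parallels the two-layer display with $\rho,\epsilon$ replaced by $\hat{\rho},\tilde{\epsilon}$.

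The induction then finishes the proof, and its closing step is where the main difficulty lies. Assuming the target bound for all $k\le t$, I substitute the inductive value of $\|\bW_k-\bW_k^{(i)}\|_2$ into the definition of $\tilde{\epsilon}_k$ from Lemma~\ref{lem:3-unbounded-coercivity} to get a uniform bound $\tilde{\epsilon}_{\tilde j}\le\tilde{\epsilon}'$; provided $2\eta\tilde{\epsilon}'\le 1/(t+1)$ one has $\prod_{\tilde j=1}^t(1-2\eta\tilde{\epsilon}')^{-1}\le(1-1/(t+1))^{-t}\le e$, collapsing the product and giving $\|\bW_{t+1}-\bW_{t+1}^{(i)}\|_2\le 2e\eta T\sqrt{2c_0\hat{\rho}(\hat{\rho}\eta T+2)}/n$. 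The hard part is verifying $2\eta\tilde{\epsilon}'\le 1/(t+1)$ under~\eqref{eq:3-m-condition-1}: since $\tilde{\epsilon}'$ is the product of $C_{3,T}$ (a sum of the $m^{-3c},m^{1/2-2c},m^{1/2-3c},m^{1-3c}$ scales) with the factor $B_3\big(m^{1/2-2c}(\sqrt{\eta T}+\|\bW_0\|_2)+m^{1-2c}\big)$ and a bracket of size $\O(\sqrt{\eta T}+\|\bW_0\|_2)$, each cross term must be forced below $1/T$ separately. Multiplying the dominant $C_{3,T}$ scales $\{m^{-3c},m^{1/2-2c}\}$ by the factor scales $\{m^{1/2-2c},m^{1-2c}\}$ produces precisely the exponents $5c-\tfrac12$, $5c-1$, $4c-1$, $4c-\tfrac32$, which together with the simpler $\eta T$ and $\|\bW_0\|_2$ powers inherited from Lemma~\ref{lem:3-unbounded-wbound} reconstitute the summands of~\eqref{eq:3-m-condition-1}. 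I expect this term-by-term matching, rather than any conceptual step, to be the main obstacle.
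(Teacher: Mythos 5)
Your proposal is correct and follows essentially the same route as the paper's proof: the same one-step recursion via $(a+b)^2\le(1+p)a^2+(1+1/p)b^2$, the same use of Lemma~\ref{lem:3-unbounded-coercivity} giving the nonpositive coefficient $\eta^2(16\eta\hat{\rho}-3)$ on the squared gradient difference, the same self-bounding control $\|\nabla\ell(\bW_j;z)\|_2^2\le 4c_0\hat{\rho}(\hat{\rho}\eta j+1)$ with $p=1/t$, and the same induction closed by forcing $2\eta\tilde{\epsilon}'\le 1/(t+1)$ under condition~\eqref{eq:3-m-condition-1}. Your term-by-term matching of the scale products (yielding the exponents $5c-\tfrac12$, $5c-1$, $4c-1$, $4c-\tfrac32$) is exactly the verification the paper performs but leaves largely implicit by deferring to the proof of Theorem~\ref{thm:stability}.
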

\begin{proof}%[Proof of Theorem~\ref{thm:3-unbounded-stability}]
Similar to   \eqref{eq:stab-1}, by the update rule $\bW_{t+1}=  \bW_{t} - \eta \nabla L_S(\bW_t)  $  we know
\begin{align}\label{eq:3-stab-1}
     \big\| \bW_{t+1} -\bW_{t+1}^{(i)} \big\|_2^2 
    &\le (1+p)\big\| \bW_{t} -\bW_{t}^{(i)} -\eta\big(\nabla  L_{S^{\setminus i}}(\bW_t) -\nabla  L_{S^{\setminus i}}(\bW_{t}^{(i)} ) \big) \big\|_2^2 \nonumber\\
    &\quad + \frac{ 2 \eta^2 (1+1/p)}{ n^2}\big(\big\| \nabla \ell(\bW_t;z_i)\big\|_2^2+ \big\| \nabla \ell(\bW_t^{(i)};z_i')   \big\|_2^2\big).
\end{align}
From Lemma~\ref{lem:3-unbounded-coercivity} we know
\begin{align*}
     &\big\| \bW_{t} -\bW_{t}^{(i)} -\eta\big( \nabla L_{S^{\setminus i}}(\bW_t) -\nabla  L_{S^{\setminus i}}(\bW_{t}^{(i)} ) \big) \big\|_2^2 \\
    & =  \big\| \bW_{t} - \bW_{t}^{(i)}   \big\|_2^2  + \eta^2 \big\| \nabla  L_{S^{\setminus i}}(\bW_t)  -  \nabla L_{S^{\setminus i}}(\bW_{t}^{(i)} )   \big\|_2^2    -  2\eta \Big\langle \bW_{t}  -
     \bW_{t}^{(i)},\nabla L_{S^{\setminus i}}(\bW_t)  -  \nabla L_{S^{\setminus i}}(\bW_{t}^{(i)} )   \Big\rangle\\
    &\le \big\| \bW_{t} -\bW_{t}^{(i)}   \big\|_2^2 +\eta^2 \big\|  \nabla L_{S^{\setminus i}}(\bW_t) - \nabla L_{S^{\setminus i}}(\bW_{t}^{(i)} )   \big\|_2^2  - 4\eta^2 \big(1- {4\eta \hat{\rho}} \big)\big\| \nabla L_{S^{\setminus i}}(\bW_t) - \nabla L_{S^{\setminus i}}(\bW^{(i)}_t)   \big\|_2^2  \\&\quad + 2\eta \tilde{\epsilon}_t  \big\| \bW_t -\bW_t^{(i)} -\eta \big( \nabla L_{S^{\setminus i}}(\bW_t) - \nabla L_{S^{\setminus i}}(\bW_t^{(i)}) \big) \big\|_2^2.
\end{align*} 
Note $4\eta \hat{\rho} \le 1/2$ implies $1-4(1- {4\eta  \hat{\rho}} ) < 0$ and condition~\eqref{eq:3-m-condition-1} ensures that $2\eta \tilde{\epsilon}_t   <  1$ for any $t\in[T]$. Then from the above inequality we can get
\begin{align}\label{eq:3-stability-5}
    &(1-2\eta \tilde{\epsilon}_t )\big\| \bW_t -\bW_t^{(i)} -\eta \big( \nabla L_{S^{\setminus i}}(\bW_t) - \nabla L_{S^{\setminus i}}(\bW_t^{(i)}) \big) \big\|_2^2 \le  \big\| \bW_{t} -\bW_{t}^{(i)}   \big\|_2^2 . 
\end{align}
 
Now, plugging \eqref{eq:3-stability-5} back into \eqref{eq:3-stab-1} we have
\begin{align*} 
     \big\| \bW_{t+1} \!-\!\bW_{t+1}^{(i)} \big\|_2^2\!\le \! \frac{1+p}{ 1\!-\!2\eta \tilde{\epsilon}_t   }\big\| \bW_{t} \!-\!\bW_{t}^{(i)}   \big\|_2^2 \! +\! \frac{ 2 \eta^2 (1\!+\!1/p)}{ n^2}\Big(\big\| \nabla \ell(\bW_t;z_i)\big\|_2^2\!+\! \big\| \nabla \ell(\bW_t^{(i)};z_i')   \big\|_2^2\Big).
\end{align*}
Applying the above inequality recursively and note that $\bW_0=\bW_0^{(i)}$ we get
\begin{align*}
     \big\| \bW_{t+1} -\bW_{t+1}^{(i)} \big\|_2^2&\le   \frac{ 2 \eta^2 (1+1/p)}{ n^2} \sum_{j=0}^t  \big(\big\| \nabla \ell(\bW_j;z_i)\big\|_2^2+ \big\| \nabla \ell(\bW_j^{(i)};z_i')   \big\|_2^2\big)\prod_{\tilde{j}=j+1}^{t} \frac{1+p}{ 1-2\eta \tilde{\epsilon}_{\tilde{j}} }\nonumber\\
     &\le  \frac{  2\eta^2 (1+1/p) (1+p)^t}{ n^2 (1-2\eta \tilde{\epsilon}_j )^t} \sum_{j=0}^t  \big(\big\| \nabla \ell(\bW_j;z_i)\big\|_2^2+ \big\| \nabla \ell(\bW_j^{(i)};z_i')   \big\|_2^2\big) . 
\end{align*}
Let $p=1/t$ and note that $(1+1/t)^t\le e$, we have 
\begin{align}\label{eq:3-stab-3}
     \big\| \bW_{t+1} -\bW_{t+1}^{(i)} \big\|_2^2 \le  \frac{2 e  \eta^2 (1+t)  }{ n^2  } \sum_{j=0}^t  \big(\big\| \nabla \ell(\bW_j;z_i)\big\|_2^2+ \big\| \nabla \ell(\bW_j^{(i)};z_i')   \big\|_2^2\big)\prod_{\tilde{j}=j+1}^{t} \frac{1 }{ 1-2\eta \tilde{\epsilon}_{\tilde{j}} } . 
\end{align}
According to   Lemma~\ref{lem:3-unbounded-smoothness} and Lemma~\ref{lem:self-bounding},  Assumption~\ref{ass:loss} and noting that $\|\bW_j-\bW_0\|_2\le \sqrt{2c_0\eta j}$ for any $j\le t$, we know
\begin{align*}
     \| \nabla \ell(\bW_j ;z) \|_2^2 &\le 2\| \nabla \ell(\bW_j;z) - \nabla \ell(\bW_0;z) \|_2^2 + 2 \|  \nabla \ell(\bW_0;z) \|_2^2\\
     & \le 2 \hat{\rho}^2 \|\bW_j -\bW_0\|_2^2 + 4\hat{\rho}  \ell(\bW_0;z) \le 4c_0\hat{\rho} ( \hat{\rho} \eta j  + 1) . 
\end{align*}
Similarly, we have
\begin{align*}
     \| \nabla \ell(\bW_j^{(i)} ;z) \|_2^2  \le 4c_0\hat{\rho} ( \hat{\rho} \eta j  + 1) .
\end{align*}
Combining the above three inequalities together, we get 
\begin{align*}
     &\big\| \bW_{t+1} -\bW_{t+1}^{(i)} \big\|_2^2\le  \frac{8c_0 e  \eta^2 (1+t)^2 \hat{\rho}\big(  \hat{\rho}   \eta t  +  2) \big) }{ n^2  } \prod_{\tilde{j}=j+1}^{t} \frac{1 }{ 1-2\eta \tilde{\epsilon}_{\tilde{j}} }   . 
\end{align*}
Similar to the proof of Theorem~\ref{thm:stability}, we can derive the following stability result by induction
 \begin{align*}
     &\big\| \bW_{t+1} -\bW_{t+1}^{(i)} \big\|_2\le  \frac{2 e   \eta T   \sqrt{ 2c_0 \hat{\rho}( \hat{\rho}\eta T+2 )}}{ n  }  . 
\end{align*}
Here, the condition $   \frac{1}{(1-2\eta \tilde{\epsilon}_j) ^t} \le \Big( \frac{1}{1-1/(t+1)} \Big)^t \le e$ is ensured by condition~\eqref{eq:3-m-condition-1}, i.e.,
\begin{align*}
     m\!\gtrsim & \big( (\eta T\mathcal{B}_{T})^2  + \frac{(\eta T)^{\frac{7}{2}}\mathcal{B}_{T}  }{n}\big)^{\frac{1}{5c-\frac{1}{2}}} + \big( (\eta T)^{\frac{3}{2}} \mathcal{B}_{T}^2 \!+\! \frac{(\eta T)^3\mathcal{B}_{T} }{n} \big)^{\frac{1}{4c-1}}\!\!+\! \big( (\eta T)^2 \mathcal{B}_{T}  \!+\!\frac{(\eta T)^{\frac{7}{2}}}{n}\big)^{\frac{1}{5c-1}}\\
    &\!+\!\big( (\eta T)^{\frac{3}{2}} \mathcal{B}_{T} \!+\! \frac{(\eta T)^3}{n}\big)^{\frac{1}{4c-\frac{3}{2}}}
\end{align*} 
with  $\mathcal{B}_{T}=\sqrt{\eta T}+\|\bW_0\|_2$. 
The proof is the same as Theorem~\ref{thm:stability}, we omit it for simplicity. 
\end{proof}

We can combine Theorem~\ref{thm:3-unbounded-stability} and Lemma~\ref{lem:connection} together to get the upper bound of the generalization error.

\begin{proof}[Proof of Theorem~\ref{thm:3-unbounded-generalization}]
The proof is similar to that of Theorem~\ref{thm:generalization}. 
From \eqref{eq:3-stab-3} we know that
\begin{align*}
     \big\| \bW_{t+1} -\bW_{t+1}^{(i)} \big\|_2^2 \le  \frac{4  e^2 \eta^2 \hat{\rho} (1+t) }{n^2} \sum_{j=0}^t  \Big(  \ell(\bW_j;z_i) +   \ell(\bW_j^{(i)};z_i')     \Big),
\end{align*}
where  we used  the self-bounding property of the smooth loss (Lemma~\ref{lem:self-bounding}).  

Then, taking an average over $i\in[n]$ and noting that $\E\big[\ell(\bW_j;z_i)\big]=\E\big[\ell(\bW_j^{(i)};z_i') \big]$, we have
\begin{align*}
    \frac{1}{n}\sum_{i=1}^n \E \big\| \bW_{t+1} -\bW_{t+1}^{(i)} \big\|_2^2 &\le   \frac{8  e^2 \eta^2 \hat{\rho} (1+t) }{n^2} \sum_{j=0}^t     \E\big[ L_S(\bW_j )\big],  
\end{align*}

Combining the above stability bounds with  Lemma~\ref{lem:connection} together and noting that $ L_S(\bW_t)  \le \frac{1}{t}\sum_{j=1}^{t-1} L_S( \bW_j ) $ \cite{richards2021stability}, the desired result is obtained.
\end{proof}

 \subsection{Proofs of Optimization Bounds}\label{appendix:three-optimization}
To show optimization error bounds, we first introduce the following lemma on the bound of GD iterates. 
\begin{lemma}\label{lem:3-unbounded-distance}
Suppose Assumptions~\ref{ass:activation} and \ref{ass:loss} hold, and $\eta \le 1/(8\hat{\rho})$. Assume \eqref{eq:3-m-condition-1} and \eqref{m-condition-unbounded2} hold.    
 Then for any $t\in[T]$, there holds
\begin{align*}
   &1\vee \E[ \|\bW_{\frac{1}{\eta T}}^{*} - \bW_t  \|_2^2 ] \le \Big( \frac{16  e^2 \eta^3   t^2 \hat{\rho}^2}{ n^2}\! +\! \frac{16 e  \eta^2 t  \hat{\rho}       }{n } \Big) \sum_{s=0}^{t-1} \E \big[  L_S(\bW_s) \big]       \! +\!  2 \|\bW_{\frac{1}{\eta T}}^{*} \!\!-\!\bW_0 \|_2^2   \!+\! 2\eta T \big[  L(\bW_{\frac{1}{\eta T}}^{*})-L(\bW^{*})\big] .
\end{align*}
\end{lemma}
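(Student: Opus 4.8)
The plan is to mirror the two-layer argument of Lemma~\ref{lem:distance}, but to replace the uniformly bounded curvature of the two-layer loss by the \emph{trajectory-dependent} curvature estimate of Lemma~\ref{lem:3-unbounded-smoothness}, using the norm control from Lemma~\ref{lem:3-unbounded-wbound} to tame it. First I would fix $t\in[T]$, take $\widetilde{\bW}=\bW_t$, and for any $\bW$ set $\bW(\alpha)=\bW_t+\alpha(\bW-\bW_t)$ for $\alpha\in[0,1]$. Combining Lemma~\ref{lem:3-unbounded-smoothness} (which bounds $\max_j\|\nabla^2 f_{\bW(\alpha)}(\bx_j)\|_{op}$ by $C_{\bW(\alpha)}$) with the residual expansion used in \eqref{eq:3-hessian-6} to bound $\max_j|f_{\bW(\alpha)}(\bx_j)-y_j|$, and feeding in the a priori bound $\|\bW_t-\bW_0\|_2\le\sqrt{2c_0\eta t}$ (hence $\|\bW(\alpha)\|_2\le\sqrt{2c_0\eta T}+\|\bW_0\|_2$) from Lemma~\ref{lem:3-unbounded-wbound}, I would arrive at the weak-convexity estimate
\[
\lambda_{\min}\bigl(\nabla^2 L_S(\bW(\alpha))\bigr)\ge -\hat{C}_{\bW}\,\hat{B}_{\bW},
\]
where $\hat{C}_{\bW}$ and $\hat{B}_{\bW}$ are exactly the quantities defined just before Theorem~\ref{thm:opt-unbounded}.

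With this in hand I would define the auxiliary function $g(\alpha)=L_S(\bW(\alpha))+\tfrac12\hat{C}_{\bW}\hat{B}_{\bW}\,\alpha^2\,(1\vee\E[\|\bW-\bW_t\|_2^2])$, which is convex on $[0,1]$ since $g''(\alpha)\ge 0$. Convexity gives $g(1)-g(0)\ge g'(0)$, which after rearrangement yields the three-layer analog of \eqref{eq:opt-7}, namely
\[
L_S(\bW_t)\le L_S(\bW)+\tfrac12\hat{C}_{\bW}\hat{B}_{\bW}(1\vee\E[\|\bW-\bW_t\|_2^2])-\langle\bW-\bW_t,\nabla L_S(\bW_t)\rangle.
\]
Using $\hat\rho$-smoothness along the trajectory (Lemma~\ref{lem:3-unbounded-wbound}) together with $\eta\le 1/(8\hat\rho)$ to obtain $L_S(\bW_{t+1})\le L_S(\bW_t)-\tfrac{\eta}{2}\|\nabla L_S(\bW_t)\|_2^2$, and then invoking the identity $2\langle x-y,x-z\rangle=\|x-y\|_2^2+\|x-z\|_2^2-\|y-z\|_2^2$, I would obtain the telescoping recursion in $\|\bW-\bW_t\|_2^2$ exactly in the form of \eqref{eq:opt-2}.

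Next I would specialize to $\bW=\bW^{*}_{\frac{1}{\eta T}}$, average over $s=0,\dots,t-1$, and substitute the generalization bound of Theorem~\ref{thm:3-unbounded-generalization}. The key saving, which removes the extra assumption of \cite{lei2022stability}, is to bound $L(\bW^{*}_{\frac{1}{\eta T}})-L(\bW_s)\le L(\bW^{*}_{\frac{1}{\eta T}})-L(\bW^{*})$, valid because $L(\bW_s)\ge L(\bW^{*})$. The deviation bound $\|\bW^{*}_{\frac{1}{\eta T}}-\bW_s\|_2\le\sqrt{2c_0\eta T}+\|\bW^{*}_{\frac{1}{\eta T}}-\bW_0\|_2$ from Lemma~\ref{lem:3-unbounded-wbound} then lets me collapse the $\hat{B}_{\bW^{*}_{\frac{1}{\eta T}}}$-type factors. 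Finally, writing $x=\max_{s\in[T]}\E[\|\bW^{*}_{\frac{1}{\eta T}}-\bW_s\|_2^2]\vee 1$, I would reorganize the recursion into a self-bounding inequality $x\le(\mathrm{coeff})\,x+(\mathrm{rest})$ and invoke condition \eqref{m-condition-unbounded2} to guarantee $\mathrm{coeff}\le 1/2$; solving for $x$ then produces the stated bound, with the factor $2$ in front of the last two terms arising from this absorption.

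The main obstacle is precisely the first step. Unlike the two-layer case where $\|\nabla^2 f_{\bW}\|_{op}\le c_\bx^2 B_{\sigma''}/m^{c}$ is uniform in $\bW$, for three-layer NNs the curvature $C_{\bW}$ grows with $\|\bW^{(2)}\|_2$, so the weak-convexity constant is genuinely trajectory-dependent and must be controlled by the induction-based norm bound of Lemma~\ref{lem:3-unbounded-wbound}. The delicate point is to route $\|\bW_t-\bW_0\|_2\le\sqrt{2c_0\eta t}$ simultaneously through $C_{\bW(\alpha)}$ and through the residual $|f_{\bW(\alpha)}(\bx_j)-y_j|$ so that the product collapses exactly to $\hat{C}_{\bW}\hat{B}_{\bW}$, and then to verify that \eqref{m-condition-unbounded2} is tight enough to force the self-bounding coefficient below $1/2$. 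The remaining manipulations are identical in form to the two-layer proof of Lemma~\ref{lem:distance}.
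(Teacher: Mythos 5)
Your proposal is correct and follows essentially the same route as the paper's own proof: the trajectory-dependent weak-convexity bound via Lemma~\ref{lem:3-unbounded-smoothness} tamed by Lemma~\ref{lem:3-unbounded-wbound}, the convex auxiliary function $g(\alpha)$, the smoothness-based descent plus the three-point identity, specialization to $\bW^{*}_{\frac{1}{\eta T}}$ with Theorem~\ref{thm:3-unbounded-generalization} and the bound $L(\bW^{*}_{\frac{1}{\eta T}})-L(\bW_s)\le L(\bW^{*}_{\frac{1}{\eta T}})-L(\bW^{*})$, and finally the self-bounding inequality in $x=\max_{s\in[T]}\E[\|\bW^{*}_{\frac{1}{\eta T}}-\bW_s\|_2^2]\vee 1$ absorbed using \eqref{m-condition-unbounded2}. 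The only cosmetic difference is that you collapse the curvature constant to $\hat{C}_{\bW}\hat{B}_{\bW}$ before the convexity argument, whereas the paper carries the $\alpha$-dependent form and collapses it only after invoking the deviation bound \eqref{eq:opt-unbounded-5}.
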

\begin{proof}%[Proof of Lemma~\ref{lem:distance}]
  For any $\bW\in \R^{md}$ and $\alpha\in[0,1]$, define $\bW(\alpha):=\bW_t + \alpha(\bW- \bW_t)$. Similar to \eqref{eq:3-coercivity-6}, according to Lemma~\ref{lem:3-unbounded-smoothness} we can show that
\begin{align*}
    \lambda_{\min}\big( \nabla^2 L_S(\bW(\alpha))\big) 
    \!\ge\!  -C_{\bW(\alpha)} \Big( \big(\frac{B_{\sigma'}^2 c_\bx}{m^{2c-1/2}} \|\bW(\alpha)\|_2\!+\!\frac{B_{\sigma'}B_{\sigma}}{m^{2c-1}}\big) \big(\|\bW \!-\! {\bW }_t  \|_2\!+\!  \| {\bW }_t     \|_2\big) \!+\! \sqrt{2c_0} \Big), 
\end{align*}
where $C_{\bW(\alpha)}=\frac{B^2_{\sigma'}B_{\sigma''}c_{\bx}^2}{m^{3c}} \big\|   \bW(\alpha)^{(2)} \big\|_2^2 + \Big(\frac{B_{
    \sigma'} B_{\sigma''} c_{\bx}^2}{m^{2c-\frac{1}{2}}}+ \frac{2B_{\sigma''}B_{\sigma'}B_\sigma c_\bx }{m^{3c-\frac{1}{2}}} \Big)\| \bW(\alpha)^{(2)} \|_2  + \frac{ B_{\sigma''}B_{
    \sigma}^2}{m^{3c-1}} 
    + \frac{ 2 B_{\sigma'}^2 c_\bx}{m^{2c-\frac{1}{2}}}$. Let $\hat{C}_\bW=4B_1\big(  m^{-3c}(\|\bW\|_2^2 + 4c_0\eta T+2\|\bW_0\|_2^2) +  m^{\frac{1}{2}-2c}(\|\bW\|_2 +  \sqrt{2c_0\eta T} + \|\bW_0\|_2) \big)$. According to Lemma~\ref{lem:3-unbounded-wbound}, we can verify that $C_{\bW(\alpha)}\le \hat{C}_\bW$ for any $\alpha\in[0,1]$.  

Now, let
\begin{align*}
    g(\alpha)\!:=& L_S( \bW(\alpha) ) \!+\! \frac{\alpha^2 \hat{C}_\bW}{2}  \Big( \!\big(\frac{B_{\sigma'}^2 c_\bx}{m^{2c-1/2}} \|\bW(\alpha)\|_2+\frac{B_{\sigma'}B_{\sigma}}{m^{2c-1}}\big)   \big( \|\bW \!-\! \bW_t  \|_2 \!+\!  \|  \bW_t    \|_2\big) \!+\! \sqrt{2c_0} \Big)\\
    &\quad \times (1 \vee \E[\|\bW  -\bW_t  \|_2^2]). 
\end{align*} 
It is obvious that $g(\alpha)$ is convex in $\alpha\in[0,1]$. 
Similar to the proof of Lemma~\ref{lem:distance}, 
by convexity of $g$ and smoothness of the loss we can show that
\begin{align}\label{eq:3-unbounded-opt-2}
   & \frac{1}{t}\sum_{s=0}^{t-1} \E[  L_S(\bW_{s})] +   \frac{\E\big[   \|\bW_{t} - \bW   \|_2^2\big] }{2\eta t}\nonumber\\ 
  &\le    \E[L_S(\bW)] + \frac{\E\big[  \|\bW \!-\! \bW_0   \|_2^2  \big]}{ 2\eta t}  + \frac{\hat{C}_\bW}{2t} \sum_{s=0}^{t-1} \Big( \big(\frac{B_{\sigma'}^2 c_\bx}{m^{2c-1/2}} \|\bW(\alpha)\|_2 + \frac{B_{\sigma'}B_{\sigma}}{m^{2c-1}}\big) \big(\E[\|\bW \!-\!
   {\bW }_s  \|_2]\nonumber\\
  &\quad +   \sqrt{2c_0\eta T} + \|\bW_0\|_2\big) + \sqrt{2c_0} \Big)(1 \vee \E[\|\bW  -\bW_s  \|_2^2]).
\end{align}

Combining the above inequality with Theorem~\ref{thm:3-unbounded-generalization} and let  $\bW=\bW^{*}_{\frac{1}{\eta T}}$, we have
\begin{align}\label{eq:3-unbounded-opt-4}
      &\frac{\E\big[   \|\bW_{t} -\bW_{\frac{1}{\eta T}}^{*}   \|_2^2 \big]}{2\eta t} \nonumber\\
      &\!\!\le   \frac{1}{t}\sum_{s=1}^{t-1} \big[  L(\bW_{\frac{1}{\eta T}}^{*})-\E[  L(\bW_s)] \big]  + \frac{   \|\bW_{\frac{1}{\eta T}}^{*} \!-\!\bW_0  \|_2^2  }{2\eta t}  \!+\!  \Big( \frac{4 e^2 \eta^2  t \hat{\rho}^2}{ n^2} \!+\!    \frac{4 e  \eta   \hat{\rho}      }{n } \Big) \sum_{s=0}^{t-1}     \E\big[ L_S(\bW_s)\big] \nonumber\\
      &  \!\! + \! \frac{1}{2t} \!\sum_{s=0}^{t-1}  \!\hat{C}_{\bW_{\frac{1}{\eta T}}^{*}} \!\!\Big( \big(\frac{B_{\sigma'}^2 c_\bx}{m^{2c - 1/2}} \|\bW( \alpha )\|_2\!+\!\frac{B_{\sigma'}B_{\sigma}}{m^{2c\!-\!1}}\big) \big(\E[\|\bW_{\frac{1}{\eta T}}^{*} \!\!- 
  \!{\bW }_{ s}  \|_2]  \!+\!   \sqrt{2c_0\eta T}\!+\!\|\bW_0\|_2 \big)  \!+\! \sqrt{2c_0} \Big)   (1 \vee \E[\|\bW_{\frac{1}{\eta T}}^{*}  - \bW_{ s}  \|_2^2])\nonumber\\
      &\!\le  L(\bW_{\frac{1}{\eta T}}^{*})-L(\bW^{*}) +  \frac{   \|\bW_{\frac{1}{\eta T}}^{*} - \bW_0  \|_2^2   }{2\eta t}  +  \Big( \frac{4 e^2 \eta^2 t \hat{\rho}^2}{ n^2} +    \frac{4 e  \eta \hat{\rho}     }{n } \Big) \sum_{s=0}^{t-1}     \E\big[ L_S(\bW_s)\big]\nonumber\\
      &   +\!  \frac{1}{2t}  \!\sum_{s=0}^{t-1} \!  \hat{C}_{\bW_{\frac{1}{\eta T}}^{*}}  \!\Big( \!\big(\frac{B_{\sigma'}^2 c_\bx}{m^{2c -1/2}} \|\bW( \alpha )\|_2 \!+\! \frac{B_{\sigma'}B_{\sigma}}{m^{2c - 1}}\big) \big(\|\bW_{\frac{1}{\eta T}}^{*}\!\!-\! 
  {\bW }_{ s}  \|_2   \!+\!   \sqrt{2c_0\eta T} \!+\! \|\bW_0\|_2\big) \!+\!  \sqrt{2c_0} \Big) (1  \vee  \E[\|\bW_{\frac{1}{\eta T}}^{*}  - \bW_{ s}    \|_2^2]), 
\end{align}
where in the second inequality we used $  L(\bW_{\frac{1}{\eta T}}^{*})- L(\bW_s)\le L(\bW_{\frac{1}{\eta T}}^{*}) - L(\bW^{*}) $ since $L(\bW_s)\ge  L(\bW^{*})$ for any $s\in[t-1]$. 
%Note that \begin{align*} \frac{1}{t}\sum_{s=0}^{t-1} \Big[L(\bW_{\frac{1}{\eta T}}^{*})-\E[  L(\bW_s)] \Big] &= \frac{1}{t}\sum_{s=0}^{t-1} \big(L(\bW_{\frac{1}{\eta T}}^{*})-\E[  L(\bW_s)] \big)\big( \mathbf{I}_{[\E[L(\bW_s)] \ge  L(\bW_{\frac{1}{\eta T}}^{*})]}+\mathbf{I}_{[\E[L(\bW_s)] \le  L(\bW_{\frac{1}{\eta T}}^{*})]}  \big)\\  &\le \frac{1}{t}\sum_{s=0}^{t-1} \big(L(\bW_{\frac{1}{\eta T}}^{*})-\E[  L(\bW_s)] \big)\mathbf{I}_{[\E[L(\bW_s)] \le  L(\bW_{\frac{1}{\eta T}}^{*})]} . \end{align*} 
 
%We have\begin{align}\label{eq:opt-4} \frac{\E\big[   \|\bW_{t} -\bW_{\frac{1}{\eta T}}^{*}   \|_2^2\big] }{2\eta t}&\le  \frac{\E\big[  \|\bW_{\frac{1}{\eta T}}^{*} -\bW_0  \|_2^2  \big]}{2\eta t}    +  \Big( \frac{4 e^2 \eta^2 \rho^2 t }{ n^2} +    \frac{4 e  \eta  \rho       }{n } \Big) \sum_{s=0}^{t-1}     \E\big[ L_S(\bW_s)\big]\nonumber\\&\quad  + \frac{c_\bx^2 B_{\sigma''}}{2m^c t }  \!\sum_{s=0}^t \! \Big(\frac{B_{\sigma'} c_\bx}{m^{c-1/2}}  \E[\|\bW_{\frac{1}{\eta T}}^{*}\!-\!\bW_s  \|_2^3 ] \!+\!  \big(\frac{B_{\sigma'} c_\bx \sqrt{2\eta T c_0}}{m^{c-1/2}}\!+\! \sqrt{2c_0}\big)\E[\|\bW_{\frac{1}{\eta T}}^{*}  -\bW_s  \|_2^2] \Big). \end{align}

According to Lemma~\ref{lem:3-unbounded-wbound}  we can get 
\begin{align}\label{eq:opt-unbounded-5}
     \|  \bW_{\frac{1}{\eta T}}^{*} -\bW_s  \|_2 \le  \|\bW_{\frac{1}{\eta T}}^{*} -\bW_0 \|_2 + \|\bW_s -\bW_0\|_2 \le  \|\bW_{\frac{1}{\eta T}}^{*} -\bW_0\|_2 + \sqrt{2 c_0\eta T  }.
\end{align}
Then there holds
\begin{align}\label{eq:3-unbounded-opt-6}
 &\big(\frac{B_{\sigma'}^2 c_\bx}{m^{2c-1/2}} \|\bW(\alpha)\|_2\!+\!\frac{B_{\sigma'}B_{\sigma}}{m^{2c-1}}\big) \big( \|\bW_{\frac{1}{\eta T}}^{*}\!-
  \!{\bW }_s  \|_2   +   \sqrt{2c_0\eta T} + \|\bW_0\|_2 \big)  + \sqrt{2c_0}\nonumber\\
  & \le  \Big(\frac{B_{\sigma'}^2 c_\bx}{m^{2c -1/2}} (2\sqrt{2 c_0\eta T }+2\|\bW_0\|_2  +  \|\bW_{\frac{1}{\eta T}}^{*} -\bW_0\|_2 )+\frac{B_{\sigma'}B_{\sigma}}{m^{2c-1}}\Big)  ( 2\sqrt{2 c_0 \eta T} +\|\bW_0\|_2  +  \|\bW_{\frac{1}{\eta T}}^{*} -\bW_0\|_2  )  + \sqrt{2c_0}. 
\end{align}
Plugging the above inequality    back into \eqref{eq:3-unbounded-opt-4} and multiplying both sides by $2\eta t$ yields
\begin{align*}
       \E\big[   \|\bW_{t} -\bW_{\frac{1}{\eta T}}^{*}\|_2^2\big]  
      & \le     \|\bW_{\frac{1}{\eta T}}^{*} \!\!   -\! \bW_0 \|_2^2    \!+\!  \eta \hat{C}_{\bW_{\frac{1}{\eta T}}^{*}} \! \Big( \Big(\frac{B_{\sigma'}^2 c_\bx}{m^{2c - 1/2}} (2\sqrt{2 c_0\eta T } + \|\bW_0\|_2 \!+\!   \|\bW_{\frac{1}{\eta T}}^{*} \!\!-\!\bW_0\|_2) \!+\!\frac{B_{\sigma'}B_{\sigma}}{m^{2c - 1}}\Big) \\
      &\quad \times ( 2\sqrt{2\eta T c_0} +\|\bW_0\|_2+    \|\bW_{\frac{1}{\eta T}}^{*} -\bW_0 \|_2  )   +  \sqrt{2c_0}\Big) \sum_{s=0}^{t-1}     (1 \vee \E[\|\bW_{\frac{1}{\eta T}}^{*}  - \bW_s  \|_2^2])  \\& \quad  +  \Big( \frac{8 e^2 \eta^3 t^2 \hat{\rho}^2}{ n^2}  +     \frac{8e  \eta^2   t   \hat{\rho}  }{n } \Big) \sum_{s=0}^{t-1}     \E\big[ L_S(\bW_s)\big]   + 2\eta T \big[  L(\bW_{\frac{1}{\eta T}}^{*})\!- L(\bW^{*})\big]   .
\end{align*}
Let $x=\max_{s\in[T]}\E[ \|\bW_{\frac{1}{\eta T}}^{*} -\bW_s  \|_2^2 ]\vee 1 $. Then the above inequality implies
\begin{align*}
      x  
      & \le     \|\bW_{\frac{1}{\eta T}}^{*}\! \!-\!\bW_0  \|_2^2   \! +\!  \eta T\hat{C}_{\bW_{\frac{1}{\eta T}}^{*}}\!\! \Big( \Big(\frac{B_{\sigma'}^2 c_\bx}{m^{2c\!-\!1/2}} (2\sqrt{2 c_0\eta T } + \|\bW_0\|_2\!+\! \|\bW_{\frac{1}{\eta T}}^{*}\!\!-\bW_0 \|_2)\!+\!\frac{B_{\sigma'}B_{\sigma}}{m^{2c-1}}\Big)\\
      &\times ( 2\sqrt{2\eta T c_0}\!+\!\|\bW_0\|_2\!+ \! \|\bW_{\frac{1}{\eta T}}^{*} \!\!-\bW_0\|_2 )  \!+
      \!\sqrt{2c_0}\Big)x     \!+\!  \Big( \frac{8 e^2 \eta^3   t^2 \hat{\rho}^2}{ n^2} \!+\!    \frac{8 e  \eta^2   t   \hat{\rho}  }{n } \Big) \sum_{s=0}^{t-1}     \E\big[ L_S(\bW_s)\big]\\
      &+2\eta T \big[  L(\bW_{\frac{1}{\eta T}}^{*})-L(\bW^{*})\big]  .
\end{align*}
Note that condition \eqref{m-condition-unbounded2}   implies that  $ \eta T\hat{C}_{\bW_{\frac{1}{\eta T}}^{*}} \big(  (\frac{B_{\sigma'}^2 c_\bx}{m^{2c - 1/2}} (2\sqrt{2 c_0\eta T } + \|\bW_0\|_2 +  \|\bW_{\frac{1}{\eta T}}^{*} -\bW_0 \|_2) + \frac{B_{\sigma'}B_{\sigma}}{m^{2c-1}} ) ( 2\sqrt{2\eta T c_0} + \|\bW_0\|_2 +  \|\bW_{\frac{1}{\eta T}}^{*}  -\bW_0\|_2 )  +
       \sqrt{2c_0}\big)\le \frac{1}{2}$,  then there holds
\[ x \le  \Big( \frac{16  e^2 \eta^3  t^2 \hat{\rho}^2}{ n^2} +    \frac{16 e  \eta^2 t \hat{\rho}       }{n } \Big) \sum_{s=0}^{t-1}     \E\big[ L_S(\bW_s)\big] +  2   \|\bW_{\frac{1}{\eta T}}^{*}\!\!-\!\bW_0    \|_2^2     +2\eta T \big[  L(\bW_{\frac{1}{\eta T}}^{*})-L(\bW^{*})\big] .  \]
It then follows that
\begin{align*}
    &1\vee \E[ \|\bW_{\frac{1}{\eta T}}^{*} \!-\!\bW_t  \|_2^2 ]   \le \Big( \frac{16  e^2 \eta^3  t^2 \hat{\rho}^2}{ n^2}\! +\!    \frac{16 e  \eta^2 t \hat{\rho}       }{n } \Big) \sum_{s=0}^{t-1}     \E\big[ L_S(\bW_s)\big] \!+\!  2 \|\bW_{\frac{1}{\eta T}}^{*} \!\! -\!\bW_0 \|_2^2   \! +\!2\eta T \big[  L(\bW_{\frac{1}{\eta T}}^{*})\!-\! L(\bW^{*})\big] .
\end{align*}
This completes the proof. 
\end{proof}

\begin{proof}[Proof of Theorem~\ref{thm:opt-unbounded}]
Combining  \eqref{eq:3-unbounded-opt-6} and  \eqref{eq:3-unbounded-opt-2} with $\bW=\bW^{*}_{\frac{1}{\eta T}}$ together yields 
\begin{align}\label{eq:3-pot-unbounded-5}
    \frac{1}{T}\sum_{s=0}^{T-1} \E[  L_S(\bW_{s})]  
  &\!\le\!   \E[L_S(\bW^{*}_{\frac{1}{\eta T}})]\! +\! \frac{   \|\bW^{*}_{\frac{1}{\eta T}} \!\!-\!\bW_0\|_2^2 }{ 2\eta T}\! +\! \frac{\hat{C}_{\bW^{*}_{\frac{1}{\eta T}}}}{2T} \!\!\sum_{s=0}^{T-1} \! \Big( \!\big(\frac{B_{\sigma'}^2 c_\bx}{m^{2c-1/2}} (2\sqrt{2 c_0\eta T }\!+\!\|\bW_0\|_2  \!+\!  \|\bW_{\frac{1}{\eta T}}^{*} \!\!-\! \bW_0\|_2 ) \nonumber\\
  & \quad + \frac{B_{\sigma'}B_{\sigma}}{m^{2c-1}}\big)  \big(\E[\|\bW^{*}_{\frac{1}{\eta T}} -
   {\bW }_s  \|_2] +    2\sqrt{2c_0\eta T}  + \|\bW_0\|_2\big) + \sqrt{2c_0} \Big) (1 \vee \E[\|\bW^{*}_{\frac{1}{\eta T}}  - \bW_s  \|_2^2]) \nonumber\\
  & \le     \E[L_S(\bW^{*}_{\frac{1}{\eta T}} )] +  \frac{ 1}{2T}\hat{C}_{\bW^{*}_{\frac{1}{\eta T}}} \hat{B}_{\bW^{*}_{\frac{1}{\eta T}}}   \sum_{s=0}^{T-1 } 1  \vee  \E[\|\bW^{*}_{\frac{1}{\eta T}} - \bW_s  \|_2^2] + \frac{ \|\bW^{*}_{\frac{1}{\eta T}}-\bW_0 \|_2^2}{2\eta T}, 
\end{align}
where $\hat{B}_{\bW^{*}_{\frac{1}{\eta T}}}= \big(\frac{B_{\sigma'}^2 c_\bx}{m^{2c-1/2}} (2 \sqrt{2 c_0\eta T } +  \| \bW_{\frac{1}{\eta T}}^{*}-\bW_0  \|_2) + \frac{B_{\sigma'}B_{\sigma}}{m^{2c-1}}\big)( 2\sqrt{2\eta T c_0}+\|\bW_0\|_2 + \|\bW_{\frac{1}{\eta T}}^{*} -\bW_0\|_2 )  + \sqrt{2c_0}$ and in the last inequality we used \eqref{eq:opt-unbounded-5}.  
 
By monotonically decreasing of $\{ L_S(\bW_t)\}$ and Lemma~\ref{lem:3-unbounded-distance}, we further know
\begin{align*} 
     \E[ L_S(\bW_{T})] &\le     \E[L_S(\bW^{*}_{\frac{1}{\eta T}} )] +   \frac{\hat{C}_{\bW^{*}_{\frac{1}{\eta T}}}\hat{B}_{\bW^{*}_{\frac{1}{\eta T}}}}{2T}   \sum_{s=0}^{T-1 } 1 \vee  \E[\|\bW^{*}_{\frac{1}{\eta T}} \!\!-\! \bW_s  \|_2^2]  + \frac{ \|\bW^{*}_{\frac{1}{\eta T}} \!\!-\!\bW_0\|_2^2 }{2\eta T}\\
    & \le  \E[L_S(\bW^{*}_{\frac{1}{\eta T}})] +      \hat{C}_{\bW^{*}_{\frac{1}{\eta T}}}  \hat{B}_{\bW^{*}_{\frac{1}{\eta T}}} \Big( \Big( \frac{8  e^2 \eta^3  T^2 \hat{\rho}^2}{ n^2}\!+\!\frac{8 e  \eta^2 T  \hat{\rho}     }{n } \!\Big)\! \sum_{s=0}^{T-1 }\!\E \big[  L_S(\bW_s) \big] \\
    &\quad +   \|\bW_{\frac{1}{\eta T}}^{*} \!-\!\bW_0 \|_2^2   +  \eta T \big[  L(\bW_{\frac{1}{\eta T}}^{*})-L(\bW^{*})\big]\Big)  + \frac{ \|\bW^{*}_{\frac{1}{\eta T}} -\bW_0 \|_2^2 }{2\eta T},  
\end{align*}
which completes the proof.
 \end{proof}
 
 \begin{lemma}\label{lem:sum-risk-unbounded}
 Suppose Assumptions~\ref{ass:activation} and  \ref{ass:loss}   hold. 
Let $\{\bW_t\}$ be produced by \eqref{eq:GD} with $\eta\le 1/(8\hat{\rho})$. Assume \eqref{eq:3-m-condition-1} and \eqref{m-condition-unbounded2}  hold. Then
\begin{align*}
 \sum_{s=0}^{T-1 }\E[L_S(\bW_{s})]  \le 2 T  L(\bW^{*}_{\frac{1}{\eta T}}) -\eta T L(\bW^{*})  + 
    \big(1+\frac{1}{2\eta}\big) \|\bW^{*}_{\frac{1}{\eta T}} -\bW_0 \|_2^2 . 
\end{align*}
 \end{lemma}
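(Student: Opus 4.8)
The plan is to follow verbatim the strategy of Lemma~\ref{lem:sum-risk} for two-layer networks, with the single scalar smoothness factor $\tilde b/m^c$ there replaced by the iterate-dependent pair $\hat C_{\bW^{*}_{\frac{1}{\eta T}}},\hat B_{\bW^{*}_{\frac{1}{\eta T}}}$ that is already controlled, along the whole GD trajectory, by Lemma~\ref{lem:3-unbounded-wbound}. The starting point is the averaged estimate \eqref{eq:3-pot-unbounded-5} derived inside the proof of Theorem~\ref{thm:opt-unbounded}. Multiplying both sides of its second line by $T$ and using $\E[L_S(\bW^{*}_{\frac{1}{\eta T}})]=L(\bW^{*}_{\frac{1}{\eta T}})$, I would isolate the target sum on the left:
\[
\sum_{s=0}^{T-1}\E[L_S(\bW_s)]\le T L(\bW^{*}_{\tfrac{1}{\eta T}})+\tfrac{1}{2}\hat C_{\bW^{*}_{\frac{1}{\eta T}}}\hat B_{\bW^{*}_{\frac{1}{\eta T}}}\sum_{s=0}^{T-1}\big(1\vee\E[\|\bW^{*}_{\tfrac{1}{\eta T}}-\bW_s\|_2^2]\big)+\tfrac{\|\bW^{*}_{\frac{1}{\eta T}}-\bW_0\|_2^2}{2\eta}.
\]
After this step the only trajectory-dependent object remaining on the right is the distance sum.

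Next I would dominate that distance sum by $T\max_{s\le T-1}\big(1\vee\E[\|\bW^{*}_{\frac{1}{\eta T}}-\bW_s\|_2^2]\big)$ and apply Lemma~\ref{lem:3-unbounded-distance} to the maximiser, bounding the step index by $T$ inside its coefficients. This yields
\[
\sum_{s=0}^{T-1}\big(1\vee\E[\|\bW^{*}_{\tfrac{1}{\eta T}}-\bW_s\|_2^2]\big)\le T\Big(\tfrac{16e^2\eta^3T^2\hat\rho^2}{n^2}+\tfrac{16e\eta^2T\hat\rho}{n}\Big)\sum_{s=0}^{T-1}\E[L_S(\bW_s)]+2T\|\bW^{*}_{\tfrac{1}{\eta T}}-\bW_0\|_2^2+2\eta T^2\big[L(\bW^{*}_{\tfrac{1}{\eta T}})-L(\bW^{*})\big].
\]
Substituting this back produces a self-referential inequality in $\sum_{s}\E[L_S(\bW_s)]$ whose recursive coefficient is $\tfrac12\hat C_{\bW^{*}}\hat B_{\bW^{*}}\,T(\tfrac{16e^2\eta^3T^2\hat\rho^2}{n^2}+\tfrac{16e\eta^2T\hat\rho}{n})$ and whose prefactor on both $\|\bW^{*}_{\frac{1}{\eta T}}-\bW_0\|_2^2$ and $L(\bW^{*}_{\frac{1}{\eta T}})-L(\bW^{*})$ is $\hat C_{\bW^{*}}\hat B_{\bW^{*}}T$ up to constants.

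I would then invoke condition~\eqref{m-condition-unbounded2}, together with \eqref{eq:3-m-condition-1}, which are exactly tuned so that the recursive coefficient is at most $\tfrac12$ and the prefactor $\hat C_{\bW^{*}}\hat B_{\bW^{*}}T$ is at most a universal constant, just as \eqref{m-5} does for the two-layer Lemma~\ref{lem:sum-risk}. Absorbing the $\tfrac12\sum_s\E[L_S(\bW_s)]$ term into the left-hand side and doubling, and then splitting $\eta T^2[L(\bW^{*}_{\frac{1}{\eta T}})-L(\bW^{*})]=\eta T\cdot T[\cdots]$ while using $L(\bW^{*}_{\frac{1}{\eta T}})\ge L(\bW^{*})$ and $\eta\le 1$ to fold the nonnegative $\eta T L(\bW^{*}_{\frac{1}{\eta T}})$ contribution into the leading $TL(\bW^{*}_{\frac{1}{\eta T}})$ term while retaining the negative $-\eta T L(\bW^{*})$ contribution, gives the stated bound $2TL(\bW^{*}_{\frac{1}{\eta T}})-\eta T L(\bW^{*})+(1+\tfrac{1}{2\eta})\|\bW^{*}_{\frac{1}{\eta T}}-\bW_0\|_2^2$ after choosing the implicit width thresholds appropriately.

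The main obstacle I anticipate is precisely the verification that \eqref{m-condition-unbounded2} forces the two coefficients above to be controlled. Unlike the two-layer case, where a single monomial $\tilde b/m^c$ governs everything, here $\hat C_{\bW^{*}_{\frac{1}{\eta T}}}$ and $\hat B_{\bW^{*}_{\frac{1}{\eta T}}}$ are each sums of several monomials in $m^{-c}$, $\|\bW^{*}_{\frac{1}{\eta T}}\|_2$ and $\sqrt{\eta T}$ with distinct exponents, so their product against $\mathcal C_{T,n}=\eta T+\eta^3T^2/n^2$ splits into many terms that must be matched, one by one, to the four summands of \eqref{m-condition-unbounded2} carrying the exponents $\tfrac{1}{5c-1/2},\tfrac{1}{4c-1},\tfrac{1}{4c-3/2},\tfrac{1}{2c-1/2}$. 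This bookkeeping is identical to the computation already used to justify \eqref{m-condition-unbounded2} in the proof of Lemma~\ref{lem:3-unbounded-distance}, so I would reuse those term-by-term estimates rather than reproduce them.
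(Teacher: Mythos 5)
Your proposal follows the paper's own proof essentially verbatim: multiply the averaged bound \eqref{eq:3-pot-unbounded-5} by $T$, control the resulting distance sum through Lemma~\ref{lem:3-unbounded-distance}, and invoke \eqref{m-condition-unbounded2} to absorb the self-referential term and tame the prefactors before rearranging into the stated form. The only deviation is bookkeeping: you honestly retain the factor $T$ produced by summing the distance bounds (which the paper's displayed inequality silently drops), and accordingly you should only claim $\hat{C}_{\bW^{*}_{\frac{1}{\eta T}}}\hat{B}_{\bW^{*}_{\frac{1}{\eta T}}}\,\eta T\lesssim 1$ from \eqref{m-condition-unbounded2} (hence $\hat{C}_{\bW^{*}_{\frac{1}{\eta T}}}\hat{B}_{\bW^{*}_{\frac{1}{\eta T}}}\,T\lesssim 1/\eta$, which is exactly what the $\bigl(1+\tfrac{1}{2\eta}\bigr)$ coefficient accommodates) rather than a universal constant --- a level of looseness no worse than that in the paper's own argument.
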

 \begin{proof}
 Multiplying $T$ over both sides of \eqref{eq:3-pot-unbounded-5} and using Lemma~\ref{lem:3-unbounded-distance} we get
\begin{align} 
    \sum_{s=0}^{T-1} \E[   L_S(\bW_{s})] 
    &\le    T L(\bW^{*}_{\frac{1}{\eta T}}) \! +\!  \hat{C}_{\bW^{*}_{\frac{1}{\eta T}}} \!\!\hat{B}_{\bW^{*}_{\frac{1}{\eta T}}} \! \!\Big(  \Big( \frac{8  e^2 \eta^3  T^2  \hat{\rho}^2}{ n^2}\!+\!\frac{8 e  \eta^2 T  \hat{\rho}       }{n } \!\Big)\! \sum_{s=0}^{T-1 }\!\E \big[  L_S(\bW_s) \big]\nonumber\\
   &  +  \|\bW_{\frac{1}{\eta T}}^{*} -\bW_0 \|_2^2    +  \eta T \big[  L(\bW_{\frac{1}{\eta T}}^{*})-L(\bW^{*})\big]\Big)  + \frac{ \|\bW^{*}_{\frac{1}{\eta T}} -\bW_0 \|_2^2 }{2\eta  }. 
\end{align}
Condition~\eqref{m-condition-unbounded2}  with $\|\bW_{\frac{1}{\eta T}}^{*}-\bW_0\|_2\le \|\bW^{*}-\bW_0\|_2$ implies $   \hat{C}_{\bW^{*}_{\frac{1}{\eta T}}}\hat{B}_{\bW^{*}_{\frac{1}{\eta T}}} \big( \frac{4  e^2 \eta^3  T^2 \hat{\rho}^2}{ n^2}\!+\!\frac{4 e  \eta^2 T  \hat{\rho}       }{n } \!\big)\! \le 1/2$ and $ \hat{C}_{\bW^{*}_{\frac{1}{\eta T}}}\hat{B}_{\bW^{*}_{\frac{1}{\eta T}}} \le 1 $, then there holds
\begin{align*}
 \sum_{s=0}^{T-1 }\E[L_S(\bW_{s})]  \le 2 T  L(\bW^{*}_{\frac{1}{\eta T}}) -\eta T L(\bW^{*})  + 
    \big(1+\frac{1}{2\eta}\big) \|\bW^{*}_{\frac{1}{\eta T}} -\bW_0 \|_2^2 ,
\end{align*}
which completes the proof. 
 \end{proof}

\subsection{Proofs of Excess Population  Bounds}\label{appendix:three-excess}

 \begin{proof}[Proof of Theorem~\ref{thm:excess-unbounded}]  
 Note $\hat{\rho}=\O(1)$. 
According to Theorem~\ref{thm:3-unbounded-generalization} and Lemma~\ref{lem:sum-risk-unbounded} we know
\begin{align}\label{eq:excess-unbounded-1}
     \E[ L(\bW_T) - L_S( \bW_T )] 
 =\O\Big( \big( \frac{\eta^2  T^2  }{n^2} +\frac{\eta T  }{n} \big) \big(   L(\bW^{*}_{\frac{1}{\eta T}}) +   \frac{1 }{\eta T} \|\bW^{*}_{\frac{1}{\eta T}}-\bW_0\|_2^2  \Big). 
\end{align}
The estimation of the optimization error is given by combining Lemma~\ref{lem:sum-risk-unbounded} and  Theorem~\ref{thm:opt-unbounded} together
\begin{align*}%\label{eq:excess-2}
    &\E[ L_S(\bW_T) -L_S(\bW^{*}_{\frac{1}{\eta T}}) -\frac{1}{2\eta T}\|\bW^{*}_{\frac{1}{\eta T}}-\bW_0 \|_2^2 ]\nonumber\\
   % &=\O\Bigg( \eta T \Big( \frac{\|\bW^{*}_{\frac{1}{\eta T}}\|_2^2+\eta T}{m^{3c}} + \frac{\|\bW^{*}_{\frac{1}{\eta T}}\|_2 +\sqrt{\eta T}}{m^{2c-\frac{1}{2}}}\Big) \Big[ \big( \frac{\eta^2\rho^2T^2}{n^2}+\frac{\eta\rho T}{n} \big)\big(L(\bW^{*}_{\frac{1}{\eta T}})+ \frac{1}{\eta T}\|\bW^{*}_{\frac{1}{\eta T}}\|_2^2\big) + \frac{1}{\eta T}\big(\|\bW^{*}_{\frac{1}{\eta T}}\|_2^2+\|\bW^{*}\|_2^2\big)\Big] \Bigg)\\
     &=\O\Big(  \big( \frac{\eta^2T^2 }{n^2}\!+\!\frac{\eta T   }{n} \big)\big(L(\bW^{*}_{\frac{1}{\eta T}})\!+\! \frac{1}{2\eta T}\|\bW^{*}_{\frac{1}{\eta T}}\!\!-\!\bW_0\|_2^2\big) \!+\! \frac{1}{ \eta T} \|\bW^{*}_{\frac{1}{\eta T}}\!\!-\!\bW_0\|_2^2 + \Lambda_{\frac{1}{\eta T}}\big)  \Big),
    \end{align*}
    where we used the fact that $ \hat{C}_{\bW^{*}_{\frac{1}{\eta T}}}\hat{B}_{\bW^{*}_{\frac{1}{\eta T}}} \le 1/\eta T   $ implied by condition \eqref{m-condition-unbounded2}  and $L(\bW^{*}_{\frac{1}{\eta T}})-L(\bW^{*})\le \Lambda_{\frac{1}{\eta T}}.$

Combining the above two inequalities together we get
\begin{align*} 
      &\E[L(\bW_T) - L(\bW^{*})]\\ &= \Big[   \E[ L(\bW_T)  - L_S(\bW_T) \Big] + \E\Big[ L_S(\bW_T) - \big( L_S(\bW^{*}_{\frac{1}{\eta T}} ) + \frac{1}{2\eta T} \| \bW^{*}_{\frac{1}{\eta T}} -\bW_0 \|_2^2 \big) \Big]\nonumber\\
     &\quad + \Big[ L(\bW^{*}_{\frac{1}{\eta T}} ) + \frac{1}{2\eta T} \| \bW^{*}_{\frac{1}{\eta T}} -\bW_0 \|_2^2  -  L(\bW^{*}) \Big]\nonumber\\
    &= \O\Big(    \frac{  \eta  T      }{n }  \Big(  \frac{  \eta   T   }{ n }   \!+\!  1 \Big) \Big[ L(\bW^{*}_{\frac{1}{\eta T}}) \!+\!  \frac{1}{2\eta T}  \|\bW^{*}_{\frac{1}{\eta T}}\!\!-\!\bW_0\|_2^2 \Big] \!  + \!\frac{  1}{\eta T} \|\bW^{*}_{\frac{1}{\eta T}}\!\!-\!\bW_0\|_2^2 + \Lambda_{\frac{1}{\eta T}} \Big) . 
\end{align*}
Since $n\gtrsim \eta T  $, we further have
\begin{align*} 
     &\E[L(\bW_T) \!-\! L(\bW^{*})]  
   = \O\Big(    \frac{  \eta  T       }{n }  \Big[ L(\bW^{*}_{\frac{1}{\eta T}}) \!+\!\frac{1}{2\eta T} \|\bW^{*}_{\frac{1}{\eta T}}\!\!-\!\bW_0\|_2^2   \Big] \!  +\! \frac{  1}{\eta T}  \|\bW^{*}_{\frac{1}{\eta T}}\!\!-\!\bW_0\|_2^2 +\Lambda_{\frac{1}{\eta T}} \Big) . 
\end{align*}
 Finally, note that  $L(\bW^{*}_{\frac{1}{\eta T}}) +\frac{1}{2\eta T} \|\bW^{*}_{\frac{1}{\eta T}}-\bW_0\|_2^2 =L(\bW^{*})+ \Lambda_{\frac{1}{\eta T}} $ and $ \|\bW^{*}_{\frac{1}{\eta T}}-\bW_0\|_2^2 \le \eta T \Lambda_{\frac{1}{\eta T}}$, we have
 \begin{align*} 
    \E[L(\bW_T) - L(\bW^{*})] =
    \O\Big(    \frac{  \eta  T      }{n }    L(\bW^{*} )    +\Lambda_{\frac{1}{\eta T}} \Big) . 
\end{align*}
The proof is completed. 
\end{proof}
\begin{proof}[Proof of Corollary~\ref{cor:3-excess-rate-2}]
 \textbf{Part (a)}. 
 From the definition of the approximation error $\Lambda_{\frac{1}{\eta T}}$ and Theorem~\ref{thm:excess-unbounded} we can get 
  \begin{align*} 
    \E[L(\bW_T) - L(\bW^{*})] =
    \O\Big(    \frac{  \eta  T      }{n }    L(\bW^{*} )    +\frac{1}{\eta T}\| \bW^{*}-\bW_0\|_2^2 \Big) . 
\end{align*}
For the case $c\in [9/16,1]$,  
 to ensure conditions \eqref{eq:3-m-condition-1}  and \eqref{m-condition-unbounded2} hold, 
 we choose $m\asymp (\eta T)^{4}$ for this case. Then according to Theorem~\ref{thm:excess-unbounded} and Assumption~\ref{ass:minimizer}, there holds
  \begin{align*} 
    \E[L(\bW_T) - L(\bW^{*})] =
    \O\Big(    \frac{  \eta  T         }{n }     + (\eta T)^{3-8\mu}  \Big). 
\end{align*}
Here, the condition $\mu\ge 1/2$ ensures that $3-8\mu<0$. Hence, the bound will vanish as $\eta T$ tends to $0$.  
Further, if $n^{\frac{ 1}{2(8\mu-3)}}\lesssim \eta T\lesssim \sqrt{n}$ (the existence of $\eta T$ is ensured by $\mu\ge 1/2$), then there holds $\eta T/n=\O(n^{-1/2})$ and $(\eta T)^{3-8\mu} =\O(n^{-1/2})$. That is
  \begin{align*} 
    \E[L(\bW_T) - L(\bW^{*})] =
    \O\big( \frac{1}{\sqrt{n}}  \big). 
\end{align*}
%One can check that the choices of $m$ and $n$ satisfy condition \eqref{m-condition-unbounded}. 

For the case $c\in(1/2,9/16)$, we choose $m\asymp (\eta T)^{\frac{1}{4c-2}}$, and $n^{\frac{2c-1}{2\mu+4c-3}}\lesssim \eta T\lesssim \sqrt{n}$. From Theorem~\ref{thm:excess-unbounded} and Assumption~\ref{ass:minimizer} we have
 \begin{align*} 
    \E[L(\bW_T) - L(\bW^{*})] =
    \O\big( \frac{1}{\sqrt{n}}  \big). 
\end{align*}
The first part of the theorem is proved.

\textbf{Part (b).} For the case $c\in[9/16,1]$, by choosing $m\asymp (\eta T)^{4}$ and $\eta T\gtrsim
n^{\frac{ 1}{ 8\mu-3}}$, 
from Theorem~\ref{thm:excess-unbounded} and Assumption~\ref{ass:minimizer} we have
 \begin{align*} 
    \E[L(\bW_T) - L(\bW^{*})] =
    \O\big( \frac{1}{ {n}}  \big). 
\end{align*}

For the case $c\in(1/2,9/16)$, 
by choosing $m\asymp (\eta T)^{\frac{1}{4c-2}}$ and $\eta T\gtrsim n^{\frac{4c-2}{4c+2\mu-3}}$, 
from Theorem~\ref{thm:excess-unbounded} and Assumption~\ref{ass:minimizer} we have
 \begin{align*} 
    \E[L(\bW_T) - L(\bW^{*})] =
    \O\big( \frac{1}{ {n}}  \big). 
\end{align*}
The proof is completed. 
\end{proof}

\subsection{More Discussion on Related Works}\label{appendix:disc}

\cite{richards2021learning} derived a lower bound on the minimum eigenvalue of the Hessian for the  three-layer NN, where the first layer activation is linear, and the second activation is smooth. They proved the weak convexity of the empirical risk scales with $m^{\frac{1}{2}-2c}$ when optimizing the first and third layers of weights with  Lipschitz  and convex losses. We train the first and the second layers of the network with general smooth activation functions for both layers. 
Our result shows that the weak convexity of the least square loss scales with $m^{\frac{1}{2}-2c}$.

\bibliography{learning.bib}
\bibliographystyle{abbrv}

\end{document}